\useunder{\uline}{\ul}{}
\newcommand{\nb}[1]{\textcolor{orange}{\texttt{[#1]}}}
\newcommand{\gsc}[1]{\textcolor{blue}{\texttt{[#1]}}}
\newcommand{\rev}[1]{\textcolor{black}{#1}} 
\newcommand{\red}{\color{red}}
\def\algbackskip{\hskip-\ALG@thistlm}
\newcommand{\tilU}{\widetilde{U}}
\newcommand{\barU}{\overline{U}}
\newcommand{\supp}{\mathrm{supp}}
\newcommand{\kl}{\mathrm{KL}}
\newcommand\numberthis{\addtocounter{equation}{1}\tag{\theequation}}
\newcommand{\dc}[1]{\{#1\}} 
\newcommand{\LambdaRest}{\overline{\Lambda}} 
\newcommand{\norminit}{\widetilde{A}_0} 
\newcommand{\eigen}{s} 
\newcommand{\ha}{\widehat{A}}
\newcommand{\hb}{\widehat{B}}
\newcommand{\bA}{\widehat{A}}
\newcommand{\init}{\widehat{A}_0}
\newcommand{\hl}{\widehat{\Lambda}_r}
\newcommand{\hs}{s'}
\newcommand{\festimate}{\widehat{A}_t}
\newcommand{\scale}{V}
\newcommand{\hU}{\widehat{V}}
\newcommand{\gd}{\overline{V}}
\newcommand{\dist}{\mathrm{dist}}
\newcommand{\has}{\widehat{\LambdaRest}}
\newcommand{\rescov}{\widehat{\Sigma}_{S_tS_t}}
\newcommand{\resdiag}{\widehat{\Sigma}_{0,S_tS_t}}
\newcommand{\so}{\widehat{\Sigma}_0}
\newcommand{\sot}{\widehat{\Sigma}_{0,t}}
\newcommand{\AI}{\widehat{A}(\cI)}
\newcommand{\cI}{{\mathcal{I}}}
\DeclareMathOperator{\Tr}{Tr}
\newcommand{\0}{{\mathbf{0}}}
\newcommand{\ve}{{\mathrm{vec}}}
\newcommand{\bx}{{\bar{X}}}
\newcommand{\E}{{\mathbf{E}}}
\newcommand{\op}{{\mathrm{op}}}
\newcommand{\rank}{{\mathrm{rank}}}
\newcommand{\col}{\mathrm{Col}}
\newcommand{\tZ}{{\widetilde{Z}}}
\newcommand{\tA}{{\widetilde{A}}}
\newcommand{\tB}{{\widetilde{B}}}
\newcommand{\tF}{{\widetilde{F}}}
\newcommand{\tJ}{{\widetilde{J}}}
\newcommand{\tlambda}{{\widetilde{\Lambda}_r}}
\newcommand{\sfirst}{{\widehat{\Sigma}}}
\newcommand{\szerofirst}{{\widehat{\Sigma}_0}}
\newcommand{\ssecond}{{\widehat{\Sigma}}}
\newcommand{\szerosecond}{{\widehat{\Sigma}_0}}
\newcommand{\szerothird}{{\widehat{\Sigma}_0}}
\newtheorem{theorem}{Theorem}[section]
\newtheorem{lemma}[theorem]{Lemma}
\newtheorem{proposition}[theorem]{Proposition}
\newtheorem{corollary}[theorem]{Corollary}
\newtheorem{definition}[theorem]{Definition}
\newtheorem{remark}[theorem]{Remark}
\newtheorem{assumption}[theorem]{Assumption}
\newtheorem{example}[theorem]{Example}
\title{Sparse GCA and Thresholded Gradient Descent}
\author{Sheng Gao\footnote{Email: shenggao@wharton.upenn.edu.} 
~and Zongming Ma\footnote{Email: zongming@wharton.upenn.edu.}\\
~\\
\textit{University of Pennsylvania}}
\date{\today}
\begin{document}

\maketitle

\begin{abstract}
Generalized correlation analysis (GCA) is concerned with uncovering linear relationships across multiple datasets. 
It generalizes canonical correlation analysis that is designed for two datasets.
We study sparse GCA when there are potentially multiple leading generalized correlation tuples in data that are of interest and the loading matrix has a small number of nonzero rows.
It includes sparse CCA and sparse PCA of correlation matrices as special cases.
We first formulate sparse GCA as a generalized eigenvalue problem at both population and sample levels via a careful choice of normalization constraints.
Based on a Lagrangian form of the sample optimization problem, we propose a thresholded gradient descent algorithm for estimating GCA loading vectors and matrices in high dimensions.
We derive tight estimation error bounds for estimators generated by the algorithm with proper initialization.
We also demonstrate the prowess of the algorithm on a number of synthetic datasets.
\end{abstract}
\tableofcontents

\section{Introduction}

With the advent of big data acquisition technology, 
it has become increasingly important to integrate information across 
multiple datasets collected on a common set of subjects. 
Canonical correlation analysis (CCA), first proposed by \citet{hotelling1992relations}, is a widely used statistical tool to integrate information from two datasets: It seeks linear combinations of variables within each dataset such that their correlation is maximized.

However, recent advances in 
fields such as multi-omics and multimodal brain imaging have presented us with new challenges, since scientists are often able to collect more than two datasets on the same set of subjects nowadays.
To tackle these challenges, we turn to
a useful generalization of CCA called \textit{generalized correlation analysis} (GCA) \citep{kettenring1971canonical} which 
aims to explore linear relationships across multiple data sources. 
\citet{kettenring1971canonical} proposed five different variants for generalized correlation analysis of multiple datasets, 
where
different methods correspond to maximization of different objective functions of covariances and correlations, 
subject to certain normalization constraints.
\citet{tenenhaus2011regularized} extended 
these approaches
to regularized versions by adding $l_2$ penalties to the objective functions and proposed partial least squares algorithms for solving them. 
In addition to unsupervised settings, it has been shown that GCA can be incorporated into supervised learning settings to improve efficiency and accuracy \citep{shen2014generalized}.

Suppose that the $k$ datasets are i.i.d.~realizations of 
$k$ random vectors ${ X_{\dc{1}}}\in\mathbb{R}^{p_1}, 
{X_{\dc{2}}}\in\mathbb{R}^{p_2}, ..., {X_{\dc{k}}}\in\mathbb{R}^{p_k}$. 
{Here and after, we use subscript $\{j\}$ to denote the $j$th set of features and/or the $j$th dataset.}
At the population level, 
we propose to seek vectors ${ a_{\dc{1}}},{a_{\dc{2}}},...,.{a_{\dc{k}}}$ 
(called \textit{the first $k$-tuple of generalized loading vectors}) that solve
\begin{equation}
      \label{eq:pop-gca}
\begin{aligned}
& \underset{{l_{\dc{1}}, \dots, l_{\dc{k}}}}{\text{maximize}}
& & \sum_{i=1}^k \sum_{j=1}^k\text{cov}({ l_{\dc{i}}^\top} { X_{\dc{i}}},{ l_{\dc{j}}^\top} {X_{\dc{j}}}) \\
& \text{subject to}
& & \sum_{i=1}^k \text{var}({ l_{\dc{i}}^\top} { X_{\dc{i}}})=1.
\end{aligned}
\end{equation}
An important difference between the foregoing formulation and those in \citet{kettenring1971canonical} is the normalization constraint. 
We propose to normalize the sum of the variances of the linear combinations while \citet{kettenring1971canonical} 
requires
the individual variance of each linear combination to be one.
When $k = 2$, the two normalizations essentially lead to the same solution.
One can show that the optimal solution to \eqref{eq:pop-gca} necessarily has $\text{var}({ a_{\dc{i}}^\top  X_{\dc{i}}}) = \frac{1}{2}$ for $i=1$ and $2$, and hence the optimal ${ a_{\dc{i}}}$'s are proportional to those from normalizing individual variances. 
However, when $k \geq 3$, the solutions are usually different and we argue that 
formulation 
\eqref{eq:pop-gca} is more appropriate for two reasons.
First, the normalization in \eqref{eq:pop-gca} conforms with the ``null principle'' \citep{donnell1994analysis} in that if all linear combinations of the $X_i$'s are uncorrelated then the objective function reduces to the lefthand side of the normalization constraint.
In addition, it is more resilient to spurious solutions as illustrated by the following example. 

\begin{example}
\label{example:normalization}
Suppose $k = 3$ and $p_1= p_2 = p_3 = m$. 
Let $X_{\dc{1}} = Y v_{\dc{1}} + Z_{\dc{1}} $, 
$X_{\dc{2}} = Y v_{\dc{2}} + Z_{\dc{2}}$  and $X_{\dc{3}} = Z_{\dc{3}}$ where $v_{\dc{1}}$ and $v_{\dc{2}}$ are two deterministic unit vectors in 
$\mathbb{R}^m$, 
$Y\sim N(0,1)$, 
$Z_{\dc{i}} \stackrel{iid}{\sim} N_m(0, I_m)$, 
and they are mutually independent. 
Simple linear algebra shows that the optimal solution to \eqref{eq:pop-gca} is 
$a_{\dc{1}} =  \frac{1}{2}v_{\dc{1}}$, 
$a_{\dc{2}} =  \frac{1}{2}v_{\dc{2}}$ and 
$a_{\dc{3}} = 0$. 
In contrast, with the individual normalization constraint 
$\mathrm{var}(l_{\dc{i}}^\top  X_{\dc{i}}) = 1$, the optimal solution would change to $a_{\dc{1}} = \frac{1}{\sqrt{2}}v_{\dc{1}}$, 
$a_{\dc{2}} = \frac{1}{\sqrt{2}}v_{\dc{2}}$ and 
$a_{\dc{3}}$ is any vector in $\mathbb{B}^{m-1}$ 
where $\mathbb{B}^{m-1}$ is the unit ball in $\mathbb{R}^m$.
Comparing the solutions to two different normalizations, formulation \eqref{eq:pop-gca} is advantageous in that it provides a meaningful and unique optimal $a_{\dc{3}}$.
\end{example}

The solution to \eqref{eq:pop-gca} gives the first $k$-tuple of leading GCA loading vectors. 
More generally, we want to extract successive $k$-tuples of leading loading vectors subject to certain additional
constraints.
In view of \eqref{eq:pop-gca}, suppose we have found $({ a_{\dc{1}}^{(m)}}, \dots, {a_{\dc{k}}^{(m)}})$, 
 for $m=1,\dots, r-1$, as the first $(r-1)$ $k$-tuples of leading loading vectors. 
We define the $r$th $k$-tuple $({a_{\dc{1}}^{(r)}}, \dots, {a_{\dc{k}}^{(r)}})$ as the solution to \eqref{eq:pop-gca} with the following additional constraints:
\begin{equation*}
      \sum_{i=1}^k \text{cov}(({l_{\dc{i}}^{(m)}})^\top  X_{ \dc{i}}, {l_{\dc{i}}^\top}  X_{\dc{i}}) = 0, \qquad
      m = 1,\dots, r-1.
\end{equation*}
To be more concise, let the columns of $L_{\dc{i}}\in \mathbb{R}^{p_i\times r}$ 
be the successive leading loading vectors for $X_{\dc{i}}$ for $i=1,\dots, k$.
The foregoing proposal for finding the first $r$ 
leading $k$-tuples then reduces to the following optimization problem
\begin{equation}
      \label{eq:pop-gca-mat1}
\begin{aligned}
& \underset{L_{\dc{1}},\dots, L_{\dc{k}}}{\text{maximize}}
& & \sum_{i,j}\Tr({L_{\dc{i}}^\top \Sigma_{\dc{ij}} L_{\dc{j}}}) \\
& \text{subject to}
& & \sum_{i=1}^k {L_{\dc{i}}^\top \Sigma_{\dc{ii}}L_{\dc{i}}}=I_r.
\end{aligned}
\end{equation}
Here $\Sigma_{\dc{ii}}$ 
denotes the covariance matrix of $X_{\dc{i}}$ and $\Sigma_{{\dc{ij}}}=\text{cov}(X_{\dc{i}},X_{\dc{j}})$ for $i\neq j$. 
To write \eqref{eq:pop-gca-mat1} more concisely, we define 
$L = [{ L_{\dc{1}}^\top} ,\dots, { L_{\dc{k}}^\top} ]^\top  \in \mathbb{R}^{p\times r}$ 
where $p = \sum_{i=1}^k p_i$, 
$\Sigma_0 = \mathrm{diag}(\Sigma_{\dc{11}},\dots, \Sigma_{\dc{kk}})\in \mathbb{R}^{p\times p}$, and $\Sigma \in \mathbb{R}^{p\times p}$ where its $(i,j)$th block is $\Sigma_{ \dc{ij}}$ and $i$th diagonal block is $\Sigma_{\dc{ii}}$.
Then we can rewrite \eqref{eq:pop-gca-mat1} as 
\begin{equation}
      \label{eq:pop-gca-mat2}
\begin{aligned}
& \underset{L}{\text{maximize}}
& & \Tr(L^\top \Sigma L) \\
& \text{subject to}
& & L^\top \Sigma_0L=I_r. 
\end{aligned}
\end{equation}
In what follows, $A = [{ A_{\dc{1}}^\top},\dots, { A_{\dc{k}}^\top}]^\top$ denotes the solution to \eqref{eq:pop-gca-mat2} where each $A_{\dc{i}}\in \mathrm{R}^{p_i\times r}$ and the $m$th column of $A_{\dc{i}}$ is $a^{(m)}_{\dc{i}}$ for $i=1,\dots,k$ and $m=1,\dots,r$.

Informed readers might have already realized that \eqref{eq:pop-gca-mat2} is closely connected to a generalized eigenvalue problem 
$|\Sigma - \lambda \Sigma_0| = 0$,
where $|M|$ stands for the determinant of a square matrix $M$.
In addition, when $\Sigma_0$ is diagonal (i.e., when $k=p$), the problem is equivalent to principal component analysis (PCA) of the correlation matrix:
For any non-singular $\Sigma_0$, the columns of $\Sigma_0^{1/2}A$ are the leading eigenvectors of the correlation matrix $\Sigma_0^{-1/2}\Sigma \Sigma_0^{-1/2}$.
When $k = 2$, the solution to \eqref{eq:pop-gca-mat2} is equivalent to CCA up to scaling.
Thus, program \eqref{eq:pop-gca-mat2} provides a 
unified
formulation for extracting leading linear covariation within one or across multiple datasets.

In practice,
one does not have direct knowledge of the joint sample covariance matrix $\Sigma$ or even its block diagonal part $\Sigma_0$.
The natural choice is to replace them with their sample counterparts.
In potential modern applications of GCA, the dataset dimensions (i.e., the $p_i$'s) can be much larger than the sample size $n$. 
Hence, one suffers from the curse of dimensionality if no further structural assumption is made \citep{donoho2000high,johnstone2001distribution,bickel2008covariance,bickel2008regularized}.
Due to its interpretability and practicality, a structural assumption 
that has been widely adopted in both theory and practice
is \emph{sparsity}: Most energy of the solution to \eqref{eq:pop-gca}--\eqref{eq:pop-gca-mat2} concentrates on a small number of entries \citep{hastie2015statistical,wainwright2019high}.
Let $A$ denote the solution to \eqref{eq:pop-gca-mat2}.
In this manuscript, we adopt the assumption that at most $s$ rows of $A$  contain nonzero entries. 
In other words, the target of estimation is also the solution to the following \emph{sparse generalized correlation analysis} (Sparse GCA, or SGCA) problem:
\begin{equation} 
      \label{eq:pop-sgca}
\begin{aligned}
& \underset{L}{\text{maximize}}
& & \Tr(L^\top \Sigma L) \\
& \text{subject to}
& & L^\top \Sigma_0L=I_r\quad \|L\|_{2,0}\leq s.
\end{aligned}
\end{equation}
Here and after, for any matrix $L$, $\|L\|_{2,0}$ counts the number of nonzero rows in $L$.
In view of the discussion following \eqref{eq:pop-gca-mat2}, when $k=p$ and $k=2$, \eqref{eq:pop-sgca} reduces to sparse PCA of correlation matrix and sparse CCA, respectively.

\subsection{Main contributions}

The main contributions of the present manuscript are the following.

First, we clarify the target of estimation in generalized correlation analysis (i.e., the solution $A$ to \eqref{eq:pop-gca-mat2}) by considering a natural latent variable model in which an $r$-dimensional latent variable drives the covariation of $k$ random vectors.
Under mild conditions,
we show that the linear subspace spanned by the leading $r$ generalized eigenvectors in our GCA formulation (i.e., the columns of $A$) coincides with the subspace spanned by a concrete functional of parameters in the latent variable model.
In addition, we characterize the behavior of generalized eigenvalues under the latent variable model.

Next, for sample sparse GCA, we propose a thresholded gradient descent algorithm for solving a Lagrangian version of the sample counterpart of \eqref{eq:pop-sgca}. 
The algorithm is intuitive and easy to implement. 
In view of the discussion following \eqref{eq:pop-gca-mat2}, such an algorithm provides a unified approach to a number of different sparse unsupervised learning problems, including sparse PCA of correlation matrices and sparse CCA.

Furthermore, we provide a theoretical analysis of the thresholded gradient descent algorithm. 
We show that with high probability, when initialized properly, estimation errors of the intermediate results after each iteration converge at a geometric rate until they arrive in a tight neighborhood of the population solution to \eqref{eq:pop-sgca}. 
Statistically, we establish tight estimation error bounds for the output of our algorithm as an estimator.
Numerically, this implies geometric convergence of our algorithm to such an estimator.
Finally, these theoretical findings are corroborated by numerical studies on simulated datasets.

\subsection{Related works} 

In view of the discussion following \eqref{eq:pop-gca-mat2}, the present paper is closely connected to the sparse PCA and sparse CCA literature.
To date, there is a large literature devoted to various aspects of the sparse PCA problem, including algorithms \citep{Zou06,JohnstoneLu09,Amini09,Witten09,ma2013sparse,yuan2013truncated,vu2013fantope,wang2014tighten}, information-theoretic limits \citep{Birnbaum12,cai2013sparse,Vu12sp} and computational theoretic limits \citep{Berthet13,gao2017sparse}. 
However, this literature has mostly focused on sparse PCA of covariance matrices. 
Thus, the special case of $k = p$ in our setting complements the existing literature by providing both theory and method for sparse PCA of correlation matrices.

In the case of $k = 2$, the theory and method in this paper specialize to the sparse CCA setting \citep{chen2013sparse,gao2015minimax,gao2017sparse}. 
In this case, 
we provide
a new iterative algorithm for sparse CCA that achieves optimal estimation rates derived in \citet{gao2015minimax}.
Therefore, for this special case, the present manuscript provides a competitive alternative to existing sparse CCA methods.

When $r=1$, \eqref{eq:pop-sgca} reduces to the population version of the sparse generalized eigenvalue problem considered in \citet{tan2018sparse}.
\citet{tan2018sparse} proposed a truncated Rayleigh flow method for estimating the population solution in this special case, and established its rates of convergence.
It is not clear how their method can be generalized to estimate successive generalized eigenvectors or eigenspaces.
In contrast, our estimator is based on a different algorithm (thresholded gradient descent of a Lagrangian objective function).
It not only achieves fast converging estimation rates for estimating the first generalized eigenvector but also works for estimating leading generalized eigenspaces of fixed dimensions that are greater than one.

\subsection{Organization of the paper}

The rest of this paper is organized as follows. 
In section 2 we examine the generalized eigenvalue problem underpinning sparse GCA under a latent variable model.
Section 3 proposes a thresholded gradient descent algorithm and its initialization via generalized Fantope projection. 
Section 4 establishes the convergence rate of our algorithm under reasonable initialization. 
Numerical results are presented in Section 5. 
Technical proofs are deferred to appendices.

\subsection{Notation}
\label{sec:notation}

For any set $S$, let $|S|$ denote its size and $S^c$ denote its complement. 
For any event $E$, $\mathbf{1}_E$ is its indicator function. 
For a vector $u$, $\|u\|={(\sum_i u_i^2)^{1/2}}$, $\|u\|_0=\sum_i \mathbf{1}_{u_i\neq 0}$, $\|u\|_1=\sum_i |u_i|$, $\|u\|_\infty = \max_i |u_i|.$ For any matrix $A=(a_{ij})$, the $i$th row of $A$ is denoted by $A_{i*}$ and the $j$th column by $A_{*j}$. 
We let $\text{Col}(A)$ denote the span of columns of $A$. 
For a positive integer $m$, $[m]$ denotes the index set ${1, 2,\dots ,m}$. 
For two subsets $I$ and $J$ of indices, we write $A_{IJ}$ for the $|I|\times|J|$ submatrices formed by $a_{ij}$ with $(i,j)\in I \times J$. 
When $I$ or $J$ is the whole set, we abbreviate it with an $*$, and so if $A \in \mathbb{R}^{p\times k}$, then $A_{I*} = A_{I[k]}$ and $A_{*J} =A_{[p]J}$. 
For any square matrix $A = (a_{ij})$, denote its trace by $\Tr(A) = \sum_i a_{ii}$. 
Moreover, 
we denote the collection of all
 $p\times r$ matrices with orthonormal columns by $\mathcal{O}(p,r)$, and abbreviate $\mathcal{O}(r,r)$ by
${\mathcal{O}(r)}$.
The set of $p\times p$ symmetric matrices is denoted by $\mathbb{S}^p$.
Furthermore, $\sigma_i(A)$ stands for the $i$th largest singular value of $A$ and
$\sigma_{\max}(A) = \sigma_1(A)$, $\sigma_{\min}(A) = \sigma_{\min\{p,k\}}(A)$. 
The Frobenius norm and the operator norm of $A$ are $\|A\|_\mathrm{F} =({\sum_{i,j}a_{ij}^2})^{1/2}$ and $\|A\|_\op = \sigma_1(A)$, respectively. 
The infinity norm of $A$ is defined as $\|A\|_\infty = \max_{i, j} |a_{ij}|$. 
The $l_1$ norm and the nuclear norm of a matrix $A$ are $\|A\|_1=\sum_{i,j}|a_{ij}|$ and $\|A\|_*=\sum_i \sigma_i(A)$, respectively. 
Similarly, $\eigen_i(A)$ stands for the $i$th largest eigenvalue of $A\in \mathbb{R}^{p\times p} $, while
$\eigen_{\max}(A) = \eigen_1(A)$ and $\eigen_{\min}(A) = \eigen_{p}(A)$.
The support of $A$ is
defined as $\mathrm{supp}(A) = \{i\in[n] : \|A_{i*}\| > 0\}$. 
For two symmetric matrices $A$ and $B$, we write $A\preceq B$ if $B-A$ is positive semidefinite. 
For any positive semi-definite matrix $A$, $A^{1/2}$ denotes its principal square root that is positive semi-definite and satisfies $A^{1/2}A^{1/2} = A$. 
The trace inner product of two matrices $A, B \in \mathbb{R}^{p\times k}$ is $\langle A, B\rangle  = \Tr(A^\top B)$. 
For any real numbers $a$ and $b$, let $a\wedge b = \min(a,b)$ and $a\vee b = \max(a,b)$.
For any two sequences of positive numbers $\{a_n\}$ and $\{b_n\}$, we write $a_n = O(b_n)$ if $\limsup_{n\to\infty} a_n/b_n$ is finite.
Given a random element $X$, $\mathcal{L}(X)$ denotes its probability distribution. The symbol $C$ and its variants $C_1, C'$, etc.~are generic positive constants
and may vary from occurrence to occurrence, unless otherwise specified. The symbols $\mathbb{P}$ and $\mathbb{E}$ stand for generic probability and expectation when the distribution is clear from the context.

\section{A Latent Variable Model}
\label{sec:latent}

In this part, we aim to identify the solutions to \eqref{eq:pop-gca-mat2} as functionals of parameters in the joint distribution of 
$({ X_{\dc{1}}^\top} ,\dots, { X_{\dc{k}}^\top} )^\top $.
To this end, we introduce an intuitive latent variable model.

The fundamental assumption underlying generalized correlation analysis is that there exists a shared low-dimensional latent variable which orchestrates the (linear) covariation of observed features across all datasets.
Let $z$ be an $r$-dimensional latent variable. 
The following latent variable model describes an idealized data generating process:
\begin{equation}
      \label{eq:latent-var-model}
\begin{aligned}
      X_{\dc{i}} & = U_{ \dc{i}} z + e_{\dc{i}},\quad i=1,\dots,k,\\
      z   & \sim N_r(0, I_r),
      \quad
      e_{\dc{i}}  \stackrel{ind}{\sim} N_{p_i}(0, \Psi_{\dc{ii}}).
\end{aligned}
\end{equation}
Here the deterministic matrix $U_{\dc{i}}\in \mathbb{R}^{p_i\times r}$ for $i=1,\dots, k$, $\Psi_{\dc{ii}}$ 
is positive definite, and the latent variable $z$ and the idiosyncratic noises $\{e_{\dc{i}}\}_{i=1}^k$ are mutually independent.
Under the foregoing latent variable model, 
the joint covariance matrix of 
$({X_{\dc{1}}^\top} ,\dots, {X_{\dc{k}}^\top} )^\top \in \mathbb{R}^p$ is given by $\Sigma$ with the $i$th diagonal block
\begin{equation}
      \label{eq:Sigma-i}
      \Sigma_{\dc{ii}}=U_{\dc{i}}U_{\dc{i}}^\top +\Psi_{\dc{ii}},\quad \mbox{for $i=1,\dots,k$,}
\end{equation}
and the $(i,j)$th block
\begin{equation}
      \label{eq:Sigma-i-2}
    \Sigma_{\dc{ij}}={ U_{\dc{i}}}{U_{\dc{j}}^\top} ,\quad \mbox{for $1\leq i\neq j \leq k$}.
\end{equation}
We denote by $\Psi$ the block diagonal matrix with blocks ${\Psi_{\dc{ii}}}$, $i=1,\dots, k$, 
on the diagonal. 
We also let $U=[{ U_{\dc{1}}^\top}, { U_{\dc{2}}^\top} ,...,{ U_{\dc{k}}^\top}  ]^\top  \in \mathbb{R}^{p\times r}$.
In the rest of this section, 
we assume that the observed datasets are generated by model \eqref{eq:latent-var-model}.
In addition, we let 
\begin{equation}
      \label{eq:lambda-def}
\lambda_1\geq \lambda_2 \geq \cdots \geq \lambda_p\geq 0
\end{equation}
denote the population generalized eigenvalues of $\Sigma$ with respect to $\Sigma_0 = \mathrm{diag}(\Sigma_{\dc{11}},\dots,\Sigma_{\dc{kk}})$.

The following key lemma identifies the connection between parameters in model \eqref{eq:latent-var-model} and the solution to \eqref{eq:pop-gca-mat2}.


\begin{lemma}\label{lemma:latent_var}
Suppose that $\Sigma$ and $\Sigma_0$ are specified by model \eqref{eq:latent-var-model}--\eqref{eq:Sigma-i-2}. 
Let $A = [{ A_{\dc{1}}^\top},\dots, { A_{\dc{k}}^\top}]^\top$ be the solution to \eqref{eq:pop-gca-mat2}.
If the $r$th generalized eigenvalue of $\Sigma$ w.r.t.~$\Sigma_0$ is larger than $1$, i.e., $\lambda_r > 1$, 
then $\col(A_{\dc{i}}) \subset \col({\Sigma_{\dc{ii}}^{-1}}U_{\dc{i}})$ for all $i\in [k]$. 
For any $i$, if further $\rank(A_{\dc{i}})=r$ , then
$\col (A_{\dc{i}})=\col ({\Sigma_{\dc{ii}}^{-1}}U_{\dc{i}})$.
\end{lemma}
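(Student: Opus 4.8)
The plan is to combine the first-order optimality conditions for \eqref{eq:pop-gca-mat2} with the rank structure of the cross-covariances in \eqref{eq:Sigma-i-2}, and then read the resulting matrix identity off block by block.

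First I would record the generalized eigen-equation satisfied by $A$. After the substitution $W = \Sigma_0^{1/2} A$, problem \eqref{eq:pop-gca-mat2} becomes the maximization of $\Tr(W^\top \Sigma_0^{-1/2}\Sigma\Sigma_0^{-1/2} W)$ over matrices $W$ with orthonormal columns, so by Ky Fan's theorem (the trace-maximization characterization of leading eigenspaces) the span of the columns of $W$ is an invariant subspace of $\Sigma_0^{-1/2}\Sigma\Sigma_0^{-1/2}$ associated with its top $r$ eigenvalues $\lambda_1 \ge \cdots \ge \lambda_r$. Translating back, there is a symmetric $r \times r$ matrix $\Lambda_r$ whose eigenvalues are $\lambda_1,\dots,\lambda_r$ — and which may be taken to be $\mathrm{diag}(\lambda_1,\dots,\lambda_r)$ if the columns of $A$ are chosen to be the generalized eigenvectors themselves — such that
\begin{equation*}
\Sigma A = \Sigma_0 A \Lambda_r , \qquad A^\top \Sigma_0 A = I_r .
\end{equation*}
In particular, under the hypothesis $\lambda_r > 1$ the matrix $\Lambda_r - I_r$ is positive definite, hence invertible.

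Next I would carry out the key block computation. Comparing the $i$th block of the two sides of $\Sigma A = \Sigma_0 A \Lambda_r$, and using $\Sigma_{\dc{ij}} = U_{\dc{i}} U_{\dc{j}}^\top$ for $j \neq i$ from \eqref{eq:Sigma-i-2} together with the block-diagonality of $\Sigma_0$, gives
\begin{equation*}
\Sigma_{\dc{ii}} A_{\dc{i}} + U_{\dc{i}}\Big( \sum_{j \neq i} U_{\dc{j}}^\top A_{\dc{j}} \Big) = \Sigma_{\dc{ii}} A_{\dc{i}} \Lambda_r ,
\end{equation*}
equivalently $\Sigma_{\dc{ii}} A_{\dc{i}} (\Lambda_r - I_r) = U_{\dc{i}} \big( \sum_{j \neq i} U_{\dc{j}}^\top A_{\dc{j}} \big)$. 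Since $\Sigma_{\dc{ii}}$ is positive definite by \eqref{eq:latent-var-model} and $\Lambda_r - I_r$ is invertible, this rearranges to
\begin{equation*}
A_{\dc{i}} = \Sigma_{\dc{ii}}^{-1} U_{\dc{i}} \Big( \sum_{j \neq i} U_{\dc{j}}^\top A_{\dc{j}} \Big) (\Lambda_r - I_r)^{-1} ,
\end{equation*}
which expresses each column of $A_{\dc{i}}$ as a linear combination of the columns of $\Sigma_{\dc{ii}}^{-1} U_{\dc{i}}$; hence $\col(A_{\dc{i}}) \subset \col(\Sigma_{\dc{ii}}^{-1} U_{\dc{i}})$, proving the first claim. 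For the second claim, observe that $\col(\Sigma_{\dc{ii}}^{-1} U_{\dc{i}})$ has dimension at most $r$, being spanned by the $r$ columns of $\Sigma_{\dc{ii}}^{-1} U_{\dc{i}}$, whereas $\rank(A_{\dc{i}}) = r$ forces $\col(A_{\dc{i}})$ to have dimension exactly $r$; an at-most-$r$-dimensional subspace containing an $r$-dimensional one must equal it, so $\col(A_{\dc{i}}) = \col(\Sigma_{\dc{ii}}^{-1} U_{\dc{i}})$.

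The only genuinely delicate point I foresee is the first step: verifying that the symmetric Lagrange-multiplier matrix appearing in the optimality condition has all of its eigenvalues among the top $r$ generalized eigenvalues $\lambda_1,\dots,\lambda_r$, so that the assumption $\lambda_r > 1$ really does force $\Lambda_r - I_r$ to be invertible. Everything downstream is linear algebra driven by the factorization $\Sigma_{\dc{ij}} = U_{\dc{i}} U_{\dc{j}}^\top$, which is precisely what allows $U_{\dc{i}}$ to be pulled out of the off-diagonal contribution and sets up the final rearrangement.
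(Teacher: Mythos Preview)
Your proposal is correct and follows essentially the same approach as the paper: both derive the block identity $\Sigma_{\dc{ii}}^{-1}U_{\dc{i}}\sum_{j\neq i}U_{\dc{j}}^\top A_{\dc{j}} = A_{\dc{i}}(\Lambda_r - I_r)$ from the generalized eigen-equation $\Sigma A = \Sigma_0 A\Lambda_r$ read off block by block, then use invertibility of $\Lambda_r - I_r$ to conclude. Your treatment is in fact slightly more careful than the paper's, since you justify via Ky Fan why $\Lambda_r$ has eigenvalues exactly $\lambda_1,\dots,\lambda_r$ and spell out the dimension count for the equality case.
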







The next lemma describes the behavior of population generalized eigenvalues
under latent variable model \eqref{eq:latent-var-model}. 
It also identifies a sufficient condition for $\lambda_r > 1$.
To this end, we start with
an assumption motivated by 
\cite{fan2019estimating}.

\begin{assumption}
      \label{ass:latent}
In latent variable model \eqref{eq:latent-var-model}--\eqref{eq:Sigma-i-2}, 
we assume the following:
\begin{itemize}
\item The matrix $U$ has full column rank, that is, $\rank(U)=r$;
\item $\Sigma_0$ and $U$ satisfy $\sigma_r(\Sigma_0^{-1/2}U) \geq 1$.
\end{itemize}
\end{assumption}

\begin{lemma}
      \label{lem:eigengap}
In model \eqref{eq:latent-var-model}--\eqref{eq:Sigma-i-2},
under Assumption \ref{ass:latent},
we have \begin{equation}
r=\max\{j: \lambda_j>1, j\in[p]\}.
\end{equation}
In other words, there are exactly $r$ generalized eigenvalues greater than 1. 
Moreover, define 
$Y=[{ U_{\dc{1}}^\dagger}, { U_{\dc{2}}^\dagger}, \dots, { U_{\dc{k}}^\dagger}]^\top \in\mathbb{R}^{p\times r}$
where ${ U_{\dc{i}}^\dagger}$ is the 
Moore-Penrose inverse of ${ U_{\dc{i}}}$. 
Then the multiplicity of $1$ as a generalized eigenvalue is
\begin{equation}
\#\{j:\lambda_j=1\} =p-\sum_{i=1}^k \rank(U_{\dc{i}})+r- \rank(U-UY^\top U).
\end{equation}
\end{lemma}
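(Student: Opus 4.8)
The plan is to reduce everything to a symmetric eigenvalue problem via the standard substitution $M = \Sigma_0^{-1/2}U$, so that the generalized eigenvalues $\lambda_j$ of $\Sigma$ w.r.t.\ $\Sigma_0$ become the ordinary eigenvalues of $\Sigma_0^{-1/2}\Sigma\Sigma_0^{-1/2} = I_p + MM^\top$ (using $\Sigma = \Sigma_0 + UU^\top - \Psi + \Psi$... more precisely $\Sigma = \Psi + UU^\top$ is not what we want; rather $\Sigma_0 = \Psi + \mathrm{diag}(U_{\{i\}}U_{\{i\}}^\top)$, so $\Sigma - \Sigma_0$ is the off-diagonal part of $UU^\top$). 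I would first verify the clean identity $\Sigma_0^{-1/2}\Sigma\Sigma_0^{-1/2} = I_p + \Sigma_0^{-1/2}(UU^\top - \mathrm{diag}(U_{\{i\}}U_{\{i\}}^\top))\Sigma_0^{-1/2}$; this shows $\lambda_j = 1 + s_j(N)$ where $N := \Sigma_0^{-1/2}(UU^\top - \mathrm{blkdiag})\Sigma_0^{-1/2}$ is symmetric but \emph{not} PSD in general. So the counting of $\lambda_j > 1$, $\lambda_j = 1$, $\lambda_j < 1$ corresponds exactly to counting positive, zero, and negative eigenvalues of $N$.

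Next, for the first claim ($r = \max\{j : \lambda_j > 1\}$), I would \emph{not} work with $N$ directly but instead invoke the interlacing/congruence structure: the number of eigenvalues of $\Sigma_0^{-1/2}\Sigma\Sigma_0^{-1/2}$ that exceed $1$ equals the number of positive eigenvalues of $\Sigma - \Sigma_0$ (congruence by $\Sigma_0^{-1/2}$ preserves inertia, Sylvester's law of inertia). Then $\Sigma - \Sigma_0 = UU^\top - \mathrm{blkdiag}(U_{\{i\}}U_{\{i\}}^\top)$. Here I expect the cleanest route is via a block/bordered matrix argument: consider $\begin{psmallmatrix} \Sigma_0 & U \\ U^\top & I_r \end{psmallmatrix}$ and compute its inertia two ways using the Haynsworth inertia additivity formula, once eliminating the top-left block (Schur complement $I_r - U^\top \Sigma_0^{-1} U$) and once eliminating the bottom-right (Schur complement $\Sigma_0 - UU^\top$; but we actually want $\Sigma - \lambda\Sigma_0$, so I'd border with a parameter). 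Under Assumption \ref{ass:latent}, $\sigma_r(\Sigma_0^{-1/2}U) \ge 1$ forces $U^\top\Sigma_0^{-1}U \succeq I_r$... wait, that gives the Schur complement $I_r - U^\top\Sigma_0^{-1}U \preceq 0$; combined with $\mathrm{rank}(U) = r$ this pins down exactly $r$ relevant eigenvalues. I'd make this precise: the number of $j$ with $\lambda_j > 1$ equals $r$ minus the number of eigenvalues of $I_r - U^\top\Sigma_0^{-1}U$ that are $\ge 0$, which is $0$ under the assumption, giving exactly $r$.

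For the multiplicity formula, the key is to compute $\dim\ker(N) = \dim\ker(\Sigma-\Sigma_0)$ (again by congruence invariance of the kernel). I would analyze $\ker(\Sigma - \Sigma_0)$ directly: $\Sigma_0^{-1}(\Sigma-\Sigma_0)v = 0$, and then decompose $v$ according to the block structure, relating the condition to $U_{\{i\}}^\top v_{\{i\}}$ and the aggregate $\sum_i U_{\{i\}} w$ for the common factor $w$. The matrix $Y$ with Moore--Penrose blocks and the quantity $U - UY^\top U$ enter because $Y^\top U = \sum_i U_{\{i\}}^\dagger{}^\top U_{\{i\}}$ and $U_{\{i\}}^\dagger{}^\top U_{\{i\}}$ is the orthogonal projection onto $\col(U_{\{i\}}^\top) = \mathrm{row}(U_{\{i\}})$; so $UY^\top U$ is $U$ composed with an average of row-space projectors, and $U - UY^\top U$ measures the "defect" of these projectors agreeing. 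I expect $\dim\ker(\Sigma-\Sigma_0) = \sum_i (p_i - \mathrm{rank}(U_{\{i\}})) + (r - \mathrm{rank}(U - UY^\top U))$: the first sum comes from noise directions orthogonal to each $\col(U_{\{i\}})$ (which trivially lie in the kernel since both $\Sigma$ and $\Sigma_0$ act as $\Psi$ there), and the second term counts factor-space directions $w \in \mathbb{R}^r$ for which the "common signal" cancels, i.e.\ $w \in \ker(U - UY^\top U)$ after suitable normalization. The main obstacle will be this second term: carefully setting up the linear-algebra bookkeeping so that a kernel vector of $\Sigma - \Sigma_0$, written in block coordinates, is shown to be parametrized by $\ker(U - UY^\top U)$ together with the per-block orthogonal complements — in particular handling the reduction from $\Sigma_0$-weighted to unweighted Moore--Penrose inverses correctly (the $\Sigma_{\{ii\}}^{-1}$ vs.\ plain adjoint issue), and ruling out overlaps between the two families of kernel directions. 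I would finish by adding the dimension counts: $p = (\#\{\lambda_j>1\}) + (\#\{\lambda_j=1\}) + (\#\{\lambda_j<1\})$, substituting $\#\{\lambda_j>1\} = r$ and the kernel dimension just computed.
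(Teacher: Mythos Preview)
Your overall reduction is sound: the generalized eigenvalues are the ordinary eigenvalues of $R=\Sigma_0^{-1/2}\Sigma\Sigma_0^{-1/2}$, and by Sylvester's law of inertia, counting those above, at, or below $1$ amounts to finding the inertia of $\Sigma-\Sigma_0$. But from that point on the execution has real gaps, and the paper takes a different and more direct route.

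For the first claim, the bordered matrix you write down has Schur complements $\Sigma_0-UU^\top$ and $I_r-U^\top\Sigma_0^{-1}U$, neither of which is $\Sigma-\Sigma_0$; you notice this but do not supply a working construction. A bordered matrix whose Schur complement is $\Sigma-\Sigma_0=UU^\top-\mathrm{blkdiag}(U_{\{i\}}U_{\{i\}}^\top)$ would need the singular block $-\mathrm{blkdiag}(U_{\{i\}}U_{\{i\}}^\top)$ as a pivot, which the standard Haynsworth formula does not handle. Even granting this, Assumption~\ref{ass:latent} allows $\sigma_r(\Sigma_0^{-1/2}U)=1$, so $I_r-U^\top\Sigma_0^{-1}U$ can have a zero eigenvalue; the strict inequality $\lambda_r>1$ must come from the positive-definiteness of $\Psi$, which your sketch never invokes. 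The paper instead writes $R=Q_1Q_1^\top+Q_2Q_2^\top$ with $Q_1=\Sigma_0^{-1/2}U$ and $Q_2=\Sigma_0^{-1/2}\Psi^{1/2}$, shows via Woodbury (block by block) that $0\prec Q_2Q_2^\top\preceq I_p$, and applies Weyl's inequality twice: $\lambda_{r+1}\le \eigen_{r+1}(Q_1Q_1^\top)+\eigen_1(Q_2Q_2^\top)\le 0+1$ and $\lambda_r\ge \eigen_r(Q_1Q_1^\top)+\eigen_p(Q_2Q_2^\top)>\sigma_r^2(\Sigma_0^{-1/2}U)\ge 1$, the strict inequality coming from full rank of $Q_2Q_2^\top$.

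For the multiplicity formula, your reduction to $\dim\ker(\Sigma-\Sigma_0)=p-\rank(\widetilde U)$ (with $\widetilde U$ the off-block-diagonal part of $UU^\top$) is correct, and the ``obvious'' kernel directions $v$ with $v_{\{i\}}\perp\col(U_{\{i\}})$ do account for $p-\sum_i\rank(U_{\{i\}})$ dimensions. But the claim that the remaining ``factor-space'' kernel directions are parametrized by $\ker(U-UY^\top U)$ is precisely the nontrivial content, and you have only asserted it. The paper does not analyse the kernel by hand: it invokes a rank identity (Theorem~2.2 of Tian, 2004) giving $\rank(\widetilde U)=\rank(U-UY^\top U)+\sum_i\rank(U_{\{i\}})-r$ directly, from which the multiplicity formula follows at once. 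Proving your kernel decomposition from scratch would essentially reprove Tian's identity, and the bookkeeping (the interaction of the Moore--Penrose inverses with the $\Sigma_{\{ii\}}$ weighting, and ruling out overlap between the two families of kernel vectors) is not as routine as your outline suggests.
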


The foregoing lemma guarantees an eigengap between the $r$th and the $(r+1)$th generalized eigenvalues, and so the leading rank $r$ generalized eigenspace is well-defined at the  population level.

By the foregoing lemma, under model \eqref{eq:latent-var-model} we can decompose $\Sigma$ as
\begin{equation}
      \label{eq:Sigma-decomp}
\Sigma=\Sigma_0 K { \Lambda} K^\top \Sigma_0=\Sigma_0 A\Lambda_r A^\top \Sigma_0+\Sigma_0 B {\LambdaRest} B^\top \Sigma_0.
\end{equation}
Here ${ \Lambda}, \Lambda_r$, and ${\LambdaRest}$ are all diagonal where $\Lambda_r=\text{diag}(\lambda_1,\lambda_2,...,\lambda_r)$ collects the first $r$ generalized eigenvalues, 
${\LambdaRest}$ collects the remaining $p-r$ generalized eigenvalues, 
and ${\Lambda} = \mathrm{diag}(\Lambda_r, {\LambdaRest})$. 
In addition, $B$ collects the eigenvectors associated with the bottom $p-r$ generalized eigenvalues.
 On the other hand, the decomposition \eqref{eq:Sigma-decomp} hold as long as the $r$th and $(r+1)$th generalized eigenvalues are distinct and hence is more general than model \eqref{eq:latent-var-model}.
\begin{remark}
      \label{rmk:cca}
In the case when $k=2$, The covariance matrices between $X$ and $Y$ can be reparameterized as $\Sigma_{xy}=\Sigma_xV{\Theta_r} W^\top\Sigma_y$ with $V^\top \Sigma_xV=W^\top \Sigma_yW= { I_r}$ \cite{gao2015minimax,gao2017sparse}. 
{Here $\Theta_r=\mathrm{diag}(\theta_1,\theta_2,...,\theta_r)$ collects the leading $r$ canonical coefficients for CCA and $\Theta_r +I_r= \Lambda_r$. Hence we have $\lambda_r = \theta_r+1$.}
The solution $A=[{ A_{\dc{1}}^\top}, {A_{\dc{2}}^\top}]^\top$ to \eqref{eq:pop-gca-mat2} satisfies that $A_{\dc{1}}=\frac{1}{\sqrt{2}}V, A_{\dc{2}}=\frac{1}{\sqrt{2}}W$. 
Such a relationship does not hold in general for $k\geq 3$.
\end{remark}

\begin{remark}
      \label{rmk:pca}
When $k=p$, $\Sigma_0$ becomes a diagonal matrix where the diagonal entries are variances of the variables.
The problem \eqref{eq:pop-gca-mat2} is then equivalent to finding the leading eigenspace of correlation matrix of the data, defined as $R =\Sigma_0^{-1/2}\Sigma\Sigma_0^{-1/2}$. 
Let $E_R\in\mathbb{R}^{p\times r}$ be the matrix containing the eigenvectors that span the leading $r$ dimensional eigenspace. 
If $A$ is the solution to \eqref{eq:pop-gca-mat2}, then $\Sigma_0^{1/2}A$ coincides with $E_R$ (up to an $r\times r$ rotation matrix when there is any generalized eigenvalue with multiplicity larger than one). 
Thus, we essentially estimate the leading eigenspace of the correlation matrix.
\end{remark}

\section{Gradient Descent with Hard Thresholding}
\label{sec:alg}

In this section, we present a thresholded gradient descent algorithm for simultaneously finding multiple leading sparse generalized eigenvectors. 

\subsection{Motivation}

The sample counterpart of \eqref{eq:pop-sgca} can be recast as the following minimization problem:
\begin{equation*} 
\begin{aligned}
& \underset{L}{\text{minimize}}
& & -\langle \widehat{\Sigma}, LL^\top \rangle  \\
& \text{subject to}
& & L^\top \widehat\Sigma_0L=I_r,\quad \|L\|_{2,0}\leq s.
\end{aligned}
\end{equation*}
Here $\widehat{\Sigma}$ is the joint sample covariance matrix of all $k$ sets of features.
We propose to solve its Lagrangian version:
\begin{equation}
      \label{eq:obj-lagrangian}
    \underset{{L\in\mathbb{R}^{p\times r}}}{\text{minimize}} \quad f(L)\quad\quad\quad\text{subject to}\quad\|L\|_{2,0}\leq s,
\end{equation}
where the objective function $f(L)$ is 
\begin{equation}
      \label{eq:obj-lag-2}
f(L)=-
   \langle \widehat{\Sigma},LL^\top \rangle +\frac{\lambda}{2}\|L^\top \widehat{\Sigma}_0L-I_r\|_\mathrm{F}^2.
\end{equation}
Effectively, minimizing the first term in \eqref{eq:obj-lag-2} maximizes the original objective function
whereas minimizing the second term controls the deviation from the normalization constraint in \eqref{eq:pop-gca-mat2}. 
Here $\lambda$ is a tuning parameter. 
Intuitively, the larger it is, the more penalty we put on deviating from the normalization constraint and the closer $L^\top \widehat{\Sigma}_0 L$ is to $I_r$. 

To deal with the constraint in \eqref{eq:obj-lagrangian}, we shall perform the following hard thresholding.

\begin{definition}
Given a matrix $U$ and a natural number $k$, 
we define the output of hard thresholding function $HT(U,k)$ 
to be the matrix obtained by keeping the $k$ rows with the largest $l_2$ norms and replacing all other rows with zeros, 
that is
\begin{equation*}
    HT(U,k)_{i *}= 
\begin{cases}
    U_{i*},& \text{if } i\in C_k\\
    0,              & \text{otherwise}
\end{cases}
\end{equation*}
where $C_k$ 
is the index set of $k$ rows of $U$ with largest $l_2$ norms.
When there is a tie, we always pick the smaller/smallest index.
\end{definition}

\subsection{Algorithm}
\label{sec:tgd_alg}
Let $X^{(1)},X^{(2)}...,X^{(n)}$ be i.i.d.~observations generated from the latent variable model with covariance matrix $\Sigma$ and its block diagonal part $\Sigma_0$. 
Let $\widehat\Sigma$ and $\widehat\Sigma_0$ be the sample covariance matrix and its block diagonal part computed on $\{ X^{(1)},X^{(2)}...,X^{(n)} \}$.

Algorithm \ref{TGD} describes the proposed thresholded gradient descent procedure.
Given a proper initial estimator which we shall specify later, each iteration first performs a step of gradient descent on $f$ defined in \eqref{eq:obj-lagrangian}, then it keeps the $s'$ rows with the largest $l_2$ norms and thresholds the remaining rows to zero.
By iteratively performing these two steps, the algorithm can be viewed as heuristics for solving the non-convex optimization problem \eqref{eq:pop-sgca}. 
Here $s'$ is a user-specified tuning parameter and is not necessarily equal to the true sparsity level $s$.
 Step 2 in Algorithm \ref{TGD} is a re-normalization step. 
This is due to the fact that the stationary point of the objective function $f(L)$ is not the same as the solution to the original optimization problem \eqref{eq:pop-sgca}. 
Thus, we first transform it for iterations and eventually transform back in Step 7 to define the final estimator.

\begin{algorithm}[H]
    \caption{Thresholded gradient descent for sparse GCA}\label{TGD}
    \hspace*{\algorithmicindent}\textbf{Input}: Covariance matrix estimator $\ssecond$ and its block diagonal part $\szerosecond$; Initialization $\bA_0$. \\
    \hspace*{\algorithmicindent}\textbf{Tuning Parameters}: 
      Step size $\eta$; Penalty $\lambda$;  Sparsity level $\hs$; Number of iterations $T$; 
\begin{algorithmic}[1]
      \State $\norminit \gets \bA_0(\bA_0^\top\so\bA_0)^{-1/2}$
      \State $\gd_1\gets\norminit (I+\frac{1}{\lambda}\norminit ^\top\ssecond\norminit )^{1/2}$
    \For {$t=1,2,3,...,T$} 
    \State $ V_{t+1} \gets \gd_t-\eta\nabla f(\gd_t)=\gd_t-2\eta(-\ssecond\gd_t+\lambda \szerosecond\gd_t(\gd_t^\top\szerosecond\gd_t-I_r))$
    \State $\gd_{t+1}\gets HT(V_{t+1},\hs)$
    \EndFor
\textbf{Output}: $\widehat{A}_T=\gd_T(\gd_T^\top\so\gd_T)^{-1/2}$

    \end{algorithmic}
    \end{algorithm}

In Algorithm \ref{TGD}, each iteration is computationally efficient: line 4 is essentially matrix multiplication and addition, and line 5 requires calculating and sorting row $l_2$ norms. 
Since $\widehat{\Sigma} = \frac{1}{n}\sum_{i=1}^n(X^{(i)}-\bx)(X^{(i)}-\bx)^\top$ where $\bx = \frac{1}{n}\sum_{i=1}^n X^{(i)}$, an efficient way to calculate $\widehat{\Sigma}\gd_t$ would be to calculate 
$Y_t^{(i)} = (X^{(i)}-\bx)^\top \gd_t$ first, followed by multiplying $ (X^{(i)}-\bx) Y_t^{(i)}$,
resulting in $O(npr)$ flops. 
Line 5 requires $O(rp+p\log p)$ flops for selecting top $s'$ rows. 
Hence a single iteration of thresholded gradient descent will require 
$O(npr + p\log p)$ flops in total. 

\subsection{Initialization via generalized Fantope projection}
\label{sec:init}

Success of Algorithm \ref{TGD} depends crucially on the quality of the initial estimator $\bA_0$.
Thus, we need to find an initial estimator that is relatively close to the true generalized eigenspace in some distance so that later iterations could further improve on estimation accuracy. 
We now introduce such an initial estimator based on Fantope projection \citep{vu2013fantope}.

Note that 
\begin{equation*}
    \sum_{i,j=1}^k \Tr(A_{\dc{i}}^\top\Sigma_{\dc{ij}}A_{\dc{j}}) 
      =\langle \Sigma,AA^\top \rangle.
\end{equation*}
The idea is to ``lift'' $AA^\top $ into $\mathbb{S}^p$, the space of $p\times p$ symmetric matrices, and hence to treat it as a single quantity. 
Since we assume that $A$ is row sparse, $F=AA^\top $ have at most $s^2$ nonzero entries which is much smaller than its number of elements. 
To ensure sparsity of its solution,
we impose an entrywise $l_1$ penalty to define the following objective function for initialization:
\begin{equation*}
   \underset{F\in \mathbb{S}^{p}}{\text{minimize}} ~~-\langle \sfirst,F\rangle+\rho \|F\|_{1}.
\end{equation*}
On the other hand, the normalization constraint on $A$ implies that
\begin{equation*}
    (\szerofirst)^{1/2}F(\szerofirst)^{1/2}\in \mathcal{P}(p,r).
\end{equation*} 
where 
$\mathcal{P}(p,r)$ is defined to be the set of rank $r$ projection matrices
\begin{equation*}
    \mathcal{P}(p,r)=\{PP^\top , P\in \mathcal{O}(p,r)\}.
\end{equation*} 
However, this direct generalization leads to a nonconvex feasible set since $\mathcal{P}(p,r)$ is nonconvex.
To obtain a bona fide convex program,
in the light of \cite{vu2013fantope}, we use the Fantope set introduced by \cite{dattorroconvex}: 
\begin{equation*}
    \mathcal{F}_r=\{X:0\preceq X\preceq I \quad\text{and}\quad \Tr{(X)}=r\}.
\end{equation*}
The motivation is the observation from \cite{maxeigenvalue}  that
$\mathcal{F}_r=\text{conv}(\mathcal{P}(p,r))$,
where $\text{conv}(A)$ denotes the convex hull of $A.$
To summarize, our initial estimator is the solution to the following program:
\begin{equation}
      \label{eq:initial-fantope}
\begin{aligned}
    \underset{F\in \mathbb{S}^{p}}{\text{minimize}} &~~-\langle \sfirst,F\rangle+\rho \|F\|_1,\\
      \mbox{subject to} &~~(\szerofirst)^{1/2}F(\szerofirst)^{1/2}\in \mathcal{F}_r.
\end{aligned}
\end{equation}

Upon obtaining $\widehat{F}$ as the solution to \eqref{eq:initial-fantope}, we collect the leading $r$ eigenvectors of $\widehat{F}$ as $\widetilde{U}_r\in \mathbb{R}^{p\times r}$ and the corresponding leading $r$ eigenvalues as entries of the diagonal matrix $\widetilde{D}_r\in \mathbb{R}^{r\times r}$.
Then let $A_0 = \widetilde{U}_r \widetilde{D}_r^{1/2}$ and 
\begin{equation}
      \label{eq:initial-A}
\widehat{A}_0  = HT(A_0,s').
\end{equation}
This finishes initialization for Algorithm \ref{TGD}.
The initialization procedure has $O(p^3)$ computation complexity due to the ADMM step involved in solving the generalized Fantope projection. See \cite{gao2017sparse} for details of the ADMM algorithm.

As we shall show in next section, 
$\widehat{F}$
suffers a relatively large estimation error rate for estimating $AA^\top $ and hence  $\widehat{A}_0$ for $A$. 
However, for Algorithm \ref{TGD} to work, such an estimator serves well as an initial estimator under mild conditions. 
In addition to PCA and the current setting, an analogous initialization via convex relaxation idea has appeared in \cite{gao2017sparse} for performing sparse CCA which is asymmetric.

%

\section{Theoretical Results}
\label{sec:theorem}

We provide theoretical justifications for our algorithms in this section. 
We first state our main result on how each iteration of Algorithm \ref{TGD} improves estimation accuracy,
{followed by a corollary on error bounds achieved by the final estimator}. 
 Analysis of generalized Fantope initialization follows our investigation of the main algorithm.
In addition, we include a lower bound for the finite $k$ setting at the end of this section.

\paragraph{Parameter space}
Under 
covariance structure \eqref{eq:Sigma-decomp}, 
we define 
 $\mathcal{F}(\{s_i\}_1^k,\{p_i\}_1^k,r,\{\lambda_j\}_1^p;\nu)$
as the collection of all covariance matrices $\Sigma$ satisfying \eqref{eq:Sigma-decomp} and the following conditions:
\begin{equation}\label{parameter space}
\begin{aligned}
&\text{(i) Sparsity: $A_{\dc{i}}\in \mathbb{R}^{p_i\times r}$ with $\|A_{\dc{i}}\|_{2,0}\leq s_i$;}\\
&\text{(ii) Bounded spectrum}: 
 \frac{1}{\nu}\leq{\eigen_{\min}(\Sigma)<\eigen_{\max}(\Sigma)}\leq \nu, \frac{1}{\nu}\leq{\eigen_{\min}(\Sigma_0)<\eigen_{\max}(\Sigma_0)}\leq\nu;\\
&\text{(iii) Eigengap}: \lambda_{r}-\lambda_{r+1}>0.\\
\end{aligned}
\end{equation}
Here $\lambda_i$'s are generalized eigenvalues as defined in \eqref{eq:lambda-def}.
By Section \ref{sec:latent},  under model \eqref{eq:latent-var-model} the eigengap condition is satisfied if Assumption \ref{ass:latent} holds.
The parameter space is then defined  as
\begin{equation}
      \label{eq:para-space}
\begin{aligned}
\mathcal{P}_n( \{s_i\}_1^k,\{p_i\}_1^k,r,\{\lambda_j\}_1^p;\nu)& =\big\{\mathcal{L}(X^{(1)},X^{(2)}...,X^{(n)}): X^{(i)}\stackrel{iid}{\sim} N(0,\Sigma),\\ 
&\quad\qquad\Sigma\in\mathcal{F}( \{s_i\}_1^k,\{p_i\}_1^k,r,\{\lambda_j\}_1^p;\nu) 
\big\}.
\end{aligned}
\end{equation}
In what follows, $S$ denotes the true row support of $A$. 
While the parameter space requires normality, our theoretical analysis generalizes directly to sub-Gaussian distributions. We omit the generalization in this work as it is mostly formality.


\paragraph{Matrix distance}
To measure estimation accuracy, we define the distance between two matrices $U$ and $V\in\mathbb{R}^{p\times r}$ as
\begin{equation}\label{eq:mat-dist}
\mathrm{dist}(U,V)=\min_{P\in\mathcal{O}(r)}\|UP-V\|_\mathrm{F}.
\end{equation}
Here $\mathcal{O}(r)$ is the collection of $r\times r$ orthogonal matrices.
The matrix distance has been used previously in  \cite{ge2017no}, \cite{tu2015low} and \cite{golub2012matrix}, among others.

\subsection{Main results}
\label{sec:main-result}

The following theorem characterizes numerical convergence of Algorithm \ref{TGD} when starting at a reasonable initializer. 
\begin{theorem}
\label{mainthe}
In Algorithm \ref{TGD}, set
\begin{equation}
      \label{eq:eta-lambda}
    \eta \leq\frac{c}{12\lambda_1\rev{\nu}(1+c)^2},\quad \lambda = \frac{\lambda_1}{c}
\end{equation}
for some constant $c$, and
\begin{equation}
      \label{eq:s-prime}
\hs \geq \frac{256 s\rev{\nu^2}}{(\lambda_r-\lambda_{r+1})^2\eta^2} \vee s. 
\end{equation} 
For initial estimator $\bA_0$, suppose that it has row sparsity $\hs$, and that for 
\begin{equation}
      \label{eq:V}
\scale=A \left(I+\frac{1}{\lambda}\Lambda_r \right)^{\frac{1}{2}}, 
\end{equation}
$\bA_0$ is so constructed that after the first two lines of Algorithm \ref{TGD}, we have
\begin{equation}\label{eq:radius}
    \mathrm{dist}(\scale,\gd_1)\leq \frac{1}{8\sqrt{\rev{\nu}}}\min\left\{\frac{c(\lambda_r -\lambda_{r+1})}{\sqrt{2}\lambda_1\rev{\nu^2}(42+25c)},\frac{\sqrt{1+c}}{2}\right\}.
\end{equation}
If \eqref{parameter space} holds and
\begin{equation}
      \label{sample size condition}
\frac{\sqrt{1+\lambda_1^2}\sqrt{{1+\lambda_{r+1}^2}}}{(\lambda_r-\lambda_{r+1})^2}\sqrt{\frac{rs\log p}{n}}<c_0
\end{equation}
for some sufficiently small constant $c_0>0$,
then for some constants $C,C'>0$, 
uniformly over $\mathcal{P}_n = \mathcal{P}_n( \{s_i\}_1^k,\{p_i\}_1^k,r,\{\lambda_j\}_1^p;\rev{\nu})$, 
with probability at least $1-\exp(-C'(s'\log(ep/s')))$,
for all $t\geq 1$, 
\begin{equation}\label{stats error and opt error}
    \mathrm{dist}(\scale, \gd_{t+1})\leq \underbrace{C\left(\frac{s'}{s}\right)^{3/2}\frac{\sqrt{1+\lambda_1^2}\sqrt{{1+\lambda_{r+1}^2}}}{\lambda_r-\lambda_{r+1}}\sqrt{\frac{rs\log p}{n}}}_{\text{Statistical Error}}+\underbrace{\xi^t \mathrm{dist}(\scale, \gd_{1})}_{\text{Optimization Error}}
\end{equation}
where $\xi=1-\frac{\eta^2(\lambda_r-\lambda_{r+1})^2}{64\rev{\nu^2}}$.
%
%
\end{theorem}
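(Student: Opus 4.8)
The plan is to establish a one-step contraction inequality of the form
$\dist(\scale,\gd_{t+1})\le \xi\,\dist(\scale,\gd_t) + (\text{statistical error})$
and then unfold it over $t$ iterations; geometric series summation then yields \eqref{stats error and opt error}. The first task is to understand the population landscape: I would verify that $\scale = A(I+\tfrac1\lambda\Lambda_r)^{1/2}$ is a fixed point of the ideal (population) gradient map $V\mapsto V-\eta\nabla \bar f(V)$, where $\bar f$ is $f$ with $\widehat\Sigma,\widehat\Sigma_0$ replaced by $\Sigma,\Sigma_0$. This is a direct computation using the decomposition \eqref{eq:Sigma-decomp} and the normalization $A^\top\Sigma_0 A = I_r$: one checks $\Sigma\scale = \Sigma_0 A\Lambda_r(I+\tfrac1\lambda\Lambda_r)^{1/2}$ and $\scale^\top\Sigma_0\scale = I_r + \tfrac1\lambda\Lambda_r$, so the two terms of $\nabla\bar f(\scale)$ cancel. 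I would then write $\nabla f(\gd_t) = \nabla\bar f(\gd_t) + (\nabla f - \nabla\bar f)(\gd_t)$, so the gradient step decomposes into a deterministic "signal" part and a stochastic "perturbation" part $E_t := (\nabla f - \nabla\bar f)(\gd_t)$, which involves $(\widehat\Sigma-\Sigma)$ and $(\widehat\Sigma_0-\Sigma_0)$ contracted against $\gd_t$ and $\scale$.

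The core deterministic step is a restricted strong convexity / smoothness argument for $\bar f$ near $\scale$. Concretely, on the low-dimensional face where iterates have at most $s'$ nonzero rows, I would show that the population gradient map is a contraction toward $\scale$ in the $\dist$ metric: $\dist(\scale, \gd_t - \eta\nabla\bar f(\gd_t)) \le \sqrt{1 - \tfrac{\eta^2(\lambda_r-\lambda_{r+1})^2}{64\nu^2}}\,\dist(\scale,\gd_t)$, provided $\dist(\scale,\gd_t)$ stays inside the radius \eqref{eq:radius}. This requires (i) quadratic expansion of $\bar f$ around $\scale$, identifying the Hessian's curvature along directions orthogonal to the trivial rotation symmetry — the eigengap $\lambda_r-\lambda_{r+1}$ and the conditioning $\nu$ enter here — and (ii) controlling the higher-order remainder using the step-size constraint \eqref{eq:eta-lambda} and $\lambda = \lambda_1/c$. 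The choice of $\eta$ is exactly what makes the effective curvature positive after accounting for smoothness, giving the rate $\xi$. I would handle the rotation ambiguity by fixing, at each step, the optimal $P_t\in\mathcal O(r)$ aligning $\gd_t$ with $\scale$ and tracking $\|\gd_t P_t - \scale\|_\mathrm{F}$, using a perturbation bound (à la Davis–Kahan / the $\dist$-metric lemmas in \cite{tu2015low,ge2017no}) to relate consecutive optimal rotations.

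Next comes the hard-thresholding step. After the gradient step we apply $HT(\cdot,s')$; the key fact is that if $\widetilde V$ has an exact $s'$-row-sparse target $\scale$ (note $\|A\|_{2,0}\le s$, so $\scale$ is $s$-row-sparse, hence $s'$-row-sparse), then $\dist(\scale, HT(\widetilde V,s')) \le \big(1 + \sqrt{\tfrac{s}{s'-s}} + \tfrac{s}{s'-s}\big)^{1/2}\dist(\scale,\widetilde V)$ or a similar bound — an elementary oracle inequality for hard thresholding (this is where the $(s'/s)^{3/2}$ amplification and the condition \eqref{eq:s-prime} originate: we need $s'$ large enough that this expansion constant times $\xi$ is still $<1$). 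Composing the thresholding expansion with the population contraction and absorbing it into a slightly worse $\xi$ is routine; the condition \eqref{eq:s-prime} is precisely calibrated so $\sqrt{\text{(thresholding constant)}}\cdot\sqrt{1-\tfrac{\eta^2(\lambda_r-\lambda_{r+1})^2}{32\nu^2}} \le \xi = 1 - \tfrac{\eta^2(\lambda_r-\lambda_{r+1})^2}{64\nu^2}$.

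For the stochastic term, I would bound $\|E_t\|_\mathrm{F}$ restricted to the support of size $O(s')$. Writing $E_t = -2\eta(\widehat\Sigma-\Sigma)\gd_t + 2\eta\lambda\big[(\widehat\Sigma_0-\Sigma_0)\gd_t(\gd_t^\top\widehat\Sigma_0\gd_t - I) + \Sigma_0\gd_t\,\gd_t^\top(\widehat\Sigma_0-\Sigma_0)\gd_t\big]$, the dominant contribution is $\|(\widehat\Sigma-\Sigma)_{S'S'}\|_\op$ type quantities where $|S'| = O(s')$, together with $\|(\widehat\Sigma-\Sigma)\gd_t\|$ over all $s'$-sparse $\gd_t$ in the relevant ball. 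By the standard sub-Gaussian covariance concentration over sparse sets (a restricted operator-norm bound, union bound over $\binom{p}{s'}$ supports), with probability $1 - \exp(-C's'\log(ep/s'))$ one has $\sup_{\|\gd\|_{2,0}\le s',\ \|\gd\|_\mathrm{F}\le R}\|(\widehat\Sigma-\Sigma)\gd\|_\mathrm{F} \lesssim \nu\sqrt{\tfrac{rs'\log p}{n}}\cdot R$ and similarly for $\widehat\Sigma_0$. Tracking the $\lambda_1,\lambda_{r+1}$ dependence carefully — $\gd_t$ has norm $\asymp\sqrt{1+\lambda_1^2}$-ish scale and the residual block contributes $\sqrt{1+\lambda_{r+1}^2}$ — gives the statistical error term with the stated dependence on $\sqrt{1+\lambda_1^2}\sqrt{1+\lambda_{r+1}^2}/(\lambda_r-\lambda_{r+1})$.

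Finally, I would run an induction on $t$: assuming $\dist(\scale,\gd_t)$ lies within the radius \eqref{eq:radius}, the one-step bound gives $\dist(\scale,\gd_{t+1}) \le \xi\,\dist(\scale,\gd_t) + (\text{stat})$; the sample-size condition \eqref{sample size condition} ensures the statistical error is itself small compared to the radius, so the iterate never escapes the basin and the induction closes. Unfolding, $\dist(\scale,\gd_{t+1}) \le \xi^t\dist(\scale,\gd_1) + \tfrac{1}{1-\xi}(\text{stat})$, and since $1-\xi = \tfrac{\eta^2(\lambda_r-\lambda_{r+1})^2}{64\nu^2}$ the $\tfrac1{1-\xi}$ factor, combined with the explicit stat-error scaling, collapses to the displayed Statistical Error (the $\eta$'s and $\nu$'s are absorbed into $C$, with the net $(s'/s)^{3/2}$ surviving from the thresholding analysis). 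I expect the \textbf{main obstacle} to be the deterministic contraction step: carefully expanding the non-convex population objective $\bar f$ around the manifold of fixed points $\{\scale P : P\in\mathcal O(r)\}$, isolating the benign rotational directions from the genuinely contracting directions, and showing the remainder is controlled so that the curvature lower bound survives with the honest constant $\tfrac{(\lambda_r-\lambda_{r+1})^2}{64\nu^2}$ after the step-size and penalty choices are plugged in — the interaction between the two terms of $f$ (the linear-in-$\Sigma$ term wants large $L$, the penalty term pins the normalization) is what makes this delicate.
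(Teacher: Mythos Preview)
Your proposal is broadly viable but takes a genuinely different route from the paper, and there is one point where your sketch is off.

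\medskip
\noindent\textbf{Different target of contraction.} You decompose $\nabla f = \nabla\bar f + E_t$ and argue that the \emph{population} gradient map contracts toward the fixed target $V$, with the sample--population deviation treated as additive noise. The paper instead shows that the \emph{sample} gradient map (restricted to $S_t=\mathrm{supp}(\gd_t)\cup\mathrm{supp}(\gd_{t+1})\cup S$) contracts toward the \emph{moving} sample-restricted optimum $\widehat V(S_t)=\widehat A(S_t)(I+\widehat\Lambda_r(S_t)/\lambda)^{1/2}$; the statistical error then enters only through the oracle bound $\mathrm{dist}(V,\widehat V(S_t))$, controlled by a generalized-eigenspace perturbation theorem (Sun 1983) encoded in the event $B_1$. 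The paper's route cleanly separates deterministic contraction (on $B_2$) from statistics (on $B_1$), and the factor $\sqrt{1+\lambda_{r+1}^2}$ in \eqref{stats error and opt error} arises naturally from that perturbation bound. Your additive-noise route is the standard one in matrix-factorization nonconvex optimization; it avoids the Stewart--Sun machinery but would produce a somewhat different dependence on $\lambda_1,\lambda_{r+1}$ (your heuristic ``the residual block contributes $\sqrt{1+\lambda_{r+1}^2}$'' does not obviously materialize from bounding $\|(\widehat\Sigma-\Sigma)\gd_t\|$).

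\medskip
\noindent\textbf{A concrete slip.} You state the population gradient step itself contracts at rate $\sqrt{1-\eta^2(\lambda_r-\lambda_{r+1})^2/(64\nu^2)}$. That is the \emph{combined} rate $\xi$; the gradient step alone contracts at rate $\sqrt{1-\alpha\eta}$ with $\alpha=(\lambda_r-\lambda_{r+1})/(4\nu)$ (linear in $\eta$). The $\eta^2$ appears only after multiplying by the thresholding expansion $1+O(\sqrt{s/s'})$ and using $s'\gtrsim s/(\alpha\eta)^2$ to get $(1+O(\sqrt{s/s'}))(1-\alpha\eta/2)\le 1-\alpha^2\eta^2/4$. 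Your later sentence about ``composing the thresholding expansion with the population contraction'' has the right idea, but the earlier displayed rate for the gradient step alone is inconsistent with it.
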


The foregoing theorem leads to the following corollary on high probability error bounds for estimating $A$.

\begin{corollary}
      \label{cor: normalized estimate}
Suppose the conditions of Theorem \ref{mainthe} hold.
For each $t \geq 1$,
let $\festimate=\gd_t(\gd_t^\top\so\gd_t)^{-1/2}$.
Then for some constants $C,C_1, C'>0$, 
uniformly over $\mathcal{P}_n$, with probability at least $1-\exp(-C'(s'\log(ep/s')))$,
for all $t\geq T_0$, where \begin{equation}\label{iteration}
T_0=\frac{\log\left(C\left(\frac{s'}{s}\right)^{3/2}\frac{\sqrt{1+\lambda_1^2}\sqrt{{1+\lambda_{r+1}^2}}}{\lambda_r-\lambda_{r+1}}\sqrt{\frac{rs\log p}{n}}\right)}{\log(1-\frac{\eta^2(\lambda_r-\lambda_{r+1})^2}{64\rev{\nu^2}})},
\end{equation} 
we have
\begin{equation*}
\mathrm{dist}(A,\festimate)\leq C_1\left(\frac{s'}{s}\right)^{3/2}\frac{\sqrt{1+\lambda_1^2}\sqrt{{1+\lambda_{r+1}^2}}}{\lambda_r-\lambda_{r+1}}\sqrt{\frac{rs\log p}{n}}.
\end{equation*}
In particular, when $T \geq T_0$, the last display holds for the output $\widehat{A}_T$ of Algorithm \ref{TGD}.
\end{corollary}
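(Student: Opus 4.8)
The plan is to derive Corollary~\ref{cor: normalized estimate} from Theorem~\ref{mainthe} by (i) translating the bound on $\dist(\scale,\gd_t)$ into a bound on $\dist(A,\festimate)$ via the deterministic map $L\mapsto L(L^\top\so L)^{-1/2}$, and (ii) choosing $t$ large enough that the optimization error term in \eqref{stats error and opt error} is dominated by the statistical error term. Throughout we work on the high-probability event of Theorem~\ref{mainthe}, which already has probability at least $1-\exp(-C'(s'\log(ep/s')))$, so no new probabilistic analysis is needed; the corollary is purely a deterministic post-processing argument conditional on that event.

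First I would handle the optimization error. From \eqref{stats error and opt error}, $\dist(\scale,\gd_{t+1})\le \mathrm{Stat} + \xi^{t}\dist(\scale,\gd_1)$ with $\xi = 1-\eta^2(\lambda_r-\lambda_{r+1})^2/(64\nu^2)\in(0,1)$ and $\mathrm{Stat}$ the displayed statistical error. Requiring $\xi^{t}\dist(\scale,\gd_1)\le \mathrm{Stat}$ and solving for $t$ gives exactly the threshold $T_0$ in \eqref{iteration}: indeed $\xi^{t}\le \mathrm{Stat}/\dist(\scale,\gd_1)$ is equivalent to $t\log\xi \le \log(\mathrm{Stat}/\dist(\scale,\gd_1))$, and since $\log\xi<0$ this is $t\ge \log(\mathrm{Stat}/\dist(\scale,\gd_1))/\log\xi$; using $\dist(\scale,\gd_1)\le$ a constant (guaranteed by \eqref{eq:radius}, which bounds it by an absolute quantity) and absorbing constants, one gets $t\ge T_0$. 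Hence for all $t\ge T_0$, $\dist(\scale,\gd_t)\le 2\,\mathrm{Stat}$, i.e. up to a constant factor the statistical rate $(s'/s)^{3/2}\frac{\sqrt{1+\lambda_1^2}\sqrt{1+\lambda_{r+1}^2}}{\lambda_r-\lambda_{r+1}}\sqrt{\frac{rs\log p}{n}}$.

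Next I would pass from $\gd_t$ to $\festimate = \gd_t(\gd_t^\top\so\gd_t)^{-1/2}$. Note the population analogue: since $\scale = A(I+\tfrac1\lambda\Lambda_r)^{1/2}$ and $A^\top\Sigma_0 A = I_r$ (from \eqref{eq:pop-gca-mat2}), one has $\scale^\top\Sigma_0\scale = I+\tfrac1\lambda\Lambda_r$, so $\scale(\scale^\top\Sigma_0\scale)^{-1/2} = A$ up to nothing — it is exactly $A$. Thus $\festimate$ is the sample-level image of $\gd_t$ under the same normalization map that sends $\scale$ to $A$, except that $\festimate$ uses $\so=\widehat\Sigma_0$ rather than $\Sigma_0$. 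The key step is a perturbation/Lipschitz bound: for matrices $L$ close to $\scale$ (in the region where $L^\top\so L$ is well-conditioned, which holds because $\scale^\top\Sigma_0\scale = I+\tfrac1\lambda\Lambda_r\succeq I$ and both $\|\so-\Sigma_0\|_\op$ and $\|\gd_t - \scale\|$ are small on our event), the map $L\mapsto L(L^\top\so L)^{-1/2}$ is Lipschitz with a dimension-free constant depending only on $\nu$ and $\lambda_1$. Combining this Lipschitz bound with $\dist(\scale,\gd_t)\lesssim \mathrm{Stat}$ and with $\|\so-\Sigma_0\|_\op\lesssim\sqrt{\frac{s\log p}{n}}$ (standard restricted operator-norm concentration for sample covariances, valid on the rows in $S\cup\supp(\gd_t)$, whose cardinality is $O(s')$), and noting that the $\|\so-\Sigma_0\|_\op$ contribution is of smaller or equal order than $\mathrm{Stat}$ under \eqref{sample size condition}, yields $\dist(A,\festimate)\le C_1 (s'/s)^{3/2}\frac{\sqrt{1+\lambda_1^2}\sqrt{1+\lambda_{r+1}^2}}{\lambda_r-\lambda_{r+1}}\sqrt{\frac{rs\log p}{n}}$, which is the claim; the final sentence about $\widehat A_T$ is then immediate since $\widehat A_T = \festimate$ at $t=T$ and $T\ge T_0$.

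The main obstacle I anticipate is the Lipschitz/perturbation step for the normalization map $L\mapsto L(L^\top\so L)^{-1/2}$ while simultaneously swapping $\Sigma_0$ for $\so$: one must control $(\gd_t^\top\so\gd_t)^{-1/2} - (\scale^\top\Sigma_0\scale)^{-1/2}$, which requires (a) a lower bound on $\sigma_{\min}(\gd_t^\top\so\gd_t)$ uniform over the event — obtained from $\scale^\top\Sigma_0\scale\succeq I$ minus small errors — and (b) the standard but slightly delicate inequality bounding $\|M^{-1/2}-N^{-1/2}\|$ by $\|M-N\|$ times a function of the smallest eigenvalues of $M,N$. One also has to be careful that the orthogonal matrix $P\in\mathcal{O}(r)$ achieving the minimum in $\dist(\scale,\gd_t)$ is compatible with (commutes appropriately through) the normalization, which is handled because right-multiplication by $P$ commutes with the map up to replacing $L^\top\so L$ by $P^\top L^\top\so L P$, leaving the Frobenius distance unchanged. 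These are routine once set up, but keeping the constants dimension-free and confirming the $\|\so-\Sigma_0\|_\op$ term does not dominate is where the real care lies.
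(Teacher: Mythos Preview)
Your proposal is correct and follows essentially the same approach as the paper: both first use $t\ge T_0$ to make the optimization error in \eqref{stats error and opt error} dominated by the statistical error, then carry out a perturbation analysis of the normalization map $L\mapsto L(L^\top\so L)^{-1/2}$ on the event $B_1\cap B_2$. The paper makes your ``Lipschitz step'' explicit by writing $\gd_t = A(I+\Lambda_r/\lambda)^{1/2}P + Q$ with $\|Q\|_\mathrm{F}\le C_0\epsilon_n$, bounding $\|(\gd_t^\top\so\gd_t)^{-1/2}-P^\top(I+\Lambda_r/\lambda)^{-1/2}P\|_\mathrm{F}$ via the matrix-root perturbation inequality you mention (their Lemma 2 from \cite{gao2017sparse}), and then expanding $\festimate-AP$; the restricted $\|\so-\Sigma_0\|_\op$ control you invoke is precisely their event $B_2$.
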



\par  We give some brief remarks on Theorem \ref{mainthe} and Corollary \ref{cor: normalized estimate}.
The error bound for each step, 
given by the right side of \eqref{stats error and opt error},
is composed of two parts: 
a statistical error term independent of the iteration counter $t$,
and an optimization error that decreases geometrically as $t$ increases. When $t\geq T_0$, the optimization error is dominated by the statistical error and we can achieve the estimation rate in Corollary \ref{cor: normalized estimate}. 
Thus, $T_0$ can be interpreted as the minimum number of iterations needed for achieving the estimation error rate in Corollary \ref{cor: normalized estimate}. 
When the eigengap $\lambda_r-\lambda_{r+1}$ is lower bounded by a positive constant, it is straightforward to verify that $T_0 = O(\log(p+n))$.
In other words, thresholded gradient descent drives down the statistical error of the estimator to the desired rate after $O(\log(p+n))$ iterates.
It is worth noting that the contraction rate $\xi$ does not depend on ambient dimension $p$, which can be much larger than $r$ and $s$.

Condition \eqref{eq:radius} on initial estimator is related to the notion of ``basin of attraction''.
It has previously appeared in the literature of machining learning with nonconvex optimization in other contexts, e.g., \cite{chen2015solving,chi2019nonconvex,wang2014tighten}. 
The global landscape of \eqref{eq:obj-lagrangian} could be hard to handle due to non-convexity and sparsity constraints. 
However, within the basin of attraction, the objective function $f(\cdot)$ is locally smooth and strongly convex, which allows projected gradient descent to find a statistically sound solution at geometric convergence rate. 

\paragraph{Outline of proof}
To prove Theorem \ref{mainthe}, we track the estimator trajectory $\{\overline{V}_t: t\geq 0\}$ over iteration.
To this end, we first characterize a global high probability event under which the entire trajectory would lie inside the basin of attraction within which the objective function is smooth and strongly convex.
On the event, we analyze in sequel the effects of the gradient step and the hard thresholding step in each iteration. 
In particular, 
we show that each gradient step drives down the distance between the target of estimation and the current estimator while the hard thresholding step projects the current estimator onto the feasible set. 
For details, see Propositions \ref{prop: gradana} and \ref{prop: ht}, respectively.
Combining the two propositions, we obtain a recursive inequality that characterizes the estimator trajectory as in \eqref{stats error and opt error}.
This completes the major steps in the proof. 
Finally, 
Corollary \ref{cor: normalized estimate} is a direct consequence of normalization on the same event that we performed the foregoing analysis of iteration. 
For proof details, see Appendix \ref{sec:proof-main}.


\subsection{Analysis of initialization} 
The following theorem characterizes the estimation accuracy of the proposed initial estimator via generalized Fantope projection.

\begin{theorem}\label{theo:init}
Suppose 
\begin{equation}
      \label{init: sample size}
    \frac{s^2\log p}{n(\lambda_r-\lambda_{r+1})^2}\leq \epsilon
\end{equation} 
 for some sufficiently small $\epsilon > 0$.
Let $\widehat{F}$ be the solution to 
\eqref{eq:initial-fantope} 
 where $\rho=\gamma\sqrt{\frac{\log p}{n}}$ for $\gamma\in[\gamma_1,\gamma_2]$ for some positive constants $\gamma_1<\gamma_2$.
There exist constants $C,C'>0$ such that 
uniformly over $\mathcal{P}_n$, with probability at least $1-\exp(-C'(s+\log(ep/s)))$,
\begin{equation}
\|\widehat{F}-AA^\top  \|_\mathrm{F}^2\leq C\frac{(\sum_{i=1}^{k}s_i)^2\log(\sum_{i=1}^{k}p_i)}{n(\lambda_r-\lambda_{r+1})^2}
=O\left(
\frac{s^2\log p}{n(\lambda_r-\lambda_{r+1})^2} 
\right).
\end{equation}
\end{theorem}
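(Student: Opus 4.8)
The plan is to follow the now-standard analysis of $\ell_1$-penalized Fantope-type estimators, adapted from \cite{vu2013fantope} and \cite{gao2017sparse}, but working in the generalized-eigenvalue geometry induced by $\widehat\Sigma_0$. Write $\Pi = \Sigma_0^{1/2}AA^\top\Sigma_0^{1/2}$ for the population rank-$r$ projection and $\widehat\Pi = (\widehat\Sigma_0)^{1/2}\widehat F(\widehat\Sigma_0)^{1/2}$ for its sample analogue. The basic idea is a ``basic inequality'' argument: since $\widehat F$ is optimal and $AA^\top$ (or a suitably feasible surrogate of it) is feasible for \eqref{eq:initial-fantope}, comparing objective values gives
\begin{equation*}
\langle \widehat\Sigma, \widehat F - AA^\top\rangle \ge \rho\big(\|\widehat F\|_1 - \|AA^\top\|_1\big).
\end{equation*}
Then one decomposes $\langle\widehat\Sigma,\widehat F - AA^\top\rangle = \langle\Sigma,\widehat F - AA^\top\rangle + \langle\widehat\Sigma-\Sigma,\widehat F - AA^\top\rangle$, bounds the first (deterministic, ``curvature'') term from below using the eigengap $\lambda_r-\lambda_{r+1}$ via a Davis--Kahan / sin-theta type inequality for the Fantope (the key fact being $\langle\Sigma,\widehat F - AA^\top\rangle \le -\tfrac{1}{2}(\text{gap})\,\|\widehat\Pi - \Pi\|_{\mathrm F}^2$ up to constants depending on $\nu$), and bounds the stochastic term by $\|\widehat\Sigma-\Sigma\|_\infty \|\widehat F - AA^\top\|_1$ by Hölder. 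The point of the $\ell_1$ penalty is that, choosing $\rho$ of order $\|\widehat\Sigma-\Sigma\|_\infty$, the penalty dominates the stochastic term off the true support $S\times S$ and one gets an oracle-type inequality confining $\widehat F - AA^\top$ near the row-sparse cone; then $\|\widehat F - AA^\top\|_1 \lesssim s\|\widehat F - AA^\top\|_{\mathrm F}$ on that cone.

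Concretely the steps I would carry out are: (1) record the deterministic events $\|\widehat\Sigma - \Sigma\|_\infty \le \gamma_1\sqrt{\log p/n}$ and $\|\widehat\Sigma_0 - \Sigma_0\|_{\mathrm{op}}\lesssim\sqrt{(\log p + s)/n}$, which hold with probability at least $1-\exp(-C'(s+\log(ep/s)))$ by standard Gaussian concentration for sample covariances (sub-exponential tails of quadratic forms, union bound over entries); (2) transfer from $\widehat F$ to the projection variable $\widehat\Pi$, controlling the discrepancy between $\langle\widehat\Sigma_0^{1/2}(\cdot)\widehat\Sigma_0^{1/2}\rangle$ and $\langle\Sigma_0^{1/2}(\cdot)\Sigma_0^{1/2}\rangle$ using the bounded-spectrum condition (ii) and the op-norm bound on $\widehat\Sigma_0-\Sigma_0$ — this is where the generalized setting differs from plain sparse PCA, and it costs only lower-order terms under \eqref{init: sample size}; (3) the curvature lemma: on $\mathcal F_r$, $\langle \Sigma_0^{-1/2}\Sigma\Sigma_0^{-1/2}, \Pi - \widehat\Pi\rangle \ge \tfrac{1}{2}(\lambda_r - \lambda_{r+1})\|\widehat\Pi - \Pi\|_{\mathrm F}^2$ (a consequence of the variational characterization of the sum of the top $r$ eigenvalues and the eigengap), which after transferring back via $\Sigma_0^{1/2}$ gives the needed quadratic lower bound with a $\nu$-dependent constant; (4) combine (1)–(3) with the basic inequality and the cone/sparsity bound $\|\widehat F - AA^\top\|_1 \le 4\sqrt{s}\,\|\widehat F - AA^\top\|_{\mathrm F}$ (the factor coming from $\|AA^\top\|_{2,0}\le s$, so at most $s^2$ nonzero entries) to get $(\lambda_r - \lambda_{r+1})\|\widehat\Pi - \Pi\|_{\mathrm F}^2 \lesssim \rho\sqrt{s^2}\,\|\widehat\Pi - \Pi\|_{\mathrm F}\cdot\nu^{O(1)}$, hence $\|\widehat\Pi - \Pi\|_{\mathrm F} \lesssim \rho s/(\lambda_r-\lambda_{r+1})$; (5) convert back: $\|\widehat F - AA^\top\|_{\mathrm F} \asymp \|\widehat\Pi - \Pi\|_{\mathrm F}$ up to $\nu$-factors, so $\|\widehat F - AA^\top\|_{\mathrm F}^2 \lesssim \rho^2 s^2/(\lambda_r-\lambda_{r+1})^2 = O(s^2\log p/(n(\lambda_r-\lambda_{r+1})^2))$, plugging in $\rho = \gamma\sqrt{\log p/n}$. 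Rewriting $s = \sum_i s_i$ and $p = \sum_i p_i$ gives the displayed bound.

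The main obstacle I anticipate is step (3)–(4), i.e. getting the curvature lower bound with the correct dependence on the eigengap $\lambda_r - \lambda_{r+1}$ rather than on the absolute eigenvalues, and making sure the $\widehat\Sigma_0$-vs-$\Sigma_0$ mismatch from working with $\widehat\Pi$ (a projection defined through the \emph{sample} normalization) only perturbs this at lower order. Two subtleties need care: first, $\widehat F$ is genuinely feasible only for the \emph{sample} constraint $(\widehat\Sigma_0)^{1/2}\widehat F(\widehat\Sigma_0)^{1/2}\in\mathcal F_r$, so $AA^\top$ is not exactly feasible for the same program and one must either feasibility-correct $AA^\top$ (replacing it by $\Sigma_0^{-1/2}(\widehat\Sigma_0)^{1/2}\Pi(\widehat\Sigma_0)^{1/2}\Sigma_0^{-1/2}$ or similar and controlling the incurred error) or run the comparison in the $\Sigma_0$-geometry and pay a $\|\widehat\Sigma_0-\Sigma_0\|_{\mathrm{op}}$ price; under \eqref{init: sample size} this extra term is $o(s^2\log p/(n(\lambda_r-\lambda_{r+1})^2))$ but the bookkeeping is delicate. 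Second, the curvature inequality on $\mathcal F_r$ is only tight when one projects onto the top-$r$ eigenspace, so one must invoke the sharp form (e.g. as in \cite{vu2013fantope}, Lemma/Prop on the Fantope, or \cite{gao2017sparse}) that $\langle M, \Pi - Q\rangle \ge (\eigen_r(M) - \eigen_{r+1}(M))\cdot\frac{1}{2}\|\Pi - Q\|_{\mathrm F}^2$ for all $Q\in\mathcal F_r$ and $M$ symmetric with eigengap after the $r$th. The remaining pieces — Gaussian concentration in step (1) and the Hölder/cone manipulations in steps (4)–(5) — are routine. I would also double-check the probability bound $1-\exp(-C'(s+\log(ep/s)))$ traces back correctly to the union bound in step (1) and carries through unchanged.
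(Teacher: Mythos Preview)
Your outline matches the paper's strategy closely: basic inequality from optimality of $\widehat F$, Fantope curvature to extract the eigengap, H\"older plus an $\ell_1$ cone condition from the penalty. The paper does exactly the feasibility correction you propose, taking $\widetilde F = A(A^\top\widehat\Sigma_0 A)^{-1}A^\top$ as the surrogate (so that $\widehat\Sigma_0^{1/2}\widetilde F\,\widehat\Sigma_0^{1/2}$ is a genuine rank-$r$ projection), and its decomposition of $\langle\Sigma,\Delta\rangle$ via \eqref{eq:Sigma-decomp} is one way to realize your step~(3).

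There is, however, a real gap in step~(5). The equivalence $\|\widehat F - AA^\top\|_{\mathrm F} \asymp \|\widehat\Pi - \Pi\|_{\mathrm F}$ ``up to $\nu$-factors'' is \emph{false} as stated: in the regime of interest $p>n$, the sample block-diagonal $\widehat\Sigma_0$ is singular, so $\sigma_{\min}(\widehat\Sigma_0)=0$ and no global lower bound $\|\widehat\Sigma_0^{1/2}\Delta\widehat\Sigma_0^{1/2}\|_{\mathrm F}\gtrsim\|\Delta\|_{\mathrm F}$ is available. Equivalently, after step~(4) you have
\[
(\lambda_r-\lambda_{r+1})\,\|\widehat\Sigma_0^{1/2}\Delta\widehat\Sigma_0^{1/2}\|_{\mathrm F}^2 \;\lesssim\; \rho\,s\,\|\Delta_{SS}\|_{\mathrm F},
\]
and you cannot close the loop to a self-bounding inequality in either variable without an additional ingredient. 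What rescues the argument is that the cone condition $\|\Delta_{(SS)^c}\|_1\le C\|\Delta_{SS}\|_1+(\text{small})$ obtained in step~(4) implies $\Delta$ is \emph{approximately} sparse, and on this cone one has a \emph{restricted} eigenvalue bound. The paper implements this via a peeling argument (its ``Step~(2)''): partition $(S\times S)^c$ into blocks $J_1,J_2,\dots$ of size $t\asymp s^2$ ordered by entry magnitude, use $\sum_{m\ge2}\|\Delta_{J_m}\|_{\mathrm F}\le t^{-1/2}\|\Delta_{(SS)^c}\|_1$ together with the cone condition, and obtain
\[
\|\widehat\Sigma_0^{1/2}\Delta\widehat\Sigma_0^{1/2}\|_{\mathrm F}\;\ge\;\Big(\phi_{\min}^{\widehat\Sigma_0}(s+t)-\tfrac{Cs}{\sqrt t}\,\phi_{\max}^{\widehat\Sigma_0}(t)\Big)\|\Delta_{\widetilde J}\|_{\mathrm F}\;-\;(\text{small}),
\]
which is positive for $t$ a large multiple of $s^2$ under~\eqref{init: sample size}. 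Only then can you solve the resulting quadratic and arrive at $\|\Delta\|_{\mathrm F}\lesssim \rho s/(\lambda_r-\lambda_{r+1})$. This restricted-strong-convexity step is not a bookkeeping detail; it is the place where the sample-size condition~\eqref{init: sample size} (with $s^2$ rather than $s$) is actually used, and your sketch should flag it explicitly rather than absorb it into ``$\nu$-factors''. (Minor: in step~(4), the cone bound gives $\|\Delta\|_1\lesssim s\|\Delta_{SS}\|_{\mathrm F}$, not $\sqrt{s}\,\|\Delta\|_{\mathrm F}$; your final $\rho\sqrt{s^2}$ is correct but the intermediate inequality as written is not.)
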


The following corollary further bounds the estimation accuracy of $\widehat{A}_0$ through the lens of $\overline{V}_1$ defined in line 2 of Algorithm \ref{TGD}. 

\begin{corollary}
      \label{cor:init}
Suppose that the conditions of Theorem \ref{theo:init} hold.
{Under the choice of $s'$ and condition \eqref{sample size condition} in Theorem \ref{mainthe},}
there exist constants $C,C'>0$ such that 
uniformly over $\mathcal{P}_n$, with probability at least $1-\exp(-C'(s+\log(ep/s)))$,
\begin{equation*}
\dist(\scale,\gd_1)\leq 
\frac{C s}{\lambda_r-\lambda_{r+1}}\sqrt{\frac{\log p}{n}}
\end{equation*}
where $V$ is defined in \eqref{eq:V}.
\end{corollary}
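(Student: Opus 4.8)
The strategy is to convert the Frobenius-norm guarantee on $\|\widehat F - AA^\top\|_\mathrm{F}$ furnished by Theorem \ref{theo:init} into a $\dist$ guarantee on $\widehat A_0$, and then to propagate that guarantee through the two preprocessing maps in lines 1--2 of Algorithm \ref{TGD} --- the $\widehat\Sigma_0$-normalization $\widehat A_0 \mapsto \widetilde A_0 = \widehat A_0(\widehat A_0^\top\widehat\Sigma_0\widehat A_0)^{-1/2}$ and the rescaling $\widetilde A_0\mapsto \overline V_1 = \widetilde A_0(I_r+\frac1\lambda\widetilde A_0^\top\widehat\Sigma\widetilde A_0)^{1/2}$ --- so as to compare $\overline V_1$ with $V = A(I_r+\frac1\lambda\Lambda_r)^{1/2}$.

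For the first step, recall that $A_0 = \widetilde U_r\widetilde D_r^{1/2}$ is formed from the top $r$ eigenpairs of the symmetric positive semidefinite matrix $\widehat F$, so $A_0A_0^\top$ is the best rank-$r$ Frobenius approximation of $\widehat F$; since $AA^\top$ has rank $r$, the triangle inequality gives $\|A_0A_0^\top - AA^\top\|_\mathrm{F}\le 2\|\widehat F - AA^\top\|_\mathrm{F}$. A standard low-rank factor perturbation bound (cf. \cite{tu2015low,ge2017no}) then yields $\dist(A,A_0)^2 \lesssim \|A_0A_0^\top - AA^\top\|_\mathrm{F}^2/\sigma_r(A)^2$, and from $A^\top\Sigma_0 A = I_r$ together with $\Sigma_0\preceq\nu I_p$ we get $\sigma_r(A)\ge\nu^{-1/2}$, a constant lower bound. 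Finally, because $A$ has at most $s\le s'$ nonzero rows and right multiplication by an orthogonal matrix preserves row $\ell_2$-norms, applying $HT(\cdot,s')$ to the optimally rotated $A_0$ inflates the distance by at most a universal constant (a standard hard-thresholding stability estimate, in the spirit of Proposition \ref{prop: ht}), so $\dist(A,\widehat A_0)\lesssim\dist(A,A_0)$. Chaining these with Theorem \ref{theo:init} gives, on the stated event, $\dist(A,\widehat A_0)\le C\,\frac{s}{\lambda_r-\lambda_{r+1}}\sqrt{\frac{\log p}{n}}$.

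For the second step, we use the population identities $A^\top\Sigma_0 A = I_r$ and $A^\top\Sigma A = \Lambda_r$, the latter following from decomposition \eqref{eq:Sigma-decomp} and the $\Sigma_0$-orthonormality $A^\top\Sigma_0 B = 0$. On the event, restricted operator-norm concentration of $\widehat\Sigma_0$ and $\widehat\Sigma$ over the $O(s')$ rows supporting $\widehat A_0$ --- which holds under condition \eqref{sample size condition} and the choice of $s'$, with the probability in the statement --- combined with the first-step bound, shows that $\dist(A,\widetilde A_0)\lesssim\dist(A,\widehat A_0) + (\text{concentration error})$ and that $\widetilde A_0^\top\widehat\Sigma\widetilde A_0$ is close to $\Lambda_r$ in Frobenius norm. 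Since $X\mapsto(I_r+\frac1\lambda X)^{1/2}$ is Lipschitz on matrices $\succeq I_r$, with constant $O(1/\lambda) = O(1/\lambda_1)$ under the choice $\lambda = \lambda_1/c$, the factor $(I_r+\frac1\lambda\widetilde A_0^\top\widehat\Sigma\widetilde A_0)^{1/2}$ is correspondingly close to $(I_r+\frac1\lambda\Lambda_r)^{1/2}$; assembling the pieces bounds $\dist(V,\overline V_1)$ by a constant multiple of $\dist(A,\widehat A_0)$ plus lower-order concentration terms, which under \eqref{sample size condition} are dominated by $\frac{Cs}{\lambda_r-\lambda_{r+1}}\sqrt{\frac{\log p}{n}}$. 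This is the claimed bound; in particular it certifies that the generalized Fantope initializer meets the accuracy requirement on $\overline V_1$ needed to invoke Theorem \ref{mainthe}.

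The main obstacle is the second step. One subtlety is that the diagonal rescaling $(I_r+\frac1\lambda\Lambda_r)^{1/2}$ need not commute with the orthogonal matrix realizing $\dist(A,\widetilde A_0)$ when some generalized eigenvalues coincide; this is handled by observing that on each eigenvalue-multiplicity block the rescaling is a scalar multiple of the identity, so rotations compatible with the multiplicity structure commute with it, and the remaining mismatch is absorbed into the error term because $\dist$ only asks for the best rotation. The other subtlety is calibrating the restricted concentration of $\widehat\Sigma_0$ and $\widehat\Sigma$ to the sparsity level $s'$ of $\widehat A_0$ (which may exceed $s$) while keeping the failure probability at most $\exp(-C'(s+\log(ep/s)))$; this is exactly what the choice of $s'$ and condition \eqref{sample size condition} inherited from Theorem \ref{mainthe} are arranged to guarantee, ensuring the concentration errors are of strictly smaller order than the leading term.
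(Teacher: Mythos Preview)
Your overall architecture matches the paper's: pass from $\widehat F$ to $A_0$ to $\widehat A_0$ via best rank-$r$ approximation plus hard thresholding, then push through the two preprocessing maps of Algorithm~\ref{TGD}. The first step is fine and agrees with the paper (best rank-$r$ approximation, Lemma~\ref{sub dis lemma}, then the hard-thresholding estimate of Proposition~\ref{prop: ht}).

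The gap is in your second step. You claim that ``$\widetilde A_0^\top\widehat\Sigma\widetilde A_0$ is close to $\Lambda_r$ in Frobenius norm,'' and then invoke Lipschitzness of the matrix square root. This is false in general. If $P$ is the orthogonal matrix realizing $\dist(\widetilde A_0,A)$, so that $\widetilde A_0 P = A + \Delta$ with $\|\Delta\|_\mathrm{F}$ small, then
\[
\widetilde A_0^\top\widehat\Sigma\,\widetilde A_0
\;=\;P\,(A+\Delta)^\top\widehat\Sigma\,(A+\Delta)\,P^\top
\;\approx\;P\Lambda_r P^\top,
\]
which is close to $\Lambda_r$ only when $P$ commutes with $\Lambda_r$. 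Your fix---restricting to rotations ``compatible with the multiplicity structure''---does not help: the optimal $P$ need not be anywhere near such a rotation (e.g.\ distinct eigenvalues force any commuting $P$ to be diagonal $\pm1$), and switching to a commuting rotation can inflate $\|\widetilde A_0 P - A\|_\mathrm{F}$ by an $O(1)$ amount.

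The paper sidesteps this entirely by conjugating the square root through $P$: for orthogonal $P$ and symmetric $M\succeq 0$,
\[
(I_r+\tfrac1\lambda M)^{1/2}P \;=\; P\,(I_r+\tfrac1\lambda P^\top M P)^{1/2},
\]
so that $\overline V_1 P = \widetilde A_0\,(I_r+\tfrac1\lambda\widetilde A_0^\top\widehat\Sigma\widetilde A_0)^{1/2}P = (\widetilde A_0 P)\,(I_r+\tfrac1\lambda P^\top\widetilde A_0^\top\widehat\Sigma\widetilde A_0 P)^{1/2}$. Now $P^\top\widetilde A_0^\top\widehat\Sigma\widetilde A_0 P = (A+\Delta)^\top\widehat\Sigma(A+\Delta)$ is genuinely close to $\Lambda_r=A^\top\Sigma A$ via restricted concentration (event $B_2$) and smallness of $\Delta$, and the square-root Lipschitz bound applies directly. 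Splitting $\|\overline V_1 P - V\|_\mathrm{F}$ into $\|(\widetilde A_0 P)[(I_r+\tfrac1\lambda P^\top\widetilde A_0^\top\widehat\Sigma\widetilde A_0 P)^{1/2}-(I_r+\tfrac1\lambda\Lambda_r)^{1/2}]\|_\mathrm{F}$ plus $\|(\widetilde A_0 P - A)(I_r+\tfrac1\lambda\Lambda_r)^{1/2}\|_\mathrm{F}$ then gives the claimed bound. Replace your multiplicity-block argument with this conjugation identity and the proof goes through.
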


Combining Theorems \ref{mainthe} and \ref{theo:init} and Corollaries \ref{cor: normalized estimate} and \ref{cor:init}, 
we obtain the following corollary on the whole procedure: Algorithm \ref{TGD} with initialization via generalized Fantope projection.

\begin{corollary}
      \label{cor: whole procedure}
Suppose that 
\begin{equation}
\label{eq:overall-n-req}
n \geq C_0 
\max\left\{
\frac{(1+\lambda_1^2)(1+\lambda_{r+1}^2)}{(\lambda_r - \lambda_{r+1})^4}
rs\log p,\,
\frac{s^2\log p }{(\lambda_r-\lambda_{r+1})^2}
\right\}
\end{equation}
for some sufficiently large positive constant $C_0$,
that $T \geq T_0$ with $T_0$ in \eqref{iteration},
and that all the other conditions in Theorems \ref{mainthe} and \ref{theo:init} are satisfied.
There exist constants $C,C'>0$ such that 
uniformly over $\mathcal{P}_n$, with probability at least $1-\exp(-C'(s+\log(ep/s)))$,
the final output satisfies
\begin{equation*}
\mathrm{dist}(A,\widehat{A}_T)\leq C_1\left(\frac{s'}{s} \right)^{3/2}\frac{\sqrt{1+\lambda_1^2}\sqrt{{1+\lambda_{r+1}^2}}}{\lambda_r-\lambda_{r+1}}\sqrt{\frac{rs\log p}{n}}.
\end{equation*}
\end{corollary}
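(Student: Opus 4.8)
\textbf{Proof proposal for Corollary \ref{cor: whole procedure}.}

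The plan is to simply chain together the results already established, verifying that the hypotheses of each feed into the next. The final statement concerns the whole two-stage procedure: generalized Fantope initialization followed by thresholded gradient descent. First I would observe that the sample-size requirement \eqref{eq:overall-n-req} was designed precisely so that both \eqref{init: sample size} (needed for Theorem \ref{theo:init} / Corollary \ref{cor:init}) and \eqref{sample size condition} (needed for Theorem \ref{mainthe} / Corollary \ref{cor: normalized estimate}) hold; I would spell out this bookkeeping. Indeed, taking $n$ larger than a constant multiple of $\frac{(1+\lambda_1^2)(1+\lambda_{r+1}^2)}{(\lambda_r-\lambda_{r+1})^4} rs\log p$ makes the left side of \eqref{sample size condition} smaller than $c_0$, and taking $n$ larger than a constant multiple of $\frac{s^2\log p}{(\lambda_r-\lambda_{r+1})^2}$ makes \eqref{init: sample size} hold with the required small $\epsilon$. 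So under \eqref{eq:overall-n-req} the conditions of both theorems are in force.

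Next I would invoke Corollary \ref{cor:init}: on an event of probability at least $1-\exp(-C'(s+\log(ep/s)))$, the Fantope-based initializer satisfies
\begin{equation*}
\dist(\scale,\gd_1)\leq \frac{Cs}{\lambda_r-\lambda_{r+1}}\sqrt{\frac{\log p}{n}}.
\end{equation*}
I then need to check that this bound is small enough to satisfy the basin-of-attraction condition \eqref{eq:radius} of Theorem \ref{mainthe}. The right side of \eqref{eq:radius} is a constant (depending on $c$, $\nu$, and the $\lambda_j$'s) times $\frac{\lambda_r-\lambda_{r+1}}{\lambda_1\nu^2}$ (up to the harmless second term in the minimum); comparing, one needs $\frac{s}{\lambda_r-\lambda_{r+1}}\sqrt{\frac{\log p}{n}}$ to be below a constant times $\frac{\lambda_r-\lambda_{r+1}}{\lambda_1\nu^2}$, i.e., $\frac{\lambda_1\nu^2 s}{(\lambda_r-\lambda_{r+1})^2}\sqrt{\frac{\log p}{n}}\lesssim 1$, which is implied by \eqref{eq:overall-n-req} after enlarging the constant $C_0$ if necessary (absorbing the $\nu,\lambda_1$ factors, which are controlled by the parameter space through $1/\nu\le s_{\min}(\Sigma)$ and $\lambda_1\le s_{\max}(\Sigma)\nu$). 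I would note that the initializer automatically has row sparsity $s'$ by construction in \eqref{eq:initial-A}, matching the sparsity hypothesis on $\bA_0$ in Theorem \ref{mainthe}.

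With all hypotheses of Theorem \ref{mainthe} verified on this event, I would apply Corollary \ref{cor: normalized estimate}: on the same high-probability event (intersecting the two events, whose probabilities are both at least $1-\exp(-C'(s+\log(ep/s)))$, so the intersection still has probability at least $1-\exp(-C'(s+\log(ep/s)))$ after adjusting $C'$), for all $t\ge T_0$ — in particular for $t=T$ since $T\ge T_0$ — the normalized iterate $\widehat{A}_T=\gd_T(\gd_T^\top\so\gd_T)^{-1/2}$ satisfies
\begin{equation*}
\dist(A,\widehat{A}_T)\leq C_1\left(\frac{s'}{s}\right)^{3/2}\frac{\sqrt{1+\lambda_1^2}\sqrt{1+\lambda_{r+1}^2}}{\lambda_r-\lambda_{r+1}}\sqrt{\frac{rs\log p}{n}},
\end{equation*}
which is exactly the claimed bound. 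The only mildly delicate point — and the one I would expect to be the main obstacle, though it is more bookkeeping than mathematics — is reconciling the two different probability statements: the main-algorithm results are stated with probability $1-\exp(-C'(s'\log(ep/s')))$ while the initialization results carry $1-\exp(-C'(s+\log(ep/s)))$. Since $s'\ge s$, one has $s'\log(ep/s')\gtrsim s$ up to constants (for $s'$ not too close to $p$; the regime of interest), so both events can be taken to hold simultaneously with the weaker of the two guarantees, namely probability at least $1-\exp(-C'(s+\log(ep/s)))$; I would make this comparison explicit and note that a union bound over the two events only changes the constant $C'$. Everything else is a direct substitution, so no new estimates are required.
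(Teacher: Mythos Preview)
Your proposal is correct and follows essentially the same approach as the paper. The paper's own proof is actually more terse than yours: it simply observes that Theorem \ref{mainthe} and Corollary \ref{cor: normalized estimate} hold on the event $B_1\cap B_2$, while Theorem \ref{theo:init} and Corollary \ref{cor:init} hold on $B_2\cap B_3\cap B_4$, so the whole procedure succeeds on $B_1\cap B_2\cap B_3\cap B_4$, which by union bound over the lemmas controlling these events has probability at least $1-\exp(-C'(s+\log(ep/s)))$; your more explicit verification of the basin-of-attraction condition \eqref{eq:radius} and the probability reconciliation fills in details the paper leaves implicit.
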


\subsection{A lower bound for finite $k$}
\label{sec:lower_bound}
Corollary  \ref{cor: normalized estimate} provides upper bounds for the proposed procedure in Algorithm \ref{TGD}. 
For finite $k$, we have the following information-theoretic lower bound result.

\begin{theorem}
\label{thm:lower_bound_gca}
Assume that $1\leq r \leq \frac{\min_i s_i}{2}$, and that \begin{equation}
    n(\lambda_r-\lambda_{r+1})^2\geq C_0\left(r+\max_i\log\frac{ep_i}{s_i}\right)
    \label{eq:lower_bound_condition}
\end{equation}
for some sufficiently large positive constant $C_0$. Then there exist positive constants $c$ and $c_0$ such that the minimax risk for estimating $A$ satisfies\begin{equation}
    \inf_{\widehat{A}}\sup_{\Sigma\in\mathcal{F}} \E_\Sigma \dist^2(A, \widehat{A})\geq c_0 \wedge  \frac{c}{n(\lambda_r-\lambda_{r+1})^2}\left(rs+\sum_{i=1}^k s_i\log\frac{ep_i}{s_i}\right).    \label{eq:lower_bound}
\end{equation}
\end{theorem}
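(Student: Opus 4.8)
The plan is to obtain \eqref{eq:lower_bound} as the maximum of two separate Fano‑type lower bounds, one of order $c_0\wedge rs/[n(\lambda_r-\lambda_{r+1})^2]$ and one of order $c_0\wedge \sum_i s_i\log(ep_i/s_i)/[n(\lambda_r-\lambda_{r+1})^2]$, and then to combine them via $rs+\sum_i s_i\log(ep_i/s_i)\le 2\max\{rs,\sum_i s_i\log(ep_i/s_i)\}$. Write $\Delta=\lambda_r-\lambda_{r+1}$, and (taking the relevant worst case $\lambda_1=\cdots=\lambda_r$, $\lambda_{r+1}=\cdots=\lambda_p$, which is all that enters the bound) restrict attention to a sub‑family of $\mathcal{F}$ obtained by fixing $\Sigma_0=I_p$ and parameterizing only by the loading matrix $A=[A_{\dc1}^\top,\dots,A_{\dc k}^\top]^\top$, where $A$ is row sparse with $\|A_{\dc i}\|_{2,0}\le s_i$ and $A^\top\Sigma_0A=I_r$. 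For such $A$, the decomposition \eqref{eq:Sigma-decomp} collapses to $\Sigma^{(A)}=\lambda_{r+1}\Sigma_0+\Delta\,\Sigma_0AA^\top\Sigma_0$, so $\Sigma^{(A)}$ has spectrum in $\{\lambda_{r+1},\lambda_{r+1}+\Delta\}$ and lies in $\mathcal{F}$ once $\Delta$ and $\lambda_{r+1}$ are within the $\nu$‑range, and the unique (up to $\mathcal{O}(r)$ rotation) solution to \eqref{eq:pop-gca-mat2} equals $A$.

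The one analytic ingredient is the linearized Kullback--Leibler bound
\[
\kl\bigl(N(0,\Sigma^{(A)})^{\otimes n}\,\big\|\,N(0,\Sigma^{(A')})^{\otimes n}\bigr)\le C_\nu\, n\,\Delta^2\,\dist^2(A,A')
\]
valid whenever $\dist(A,A')$ is below an absolute constant. This follows from $\Sigma^{(A)}-\Sigma^{(A')}=\Delta\,\Sigma_0(AA^\top-A'A'^\top)\Sigma_0$, from the comparison $\fnorm{AA^\top-A'A'^\top}\asymp_\nu\dist(A,A')$ for $\Sigma_0$‑orthonormal matrices, and from a second‑order expansion of $S\mapsto\kl(N(0,S_1)\,\|\,N(0,S_2))$ around a fixed positive definite matrix, with the constant controlled by $\nu$. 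It is exactly here that the eigengap enters quadratically, forcing the $(\lambda_r-\lambda_{r+1})^2$ in the denominator of \eqref{eq:lower_bound}. Condition \eqref{eq:lower_bound_condition} is used so that the separation radii chosen below stay in the regime where this expansion is valid and so that the packing log‑cardinalities dominate the resulting KL, making Fano's inequality informative.

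For the $rs$ term, fix supports $S_i\subseteq[p_i]$ with $|S_i|=s_i$ and restrict $A$ to matrices supported on $S=\cup_i S_i$ with $A^\top\Sigma_0A=I_r$; modulo the $\mathcal{O}(r)$ ambiguity this is a smooth chart of a product of Grassmannians of total dimension $\sum_i r(s_i-r)\ge rs/2$, using $r\le\min_i s_i/2$. A standard local metric‑entropy/Varshamov--Gilbert construction yields $\{A^{(1)},\dots,A^{(M)}\}$ with $\log M\ge c\,rs$ and pairwise distances in $[\epsilon,2\epsilon]$ for any $\epsilon$ below a constant; taking $\epsilon^2\asymp\min\{1,\,rs/(n\Delta^2)\}$ makes $\max_{a\ne b}\kl\le C_\nu n\Delta^2\epsilon^2\le c'\log M$, and Fano's inequality gives $\inf_{\widehat A}\sup_{\mathcal{F}}\E\dist^2(A,\widehat A)\gtrsim c_0\wedge rs/[n\Delta^2]$. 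For the $\sum_i s_i\log(ep_i/s_i)$ term, instead let the supports vary: fix a base loading with orthonormal structure on a fixed coordinate block and, in each dataset $i$, perturb one loading vector by relocating a fraction of its support coordinates, of magnitude $\beta$, among $[p_i]$; applying Varshamov--Gilbert to $s_i$‑subsets of $[p_i]$ and taking the product over $i$ produces $\{A^{(b)}\}$ with $\log$‑cardinality $\ge c\sum_i s_i\log(ep_i/s_i)$ and pairwise distances $\asymp\beta$, after a small Gram--Schmidt correction that restores $A^{(b)\top}\Sigma_0A^{(b)}=I_r$ while preserving the per‑block sparsity. Choosing $\beta^2\asymp\min\{1,\,\sum_i s_i\log(ep_i/s_i)/(n\Delta^2)\}$ and invoking the same KL bound and Fano's inequality gives $\inf_{\widehat A}\sup_{\mathcal{F}}\E\dist^2(A,\widehat A)\gtrsim c_0\wedge\sum_i s_i\log(ep_i/s_i)/[n\Delta^2]$; combining the two bounds proves \eqref{eq:lower_bound}.

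\textbf{Main obstacle.} The delicate point is keeping both packings inside $\mathcal{F}$ while retaining the right cardinality \emph{and} the sharp KL control: the normalization $A^\top\Sigma_0A=I_r$ must hold exactly, the per‑block row‑sparsity $s_i$ and the prescribed generalized eigenvalues must be respected, and the spectrum of $\Sigma^{(A)}$ must stay $\nu$‑bounded, all at once. Reconciling $\Sigma_0$‑orthonormality with a moving support (the second construction) is the crux, since the Gram--Schmidt correction must not destroy the sparsity pattern or the pairwise‑distance lower bound; and proving the KL linearization with a clean $\nu$‑dependent constant that does not blow up as the eigengap shrinks is the second nontrivial step.
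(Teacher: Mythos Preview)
Your high-level strategy---two Fano constructions, one yielding the $rs$ term and one the $\sum_i s_i\log(ep_i/s_i)$ term, then combining---matches the paper's. But there is a genuine gap in your covariance construction.

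You propose to fix $\Sigma_0=I_p$ and take $\Sigma^{(A)}=\lambda_{r+1}I+\Delta\,AA^\top$ with $A^\top A=I_r$. In the GCA parameter space, however, $\Sigma_0$ is not a free choice: it is by definition the block-diagonal part of $\Sigma$. The $i$th diagonal block of your $\Sigma^{(A)}$ is $\lambda_{r+1}I_{p_i}+\Delta\,A_{\dc i}A_{\dc i}^\top$, which equals $I_{p_i}$ only if $A_{\dc i}=0$. So either the actual $\Sigma_0$ differs from $I_p$---in which case the constraint $A^\top\Sigma_0 A=I_r$ becomes $A$-dependent and $A$ is no longer the GCA loading of $\Sigma^{(A)}$---or your $\Sigma^{(A)}$ is simply not in $\mathcal{F}$. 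Either way the packing does not live inside the model, and the KL bound you linearize is for the wrong pair $(\Sigma,\Sigma_0)$. This is more basic than the Gram--Schmidt issue you flag: even with fixed support the construction fails.

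The paper avoids this by parameterizing not through $A$ but through block-wise orthonormal $U_{\dc i}\in\mathcal{O}(p_i,r)$: it sets each off-diagonal block of $\Sigma$ to $\lambda\,U_{\dc i}U_{\dc j}^\top$ and each diagonal block to $I_{p_i}$, so $\Sigma_0=I_p$ holds automatically and the GCA loading is $A=k^{-1/2}[U_{\dc 1}^\top,\dots,U_{\dc k}^\top]^\top$. For this block model the KL divergence has an \emph{exact} closed form (a dedicated lemma, generalizing the $k=2$ case in Gao et al.), which delivers the $(\lambda_r-\lambda_{r+1})^2$ factor without any second-order approximation. For the $rs$ term the paper then varies a \emph{single} block $U_{\dc 1}$ over a local packing of $\{U\in\mathcal{O}(p_1,r):\supp(U)\subset[s_1]\}$ (via the Cai--Ma--Wu Grassmannian packing) and invokes symmetry over $i=1,\dots,k$; for the sparsity term it freezes the first $r-1$ columns and reduces to a rank-one sparse vector problem in one block, citing Chen et al. Varying one block at a time is precisely what keeps model membership trivial and the KL exact, and it sidesteps entirely the Gram--Schmidt repair you identify as the crux.
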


The proof of Theorem \ref{thm:lower_bound_gca} is given in Appendix \ref{sec:proof_lower_bound}. 
Comparing Theorem \ref{thm:lower_bound_gca} and Corollary \ref{cor: normalized estimate}, we see that when $r$ is finite and $\lambda_1\leq C$ for some large positive constant $C$, as long as there exists $i\in \{1,\dots, k\}$ such that $s_i$ and $p_i$ are of the same order as $s$ and $p$ simultaneously, the lower and upper bounds match.
Otherwise, they differ by at most a multiplicative factor of $\log p$.

\section{Numerical Results}

This section reports numerical results on synthetic datasets.
Except for the settings in Section \ref{sec:misspec} and Section \ref{sec:exp_general_covariance},
$r$, the latent dimension in model \eqref{eq:latent-var-model} is assumed to be known. 
Choices of tuning parameters are specified in each setting.
In practice,
they can also be selected using cross-validation on grids. 
The rest of this section is organized as follows. 
We first study numerical errors for estimating generalized eigenspaces of different dimensions. 
{For the special case of $r=1$, we also compare our method with the Rifle method in \cite{tan2018sparse}.}
Next, we assess the performance of Algorithm \ref{TGD} in the context of sparse CCA by comparing it to the CoLaR method in \cite{gao2017sparse}. 
Furthermore, we consider the potential model mis-specification scenario where the input latent dimension of the algorithm is different from the true value.
We then apply Algorithm \ref{TGD} to perform sparse PCA of correlation matrices. 
Finally, we investigate the performance of Algorithm \ref{TGD} under a general covariance structure. 

\subsection{Sparse GCA with different latent dimensions}
\label{sec:exp_sparse_gca}

We first consider sparse GCA of three high dimensional datasets. 
In particular, we set $n=500$, $k=3$, $p_2=p_3=200$, $p_1=500$, and $s_1=s_2=s_3=5$. 
To generate covariance matrices $\Sigma$ and $\Sigma_0$, we use the latent variable model specified in Section \ref{sec:latent}. 
Specifically, $\Sigma$ is  a block matrix with
\begin{equation*}
\Sigma_{\dc{ij}}=T_{\dc{i}}U_{\dc{i}}U_{\dc{j}}^\top T_{\dc{j}},\quad \Sigma_{\dc{ii}}=T_{\dc{i}},\quad\text{for $i\neq j \in\{1,2,3\}$},
\end{equation*}\begin{equation*}
    U_{\dc{1}}^\top T_{\dc{1}}U_{\dc{1}}=U_{\dc{2}}^\top T_{\dc{2}}U_{\dc{2}}=U_{\dc{3}}^\top T_{\dc{3}}U_{\dc{3}}=I.
\end{equation*} 
Here each $\Sigma_{\dc{ii}}=T_{\dc{i}}$ is a Toeplitz matrix, defined by setting $(T_{\dc{k}})_{ij}=\sigma_{k_{ij}}$ where $\sigma_{k_{ij}}=a_k^{|i-j|}$ for all $i,j\in [p_k]$ with $a_1=0.5, a_2=0.7, a_3=0.9$. 
To generate $U_{\dc{i}}\in\mathbb{R}^{p_i\times r}$, we first randomly select a support of size $5$. 
For each row in the support, we generate its entries as i.i.d.~standard normal random variables.  
Then all $U_{\dc{i}}$'s are normalized with respect to $T_{\dc{i}}$. 
With the foregoing construction, it is straightforward to verify that $\lambda_{r}-\lambda_{r+1}=2$ and that $\text{Col}(A_{\dc{i}})=\text{Col}(U_{\dc{i}})$. 
Finally, $\Sigma_0 = \mathrm{diag}(\Sigma_{\dc{11}},\Sigma_{\dc{22}},\Sigma_{\dc{33}})$ contains the block diagonal elements of $\Sigma$.

We vary $r$ in $\{1,2,3,4,5\}$ { and report squared matrix distances defined as $\mathrm{dist}^2(A,\widehat{A}) = \min_{O\in\mathcal{O}(r)}\|\widehat{A}O-A\|_\mathrm{F}^2$ for both initial and final estimators based on $50$ repetitions in each setting.}
For tuning parameters in Algorithm \ref{TGD}, we set 
$\hs=20$, $\eta=0.001$, $\lambda = 0.01$, and $T=15000$.
The tuning parameter for generalized Fantope initialization is set to be $\rho = \frac{1}{2}\sqrt{\frac{\log p}{n}}$. 
{The truncation parameter for initialization is also set to be $s' = 20$. }

Table \ref{simgca} reports the results of the aforementioned simulation study. 
For all latent dimensions, we observe a significant decrease in estimation error after Algorithm \ref{TGD} is applied. 
This corroborates the theory in Section \ref{sec:theorem}.
{To better understand how each iteration of Algorithm \ref{TGD} improves estimation, we plot $\mathrm{dist}(V,\overline{V}_t)$ {and $\mathrm{dist}(A,\widehat{A}_t)$} in logarithmic scale 
against the iteration counter $t$ for $r=1,2, 3, 4,5$ and $n = 500$. 
For any $t$, we set $\widehat{A}_t = \overline{V}_t (\overline{V}_t^\top \widehat{\Sigma}_0 \overline{V}_t)^{-1/2}$ which agrees with the definition of $\widehat{A}_T$ in the last line of Algorithm \ref{TGD}.
From Figure \ref{fig:log_plot}, we observe an approximate linear decay trend at the beginning in all cases, which corresponds to exponential decay in the original scale.
{Moreover, after sufficiently many iterations, all error curves plateau, which suggests that the performance of the resulting estimators have stabilized.}}
Both phenomena agree well with the theoretical findings in Theorem \ref{mainthe}.

\begin{table}[h]
\centering
\begin{tabular}{c|ccccc}
\hline
\multicolumn{1}{c|}{Dimension} & $r=1$ & $r=2$ & $r=3$ & $r=4$ & $r=5$ \\ 
\hline
Initial Error & 0.1319(0.0737) & 0.2308(0.0591) & 0.2746(0.0652) & 0.2354(0.0483) &  0.1969(0.0237) \\ 
Final Error & 0.0015(0.0030) & 0.0072(0.0209) & 0.0098(0.0301) & 0.0121(0.0071) & 0.0171(0.0061) \\ \hline
\end{tabular}
\caption{Median errors of initial (generalized Fantope) and final (Algorithm \ref{TGD}) estimators in squared matrix distance out of 50 repetitions. 
Median absolute deviations of errors are reported in parentheses. }
\label{simgca}
\end{table}

\begin{figure}[!tb]
\centering
\includegraphics[width = 0.48\textwidth]{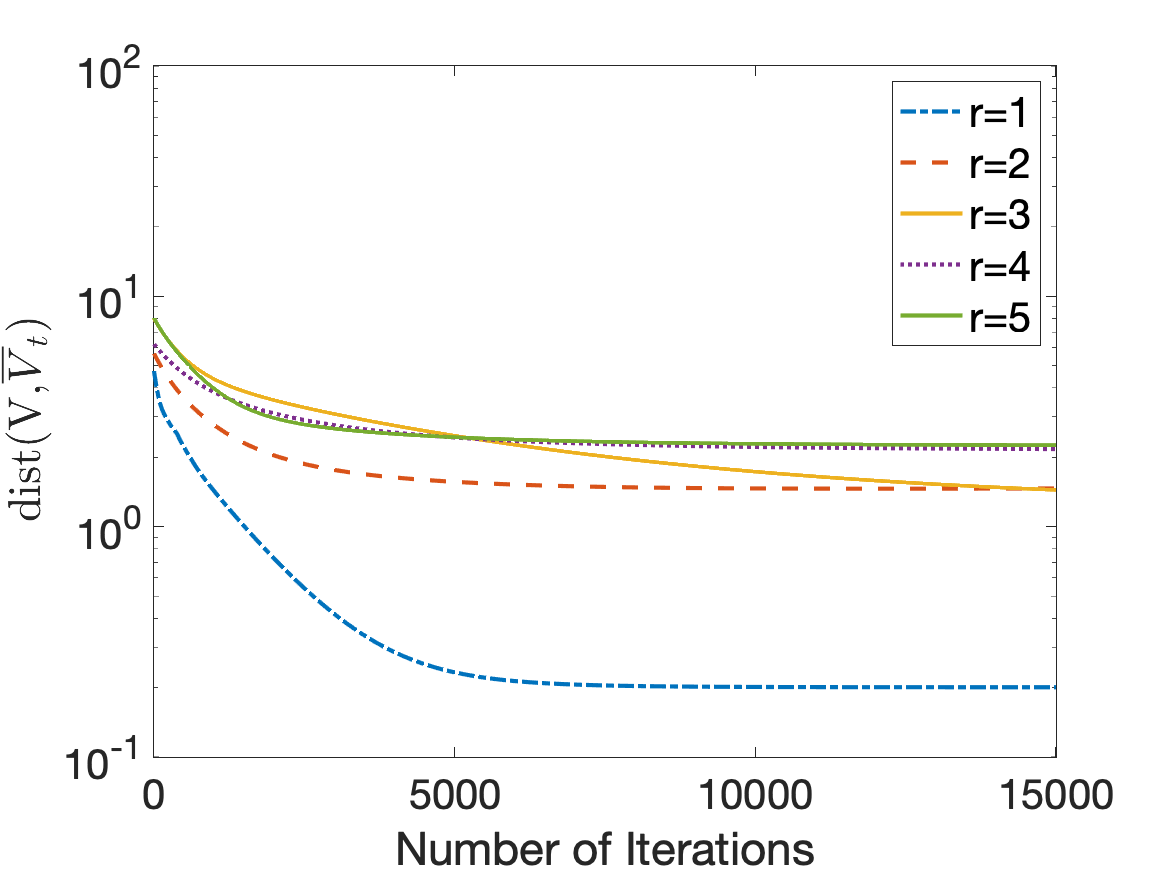}
\includegraphics[width = 0.48\textwidth]{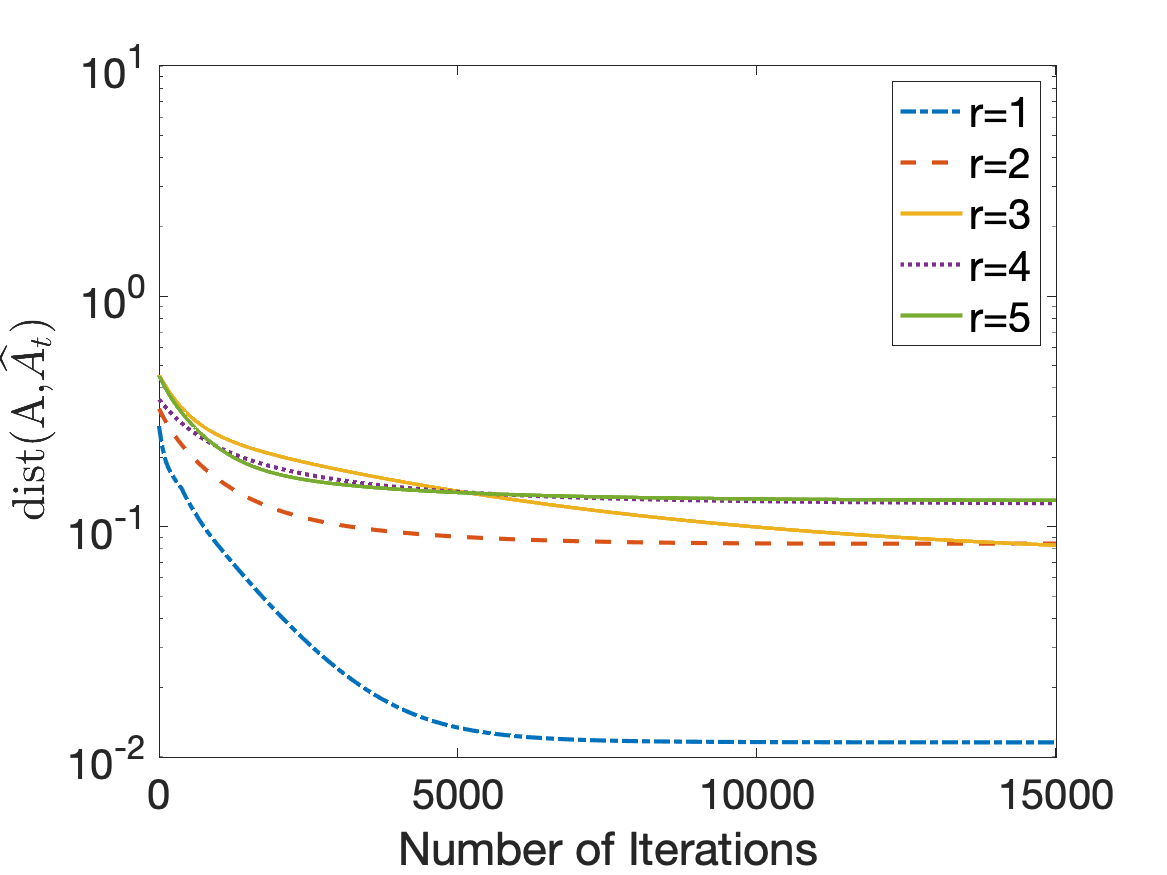}
\caption{Plots of $\mathrm{dist}(V,\overline{V}_t)$ (left) and $\mathrm{dist}(A,\widehat{A}_t)$ (right) (both in logarithmic scale) vs.~iteration counter $t$. 
}
\label{fig:log_plot}
\end{figure}

\citet{tan2018sparse} proposed a truncated Rayleigh flow method, called Rifle, to solve the sparse generalized eigenvalue problem when $r=1$. 
Here we compare our method and Rifle under the same experiment setting as above, with $n = 500$ and $1000$. 
{For TGD we set the tuning parameters to be $s'=20, \eta = 0.01, \lambda = 0.01$ and $T = 15000$. For Rifle we also use $s'=20$ for truncation, the step size $\eta$ is set to be the default value $0.01$ and we run it for sufficiently many iterations until estimator errors no longer improve.}
We vary the sample size $n$ in $\{500, 1000\}$ and each experiment setting is repeated 50 times. 
We report median estimation errors of both initial and final estimators of both methods measured in squared matrix distances in Table \ref{simgca_rilfe}. 
{Both Rifle and our method produce highly accurate final estimators, which significantly decrease errors of respective initializers. 
Final estimators by our algorithm are slightly more accurate in both settings.}

\begin{table}[!tb]
\centering
\begin{tabular}{c|cc|cc}
\hline
\multirow{2}{*}{} & \multicolumn{2}{c|}{Rifle} & \multicolumn{2}{c}{TGD} \\ \cline{2-5} 
 & Initial Error & Final Error & Initial Error & Final Error \\ \hline
$n=500$ & 0.1225 (0.0562) & 0.0006 (0.0005) & 0.1458 (0.0713) & 0.0003 (0.0011) \\ \cline{1-1}
$n=1000$ & 0.0805 (0.0449) & 0.0003 (0.0012) & 0.0920 (0.0504) & 0.0002 (0.0003) \\ \hline
\end{tabular}
\caption{
Median errors of initial and final estimators of both Rifle and Algorithm \ref{TGD} in squared matrix distance out of 50 repetitions. 
Median absolute deviations of errors are reported in parentheses.
}
\label{simgca_rilfe}
\end{table}

\subsection{Choices of tuning parameters}
\label{sec:tuning_parameter_exp}
In this section, we study the selection of tuning parameters. 
For the sparsity level $s'$ in the hard thresholding step of Algorithm \ref{TGD}, we show that a five-fold cross validation approach works reasonably well.
For the Lagrangian multiplier parameter $\lambda$, we show that the estimation procedure is robust with respect to it and we recommend fixing it at some small positive constant.

For simulations below, we fix $n=500$ and $p = 900$. 
In addition, we fix other tuning parameters throughout the experiments as follows: $\rho = \frac{1}{2}\sqrt{\frac{\log p}{n}}$ for Fantope initialization and $\eta=0.001$, $T=15000$ for Algorithm \ref{TGD}.

\paragraph{Choice of sparsity level $s'$.} 
The procedure for the selection of $s'$ is as follows. 
We first randomly split the data $X$ into five folds of equal sizes. 
For $l=1, \dots, 5$, we use one fold as the test set $X^{\mathrm{test}}_{(l)}$ and the other four folds combined as the training set $X^{\mathrm{train}}_{(l)}$. For each value of $s'$ in a pre-specified grid $G'$, we apply Algorithm \ref{TGD} on $X^{\mathrm{train}}_{(l)}$ to obtain an estimator $\widehat{A}^{\mathrm{train}}_{(l)}$. 
Then we compute the test GCA score defined as 
\begin{equation*}
\mathrm{CV}_{(l)}(s') = \Tr\left(\widehat{A}^{\mathrm{train}^\top}_{(l)} \widehat{\Sigma}_{(l)}^{\mathrm{test}}\widehat{A}^{\mathrm{train}}_{(l)}\right).
\end{equation*}
Here $\widehat{\Sigma}_{(l)}^{\mathrm{test}}$ is the sample covariance matrix of the test data $X^{\mathrm{test}}_{(l)}$. 
Finally, we compute the cross-validation score for $s'$, defined as $\mathrm{CV}(s') = \frac{1}{5}\sum_{l=1}^{5} \mathrm{CV}_{(l)} (s')$. 
Upon obtaining the cross-validation scores for all $s'\in G'$, we select the sparsity level that maximizes $\mathrm{CV}(s')$.

In the experiments here, we set the 
grid $G'$ as $\{5i:i=1,\dots, 20\}$. 
We fix $r=3, \lambda = 0.01$ and consider three cases: $s_1=s_2=s_3=5$ (Case I), $s_1=s_2=s_3=15$ (Case II) and $s_1=s_2=s_3=20$ (Case III). 
The true sparsity levels in three cases are then $s = 15, 45$, and $60$, respectively. 
The results from the 5-fold cross validation procedure for the three cases are plotted in Figure \ref{Figure:tuning-sparsity}. 
Here, we plot $\mathrm{CV}(s')$ against $s'\in G'$ and use error-bars to indicate the range of $\pm$ one standard deviation of cross-validation scores. 

\begin{figure}[htp]

\centering
\includegraphics[width=.33\textwidth]{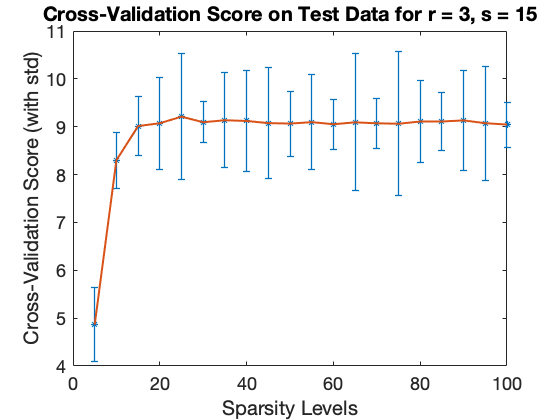}\hfill
\includegraphics[width=.33\textwidth]{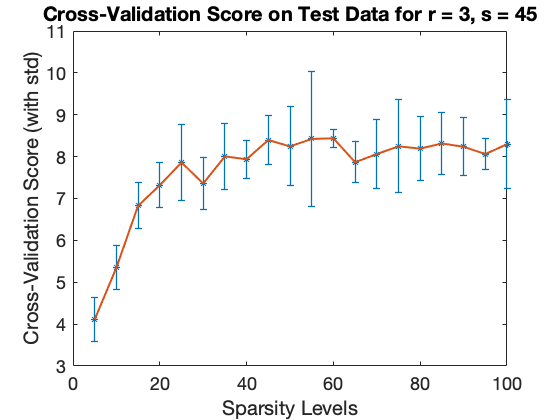}\hfill
\includegraphics[width=.33\textwidth]{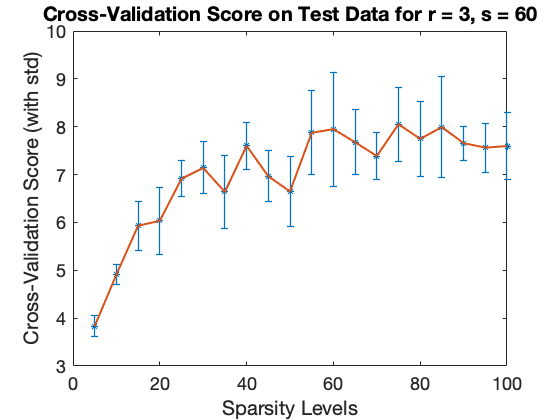}\hfill

\caption{Plots of cross-validation score $\mathrm{CV}(s')$ against $s'\in G'$ for case I ($s=15$, left), case II ($s=45$, middle), and case III ($s=60$, right).
Curves indicate average over five folds and error bars indicate one standard deviations.
}
\label{Figure:tuning-sparsity}

\end{figure}

\par Figure \ref{Figure:tuning-sparsity} shows that in all three cases, the cross-validation scores increase with $s'$ until reaching a plateau level and fluctuate around some constant when $s'$ is set to larger values. 
The sparsity levels selected by this procedure are slightly larger than the true sparsity levels in all three settings. 
Specifically, the selected sparsity levels for the three cases are $25, 60$, and $75$, respectively. 
This suggests that our previous choice of $s'=20$ for $s=15$ in Section \ref{sec:exp_sparse_gca} is reasonable. 

\paragraph{Choice of penalty $\lambda$.} We now study the impact of different choices of $\lambda$ on our estimator. 
To this end, we vary $\lambda$ on a log scale and consider all values in the set $\{0.0001, 0.001, 0.01, 0.1, 1\}$ and compare the final estimation errors across these choices. 
We fix $s'=20$ and test using the same simulation setting as in Section \ref{sec:exp_sparse_gca} with $r=3$ and $4$. 
We set all other tuning parameters in the same way and only change $\lambda$.
Table \ref{table:tune_lambda} reports squared distances from estimators to true parameter values based on $50$ repetitions in each $(r,\lambda)$ value combination. 
From Table \ref{table:tune_lambda}, we notice that for both $r=3$ and $r=4$, the final estimation error is robust with respect to the choice of $\lambda$. 

\begin{table}[!tb]
\centering
\begin{tabular}{c|cccccc}
\hline
\multicolumn{1}{c|}{Penalty $\lambda$} & $0.0001$ & $0.001$ & $0.01$ & $0.1$ & $1$  \\ 
\hline
$r=3$ & 0.0121(0.0364) & 0.0136(0.0252) & 0.0098(0.0301) & 0.0093(0.0140)  & 0.0099(0.0108) \\ \cline{1-1}
$r=4$ & 0.0145(0.0068) & 0.0152(0.0102) & 0.0121(0.0071) & 0.0101(0.0055) & 0.0119(0.0093) & \\ \hline
\end{tabular}
\caption{Median errors of the final (Algorithm \ref{TGD}) estimators in squared matrix distance out of 50 repetitions for $r= 3, 4$ and different choices of  $\lambda$. 
Median absolute deviations of errors are reported in parentheses. }
\label{table:tune_lambda}
\end{table}

\subsection{Sparse CCA}

To further compare the performance of our algorithm with a benchmark, we compare its performance in sparse CCA problems with the CoLaR method proposed by \cite{gao2017sparse}. 
Instead of using gradient descent and hard thresholding, \citet{gao2017sparse} refined the initial estimate by a linear regression with group Lasso penalty. In addition, their estimator was shown to achieve the optimal estimation rate for canonical loading matrix under a prediction loss that is different from \eqref{eq:mat-dist}.

For fair comparison, we use the same simulation settings as in \cite{gao2017sparse}.
Using the notation in Remark \ref{rmk:cca},
we set 
\begin{equation*}
    p_1=p_2,\quad \Sigma_x=\Sigma_y=M,\quad r=2,\quad \theta_1=0.9,\quad \theta_2=0.8.
\end{equation*}
The covariance matrix between $X$ and $Y$ is generated from the canonical pair model $\Sigma_{xy}=\Sigma_xV \Theta_2 W^\top \Sigma_y$ 
{Recall that from the relationship between $\Theta_2$ and $\Lambda_2$ we obtain that $\lambda_1=\theta_1+1=1.9$ and $\lambda_2=\theta_2+1=1.8$. Hence the leading two generalized eigenvalues are $1.9$ and $1.8.$}
Moreover, the row supports for both $V$ and $W$ are set to be $\{1,6,11,16,21\}$. The values at the nonzero coordinates are obtained by normalizing (with respect to $M = \Sigma_x = \Sigma_y$) random numbers drawn from the uniform distribution on the finite set $\{-2,-1,0,1,2\}$. 
We consider three different choices of $M$ as follows: 
(i) Identity: $M=I$; 
(ii) Toeplitz: $M=(\sigma_{ij})$ where $\sigma_{ij}=0.3^{|i-j|}$ for all $i,j$; 
(iii) SparseInv: $M={\sigma^0_{ij}}/{\sqrt{\sigma_{ii}^0\sigma_{jj}^0}}$ where $\Sigma^0=(\sigma_{ij}^0)=\Omega^{-1}$ for $\Omega=\omega_{ij}$ with
\begin{equation*}
        \omega_{ij}=\mathbf{1}_{i=j}+0.5\times\mathbf{1}_{|i-j|=1}+0.4\times \mathbf{1}_{|i-j|=2}\quad i,j\in[p_1].
   \end{equation*}
We consider the same four configurations of $(n,p_1,p_2)$ as in \cite{gao2017sparse}. 
For CoLaR and its initialization, we set all tuning parameters to those values used in \cite{gao2017sparse}. 
For tuning parameters in Algorithm \ref{TGD} and its initialization, we set $\rho = \frac{1}{2}\sqrt{\frac{\log p}{n}}$,
$\hs=20$, $\lambda=0.01$, $\eta=0.001$, and $T=10000$.
Upon obtaining the final estimate $\widehat{A}=[\widehat{A}_{\dc{1}}^\top, \widehat{A}_{\dc{2}}^\top]^\top$ from Algorithm \ref{TGD}, 
we calculate estimators for $V$ and $W$ as $\widehat{V} = \widehat{A}_{\dc{1}}(\widehat{A}_{\dc{1}}^\top\widehat{\Sigma}_x\widehat{A}_{\dc{1}})^{-1/2}$ and $\widehat{W} = \widehat{A}_{\dc{2}}(\widehat{A}_{\dc{2}}^\top\widehat{\Sigma}_y\widehat{A}_{\dc{2}})^{-1/2}$.

We summarize simulation results of the foregoing settings in Tables 2--4, 
each corresponding to a different choice of $M$. 
Following \cite{gao2017sparse}, we measure estimation error by prediction loss defined as $L(V,\widehat{V})=\inf_{O\in \mathcal{O}(r)}\|\Sigma_x^{1/2}(\widehat{V} O-V)\|_\mathrm{F}^2$
 and 
$L(W,\widehat{W})=\inf_{O\in \mathcal{O}(r)}\|\Sigma_y^{1/2}(\widehat{W} O-W)\|_\mathrm{F}^2$. 
Each reported number is the median error out of $50$ independent repetitions. 

\begin{table}[!tb]
\begin{tabular}{c|cc|cc|cc|cc}
\hline
 {($n$,$p_1$,$p_2$)} &  {$V$-init} &  {$W$-init} &  {$V$-CoLaR} &  {$W$-CoLaR} &  {$V$-GFP} &  {$W$-GFP} &  {$V$-TGD} &  {$W$-TGD} \\ \hline
 {(300,300,200)}      & 0.2885              & 0.1706              & 0.0511           & 0.0601           & 0.1391              & 0.2636              & 0.0118         & 0.0118         \\ 
 {(600,600,200)}      & 0.3236              & 0.2004              & 0.0638           & 0.0764           & 0.3420              & 0.1655              & 0.0237         & 0.0485         \\ 
 {(300,300,500)}      & 0.1202              & 0.0664              & 0.0135           & 0.0166           & 0.2646              & 0.3078              & 0.0100         & 0.0076         \\ 
 {(600,600,500)}      & 0.1408              & 0.0811              & 0.0176           & 0.0209           & 0.2459              & 0.1876              &         0.0213       &     0.0157           \\ \hline
\end{tabular}
\caption{Comparison of CoLaR and Algorithm \ref{TGD}: median errors out of 50 repetitions with Toeplitz covariance structure. }
\label{simcca1}
\end{table}
\begin{table}[!tb]
\begin{tabular}{c|cc|cc|cc|cc}
\hline
  {($n$,$p_1$,$p_2$)} &  {$V$-init} &  {$W$-init} &  {$V$-CoLaR} &  {$W$-CoLaR} &  {$V$-GFP} &  {$W$-GFP} &  {$V$-TGD} &  {$W$-TGD} \\ \hline
 {(300,300,200)}      & 0.2653              & 0.1712              & 0.0498           & 0.0646           & 0.3413              & 0.5688              & 0.0454         & 0.0211         \\ 
 {(600,600,200)}      & 0.3167              & 0.2087              & 0.0671           & 0.0776           & 0.5016              & 0.7134              & 0.0231         & 0.0559         \\ 
 {(300,300,500)}      & 0.1207              & 0.0665              & 0.0135           & 0.0159           & 0.3580              & 0.1713              & 0.0124         & 0.0117         \\ 
 {(600,600,500)}      & 0.1448              & 0.0817              & 0.0166           & 0.0203           & 0.3906              & 0.4095              & 0.0082         & 0.0129         \\ \hline
\end{tabular}
\caption{Comparison of CoLaR and Algorithm \ref{TGD}: median errors out of 50 repetitions with Identity covariance structure. 
}
\label{simcca2}
\end{table}
\begin{table}[!tb]
\begin{tabular}{c|cc|cc|cc|cc}
\hline
 {($n$,$p_1$,$p_2$)} &  {$V$-init} &  {$W$-init} &  {$V$-CoLaR} &  {$W$-CoLaR} &  {$V$-GFP} &  {$W$-GFP} &  {$V$-TGD} &  {$W$-TGD} \\ \hline
 {(300,300,200)}      & 0.5552              & 0.5718              & 0.1568           & 0.1194           & 0.2021        & 0.4924        & 0.0719         & 0.0421         \\ 
 {(600,600,200)}      & 0.5596              & 0.6133              & 0.2123           & 0.1572           & 0.4653        & 0.2222        & 0.1555         & 0.0686         \\ 
 {(300,300,500)}      & 0.2695              & 0.1917              & 0.0242           & 0.0219           & 0.1592        & 0.3058        & 0.0042         & 0.0056         \\ 
 {(600,600,500)}      & 0.3068              & 0.2368              & 0.0338           & 0.0271           & 0.1853        & 0.1404        & 0.0454         & 0.0491         \\ \hline
\end{tabular}
\caption{Comparison of CoLaR and Algorithm \ref{TGD}: median errors out of 50 repetitions with SparseInv covariance structure. 
}
\label{simcca3}
\end{table}

\par In Tables \ref{simcca1}--\ref{simcca3}, the first four columns collect results of CoLaR. 
Columns ``$V$-GFP'' and ``$W$-GFP'' collect errors of initial estimators by generalized Fantope projection, while 
columns ``$V$-TGD'' and ``$W$-TGD'' report errors of final estimators by Algorithm \ref{TGD}. 
In all settings, both CoLaR and Algorithm \ref{TGD} yield good final estimators, which significantly improve over their respective initializers. 
In addition, Algorithm \ref{TGD} outperforms CoLaR in most settings. 

In summary, both CoLaR and Algorithm \ref{TGD} consistently estimate the leading $r$ canonical loading vectors, while Algorithm \ref{TGD} has a slight advantage. 
Moreover,
Algorithm \ref{TGD} is more desirable due to its generality beyond the sparse CCA setting. 

\subsection{Model mis-specification}
\label{sec:misspec}
To investigate the robustness of our method, we consider one possible mis-specification of the model in the case of two datasets: 
There are 3 pairs of nontrivial canonical correlations present but we only set $r=2$.  
As we are in the CCA setting, we consider three types of covariance matrices specified in the previous subsection. 
For all three cases, the first two canonical loading vectors and generalized eigenvalues are generated in the same way as the previous subsection.
In addition, we also set the support of the third vector to be $\{1,6,11,16,21\}$. 
We fix $\theta_3=0.3$ (i.e., $\lambda_3 = 1.3$)
and focus on the configuration $(n,p_1,p_2)=(300,300,500)$. 
All tuning parameters are set in the same way as in last subsection. 

\begin{table}[!tb]
\centering
\begin{tabular}{c|cc|cc}
\hline
{{{Covariance structure}}} &  {$V$-CoLaR} &  {$W$-CoLaR} &  {$V$-TGD} &  {$W$-TGD} \\ \hline
{Toeplitz}                  & 0.0197           & 0.0197           & 0.0291         & 0.0157         \\ 
{Identity}                  & 0.0190           & 0.0195           &        0.0113        &    0.0155            \\ 
{SparseInv}                 & 0.0348           & 0.0263           & 0.0112         & 0.0488         \\ \hline
\end{tabular}
\caption{Comparison of CoLaR and Algorithm \ref{TGD}: median errors of estimating first two generalized eigenvectors out of 100 repetitions in mis-specified models.}
\label{simcca4}
\end{table}

Simulation results under these settings can be found in Table \ref{simcca4}, and we continue to use prediction loss as in the previous subsection.
Similar to CoLaR, Algorithm \ref{TGD} continues to produce accurate estimates when the input latent dimension $r_{\mathrm{in}}$ is smaller than the true value $r$ and there is an eigengap between {$\theta_{r_{\mathrm{in}}}$ and $\theta_{r_{\mathrm{in}}+1}$, and thus between  $\lambda_{r_{\mathrm{in}}}$ and $\lambda_{r_{\mathrm{in}}+1}$.}


\subsection{Sparse PCA of correlation matrices}

We now apply Algorithm \ref{TGD} to performing sparse PCA of correlation matrices.
See Remark \ref{rmk:pca} for a detailed account on how sparse PCA of correlation matrices can be cast as a special case of sparse GCA.
By Remark \ref{rmk:pca}, let $\widehat{\Sigma}_0$ be the diagonal matrix with $p$ sample variances on the diagonal, we could use $\widehat{\Sigma}_0^{1/2}\widehat{A}$ as our estimator of the $r$ leading eigenvectors of correlation matrix $R = \Sigma_0^{-1/2}\Sigma\Sigma_0^{-1/2}$, where $\widehat{A}$ is the estimator produced by Algorithm \ref{TGD}.
{We report the squared matrix distance for estimating the leading eigenspace $L(E_R,\widehat{A}) = \min_{O\in\mathcal{O}(r)}\|\widehat{\Sigma}_0^{1/2}\widehat{A}O-E_R\|_\mathrm{F}^2$ for initialization and final error, where $E_R$ is the leading eigenspace of $R$.}


Given ambient dimension $p$, sparsity $s$, latent dimension $r$ and generalized eigenvalues $\lambda_1\geq \dots\geq \lambda_r > 1$, we generate a $p\times p$ correlation matrix $R$ with leading eigenvalues $\lambda_1\geq \dots\geq \lambda_r > \lambda_{r+1} = 1$ and $s$-sparse leading eigenvectors according to the procedure described in Section \ref{sec:simu-pca-detail}.
To further construct the covariance matrix $\Sigma$, we 
generate $\Sigma_0$ as a diagonal matrix with i.i.d.~uniform random numbers in $[0.1,1]$ as diagonal elements. 
Finally, we set $\Sigma=\Sigma_0^{1/2}R\Sigma_0^{1/2}$.

In our simulation study, we set $p=500, s=20$, $r=3$, and let sample size $n$ be either $500$ or $2000$. 
The tuning parameter for initialization is set as $\rho=\frac{1}{2}\sqrt{\frac{\log p}{n}}$. 
Tuning parameters of Algorithm \ref{TGD} are set to be
$s'=40$, $T=20000$, $\lambda=0.01$, and $\eta=0.001$.
The leading generalized eigenvalues are set as $\{5,5,5\}$ (case I) or $\{7,5,3\}$ (case II).
For both cases, our construction of $\Sigma$ ensures that $\lambda_4 = 1$.

\begin{table}[h]
\centering
\begin{tabular}{c|c|c|c}
\hline
$n$ & Case & Initial Error  & Final Error  \\ \hline
\multirow{2}{*}{$n=500$} & I & 0.4603 (0.1404)   & 0.0851 (0.0306)   \\ \cline{2-4} 
 & II & 0.6379 (0.0567)   & 0.1001 (0.0098)   \\ \hline
\multirow{2}{*}{$n=2000$} & I & 0.0882 (0.0177)   & 0.0184 (0.0025)   \\ \cline{2-4} 
 & II & 0.1060 (0.0697) &   0.0258 (0.0025)   \\ \hline
\end{tabular}
\caption{Sparse PCA of correlation matrices by Algorithm \ref{TGD}: 
median errors of initial and final estimators out of 50 repetitions.
MADs are reported in parentheses.
}
\label{spca_sim}
\end{table}

We report results from 50 repetitions for each case and each sample size in Table \ref{spca_sim}.
From Table \ref{spca_sim}, it can be seen that in all settings, Algorithm \ref{TGD} yields accurate estimators of leading eigenspaces of correlation matrices. 
Due to a larger eigengap in the first case, it has slightly better estimation accuracy which corroborates our theory.

\subsection{Sparse GCA with general covariance structure}
\label{sec:exp_general_covariance}
In this section, we investigate the performance of our algorithm under a general covariance structure and demonstrate empirically that our algorithm works beyond latent variable models as long as
the eigengap condition holds.

We consider sparse GCA of three high-dimensional datasets with $p_1=500$, $p_2=p_3=200$ and let the rank of all off-diagonal blocks of $\Sigma$ be $p_{\min} = \min_i p_i = 200$. 
To generate covariance matrices $\Sigma$ and $\Sigma_0$,
we first define a diagonal matrix $\Theta$ with $\Theta_{ii}=2/i$ for $i=1, 2, \dots, p_{\min}$.
We then define $\Sigma$ as a block matrix with
\begin{equation*}
\Sigma_{\dc{ij}}=T_{\dc{i}}U_{\dc{i}}\Theta U_{\dc{j}}^\top T_{\dc{j}},\quad \Sigma_{\dc{ii}}=2T_{\dc{i}},\quad\text{for $i\neq j \in\{1,2,3\}$},
\end{equation*}\begin{equation*}
    U_{\dc{1}}^\top T_{\dc{1}}U_{\dc{1}}=U_{\dc{2}}^\top T_{\dc{2}}U_{\dc{2}}=U_{\dc{3}}^\top T_{\dc{3}}U_{\dc{3}}=I.
\end{equation*} 
Here, each $\Sigma_{\dc{ii}}=2T_{\dc{i}}$ and each $T_{\dc{i}}$ is a Toeplitz matrix, defined by setting $(T_{\dc{k}})_{ij}=\sigma_{k_{ij}}$ where $\sigma_{k_{ij}}=a_k^{|i-j|}$ for all $i,j\in [p_k]$ with $a_1=0.5, a_2=0.7, a_3=0.9$. 
The first five columns of each $U_{\dc{i}}$ have row sparsity level $s_i = 5$. 
As a result, the support sizes of the first three GCA loading vectors are at most $s = 15$.
For each $U_{\dc{i}}\in\mathbb{R}^{p_i\times p_{\min}}$, we define $U_{\dc{i}} = [U_{\dc{i}}(1), U_{\dc{i}}(2)]$ where $U_{\dc{i}}(1)\in\mathbb{R}^{p_i\times s_i}$ and $U_{\dc{i}}(2)\in\mathbb{R}^{p_i\times (p_{\min}-s_i)}$. 
The submatrices $U_{\dc{i}}(1), U_{\dc{i}}(2)$ are generated as follows.
To generate each $U_{\dc{i}}(1)$, we first randomly select a support of size $s_i$.
For each row in the support of $U_{\dc{i}}(1)$, we generate its entries as i.i.d.~standard normal random variables and fill the remaining entries with zeros. 
 Then, we normalize all $U_{\dc{i}}(1)$'s with respect to $T_{\dc{i}}$.
To generate $U_{\dc{i}}(2)$, we first perform SVD on $\left(U_{\dc{i}}(1)\right)^\top T_{\dc{i}}^{1/2}=P_{\dc{i}}D_{\dc{i}}Q_{\dc{i}}^\top$ to obtain 
$Q_{\dc{i}}\in \mathcal{O}(p_i)$. 
We then take $\widetilde{Q}_{\dc{i}} = Q_{\dc{i}, *J}$ for $J = \{s_i+1, \dots, p_{\min}\}$ which is the submatrix of ${Q}_{\dc{i}}$ consisting of the $(s_i+1)$th to the $p_{\min}$th columns of $Q_{\dc{i}}$. 
By construction, we have $\left(U_{\dc{i}}(1)\right)^\top T_{\dc{i}}^{1/2}\widetilde{Q}_{\dc{i}}=0$ and $\widetilde{Q}_{\dc{i}}\in\mathbb{R}^{p_i\times (p_{\min}-s_i)}$. 
Finally, we define $U_{\dc{i}}(2) = T_{\dc{i}}^{-1/2}\widetilde{Q}_{\dc{i}}$.
By construction, we have $\left(U_{\dc{i}}(2)\right)^\top T_{\dc{i}}U_{\dc{i}}(2) = I$ and $\left(U_{\dc{i}}(2)\right)^\top T_{\dc{i}}U_{\dc{i}}(1) = 0$. 
Thus, the leading five columns of each $U_{\dc{i}}$ are sparse and we have $U_{\dc{i}}^\top T_{\dc{i}}U_{\dc{i}} = I$.
With the foregoing construction, it is straightforward to verify that the $i$th generalized eigenvalue is given by $\lambda_i = 1 +2i^{-1}$ for $i =1, 2, \dots, p_{\min}$ and the non-trivial GCA loadings are given by $[U_{\dc{1}}^\top, U_{\dc{2}}^\top, U_{\dc{3}}^\top]^\top$. Therefore, there are eigengaps between any consecutive generalized eigenvalues above one and only the first five GCA loading vectors are sparse. 

Under the foregoing setup, we aim at estimating the leading GCA loading vector (Case I), the matrix with leading two GCA loading vectors (Case II), and the matrix with leading three GCA loading vectors (Case III). 
For tuning parameters, we set  $s'=20$ for sparsity level, 
 $\eta=0.001$, $\lambda = 0.01$, and $T=15000$ in Algorithm \ref{TGD}, and
$\rho = \frac{1}{2}\sqrt{\frac{\log p}{n}}$ in \eqref{eq:initial-fantope}. 
We consider two different sample sizes: $n=500$ and $n=2000$.

Table \ref{table:sgca_general_covariance} reports the squared matrix distances for the initial and the final estimators from true value defined as $\mathrm{dist}^2(A,\widehat{A}) = \min_{O\in\mathcal{O}(r)}\|\widehat{A}O-A\|_\mathrm{F}^2$ for all three cases and two different sample sizes.
Numbers in each cell are the median and the MAD of results over $50$ repetitions. Population parameters are also generated independently in each repetition.
To gauge the scale of errors, we also report squared Frobenius norm $\|A\|_\mathrm{F}^2$ of the estimand in the last column of Table \ref{table:sgca_general_covariance}.

\begin{table}[h!]
\centering
\begin{tabular}{c|c|c|c|c}
\hline
$n$ & Case & Initial Error  & Final Error & $\|A\|_\mathrm{F}^2$ \\ \hline
\multirow{3}{*}{$n=500$} & I & 0.0500 (0.0358)   & 0.0005 (0.0007) & 0.5669 
\\ \cline{2-5} 
 & II & 0.1586 (0.0420) &   0.0605 (0.0313)&  1.2433 
 \\ \cline{2-5} 
  & III &  0.3097  (0.1484) &    0.1831 (0.1107) & 1.8378 
  \\ \hline
\multirow{3}{*}{$n=2000$} & I &  0.0174  (0.0064)   & 0.0001 (0.0002)&  0.5669 
\\ \cline{2-5} 
 & II & 0.0390 (0.0268)   & 0.0136 (0.0063)&  1.2433 
 \\ \cline{2-5} 
   & III & 0.0741 (0.0719) &   0.0369 (0.0178)&  1.8378 
   \\ \hline
\end{tabular}
\caption{ Sparse GCA with general covariance structure: median errors of initial and final estimators out of $50$ repetitions with MADs reported in parentheses. Medians of squared Frobenius norms of estimands are listed in the last column.}

\label{table:sgca_general_covariance}
\end{table}
From Table \ref{table:sgca_general_covariance}, it can be seen that in all settings, Algorithm \ref{TGD} consistently yields improved estimates of leading GCA loading vectors whenever there is an eigengap. 

\appendix
\section{Proof of Lemmas in Section \ref{sec:latent}}

\subsection{Proof of Lemma \ref{lemma:latent_var}}

\begin{proof}
For $\Lambda_r = \mathrm{diag}(\lambda_1,\dots, \lambda_r)$. 
Since $A$ solves \eqref{eq:pop-gca-mat2}, we have 
\begin{equation}
      \label{eq:right-eig}
\Sigma_0^{-1}\Sigma A = A \Lambda_r.      
\end{equation}
By model \eqref{eq:latent-var-model}--\eqref{eq:Sigma-i-2}, we have
$\Sigma = \Psi + UU^\top$ and $\Sigma_0 = \mathrm{diag}(\Sigma_{\dc{11}},\dots, \Sigma_{\dc{kk}})$, and so
\begin{equation*}
\Sigma_0^{-1}\Sigma = 
\begin{bmatrix}
I_{p_1} & \Sigma^{-1}_{\dc{11}} U_{\dc{1}} U^\top_{\dc{2}} & \cdots & \Sigma^{-1}_{\dc{11}}U_{\dc{1}} U^\top_{\dc{k}}\\
\Sigma^{-1}_{\dc{22}} U_{\dc{2}} U^\top_{\dc{1}} & I_{p_2} & \cdots & \Sigma^{-1}_{\dc{22}} U_{\dc{2}} U^\top_{\dc{k}}\\
\vdots & \vdots & \ddots & \vdots \\
\Sigma^{-1}_{\dc{kk}} U_{\dc{k}} U^\top_{\dc{1}} & \Sigma^{-1}_{\dc{kk}} U_{\dc{k}} U^\top_{\dc{2}} & \cdots & I_{p_k}
\end{bmatrix}.
\end{equation*}
Thus, \eqref{eq:right-eig} leads to
\begin{equation*}
\Sigma_0^{-1}\Sigma A =
\begin{bmatrix}
A_{\dc{1}} + \Sigma^{-1}_{\dc{11}}U_{\dc{1}} \sum_{j\neq 1} U^\top_{\dc{j}}A_{\dc{j}}\\
\vdots \\
A_{\dc{k}} + \Sigma^{-1}_{\dc{kk}}U_{\dc{k}} \sum_{j\neq k} U^\top_{\dc{j}} A_{\dc{j}}\\
\end{bmatrix} 
=
\begin{bmatrix}
A_{\dc{1}} \Lambda_r \\ \vdots \\ A_{\dc{k}} \Lambda_r
\end{bmatrix} = A \Lambda_r.
\end{equation*}
Collecting terms, we obtain
\begin{equation*}
\begin{bmatrix}
\Sigma^{-1}_{\dc{11}}U_{\dc{1}} \sum_{j\neq 1} U^\top_{\dc{j}} A_{\dc{j}}\\
\vdots \\
\Sigma^{-1}_{\dc{kk}}U_{\dc{k}} \sum_{j\neq k} U^\top_{\dc{j}} A_{\dc{j}}\\
\end{bmatrix} 
=
\begin{bmatrix}
A_{\dc{1}} (\Lambda_r - I_r) \\ \vdots \\ A_{\dc{k}} (\Lambda_r-I_r)
\end{bmatrix} = A (\Lambda_r - I_r).
\end{equation*}
Since both $A$ and $\Lambda_r - I_r$ are of rank $r$ under the assumption of the lemma, we complete the proof.
\end{proof}

\subsection{Proof of Lemma \ref{lem:eigengap}}
Before the proof of the lemma, we recall Weyl's inequality \citep{weyl1912asymptotische} stated as follows.

\begin{lemma}(Weyl's inequality)
      \label{lemma:weyl}
Let $A,B$ be two $p\times p$ Hermitian matrices {and $s_i(A)$ be the $i$th eigenvalue of $A$}. 
Then for $1\leq l,j\leq p$, we have 
\begin{align*}
\eigen_l(A+B)& \leq \eigen_j(A)+\eigen_k(B),
\quad 
\text{for $l\geq j+k-1$},\\
\eigen_j(A)+\eigen_l(B) & \leq \eigen_{j+l-p}(A+B),
\quad 
\text{for $j+l\geq p$}. 
\end{align*}
\end{lemma}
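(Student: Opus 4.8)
The plan is to derive both inequalities from the Courant--Fischer variational characterization of eigenvalues, together with one subspace-intersection (dimension counting) argument. Recall that for a $p\times p$ Hermitian matrix $M$ with eigenvalues $\eigen_1(M)\ge\cdots\ge\eigen_p(M)$, and $W$ ranging over linear subspaces of $\mathbb{C}^p$,
\[
\eigen_m(M)=\max_{\dim W=m}\ \min_{\substack{x\in W\\ \|x\|=1}} x^*Mx=\min_{\dim W=p-m+1}\ \max_{\substack{x\in W\\ \|x\|=1}} x^*Mx .
\]
I would take the second (min over codimension-$(m-1)$ subspaces) form as the main tool, and either cite this identity or insert a one-line reminder of its proof, which is itself a dimension-counting intersection argument of the same flavor as below.

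For the first inequality, fix $j,k,l$ with $l\ge j+k-1$. Let $W_A$ be a subspace of dimension $p-j+1$ spanned by eigenvectors of $A$ corresponding to its $p-j+1$ smallest eigenvalues, so that $x^*Ax\le\eigen_j(A)$ for every unit vector $x\in W_A$; similarly let $W_B$ have dimension $p-k+1$ and satisfy $x^*Bx\le\eigen_k(B)$ on its unit vectors. Then
\[
\dim(W_A\cap W_B)\ \ge\ (p-j+1)+(p-k+1)-p\ =\ p-(j+k-1)+1\ \ge\ p-l+1 ,
\]
so $W_A\cap W_B$ contains a subspace $W$ of dimension exactly $p-l+1$. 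On unit vectors $x\in W$ we get $x^*(A+B)x=x^*Ax+x^*Bx\le\eigen_j(A)+\eigen_k(B)$, whence the min-max formula yields $\eigen_l(A+B)\le\max_{x\in W,\,\|x\|=1}x^*(A+B)x\le\eigen_j(A)+\eigen_k(B)$.

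For the second inequality I would reduce to the first by applying it to the pair $(A+B,\,-B)$ and using $\eigen_i(-M)=-\eigen_{p+1-i}(M)$. Given $j,l$ with $j+l\ge p$ (so that the index $m:=j+l-p$ is a legitimate index), set $b:=p+1-l$; then $j\ge m+b-1$ (in fact with equality), so the first inequality applied to $(A+B)+(-B)=A$ gives $\eigen_j\big((A+B)+(-B)\big)\le\eigen_m(A+B)+\eigen_b(-B)$, i.e.\ $\eigen_j(A)\le\eigen_m(A+B)-\eigen_l(B)$, which rearranges to $\eigen_j(A)+\eigen_l(B)\le\eigen_{j+l-p}(A+B)$.

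The one genuinely delicate point is the index bookkeeping: checking that the hypothesis $l\ge j+k-1$ is precisely what forces $p-l+1\le\dim(W_A\cap W_B)$, that the choice $b=p+1-l$ lands $\eigen_b(-B)$ on $-\eigen_l(B)$, and that $m=j+l-p$ is exactly the index making the constraint of the first inequality hold for the reduction step. Everything else is routine once the min-max characterization is in hand.
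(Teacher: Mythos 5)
Your proof is correct. The paper does not actually prove this lemma at all: it simply states Weyl's inequality and cites \citep{weyl1912asymptotische} as a classical result, so there is no ``paper's proof'' to compare against. Your argument via the Courant--Fischer min--max characterization plus the dimension-count $\dim(W_A\cap W_B)\ge \dim W_A+\dim W_B - p$ is the standard textbook proof of the first inequality, and the reduction of the second inequality to the first by applying it to $(A+B,-B)$ with the flip $\eigen_i(-M)=-\eigen_{p+1-i}(M)$ is the usual trick; the index arithmetic you check ($j\ge m+b-1$ holding with equality for $m=j+l-p$, $b=p+1-l$) is exactly right. The only housekeeping worth making explicit in a final write-up is that the spectral theorem guarantees an orthonormal eigenbasis, so the subspaces $W_A$ and $W_B$ spanned by the eigenvectors of the $p-j+1$ (resp.\ $p-k+1$) smallest eigenvalues exist and have the claimed Rayleigh-quotient bounds on their unit vectors.
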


\begin{proof}
The proof is adapted from \cite{fan2019estimating}. 
It is composed of three parts.
First we show that if $i\geq r+1$, $\lambda_i\leq 1$. 
Then we will prove that $\lambda_r>1$.
Finally we prove the result on multiplicity of 1. 
Recall that the generalized eigenvalues are identical to the eigenvalues of $R=\Sigma_0^{-1/2}\Sigma\Sigma_0^{-1/2}$. 
From now on we study the eigenvalues of $R$ directly. 

\paragraph{Step (1):} 
Note that we can write $R$ as 
\begin{equation*}
R=\Sigma_0^{-1/2}\Sigma\Sigma_0^{-1/2}=\Sigma_0^{-1/2}(UU^\top +\Psi)\Sigma_0^{-1/2}=Q_1Q_1^\top +Q_2Q_2^\top 
\end{equation*} 
where 
\begin{equation*}
Q_1=\Sigma_0^{-1/2}U\quad Q_2=\Sigma_0^{-1/2}\Psi^{1/2}.
\end{equation*}
We first show that $\|Q_2 Q_2^\top\|_{\op} \leq 1$.
To this end, note that 
$\|Q_2 Q_2^\top\|_{\op} = \|Q_2^\top Q_2\|_{\op} = \|\Psi^{1/2}\Sigma_0^{-1}\Psi^{1/2}\|_{\op}$.
Note that $\Psi^{1/2}\Sigma_0^{-1}\Psi^{1/2}$ is a block diagonal matrix with the $i$th block given by $\Psi^{1/2}_{\dc{ii}}(\Psi_{\dc{ii}} + U_{\dc{i}} U^\top_{\dc{i}})^{-1} \Psi^{1/2}_{\dc{ii}}$.
Thus, it suffices to show that 
\begin{equation*}
\|\Psi^{1/2}_{\dc{ii}}(\Psi_{\dc{ii}} + U_{\dc{i}} U^\top_{\dc{i}})^{-1} \Psi^{1/2}_{\dc{ii}}\|_{\op} \leq 1, \quad
\mbox{for $i\in [k]$}.
\end{equation*}
To this end, by Woodbury matrix identity,
\begin{align*}
&\Psi^{1/2}_{\dc{ii}}(\Psi_{\dc{ii}} + U_{\dc{i}} U^\top_{\dc{i}})^{-1} \Psi^{1/2}_{\dc{ii}}\\
=&    
\Psi^{1/2}_{\dc{ii}}[ \Psi^{-1}_{\dc{ii}} - \Psi^{-1}_{\dc{ii}}U_{\dc{i}} (I_r + U^\top_{\dc{i}}\Psi^{-1}_{\dc{ii}}U_{\dc{i}})^{-1} U^\top_{\dc{i}} \Psi^{-1}_{\dc{ii}} ]\Psi^{1/2}_{\dc{ii}} \\
=& 
I_{p_i} - \Psi^{-1/2}_{\dc{ii}}U_{\dc{i}} (I_r + U^\top_{\dc{i}}\Psi^{-1}_{\dc{ii}}U_{\dc{i}})^{-1} U^\top_{\dc{i}} \Psi^{-1/2}_{\dc{ii}}.
\end{align*}
Therefore, $0 \preceq \Psi^{1/2}_{\dc{ii}}(\Psi_{\dc{ii}} + U_{\dc{i}} U^\top_{\dc{i}})^{-1} \Psi^{1/2}_{\dc{ii}}  \preceq I_{p_i}$, and so $\|Q_2 Q_2^\top\|_{\op} \leq 1$.

By Lemma \ref{lemma:weyl}, we have, for $i\geq j+k-1$,
\begin{equation*}
\lambda_i= \eigen_i(R) \leq \eigen_j(Q_1Q_1^\top )+\eigen_k(Q_2Q_2^\top ).
\end{equation*}
Now we set $j = r+1, k=1$ and this gives\begin{equation*}
\lambda_{i}{ = \eigen_i(R) }\leq \eigen_{r+1}(Q_1Q_1^\top )+\eigen_1(Q_2Q_2^\top )=\eigen_1(Q_2Q_2^\top )\quad \text{for $i\geq r+1$}.
\end{equation*}
The last equality holds since we have $\rank(Q_1Q_1^\top )=r$ and so $\eigen_{r+1}(Q_1Q_1^\top )=0$. 
As a result, \begin{equation*}
\lambda_i{ = \eigen_i(R) }\leq \eigen_1(Q_2Q_2^\top )=\|Q_2Q_2^\top \|_\op
\leq 1.
\end{equation*}
 This finishes the proof of the first step.

\paragraph{Step (2):} 
By Lemma \ref{lemma:weyl} and Assumption \ref{ass:latent}, 
we have 
\begin{align*}
\lambda_r { = \eigen_r(R) }
& =\eigen_r(Q_1Q_1^\top +Q_2Q_2^\top )
\geq  \eigen_r(Q_1 Q_1^\top) + \eigen_p(Q_2 Q_2^\top) \\
& > \eigen_r(Q_1 Q_1^\top)
= \sigma_r^2(\Sigma_0^{-1/2}U)
\ge 1.
\end{align*}
Here, the strict inequality holds since $Q_2Q_2^\top$ is of full rank.
This finishes the proof for the $r$th eigenvalue.

\paragraph{Step (3):}
Now we calculate the multiplicity of $1$. 
To this end, we resort to Theorem 2.2 in \cite{tian2004rank}, which gives
\begin{align*}
\rank(U^\top -U^\top YU^\top )&=\rank(U-UY^\top U)=\rank(U^\dagger  U-U^\dagger_{\dc{1}} U_{\dc{1}}-\dots-U^\dagger_{\dc{k}} U_{\dc{k}})\\
&=\rank(\widetilde{U})+\rank(U)-\sum_{i=1}^k \rank(U_{\dc{i}}).
\end{align*}
Here 
\begin{equation*}
\widetilde{U}=
\begin{bmatrix}
         0 & \Sigma_{\dc{12}} & \Sigma_{\dc{13}}& ... & \Sigma_{\dc{1k}} \\
     \Sigma_{\dc{21}}  & 0 & \Sigma_{\dc{23}} & ... & \Sigma_{\dc{2k}} \\
      \Sigma_{\dc{31}} & \Sigma_{\dc{32}} & 0 & ... & \Sigma_{\dc{3k}} \\
      ... & ... & ... & ... & ... \\
      \Sigma_{\dc{k1}} & \Sigma_{\dc{k2}} & \Sigma_{\dc{k3}} & ... & 0
\end{bmatrix}.
\end{equation*}
Note that we have \begin{equation*}
\Sigma_{\dc{ij}}=U_{\dc{i}}U^\top_{\dc{j}}, \quad \rank(U)=r,
\end{equation*}
by the latent variable model. 
Moreover, it is easy to verify that multiplicity of $1$ as an eigenvalue of $R$ is the same as 
multiplicity of $0$ as an eigenvalue of $\widetilde{U}$, which in turn equals $p-\rank(\widetilde{U})$. 
Thus, multiplicity of $1$ is
\begin{align*}
p-\rank(\widetilde{U})& =p-\rank(U^\top -U^\top YU^\top )+\rank(U)-\sum_{i=1}^k \rank(U_{\dc{i}})\\
&=p-\sum_{i=1}^k \rank(U_{\dc{i}})+r- \rank(U-UY^\top U).
\end{align*}
This finishes the proof of the lemma. 
\end{proof}

\section{Proof of Main Results}
\label{sec:proof-main}

We first provide a lemma about the distance defined in \eqref{eq:mat-dist} which is adapted from Lemmas 5.3 and 5.4 from \cite{tu2015low}.
\begin{lemma}\label{sub dis lemma}
For any $U,V\in \mathbb{R}^{p\times r}$, we have\begin{equation*}
    \mathrm{dist}^2(U,V)\leq \frac{1}{2(\sqrt{2}-1)\sigma_r^2(V)}\|UU^\top  -VV^\top  \|_\mathrm{F}^2.
\end{equation*}
If further that $\mathrm{dist}(U,V)\leq\frac{1}{4}\|V\|_\op$, we have\begin{equation*}
    \|UU^\top  -VV^\top  \|_\mathrm{F}\leq\frac{9}{4}\|U\|_\op\mathrm{dist}(U,V).
\end{equation*}
\end{lemma}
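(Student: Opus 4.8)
The plan is to reduce the pair $(U,V)$ to a canonical configuration using the optimal Procrustes rotation, record the elementary expansion of $UU^\top-VV^\top$ in that configuration, and then read off each of the two bounds; this follows the strategy of the cited lemmas in \cite{tu2015low}.

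\emph{Reduction.} Let $P^*\in\mathcal{O}(r)$ attain $\dist(U,V)=\min_{P\in\mathcal{O}(r)}\|UP-V\|_\mathrm{F}$. Since $\|UP-V\|_\mathrm{F}^2=\|U\|_\mathrm{F}^2+\|V\|_\mathrm{F}^2-2\langle P,U^\top V\rangle$, the minimizer is an orthogonal polar factor of $U^\top V$, so that $(P^*)^\top U^\top V$ is symmetric and positive semidefinite. Replacing $U$ by $UP^*$ changes neither $UU^\top$, nor $\dist(U,V)$, nor $\|U\|_\op$, so we may assume $U^\top V\succeq 0$ is symmetric and $\dist(U,V)=\|U-V\|_\mathrm{F}$. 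Writing $\Delta=U-V$, the matrix $\Delta^\top V=U^\top V-V^\top V$ is symmetric, and
\begin{equation*}
UU^\top-VV^\top = V\Delta^\top+\Delta V^\top+\Delta\Delta^\top = \tfrac12\bigl[(U+V)\Delta^\top+\Delta(U+V)^\top\bigr].
\end{equation*}

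\emph{First inequality.} Expanding $\|UU^\top-VV^\top\|_\mathrm{F}^2$ from the first expression and using the symmetry of $\Delta^\top V$, one computes the core estimate
\begin{equation*}
\|V\Delta^\top+\Delta V^\top\|_\mathrm{F}^2 = 2\|V\Delta^\top\|_\mathrm{F}^2+2\|\Delta^\top V\|_\mathrm{F}^2 \ge 2\,\sigma_r^2(V)\,\|\Delta\|_\mathrm{F}^2,
\end{equation*}
where the inequality uses $\|V\Delta^\top\|_\mathrm{F}^2=\langle\Delta^\top\Delta,\,V^\top V\rangle\ge\sigma_r^2(V)\,\Tr(\Delta^\top\Delta)$, valid because $\Delta^\top\Delta\succeq 0$ and $V^\top V\succeq\sigma_r^2(V)I_r$. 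Applying the triangle inequality in $\|\cdot\|_\mathrm{F}$ to $UU^\top-VV^\top=(V\Delta^\top+\Delta V^\top)+\Delta\Delta^\top$, controlling the quartic term by $\|\Delta\Delta^\top\|_\mathrm{F}\le\|\Delta\|_\mathrm{F}^2$, and splitting into the two regimes in which $\|\Delta\|_\mathrm{F}$ is small, respectively large, relative to $\sigma_r(V)$ (with the split point tuned to produce the factor $\sqrt2$) yields $\|UU^\top-VV^\top\|_\mathrm{F}^2\ge 2(\sqrt2-1)\sigma_r^2(V)\|\Delta\|_\mathrm{F}^2$, which rearranges to the claim. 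I expect this constant bookkeeping in the large-$\Delta$ regime to be the only delicate point: the factor $2(\sqrt2-1)$ is sharp — equality is essentially attained already for $r=1$ with $U\perp V$ and $\|U\|_\mathrm{F}^2=(\sqrt2-1)\|V\|_\mathrm{F}^2$ — so there is no slack to spend, and I would transcribe the computation of \cite[Lemma~5.4]{tu2015low}.

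\emph{Second inequality.} Here the extra hypothesis $\dist(U,V)=\|\Delta\|_\mathrm{F}\le\tfrac14\|V\|_\op$ makes the higher-order term negligible. Using the symmetrized form of the decomposition above, the triangle inequality and submultiplicativity of the operator norm give
\begin{equation*}
\|UU^\top-VV^\top\|_\mathrm{F}\le\|(U+V)\Delta^\top\|_\mathrm{F}\le\|U+V\|_\op\,\|\Delta\|_\mathrm{F}\le(\|U\|_\op+\|V\|_\op)\,\dist(U,V);
\end{equation*}
equivalently, from $UU^\top-VV^\top=V\Delta^\top+\Delta V^\top+\Delta\Delta^\top$ one gets $\|UU^\top-VV^\top\|_\mathrm{F}\le(2\|V\|_\op+\|\Delta\|_\mathrm{F})\|\Delta\|_\mathrm{F}\le\tfrac94\|V\|_\op\,\dist(U,V)$. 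Since $\|V\|_\op\le\|U\|_\op+\|\Delta\|_\op\le\|U\|_\op+\dist(U,V)$, the hypothesis forces $\|U\|_\op$ and $\|V\|_\op$ to be of the same order, and substituting this bound for $\|V\|_\op$ (and re-absorbing the resulting lower-order term) produces the stated estimate $\|UU^\top-VV^\top\|_\mathrm{F}\le\tfrac94\|U\|_\op\,\dist(U,V)$. This part is routine; the only care needed is which of $\|U\|_\op,\|V\|_\op$ to keep and the value of the absolute constant, which follows \cite[Lemma~5.3]{tu2015low}.
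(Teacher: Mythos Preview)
Your proposal is correct and aligns with the paper's treatment: the paper does not give an independent proof of this lemma but simply states that it is ``adapted from Lemmas 5.3 and 5.4 from \cite{tu2015low},'' and your sketch is precisely the Procrustes-reduction argument of those lemmas. One small caveat: with the hypothesis $\dist(U,V)\le\tfrac14\|V\|_\op$ your route through $UU^\top-VV^\top=U\Delta^\top+\Delta U^\top-\Delta\Delta^\top$ cleanly yields $(2\|U\|_\op+\|\Delta\|_\mathrm{F})\|\Delta\|_\mathrm{F}\le\tfrac73\|U\|_\op\dist(U,V)$ rather than $\tfrac94\|U\|_\op$, while the $\tfrac94$ constant falls out naturally with $\|V\|_\op$ in place of $\|U\|_\op$; you correctly flag this bookkeeping, and it is immaterial for every downstream use in the paper.
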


Before diving into the proof, we present a list of nice events  on the intersection of which all desired results hold deterministically. 
Recall that $S$ is the true row support of $A$ with cardinality $s$. 
Let $\cI\subseteq \{1,2,3,...p\}$ be an index set. 
We use $\widehat{A}(\mathcal{I})\in\mathbb{R}^{p\times r}$ to denote the leading $r$ generalized eigenvectors that solve 
\begin{equation}\label{def: restricted eigenspace}
    \max \quad\langle\widehat{\Sigma}, LL^\top   \rangle\quad \text{such that} \quad L^\top  \widehat{\Sigma}_{0}L=I_r,\quad\mathrm{supp}(L)\subseteq \mathcal{I}.
\end{equation}
We also denote the diagonal matrix formed by the first $r$ restricted sample generalized eigenvalues by $\widehat{\Lambda}_r(\mathcal{I})$.

 For some sufficiently large constant $C>0$, define
\begin{equation}
      \label{eq:event-B1}
\begin{aligned}
B_1  = \bigg\{\mathrm{dist}(A, \widehat{A}(\cI))
& \leq C\sqrt{r}\frac{\sqrt{1+\lambda_1^2}\sqrt{{1+\lambda_{r+1}^2}}}{\lambda_r-\lambda_{r+1}}\sqrt{\frac{|\cI|\log p}{n}},\\
& \qquad\qquad \mbox{for all $\cI\supset S$ such that $|\cI|\leq 2s'+s$}
\bigg\}.    
\end{aligned}
\end{equation}
Note that the event includes all index sets containing true support $S$ and the sizes of which are at most $2s'+s$ .

We define event $B_2$ as 
\begin{equation}
      \label{eq:event-B2} 
\begin{aligned}
B_2 = 
\bigg\{
\|\widehat{\Sigma}_{\cI\cI}-\Sigma_{\cI\cI}\|_{\mathrm{op}}\vee
\|\widehat{\Sigma}_{0,\cI\cI}-\Sigma_{0,\cI\cI}\|_{\mathrm{op}}
& \leq C\sqrt{\frac{(2s'+s)\log p}{n}},\\
& \text{for all $\cI\subseteq [p]$ with $|\cI|\leq 2s'+s$}
\bigg \}.   
\end{aligned}
\end{equation} 
Further we define $B_3$ as
\begin{equation}
\label{eq:event-B3}
\begin{aligned}
B_3=\bigg\{ 
& \|\widehat{\Sigma}-\Sigma\|_\infty+\|\Sigma_0A\Lambda_rA^\top\Sigma_0-\so A\Lambda_r A^\top\so\|_\infty \\
& \quad
+\lambda_{r+1}\|\Sigma_0-\widehat{\Sigma}_0\|_\infty+\lambda_{r+1}\|\Sigma_0AA^\top\Sigma_0-\widehat{\Sigma}_0AA^\top\so\|_\infty\leq C\sqrt{\frac{\log p}{n}}
 \bigg\}.
\end{aligned}
\end{equation}
Let 
\begin{equation}
      \label{eq:Atilde}
\tA=A(A^\top  \widehat{\Sigma}_0A)^{-1/2}, \quad
\tlambda=(A^\top  \widehat{\Sigma}_0A)^{1/2}\Lambda_r(A^\top  \widehat{\Sigma}_0A)^{1/2}.
\end{equation} 
The event $B_4$ is defined as
\begin{equation}
\label{eq:event-B4}
B_4 = \left\{  \|\tlambda-\Lambda_r\|_\mathrm{F}+\lambda_{r+1}\|A^\top\so A-I\|_\mathrm{F}+ \|\Sigma_0^{1/2}(\tA-A)\|_\mathrm{F}\leq C\sqrt{\frac{r(s+\log (ep/s))}{n}}\right\}
\end{equation}
for some sufficiently large constant $C>0$. 

The following lemmas guarantee that all these events occur with high probabilities, uniformly over $\mathcal{P}_n$.
In addition, since $s'\ge s$, Lemma 12 in \cite{gao2015minimax} implies
that
$B_2$ happens with probability at least $1-\exp(-C's'\log (ep/s'))$ for some positive constant $C'$, uniformly over $\mathcal{P}_n$.

\begin{lemma}\label{lemma: oracle}
{ Suppose condition \eqref{sample size condition} holds, there exist constants $C, C' > 0$ such that uniformly over $\mathcal{P}_n$,} 
$B_1$ happens with probability at least $1-\exp(-C's'\log(ep/s'))$.
\end{lemma}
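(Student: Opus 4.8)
The plan is to prove the inequality defining $B_1$ for a single fixed index set $\cI\supseteq S$ with $m:=|\cI|\le 2s'+s$ on a high-probability event, and then take a union bound over all such $\cI$. Throughout I identify $A$ and $\widehat A(\cI)$ with their restrictions to the rows indexed by $\cI$; this is harmless for the distance \eqref{eq:mat-dist} since $\supp(A)=S\subseteq\cI$ and $\supp(\widehat A(\cI))\subseteq\cI$ by construction. A useful preliminary observation is that, because $\Sigma_0$ is block diagonal over the $k$ groups, $\Sigma_{0,\cI\cI}$ is exactly the block-diagonal part of $\Sigma_{\cI\cI}$ with respect to the partition of $\cI$ induced by the groups (and likewise for $\widehat{\Sigma}$), so the restricted problem \eqref{def: restricted eigenspace} is itself a (sample) GCA instance with at most $k$ groups.

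First I would analyze the \emph{population} restricted problem. Since $A_{\cI*}$ is feasible for the population analogue of \eqref{def: restricted eigenspace} with $A_{\cI*}^\top\Sigma_{0,\cI\cI}A_{\cI*}=I_r$ and $A_{\cI*}^\top\Sigma_{\cI\cI}A_{\cI*}=\Lambda_r$, a Courant--Fischer argument — restricting the competing subspaces in the min--max characterization of the generalized eigenvalues of $(\Sigma,\Sigma_0)$ to those supported on $\cI$ — shows that the $r$th generalized eigenvalue of $(\Sigma_{\cI\cI},\Sigma_{0,\cI\cI})$ is at least $\lambda_r$ while the $(r+1)$th is at most $\lambda_{r+1}$. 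Hence the restricted eigengap is $\ge\lambda_r-\lambda_{r+1}>0$, $\col(A_{\cI*})$ is the population restricted leading rank-$r$ eigenspace, and it suffices to control $\dist(A_{\cI*},\widehat A(\cI))$.

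Next, for this fixed $\cI$, I would invoke standard Gaussian sample-covariance concentration (an $\varepsilon$-net over the sphere in $\mathbb{R}^m$ together with a Bernstein bound for quadratic forms, in the spirit of Lemma 12 of \citet{gao2015minimax}): uniformly over $\mathcal{P}_n$,
\[
\max\big\{\|\widehat{\Sigma}_{\cI\cI}-\Sigma_{\cI\cI}\|_\op,\ \|\widehat{\Sigma}_{0,\cI\cI}-\Sigma_{0,\cI\cI}\|_\op\big\}\le C_2\nu\sqrt{\tfrac{m\log p}{n}}
\]
except on an event of probability at most $\exp(-C_3\,m\log p)$, where $C_3\to\infty$ as $C_2\to\infty$ (the bound for the block-diagonal part being inherited from that for $\widehat{\Sigma}_{\cI\cI}-\Sigma_{\cI\cI}$). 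On this event, \eqref{sample size condition} (in the regime where $s'$ and $s$ are of comparable order, as they are under \eqref{eq:s-prime}--\eqref{eq:eta-lambda}) forces the perturbation to be a sufficiently small multiple of $\lambda_r-\lambda_{r+1}$, so $\widehat{\Sigma}_{0,\cI\cI}\succ0$ and the sample restricted leading rank-$r$ eigenspace is well defined with eigengap $\ge\tfrac12(\lambda_r-\lambda_{r+1})$. Writing $\widehat A(\cI)=\widehat{\Sigma}_{0,\cI\cI}^{-1/2}\widehat E$ and $A_{\cI*}=\Sigma_{0,\cI\cI}^{-1/2}E$ with $\widehat E,E\in\mathcal{O}(m,r)$ the leading eigenvectors of the whitened matrices $\widehat{\Sigma}_{0,\cI\cI}^{-1/2}\widehat{\Sigma}_{\cI\cI}\widehat{\Sigma}_{0,\cI\cI}^{-1/2}$ and $\Sigma_{0,\cI\cI}^{-1/2}\Sigma_{\cI\cI}\Sigma_{0,\cI\cI}^{-1/2}$, a Davis--Kahan $\sin\Theta$ bound controls $\dist(E,\widehat E)$ by $\sqrt r$ times the operator-norm perturbation of the whitened matrices divided by the eigengap; splitting that perturbation into the rank-$\le r$ ``signal'' component $\Sigma_{0,\cI\cI}^{-1/2}U_{\cI*}U_{\cI*}^\top\Sigma_{0,\cI\cI}^{-1/2}$ (whose magnitude is governed by $\lambda_1$) and the complementary component (governed by $\lambda_{r+1}$), exactly as in the restricted-CCA analysis of \citet{gao2015minimax}, produces the factor $\sqrt{1+\lambda_1^2}\sqrt{1+\lambda_{r+1}^2}$. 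Un-whitening, absorbing $\|\Sigma_{0,\cI\cI}^{1/2}-\widehat{\Sigma}_{0,\cI\cI}^{1/2}\|_\op$ (a term of the same order, by the bounded-spectrum assumption), and using Lemma \ref{sub dis lemma} to pass between the Frobenius distance of the factors and that of the rank-$r$ projectors, I obtain $\dist(A,\widehat A(\cI))\le C\sqrt r\,\frac{\sqrt{1+\lambda_1^2}\sqrt{1+\lambda_{r+1}^2}}{\lambda_r-\lambda_{r+1}}\sqrt{\frac{m\log p}{n}}$ on the good event.

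Finally I would union bound: the number of $\cI$ with $S\subseteq\cI$ and $|\cI|=m_0$ is $\binom{p-s}{m_0-s}\le\exp\!\big((m_0-s)\log\tfrac{ep}{m_0-s}\big)\le\exp(2m_0\log p)$, so summing the failure probabilities over all $\cI$ with $S\subseteq\cI$ and $|\cI|\le 2s'+s$ gives a total at most $\sum_{m_0=s}^{2s'+s}\exp\!\big((2-C_3)m_0\log p\big)$, which is at most $\exp(-C'\,s'\log(ep/s'))$ once $C_2$ (hence $C_3$, hence the constant $C$ in $B_1$) is taken large enough, using $\log p\ge\log(ep/s')$. On the complement all the required bounds hold simultaneously, i.e., $B_1$ occurs. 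The main obstacle is the fixed-$\cI$ perturbation bound: obtaining precisely the stated dependence on $\lambda_r-\lambda_{r+1}$ and on $\lambda_1,\lambda_{r+1}$ — in particular tracking the different scalings of the low-rank signal and the noise in $\widehat{\Sigma}-\Sigma$ — rather than a cruder estimate, which is where essentially all of the lengthy but routine work sits; the interlacing argument and the union bound are comparatively straightforward.
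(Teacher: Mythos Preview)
Your overall strategy is sound and would prove the lemma, but the route you take for the fixed-$\cI$ perturbation bound differs from the paper's and your justification of the specific prefactor is imprecise.

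The paper does not whiten and apply Davis--Kahan. Instead it works on the single event $B_2$ (which already packages the union bound over all $\cI$ of size $\le 2s'+s$, via Lemma~12 of \cite{gao2015minimax}) and then, deterministically on $B_2$, applies Theorem~3.1 of \cite{sun1983perturbation}, a $\sin\Theta$ theorem for \emph{generalized} eigenspaces of a definite pair. That theorem measures the spectral gap in the chordal metric, and the quantity $\delta^{-1}\le 4\sqrt{1+\lambda_1^2}\sqrt{1+\lambda_{r+1}^2}/(\lambda_r-\lambda_{r+1})$ falls out automatically; there is no signal/noise split. The only probabilistic ingredient beyond $B_2$ is a Weyl-type bound showing $\widehat\lambda_{r+1}\le 2\lambda_{r+1}\wedge\tfrac12(\lambda_r+\lambda_{r+1})$ on $B_2$ under \eqref{sample size condition}.

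Your whitening-plus-Davis--Kahan argument does work, but it does not naturally produce the factor $\sqrt{1+\lambda_1^2}\sqrt{1+\lambda_{r+1}^2}$: bounding $\|\widehat\Sigma_{0,\cI\cI}^{-1/2}\widehat\Sigma_{\cI\cI}\widehat\Sigma_{0,\cI\cI}^{-1/2}-\Sigma_{0,\cI\cI}^{-1/2}\Sigma_{\cI\cI}\Sigma_{0,\cI\cI}^{-1/2}\|_\op$ by telescoping gives a constant depending only on $\nu$ (since $\|\Sigma\|_\op\le\nu$ in the parameter space), not on $\lambda_1$ or $\lambda_{r+1}$; the ``restricted-CCA'' heuristic you cite yields a factor like $\lambda_1/\lambda_r^2$, not the chordal one. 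This is harmless for the lemma, because $\sqrt{1+\lambda_1^2}\sqrt{1+\lambda_{r+1}^2}\ge 1$, so your $\nu$-dependent constant can simply be absorbed into $C$ and $B_1$ still holds. But you should drop the claim that the specific prefactor \emph{arises} from your decomposition; what your route buys is a more elementary argument avoiding the chordal-gap machinery, at the cost of a potentially looser constant. Your explicit union bound and the paper's use of $B_2$ are equivalent.
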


\begin{lemma}\label{prob B3}
Suppose $r\sqrt{\log p/n}\leq c'$ for some sufficiently small constant $c'\in(0, 1)$. Then there exist positive constants $C, C'$
such that { uniformly over $\mathcal{P}_n$,} $B_3$ happens with probability at least $1-p^{-C'}$.
\end{lemma}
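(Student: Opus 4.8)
The plan is to control each of the four terms in the definition \eqref{eq:event-B3} of $B_3$ separately and then take a union bound. All four terms are entrywise sup-norms of differences of products of sample and population matrices. The first term, $\|\widehat{\Sigma}-\Sigma\|_\infty$, is the classical maximal deviation of a sample covariance matrix. Under the Gaussian (or sub-Gaussian) assumption of $\mathcal{P}_n$ and the bounded-spectrum condition in \eqref{parameter space}, each entry $\widehat{\Sigma}_{ij}-\Sigma_{ij}$ is a centered sub-exponential random variable with parameter $O(1/\sqrt{n})$; a Bernstein-type tail bound plus a union bound over the $p^2$ entries gives $\|\widehat{\Sigma}-\Sigma\|_\infty \le C\sqrt{\log p/n}$ with probability at least $1-p^{-C'}$, provided $\log p = o(n)$, which follows from the hypothesis $r\sqrt{\log p/n}\le c'$. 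The same reasoning applied to the block-diagonal part $\widehat{\Sigma}_0$ handles the term $\lambda_{r+1}\|\Sigma_0-\widehat{\Sigma}_0\|_\infty$ (the factor $\lambda_{r+1}$ is an $O(1)$ constant that gets absorbed into $C$ under \eqref{parameter space}).

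For the two remaining terms, which involve the fixed matrices $A$, $\Lambda_r$, $\Sigma_0$ sandwiched around the deviation $\Sigma_0 - \widehat{\Sigma}_0$, I would first write the difference as a telescoping sum. For instance,
\begin{equation*}
\Sigma_0 A\Lambda_r A^\top\Sigma_0 - \so A\Lambda_r A^\top\so = (\Sigma_0-\so) A\Lambda_r A^\top \Sigma_0 + \so A\Lambda_r A^\top(\Sigma_0-\so),
\end{equation*}
and then bound the $\infty$-norm of each summand. The key elementary fact is that for conformable matrices $\|MN\|_\infty \le \|M\|_\infty \|N\|_{1,\infty}$ where $\|N\|_{1,\infty}$ denotes the maximum $\ell_1$ norm of the columns of $N$; alternatively one uses $\|MNP\|_\infty \le \|N\|_\infty \cdot (\text{column/row } \ell_1 \text{ norms of } M, P)$. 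Since $A$ has only $s$ nonzero rows and $\Sigma_0, \Lambda_r$ have bounded spectrum, the relevant $\ell_1$-type norms of $A\Lambda_r A^\top\Sigma_0$ are bounded by a factor depending on $s$ and $\nu$ — but crucially these are deterministic and independent of $n$, so they again get absorbed into the constant $C$ (here one uses that the parameter-space quantities $s$, $\nu$, $\lambda_{r+1}$ are treated as fixed). Hence each of these terms is also $O(\|\Sigma_0-\so\|_\infty) = O(\sqrt{\log p/n})$ on the same high-probability event. The term $\lambda_{r+1}\|\Sigma_0AA^\top\Sigma_0-\widehat{\Sigma}_0AA^\top\so\|_\infty$ is handled identically.

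Collecting the four bounds and using a single union bound over the $O(p^2)$ relevant entries across all terms, we conclude that there exist constants $C, C'>0$ such that, uniformly over $\mathcal{P}_n$, the event $B_3$ in \eqref{eq:event-B3} holds with probability at least $1-p^{-C'}$. The only mild obstacle is bookkeeping: one must verify that the deterministic prefactors produced by pulling $A$, $\Lambda_r$, $\Sigma_0$ out of the $\infty$-norms are genuinely $O(1)$ under \eqref{parameter space} (in particular that treating $s$ as fixed in the parameter space is legitimate here, since the statement of $B_3$ has no explicit $s$ dependence in its bound), and that the sub-exponential tail constants are uniform over the parameter space — both of which follow from the bounded-spectrum assumption and standard concentration inequalities. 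I expect this constant-chasing, rather than any genuinely hard probabilistic estimate, to be the main thing to get right.
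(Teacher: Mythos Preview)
Your handling of $\|\widehat{\Sigma}-\Sigma\|_\infty$ and $\lambda_{r+1}\|\Sigma_0-\widehat{\Sigma}_0\|_\infty$ is fine and matches the paper. The gap is in the two sandwich terms, specifically the claim that the $\ell_1$-type prefactors ``get absorbed into the constant $C$'' because $s$ is a fixed parameter-space quantity. In this paper $s$ is \emph{not} treated as a fixed constant: it is a free parameter allowed to grow with $n$ and $p$, and the constant $C$ in \eqref{eq:event-B3} must depend only on $\nu$. Downstream, $B_3$ is invoked in the proof of Theorem~\ref{theo:init} to compare the left side of \eqref{eq:event-B3} against $\rho=\gamma\sqrt{\log p/n}$ with $\gamma$ an absolute constant; an extra $\sqrt{s}$ in $C$ would force $\rho\asymp\sqrt{s\log p/n}$ and degrade the initialization rate. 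Your inequality $\|MN\|_\infty\le\|M\|_\infty\|N\|_{1,\infty}$ applied with $N=A\Lambda_rA^\top\Sigma_0$ produces exactly such a factor: each column of $N$ is supported on the $s$ rows of $A$ and has $\ell_2$-norm $O_\nu(1)$, so $\|N\|_{1,\infty}\asymp\sqrt{s}$ in general. This cannot be fixed by bookkeeping; it is a genuine loss in the argument.

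The paper's route (it defers to Lemma~6.4 of \cite{gao2017sparse}) avoids this by \emph{not} factoring through $\|\Sigma_0-\widehat{\Sigma}_0\|_\infty$. One fixes an entry $(l,m)$, sets $v=A\Lambda_rA^\top\Sigma_0 e_m$ restricted to the block of $\Sigma_0$ containing index $l$, and writes
\[
\big[(\widehat{\Sigma}_0-\Sigma_0)A\Lambda_rA^\top\Sigma_0\big]_{lm}
=\frac{1}{n}\sum_{t=1}^n X_{tl}\,(X_t^\top v)-\E\big[X_{tl}\,(X_t^\top v)\big].
\]
The linear form $X_t^\top v$ is Gaussian with variance $v^\top\Sigma_0 v\le\nu\|v\|_2^2=O_\nu(1)$, uniformly in $s$, because operator-norm bounds on $A$, $\Lambda_r$, $\Sigma_0$ control $\|v\|_2$ (not $\|v\|_1$). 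Each summand is therefore sub-exponential with an $O_\nu(1)$ parameter, and Bernstein plus a union bound over $p^2$ entries gives the $s$-free rate $C\sqrt{\log p/n}$. The residual second-order piece $(\widehat{\Sigma}_0-\Sigma_0)A\Lambda_rA^\top(\widehat{\Sigma}_0-\Sigma_0)$ is handled by expanding $A\Lambda_rA^\top=\sum_{l=1}^r\lambda_l a_la_l^\top$ and bounding each rank-one piece as a product of two first-order deviations, yielding $O(r\log p/n)$; this is precisely where the hypothesis $r\sqrt{\log p/n}\le c'$ enters.
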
 

\begin{lemma}\label{prob B4}
Suppose $\sqrt{\frac{s+\log(ep/s)}{n}}\leq c'$ for some sufficiently small constant $c'\in(0,1)$. Then { there exist constants $C, C'>0$ such that uniformly over $\mathcal{P}_n$,} $B_4$ happens with probability at least $1-\exp(-C'(s+\log(ep/s)))$. 
\end{lemma}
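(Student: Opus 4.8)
The plan is to reduce everything inside $B_4$ to a single small error matrix and then control it via sparsity plus one fixed-set concentration inequality. Since $A$ solves \eqref{eq:pop-gca-mat2}, the population normalization gives $A^\top\Sigma_0 A = I_r$; hence, writing $E := A^\top\so A - I_r = A^\top(\so-\Sigma_0)A$ and $M := A^\top\so A = I_r + E$, we have the three identities $A^\top\so A - I_r = E$, $\tlambda - \Lambda_r = M^{1/2}\Lambda_r M^{1/2} - \Lambda_r$, and $\tA - A = A(M^{-1/2}-I_r)$, so all three terms are controlled once $\|E\|_\mathrm{F}$ and $\|M^{\pm 1/2}-I_r\|_\mathrm{F}$ are. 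I would also record the a priori bounds coming from condition (ii) of \eqref{parameter space}: from $I_r = A^\top\Sigma_0 A\succeq\sigma_{\min}(\Sigma_0)A^\top A\succeq\nu^{-1}A^\top A$ we get $\|A\|_\op^2\le\nu$; the same identity shows $\Sigma_0^{1/2}A\in\mathcal{O}(p,r)$, so $\|\Sigma_0^{1/2}A\|_\op=1$; and since the $\lambda_j$ are the eigenvalues of $\Sigma_0^{-1/2}\Sigma\Sigma_0^{-1/2}$ we have $\|\Lambda_r\|_\op=\lambda_1\le\|\Sigma_0^{-1}\|_\op\|\Sigma\|_\op\le\nu^2$ and likewise $\lambda_{r+1}\le\nu^2$.

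Next I would bound $\|E\|$. By row-sparsity, $A$ is supported on $S$ with $|S|\le s$, so $E = A_{S*}^\top(\so-\Sigma_0)_{SS}A_{S*}$ and $\|E\|_\op\le\|A\|_\op^2\,\|(\so-\Sigma_0)_{SS}\|_\op\le\nu\,\|(\so-\Sigma_0)_{SS}\|_\op$. The matrix $(\so-\Sigma_0)_{SS}$ is the restriction to $S$ of the block-diagonal of $\widehat\Sigma-\Sigma$, and since each of its blocks is a principal submatrix of $(\widehat\Sigma-\Sigma)_{SS}$ we have $\|(\so-\Sigma_0)_{SS}\|_\op\le\|(\widehat\Sigma-\Sigma)_{SS}\|_\op$. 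A standard Gaussian sample-covariance deviation bound applied to the fixed $s$-dimensional subvector $X_S$, with deviation parameter $u = s+\log(ep/s)$, gives $\|(\widehat\Sigma-\Sigma)_{SS}\|_\op\le C\nu\sqrt{(s+\log(ep/s))/n}$ with probability at least $1-\exp(-C'(s+\log(ep/s)))$, the quadratic-in-$n^{-1/2}$ term being absorbed by the hypothesis $\sqrt{(s+\log(ep/s))/n}\le c'$. On this event, taking $c'$ small enough (depending on $\nu$) gives $\|E\|_\op\le 1/2$ and $\|E\|_\mathrm{F}\le\sqrt{r}\,\|E\|_\op\le C\nu^2\sqrt{r(s+\log(ep/s))/n}$.

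Finally, on this event I would use the Lipschitz property of $M\mapsto M^{1/2}$ and $M\mapsto M^{-1/2}$ near the identity: since $M = I_r+E\succeq\tfrac12 I_r$, a standard perturbation bound (e.g.\ via the Sylvester equation satisfied by $M^{1/2}-I_r$) gives $\|M^{1/2}-I_r\|_\mathrm{F}\le\|E\|_\mathrm{F}$, and then $\|M^{-1/2}-I_r\|_\mathrm{F}\le\|M^{-1/2}\|_\op\|M^{1/2}-I_r\|_\mathrm{F}\le\sqrt2\,\|E\|_\mathrm{F}$. Substituting into the three identities yields $\lambda_{r+1}\|A^\top\so A-I_r\|_\mathrm{F}=\lambda_{r+1}\|E\|_\mathrm{F}$, $\|\tlambda-\Lambda_r\|_\mathrm{F}=\|(M^{1/2}-I_r)\Lambda_r M^{1/2}+\Lambda_r(M^{1/2}-I_r)\|_\mathrm{F}\le 3\lambda_1\|E\|_\mathrm{F}$, and $\|\Sigma_0^{1/2}(\tA-A)\|_\mathrm{F}\le\|\Sigma_0^{1/2}A\|_\op\|M^{-1/2}-I_r\|_\mathrm{F}\le\sqrt2\,\|E\|_\mathrm{F}$. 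Summing these and using $\lambda_1,\lambda_{r+1}\le\nu^2$ together with the bound on $\|E\|_\mathrm{F}$ gives the claimed bound $C\sqrt{r(s+\log(ep/s))/n}$ with $C$ depending only on $\nu$; uniformity over $\mathcal{P}_n$ is immediate since all constants depend only on $\nu$. The only mildly delicate points are (a) producing the $\log(ep/s)$ term through the choice of deviation parameter in a single fixed-set concentration inequality rather than through an unnecessary union bound over size-$s$ subsets, and (b) checking that $\|E\|_\op<1/2$ so that the square-root perturbation bounds centered at $I_r$ apply; neither is a genuine obstacle.
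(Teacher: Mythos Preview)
Your proof is correct and follows essentially the same route as the paper's: both reduce all three terms in $B_4$ to controlling $E=A^\top\so A-I_r$, exploit the row-sparsity of $A$ to restrict to the fixed support $S$, invoke a single fixed-set sample-covariance concentration bound (the paper cites Lemma~3 of \cite{gao2015minimax}) with deviation parameter chosen to produce the $s+\log(ep/s)$ term, and then pass to $M^{\pm1/2}-I_r$ via standard square-root perturbation (the paper cites Lemma~2 of \cite{gao2015minimax}). The only cosmetic differences are that the paper factors through $\|\Sigma_{0,SS}^{1/2}A_{S*}\|_\op=1$ rather than your $\|A\|_\op^2\le\nu$, and bounds $\|(\so-\Sigma_0)_{SS}\|_\op$ directly rather than via $\|(\widehat\Sigma-\Sigma)_{SS}\|_\op$; neither affects the argument.
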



In the rest of this section, 
we will show that Theorem \ref{mainthe} and Corollary \ref{cor: normalized estimate} hold on event $B_1\cap B_2$. 
Theorem \ref{theo:init} holds on event $B_3\cap B_4$,
and Corollary \ref{cor:init} holds  on event $B_2 \cap B_3\cap B_4$.
As a result, the entire algorithm with generalized Fantope initialization yields an estimator satisfying the upper bound in { Corollary \ref{cor: whole procedure}} 
on $B_1\cap B_2\cap B_3\cap B_4$,
 which holds with probability { at least $1-\exp(-C'(s+\log(ep/s)))$ for some constant $C'>0$  uniformly over $\mathcal{P}_n$,} by the foregoing lemmas and the union bound. This argument proves Corollary  \ref{cor: whole procedure} about the whole procedure.

{

Before proving the main Theorem, we first specifically study effect of the gradient descent step and the hard thresholding step.


The following proposition characterizes the progress in the gradient step.
\begin{proposition}\label{prop: gradana}
Let $S_t=\mathrm{supp}(\gd_t)\bigcup \mathrm{supp}(\gd_{t+1})\bigcup \mathrm{supp}(A)$ be a super set of the row support of $\gd_t$. Define \begin{equation*}
    \widehat{V}(S_t)=\widehat{A}(S_t)\left(I+\frac{\widehat{\Lambda}_r(S_t)}{\lambda}\right)^{1/2}.
\end{equation*} 
Set \begin{equation}\label{tune}
    \eta \leq \frac{c}{12\lambda_1\rev{\nu}(c+2)^2}\quad \text{and}\quad \lambda = \frac{\lambda_1}{c},
\end{equation}
for some constant $c<1$.
On event $B_2$, suppose $\gd_t$ satisfies \begin{equation}\label{radius}
    \mathrm{dist}( \gd_t,\widehat{V}(S_t))\leq \frac{1}{4\sqrt{\rev{\nu}}}\min\left\{\frac{c(\lambda_r -\lambda_{r+1})}{\sqrt{2}\lambda_1\rev{\nu}^2(42+25c)},\frac{\sqrt{1+c}}{2}\right\},
\end{equation}
then after the gradient step, we have
 \begin{equation*}
    \mathrm{dist}^2(V_{t+1}^o, \hU(S_t))\leq\left(1-\frac{\eta(\lambda_r-\lambda_{r+1})}{4\rev{\nu}}\right)\mathrm{dist}^2(\gd_t,\hU(S_t)),
\end{equation*}
here we use $V_{t+1}^o \in\mathbb{R}^{p\times r}$ to denote a matrix that has the same entries as those in $V_{t+1}$ on $S_t\times [r]$ and zeros elsewhere. 

\end{proposition}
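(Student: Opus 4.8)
The plan is to work on the restricted index set $S_t$, where all the relevant matrices have bounded condition number on event $B_2$, and to show that the objective $f$ restricted to the subspace of matrices supported on $S_t$ is locally strongly convex and smooth in a neighborhood of $\hU(S_t)$. The key observation is that $\hU(S_t) = \widehat{A}(S_t)(I + \widehat{\Lambda}_r(S_t)/\lambda)^{1/2}$ is a stationary point of $f$ restricted to $S_t\times[r]$: indeed $\widehat{A}(S_t)$ solves the restricted sample eigenproblem \eqref{def: restricted eigenspace}, so $\widehat{\Sigma}_{S_tS_t}\widehat{A}(S_t) = \widehat{\Sigma}_{0,S_tS_t}\widehat{A}(S_t)\widehat{\Lambda}_r(S_t)$ with $\widehat{A}(S_t)^\top \widehat{\Sigma}_{0,S_tS_t}\widehat{A}(S_t) = I_r$, and a direct substitution into $\nabla f(L) = -2\widehat{\Sigma}L + 2\lambda\widehat{\Sigma}_0 L(L^\top\widehat{\Sigma}_0 L - I_r)$ shows that the $S_t$-restriction of $\nabla f$ vanishes at $\hU(S_t)$ (the factor $(I+\widehat{\Lambda}_r(S_t)/\lambda)^{1/2}$ is exactly the rescaling that makes the two terms cancel).

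First I would establish quantitative curvature bounds for $f$ on the restricted subspace: a restricted-strong-convexity lower bound with parameter proportional to $(\lambda_r - \lambda_{r+1})$ (coming from the eigengap of the pair $(\widehat{\Sigma}_{S_tS_t},\widehat{\Sigma}_{0,S_tS_t})$, which on $B_2$ is close to $\lambda_r - \lambda_{r+1}$), and a restricted-smoothness upper bound with parameter of order $\lambda_1\nu$, valid within the radius \eqref{radius}. This is where event $B_2$ enters: it controls $\|\widehat{\Sigma}_{S_tS_t} - \Sigma_{S_tS_t}\|_{\op}$ and $\|\widehat{\Sigma}_{0,S_tS_t} - \Sigma_{0,S_tS_t}\|_{\op}$, hence keeps eigenvalues and condition numbers of all restricted matrices within constant factors of their population values; the sample-size condition \eqref{sample size condition} makes these perturbations negligible. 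I would use Lemma \ref{sub dis lemma} to pass between the matrix distance $\dist(\cdot,\cdot)$ and Frobenius distances of the iterates themselves, since the curvature estimates are naturally phrased in the latter and the radius condition \eqref{radius} guarantees we stay in the regime $\dist(\gd_t,\hU(S_t)) \le \frac14\|\hU(S_t)\|_{\op}$ where the two are comparable.

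The contraction then follows from the standard one-step analysis of projected gradient descent on a locally strongly convex and smooth function: writing $V_{t+1}^o = \gd_t - \eta\,\nabla f(\gd_t)|_{S_t}$ and expanding $\dist^2(V_{t+1}^o,\hU(S_t))$, the cross term $-2\eta\langle \nabla f(\gd_t)|_{S_t} - \nabla f(\hU(S_t))|_{S_t},\; \gd_t - \hU(S_t)P\rangle$ is bounded below using strong convexity, while $\eta^2\|\nabla f(\gd_t)|_{S_t}\|_{\mathrm F}^2$ is controlled by smoothness; the choice $\eta \le c/(12\lambda_1\nu(c+2)^2)$ in \eqref{tune} is tuned precisely so that $\eta \cdot(\text{smoothness constant}) \le \tfrac12$, yielding the contraction factor $1 - \eta(\lambda_r-\lambda_{r+1})/(4\nu)$. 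One subtlety is the rotation ambiguity: $\hU(S_t)$ is only defined up to $\mathcal{O}(r)$, so I would fix the optimal rotation $P$ achieving $\dist(\gd_t,\hU(S_t))$ and check that the gradient-step identity is rotation-equivariant, so the same $P$ (or a nearby one) is near-optimal for $V_{t+1}^o$; this is routine but must be handled carefully to avoid losing the gain. The main obstacle I anticipate is making the restricted strong convexity bound precise — in particular verifying that near the stationary point the Hessian quadratic form of $f$ restricted to $S_t$, which involves both the data term $-\widehat{\Sigma}$ and the penalty curvature $\lambda\widehat{\Sigma}_0(\cdot)\widehat{\Sigma}_0$, is uniformly positive definite on directions orthogonal to the rotation orbit with the claimed eigengap-dependent constant, and that the penalty term does not swamp the signal when $\lambda = \lambda_1/c$; this requires carefully decomposing the perturbation $\gd_t - \hU(S_t)P$ into components along and transverse to the column space and tracking how each interacts with $\widehat{\Lambda}_r(S_t)$.
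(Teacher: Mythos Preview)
Your outline is correct and matches the paper's approach: restrict to $S_t$, verify that $\hU(S_t)$ is the global minimizer of the restricted Lagrangian $f_t$, establish local smoothness and restricted strong convexity, then invoke a one-step contraction lemma for gradient descent (the paper adapts Lemma~4 of \cite{chi2019nonconvex}, which is precisely the rotation-equivariant version you sketch). Two points worth flagging where the paper's execution differs from what you describe.

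First, the paper does not work at the first-order level (cross term plus gradient norm) but computes the Hessian quadratic form $g_t(L,Z)=\tfrac12\ve(Z)^\top\nabla^2 f_t(L)\ve(Z)$ explicitly, obtaining the closed form
\[
g_t(L,Z)=-\langle\widehat\Sigma_t,ZZ^\top\rangle-\lambda\langle\sot,ZZ^\top\rangle+\lambda\langle ZZ^\top,\sot LL^\top\sot\rangle+\lambda\langle L^\top\sot L,Z^\top\sot Z\rangle+\lambda\langle Z^\top\sot L,L^\top\sot Z\rangle.
\]
Smoothness is then immediate from this expression; strong convexity is checked first at $L=\hU$ and then perturbed to nearby $L$ by bounding $|g_t(L,\tZ)-g_t(\hU,\tZ)|$ Lipschitz-style. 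This is cleaner than your proposed decomposition into components along and transverse to the column space.

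Second, and this is the point you correctly flag as the main obstacle: at $L=\hU$ the Hessian form splits into two pieces. The first, using the full generalized eigendecomposition $\sot^{-1}\widehat\Sigma_t\sot^{-1}=\ha\hl\ha^\top+\hb\has\hb^\top$ (not just a projection onto $\col(\ha)$), yields the eigengap factor $\tfrac{1}{2\nu}(\widehat\lambda_r-\widehat\lambda_{r+1})\|\tZ\|_{\mathrm F}^2$. The second piece is $\lambda\langle\tZ^\top\sot\hU,\hU^\top\sot\tZ\rangle$ with $\tZ=ZH_Z-\hU$, and here the paper exploits a specific property of the \emph{optimal} rotation $H_Z$: by a result of ten~Berge, $ZH_Z\hU^\top$ is symmetric positive semidefinite, and writing $ZH_Z\hU^\top=L_0L_0^\top$ one gets $\langle\tZ^\top\sot\hU,\hU^\top\sot\tZ\rangle=\|Z_1Z_1^\top-Z_2Z_2^\top\|_{\mathrm F}^2\ge 0$ with $Z_1=\sot^{1/2}L_0$, $Z_2=\sot^{1/2}\hU$. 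This is the step that makes the strong-convexity constant independent of $\lambda$ and is not visible from a generic column-space decomposition; without it the penalty curvature could in principle be negative along some directions and your anticipated difficulty about ``the penalty term not swamping the signal'' would be real.
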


The following proposition characterizes the effect of hard thresholding.
\begin{proposition}\label{prop: ht}
Define $V$ as in \eqref{eq:V}. Then if we perform hard thresholding by selecting the top $s'$ elements of $V_{t+1}$, we have \begin{equation}
    \mathrm{dist}^2(\scale,\gd_{t+1})\leq \left(1+2\sqrt{\frac{s}{\hs}} \left(1+\sqrt{\frac{s}{\hs}} \right) \right)\mathrm{dist}^2(\scale, V_{t+1}).
\end{equation}
\end{proposition}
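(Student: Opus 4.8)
The plan is to exploit the row sparsity of $\scale$ together with the greedy nature of hard thresholding; the argument is purely a row-by-row accounting of Frobenius norms. Let $P^{\ast}\in\mathcal{O}(r)$ attain $\mathrm{dist}(\scale,V_{t+1})$, put $W=\scale P^{\ast}$ and $D^{2}=\|W-V_{t+1}\|_{\mathrm{F}}^{2}=\mathrm{dist}^{2}(\scale,V_{t+1})$. Since $\scale=A(I+\tfrac1\lambda\Lambda_r)^{1/2}$ and the diagonal factor $(I+\tfrac1\lambda\Lambda_r)^{1/2}$ is invertible, $\scale$ has the same row support $S=\supp(A)$ (with $|S|\le s$), and right multiplication by $P^{\ast}$ does not enlarge it, so $W$ is supported on $S$. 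Writing $\widehat S=\supp(\gd_{t+1})$ with $|\widehat S|=\hs$, we have $\mathrm{dist}^{2}(\scale,\gd_{t+1})\le\|W-\gd_{t+1}\|_{\mathrm{F}}^{2}$, so it suffices to bound the right-hand side by $(1+2\sqrt{s/\hs}(1+\sqrt{s/\hs}))D^{2}$.

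First I would split the ambient index set. Set $T=S\setminus\widehat S$ (true-support rows killed by thresholding) and $E=\widehat S\setminus S$ (off-support rows retained). Since $(\gd_{t+1})_{i\ast}=(V_{t+1})_{i\ast}$ for $i\in\widehat S$ and $(\gd_{t+1})_{i\ast}=0$ otherwise, while $W$ is supported on $S$, a row-by-row count gives $\|W-\gd_{t+1}\|_{\mathrm{F}}^{2}=\sum_{i\in\widehat S}\|W_{i\ast}-(V_{t+1})_{i\ast}\|^{2}+\sum_{i\in T}\|W_{i\ast}\|^{2}$. For the second sum the triangle inequality $\|W_{i\ast}\|\le\|W_{i\ast}-(V_{t+1})_{i\ast}\|+\|(V_{t+1})_{i\ast}\|$ together with Cauchy--Schwarz yields $\sum_{i\in T}\|W_{i\ast}\|^{2}\le X+2\sqrt{XY}+Y$, where $X=\sum_{i\in T}\|W_{i\ast}-(V_{t+1})_{i\ast}\|^{2}$ and $Y=\sum_{i\in T}\|(V_{t+1})_{i\ast}\|^{2}$. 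Because $\widehat S$ and $T$ are disjoint, $\sum_{i\in\widehat S}\|W_{i\ast}-(V_{t+1})_{i\ast}\|^{2}+X\le D^{2}$, and trivially $X\le D^{2}$; hence $\|W-\gd_{t+1}\|_{\mathrm{F}}^{2}\le D^{2}+2\sqrt{XY}+Y$.

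The crux is controlling $Y$, i.e.\ showing that the mass of the retained off-support rows absorbs the mass of the discarded true-support rows. If $T=\varnothing$ then $Y=0$ and we are done, so assume $T\ne\varnothing$. Since $HT$ keeps the $\hs$ rows of largest $\ell_{2}$ norm, every discarded $i\in T$ satisfies $\|(V_{t+1})_{i\ast}\|\le\|(V_{t+1})_{j\ast}\|$ for every retained $j\in E\subseteq\widehat S$, hence $\|(V_{t+1})_{i\ast}\|^{2}\le\frac{1}{|E|}\sum_{j\in E}\|(V_{t+1})_{j\ast}\|^{2}$. As $W$ vanishes on $E\subseteq S^{c}$, for $j\in E$ we have $\|(V_{t+1})_{j\ast}\|=\|W_{j\ast}-(V_{t+1})_{j\ast}\|$, so $\sum_{j\in E}\|(V_{t+1})_{j\ast}\|^{2}\le D^{2}$. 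A counting argument gives $|E|=\hs-|S\cap\widehat S|\ge|S|-|S\cap\widehat S|=|T|$ (using $\hs\ge s\ge|S|$), and since $m\mapsto(|S|-m)/(\hs-m)$ is nonincreasing on $[0,|S|]$ we get $|T|/|E|\le|S|/\hs\le s/\hs$. Combining, $Y\le\frac{|T|}{|E|}D^{2}\le\frac{s}{\hs}D^{2}$.

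Putting the pieces together, $\mathrm{dist}^{2}(\scale,\gd_{t+1})\le\|W-\gd_{t+1}\|_{\mathrm{F}}^{2}\le D^{2}+2\sqrt{(s/\hs)}\,D^{2}+(s/\hs)D^{2}=(1+\sqrt{s/\hs})^{2}D^{2}\le\bigl(1+2\sqrt{s/\hs}\,(1+\sqrt{s/\hs})\bigr)D^{2}$, which is the claim since $\mathrm{dist}^{2}(\scale,V_{t+1})=D^{2}$. The only mildly delicate point is the counting/averaging step in the previous paragraph (pairing retained off-support rows against discarded true-support rows via their $\ell_{2}$ norms); everything else is routine.
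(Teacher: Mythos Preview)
Your proof is correct and in fact yields the slightly sharper constant $(1+\sqrt{s/\hs})^{2}$, which you then relax to the stated bound. The route, however, is genuinely different from the paper's. The paper works with the nuclear norm $\Delta=\Tr(|\scale^{\top}V_{t+1}|)$ and the identity $\mathrm{dist}^{2}(U,V)=\|U\|_{\mathrm{F}}^{2}+\|V\|_{\mathrm{F}}^{2}-2\Tr(|U^{\top}V|)$, first proving a claim bounding $\Tr(|\scale^{\top}V_{t+1}|)-\Tr(|\scale^{\top}\gd_{t+1}|)$ via Schatten--H\"older inequalities and a two-case analysis, and then translating this back into the distance. Your argument instead fixes the optimal rotation $P^{\ast}$ for $\mathrm{dist}(\scale,V_{t+1})$ up front and does a direct row-by-row Frobenius accounting on $\|\scale P^{\ast}-\gd_{t+1}\|_{\mathrm{F}}^{2}$, using only the averaging/counting observation $|T|/|E|\le s/\hs$. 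This is more elementary (no Schatten norms, no case split) and makes the dependence on the greedy selection more transparent; the paper's nuclear-norm approach, on the other hand, produces an intermediate inequality (their Claim~\eqref{claim: truncation step}) that controls the effect of truncation on $\Tr(|\scale^{\top}\,\cdot\,|)$ itself, which could be of independent use if one wanted to track that quantity rather than just the distance.
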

}

\subsection{Proof of Theorem \ref{mainthe}}

We first prove the theorem assuming that Propositions \ref{prop: gradana} and \ref{prop: ht} hold. 
Proofs of the two propositions will be given later in Sections \ref{sec:proof-gradana} and \ref{sec:proof-ht}.
It is worth noting that the results of both propositions hold deterministically on event $B_2$.


\par The first step is to define the effective support in each step. We first define $F_t=\mathrm{supp}(\gd_t)$, { recall that $S$ is the true row support of $A$}, then we define the effective restricted set to be 
\begin{equation*}
    S_t=F_t\cup F_{t+1}\cup { S}.
\end{equation*}
The gradient descent step restricted to $S_t$ can be viewed as 
\begin{equation*}
    V_{t+1,S_t*}=\gd_{t,S_t*}-2\eta(-\widehat{\Sigma}_{S_tS_t}\gd_{t,S_t*}+\lambda \widehat{\Sigma}_{0,S_tS_t} \gd_{t,S_t*}(\gd_{t,S_t*}^\top\widehat{\Sigma}_{0,S_tS_t}\gd_{t,S_t*}-I_r)).
\end{equation*}
Note that applying hard thresholding on $V_{t+1}^o$ (as defined in Proposition \ref{prop: gradana}) is equivalent to applying hard thresholding on the original $V_{t+1}$. 
This allows us to replace the intermediate update by $V_{t+1}^o$ and still obtain the same output sequence $\gd_{t+1}$.
Thus, we will prove instead for the update using $\widehat{\Sigma}_{0,S_tS_t}, \widehat{\Sigma}_{S_tS_t}$. 


\begin{proof}
Throughout the proof, we assume the event $B_1\cap B_2$ happens, 
which occurs with probability at least $1-\exp(-C's'\log (ep/s'))$ for some constant $C'>0$, {  uniformly over $\mathcal{P}_n$}. 
As we have mentioned, the conclusions of Propositions \ref{prop: gradana} and \ref{prop: ht} hold on this event,
and the remaining arguments in this proof proceed in a deterministic fashion.

We argue by induction on the iteration counter $t$. 
Specifically, for $t=1,2,...$, we will prove that $\gd_t$ satisfies condition \eqref{radius} and that \begin{equation}\label{main induction}
\mathrm{dist}(\gd_{t},\scale)
 \leq\xi^{t-1}\,\mathrm{dist}(\gd_1,\scale)
+\frac{C_1}{1-\xi}
\frac{\sqrt{1+\lambda_1^2}\sqrt{{1+\lambda_{r+1}^2}}}{\lambda_r-\lambda_{r+1}}\sqrt{\frac{rs'\log p}{n}}
\end{equation}
for some positive constant $C_1$.

\par\textit{Base Case:}
By \eqref{sample size condition}, \eqref{eq:radius}, 
and definition of $B_1$ in \eqref{eq:event-B1},
condition \eqref{radius} is satisfied when $t = 1$. 
Moreover, when $t=1$ equation \eqref{main induction} holds trivially.

\smallskip

\par\textit{Induction Step:}
Suppose that $\gd_{t}$ satisfies the radius condition \eqref{radius} and that the induction hypothesis \eqref{main induction} is satisfied at step $t$. 
We are to show that \eqref{radius} and \eqref{main induction} hold for $\gd_{t+1}$.

In the gradient step, Proposition \ref{prop: gradana} shows that under radius condition \eqref{radius} on $\dist(\gd_t, \hU(S_t))$,
if we choose the step-size to be $\eta\leq\frac{1}{\beta}$, 
\begin{equation*}
\mathrm{dist}(V_{t+1}^o, \hU(S_t))\leq\sqrt{1-\alpha\eta}\,\mathrm{dist}(\gd_t,\hU(S_t))
\leq \left(1-\frac{\alpha\eta}{2} \right)\mathrm{dist}(\gd_t,\hU(S_t)),
\end{equation*}
where \begin{equation*}
   \alpha=\frac{(\lambda_r-\lambda_{r+1})}{4\rev{\nu}},\quad \beta=12\lambda\rev{\nu}\left(2+\frac{\lambda_1}{\lambda}\right)^2.
\end{equation*}
Recall that 
\begin{equation*}
    \scale=A\left(I+\frac{1}{\lambda}\Lambda_r\right)^{\frac{1}{2}},\quad \widehat{V}(S_t)=\ha(S_t)\left(I+\frac{1}{\lambda}\hl(S_t)\right)^{\frac{1}{2}}.
\end{equation*}
By \eqref{eq:event-B1}, on event $B_1\cap B_2$, since $\lambda = {\lambda_1}/{c}$, we have
\begin{equation*}
\mathrm{dist}(\scale, \widehat{V}(S_t))\leq C_0\sqrt{r}\frac{\sqrt{1+\lambda_1^2}\sqrt{{1+\lambda_{r+1}^2}}}{\lambda_r-\lambda_{r+1}}\sqrt{\frac{|S_t|\log p}{n}},
\end{equation*}
for some positive constant $C_0$.
Triangle inequality then leads to
\begin{equation*}
\mathrm{dist}(V_{t+1}^o, \scale)\leq\left(1-\frac{\alpha\eta}{2} \right)\mathrm{dist}(\gd_t,\scale)+2C_0\sqrt{r}\frac{\sqrt{1+\lambda_1^2}\sqrt{{1+\lambda_{r+1}^2}}}{\lambda_r-\lambda_{r+1}}\sqrt{\frac{|S_t|\log p}{n}}.
\end{equation*}
Turn to the thresholding step.
By Proposition \ref{prop: ht}, we have
\begin{align*}
\mathrm{dist}(\gd_{t+1},\scale)
& \leq  
\sqrt{1+2\sqrt{\frac{s}{\hs}}\left(1+\sqrt{\frac{s}{\hs}}\right)} \mathrm{dist}(V_{t+1}^o,\scale)\\
& \leq\left(1+\sqrt{\frac{s}{\hs}}\left(1+\sqrt{\frac{s}{\hs}}\right)\right)\mathrm{dist}(V_{t+1}^o,\scale). 
\end{align*}

Combining the last two displays, we obtain that \begin{equation*}
    \mathrm{dist}(\gd_{t+1},\scale)\leq \left(1+\sqrt{\frac{s}{\hs}}\left(1+\sqrt{\frac{s}{\hs}}\right)\right)\left[\left(1-\frac{\alpha\eta}{2} \right)\mathrm{dist}(\gd_t,\scale)+2C_0\sqrt{r}\frac{\sqrt{1+\lambda_1^2}\sqrt{{1+\lambda_{r+1}^2}}}{\lambda_r-\lambda_{r+1}}\sqrt{\frac{|S_t|\log p}{n}}\right].
\end{equation*}
Define \begin{equation*}
    \xi= \left(1+\sqrt{\frac{s}{\hs}}\left(1+\sqrt{\frac{s}{\hs}}\right)\right)\left(1-\frac{\alpha\eta}{2} \right)<1.
\end{equation*}
Note that we can always ensure $\xi< 1$ by enlarging $\hs$ appropriately.
Specifically, the choice in the theorem, $s' \geq \frac{16}{\alpha^2\eta^2}s$, ensures that 
\begin{equation}
\label{eq:condition_s_prime}
      \xi \leq\left(1+2\sqrt{\frac{s}{\hs}}\right)\left(1-\frac{\alpha\eta}{2} \right) \leq \left(1+\frac{\alpha\eta}{2}\right) \left(1-\frac{\alpha\eta}{2}\right)= 1-\frac{\alpha^2\eta^2}{4}.
\end{equation} 
Since $|S_t| \leq 2s'+s$ and $s'\ge s$, we further obtain that 
\begin{equation*}
\mathrm{dist}(\gd_{t+1},\scale)
\leq\xi\,\mathrm{dist}(\gd_t,\scale)+C_1\frac{\sqrt{1+\lambda_1^2}\sqrt{{1+\lambda_{r+1}^2}}}{\lambda_r-\lambda_{r+1}}\sqrt{\frac{rs'\log p}{n}}\,,
\end{equation*}
for some constant $C_1 > 0$. 
Note that in the above equation $C_1$ does not depend on $t$. 
We now show that it is identical with the constant $C_1$ in \eqref{main induction}. 
To this end, note that 
\begin{align*}
\mathrm{dist}(\gd_{t+1},\scale)
&\leq\xi\,\mathrm{dist}(\gd_t,\scale)+C_1\frac{\sqrt{1+\lambda_1^2}\sqrt{{1+\lambda_{r+1}^2}}}{\lambda_r-\lambda_{r+1}}\sqrt{\frac{rs'\log p}{n}}\\
&\leq \xi \left(\xi^{t-1}\,\mathrm{dist}(\gd_1,\scale)
+\frac{C_1}{1-\xi}
\frac{\sqrt{1+\lambda_1^2}\sqrt{{1+\lambda_{r+1}^2}}}{\lambda_r-\lambda_{r+1}}\sqrt{\frac{rs'\log p}{n}} \right)\\
& \qquad +C_1\frac{\sqrt{1+\lambda_1^2}\sqrt{{1+\lambda_{r+1}^2}}}{\lambda_r-\lambda_{r+1}}\sqrt{\frac{rs'\log p}{n}}\\
&\leq \xi^{t}\,\mathrm{dist}(\gd_1,\scale)
+\frac{C_1}{1-\xi}
\frac{\sqrt{1+\lambda_1^2}\sqrt{{1+\lambda_{r+1}^2}}}{\lambda_r-\lambda_{r+1}}\sqrt{\frac{rs'\log p}{n}}.
\end{align*}
Here the second inequality is due to the induction hypothesis at the $t$th iteration.
Moreover, it can be ensured that $\gd_{t+1}$ also satisfies \eqref{radius} since $\xi<1$ and the extra term is bounded by a sufficiently small constant due to \eqref{sample size condition}. 
As a result, we have shown that both \eqref{radius} and \eqref{main induction} are satisfied for all $t\geq 1$.

In summary, on $B_1\cap B_2$, for any $t\geq 1$,
\begin{align*}
\mathrm{dist}(\gd_{t+1},\scale)
& \leq\xi^t\,\mathrm{dist}(\gd_1,\scale)
+\frac{C_1}{1-\xi}
\frac{\sqrt{1+\lambda_1^2}\sqrt{{1+\lambda_{r+1}^2}}}{\lambda_r-\lambda_{r+1}}\sqrt{\frac{rs'\log p}{n}}\\
& \leq\xi^t\,\mathrm{dist}(\gd_1,\scale)
+\frac{4C_1}{\alpha^2\eta^2}
\frac{\sqrt{1+\lambda_1^2}\sqrt{{1+\lambda_{r+1}^2}}}{\lambda_r-\lambda_{r+1}}\sqrt{\frac{rs'\log p}{n}}\\
& \leq \xi^t \, \mathrm{dist}(\gd_1,\scale)+C\left(\frac{s'}{s}\right)^{3/2}\frac{\sqrt{1+\lambda_1^2}\sqrt{{1+\lambda_{r+1}^2}}}{\lambda_r-\lambda_{r+1}}\sqrt{\frac{rs\log p}{n}}\numberthis \label{eq:condition_s_prime_2}\,.
\end{align*}
Here, the second inequality holds since $\frac{1}{1-\xi}\leq \frac{4}{\alpha^2\eta^2}$ and the last inequality is due to $s' \geq \frac{16}{\alpha^2\eta^2}s$.
This completes the proof.
\end{proof}

\subsection{Proof of Corollary \ref{cor: normalized estimate}}
\begin{proof}
We prove the desired result on $B_1\cap B_2$, which happens with probability at least $1-\exp(-C'(s'\log(ep/s')))$ for some constant $C'>0$,  uniformly over $\mathcal{P}_n$.  
For notational convenience,
denote the statistical error rate in \eqref{stats error and opt error} by $\epsilon_n$, that is
\begin{equation*}
\epsilon_n = \left(\frac{s'}{s}\right)^{3/2}\frac{\sqrt{1+\lambda_1^2}\sqrt{{1+\lambda_{r+1}^2}}}{\lambda_r-\lambda_{r+1}}\sqrt{\frac{rs\log p}{n}}.
\end{equation*} 

Under condition \eqref{iteration}, the statistical error dominates optimization error in \eqref{stats error and opt error}. 
Hence by Theorem \ref{mainthe} $\dist(\gd_t,V)$ is bounded by a constant multiple of $\epsilon_n$, that is, 
\begin{equation*}
\dist\left(\gd_t,A\left(I+\frac{\Lambda_r}{\lambda}\right)^{1/2}\right)\leq C_0\epsilon_n 
\end{equation*}
for some constant $C_0 > 0$. 
Let $P$ be the orthogonal matrix that minimizes the distance, then we can write
\begin{equation}
      \label{eq:Q}
\gd_t=A\left(I+\frac{\Lambda_r}{\lambda}\right)^{1/2}P+Q\,,
\end{equation}
where $\|Q\|_\mathrm{F}\leq C_0\epsilon_n.$

Since $\gd_t$ is $s'$ row sparse and $A$ is $s$ row sparse, 
$Q$ is $s+s'$ row sparse. 
The remaining proof is composed of two steps: 
(1) bounding $\|(\gd_t^\top\so\gd_t)^{-1/2}-P^\top(I+\frac{\Lambda_r}{\lambda})^{-1/2}P\|_\mathrm{F}$ and 
(2) bounding $\|\gd_t(\gd_t^\top\so\gd_t)^{-1/2}-AP\|_\mathrm{F}$.
The bound in step (2) then gives the desired bound in the statement of the corollary.
In the rest of the proof, let $\Delta=(\gd_t^\top\so\gd_t)^{-1/2}-P^\top(I+\Lambda_r/\lambda)^{-1/2}P$.

\paragraph{Step (1):} 
By definition we have 
 \begin{align*}
\gd_t^\top  \szerothird \gd_t & = 
\left(A\left(I+\frac{\Lambda_r}{\lambda}\right)^{1/2}P+Q\right)^\top \szerothird \left(A\left(I+\frac{\Lambda_r}{\lambda}\right)^{1/2}P+Q\right)\\
& = P^\top  \left(I+\frac{\Lambda_r}{\lambda}\right)^{1/2}A^\top  \szerothird A\left(I+\frac{\Lambda_r}{\lambda}\right)^{1/2}P+Q^\top  \szerothird A\left(I+\frac{\Lambda_r}{\lambda}\right)^{1/2}P \\
&~~~ +P^\top\left(I+\frac{\Lambda_r}{\lambda}\right)^{1/2}A^\top  \szerothird Q+Q^\top  \szerothird Q.
\end{align*}
Since $A^\top \Sigma_0 A = I_r$, we have\begin{equation*}
P^\top  \left(I+\frac{\Lambda_r}{\lambda}\right)^{1/2}A^\top  \Sigma_0 A\left(I+\frac{\Lambda_r}{\lambda}\right)^{1/2}P=P^\top\left(I+\frac{\Lambda_r}{\lambda}\right)P.
\end{equation*}
Then we can bound the Frobenius norm for $\gd_t^\top  \szerothird \gd_t-P^\top(I+\frac{\Lambda_r}{\lambda})P$ as \begin{align*}
\|\gd_t^\top  \szerothird \gd_t-P^\top(I+\frac{\Lambda_r}{\lambda})P\|_\mathrm{F}\leq&\underbrace{\|P^\top  (I+\frac{\Lambda_r}{\lambda})^{1/2}A^\top  (\szerothird-\Sigma_0) A(I+\frac{\Lambda_r}{\lambda})^{1/2}P\|_\mathrm{F}}_\text{{Term I}}\\
&+\underbrace{2\|Q^\top \szerothird A(I+\frac{\Lambda_r}{\lambda})^{1/2}\|_\mathrm{F}}_{\text{Term II}}+\underbrace{\|Q^\top  \szerothird Q\|_\mathrm{F}}_{\text{Term III}}.
\end{align*}
We bound the three terms on the right side of the last display separately. 
Since $A$ is $s$ sparse, we have that on event $B_2$,
\begin{align*}
\|P^\top  (I+\frac{\Lambda_r}{\lambda})^{1/2}A^\top  (\szerothird-\Sigma_0) A(I+\frac{\Lambda_r}{\lambda})^{1/2}P\|_\op&\leq \|I+\frac{\Lambda_r}{\lambda}\|_\op\|A_{S*}^\top(\widehat{\Sigma}_{0, SS}-\Sigma_{0,SS}) A_{S*}\|_\op\\
&\leq C_1\sqrt{\frac{s \log p }{n}}.
\end{align*}
Since $A$ is of rank $r$, we can then bound the Frobenius norm of Term I as \begin{align*}
\|P^\top  (I+\frac{\Lambda_r}{\lambda})^{1/2}A^\top  (\szerothird-\Sigma_0) A(I+\frac{\Lambda_r}{\lambda})^{1/2}P\|_\mathrm{F}
& \leq C_1\sqrt{\frac{rs \log p }{n}} \leq C_1 \epsilon_n.
\end{align*} 
For Term II, since $Q$ is $s+s'$ sparse, 
we notice that on event $B_2$ and under condition \eqref{sample size condition}, we can bound it as \begin{equation*}
\|Q^\top \szerothird A(I+\frac{\Lambda_r}{\lambda})^{1/2}\|_\mathrm{F}\leq C_2 \|Q\|_\mathrm{F}
\end{equation*}
for some constant $C_2>0$. Term III is also dominated by the same upper bound. 
On $B_2$, the operator norms of $\gd_t^\top\so\gd_t$ and $(\gd_t^\top\so\gd_t)^{-1}$ are both upper bounded by a positive constant.
In addition, by \eqref{eq:eta-lambda}, the operator norms of $I+{\Lambda_r}/{\lambda}$ and $(I+{\Lambda_r}/{\lambda})^{-1}$ are also upper bounded by a positive constant.
So we conclude that 
\begin{align*}
\|\Delta\|_\mathrm{F}
&\leq \|(\gd_t^\top\so\gd_t)^{-1/2}\|_\op\|(\gd_t^\top\so\gd_t)^{1/2}-P^\top(I+\frac{\Lambda_r}{\lambda})^{1/2}P\|_\mathrm{F}\|P^\top(I+\frac{\Lambda_r}{\lambda})^{-1/2}P\|_\op\\
&\leq C_3\|\gd_t^\top\so\gd_t-P^\top(I+\frac{\Lambda_r}{\lambda})P\|_\mathrm{F}\leq C_4\epsilon_n.
\end{align*}
where the second to last inequality is due to Lemma 2 in Supplement of \cite{gao2017sparse}. 

\paragraph{Step (2):} Now we bound $\|\festimate-AP\|_\mathrm{F}$ as follows. Recall we have defined that $\Delta=(\gd_t^\top\so\gd_t)^{-1/2}-P^\top(I+\frac{\Lambda_r}{\lambda})^{-1/2}P$. \begin{align*}
\|\festimate-AP\|_\mathrm{F}&=\|\gd_t(\gd_t^\top\so\gd_t)^{-1/2}-AP\|_\mathrm{F}\\
&=\|(A(I+\frac{\Lambda_r}{\lambda})^{1/2}P+Q)(P^\top(I+\frac{\Lambda_r}{\lambda})^{-1/2}P+\Delta)-AP\|_\mathrm{F}\\
&=\|AP+QP^\top(I+\frac{\Lambda_r}{\lambda})^{-1/2}P+A(I+\frac{\Lambda_r}{\lambda})^{1/2}P\Delta+Q\Delta-AP\|_\mathrm{F}\\
&\leq \|QP^\top(I+\frac{\Lambda_r}{\lambda})^{-1/2}P\|_\mathrm{F}+\|A(I+\frac{\Lambda_r}{\lambda})^{1/2}P\Delta\|_\mathrm{F}+\|Q\Delta\|_\mathrm{F}\\
&\leq \|Q\|_\mathrm{F}\|P^\top(I+\frac{\Lambda_r}{\lambda})^{-1/2}P\|_\op+\|A(I+\frac{\Lambda_r}{\lambda})^{1/2}P\|_\op\|\Delta\|_\mathrm{F}+\|Q\|_\mathrm{F}\|\Delta\|_\op\\
&\leq C_5 \epsilon_n
\end{align*}
for some constant $C_5>0$, due to the bounds on $\|Q\|_{\mathrm{F}}$ and $\|\Delta\|_\mathrm{F}$ that we have established in step (1). 

Combining the results above we deduce that 
\begin{equation*}
\dist(\festimate,A)\leq \|\festimate-AP\|_\mathrm{F}\leq C_5\epsilon_n
\end{equation*}
 with probability at least $1-\exp(-C'(s'\log(ep/s')))$ for some positive constants $C_5$ and $C'$, {  uniformly over $\mathcal{P}_n$}.  
 This finishes our proof.
\end{proof}


\subsection{Proof of Proposition \ref{prop: gradana}}
\label{sec:proof-gradana}
\begin{proof}
Throughout the whole proof, we work on event $B_2$ defined in \eqref{eq:event-B2} which happens with probability at least $1-\exp(-C's\log(ep/s'))$ for some constant $C'>0$, uniformly over $\mathcal{P}_n$.
As mentioned before, the gradient step is equivalent to replacing all covariance matrices by $\rescov,\resdiag$ respectively due to sparsity of $\gd_t,\gd_{t+1}$ (since the output sequence $\gd_t$ remains unaltered after this substitution). 
Effectively, at $t$ th step of gradient descent the relavant support is $S_t$ since outside this set the output matrix has row equal to 0.
We denote principal submatrices $\rescov,\resdiag$ by $\widehat{\Sigma}_t\in\mathbb{R}^{|S_t|\times |S_t|},\widehat{\Sigma}_{0,t}\in\mathbb{R}^{|S_t|\times |S_t|}$ for simplicity in the proof and define the restricted Lagrangian function $f_t$ to be \begin{equation*}
f_t(L)=-\langle \widehat{\Sigma}_t,LL^\top\rangle+\frac{\lambda}{2}\|L^\top\widehat{\Sigma}_{0,t}L-I_r\|_\mathrm{F}^2,
\end{equation*}
where $L\in \mathbb{R}^{|S_t|\times r}$. As a result,
\begin{equation}
      \label{eq:div-ft}
\frac{1}{2}\nabla f_t(L) = -\widehat{\Sigma}_t L +\lambda\sot L(L^\top\sot L-I_r).
\end{equation}
Throughout the proof of Proposition \ref{prop: gradana}, we will work on the restricted function and its gradient. We denote $L_{t+1}=L_t-\eta \nabla f_t(L_t)$. 
Then we notice that $L_t = \gd_{t,S_t*},L_{t+1}=V_{t+1,S_t*}$ by our submatrix notation respectively. 
Our proof will work on distance involving $L_t,L_{t+1}$ which transfers to the desired bound as stated in this proposition. 

\par Before the proof we revisit a lemma characterizing the effect of gradient descent.
The following lemma is adapted from Lemma 4 in \cite{chi2019nonconvex} and it is an extension of the gradient descent condition from vectors to rank $r$ matrices. For notational convenience, we define $L^*_t$ to be a global minimizer of function $f_t(L)$ (which will be calculated later) and \begin{equation*}
H_X=\text{argmin}_{H\in{\mathcal{O}(r)}}\|XH-L^*_t\|_\mathrm{F}.
\end{equation*}
We define a function $f(L)$ to be \textit{$\beta$ smooth} at $L$ if for all $Z$, we have \begin{equation*}
\ve(Z)^\top \nabla^2f(L)\ve(Z)\leq \beta\|Z\|_\mathrm{F}^2.
\end{equation*}

\begin{lemma}\label{lemma:gradient}
Suppose that $f_t$ is $\beta$ smooth within a ball $B(L^*_t)=\{L:\|L-L^*_t\|_\mathrm{F}\leq R\}$ 
and that $\nabla f_t(L) P=\nabla f_t(LP)$ for any orthonormal matrix $P$. Assume that for any $L\in B(L^*_t)$ and any $Z$, we have\begin{equation*}
\ve (ZH_Z-L^*_t)^\top  \nabla^2f_t(L)\ve (ZH_Z-L^*_t)\geq \alpha \|ZH_Z-L^*_t\|_\mathrm{F}^2.
\end{equation*}
In addition, if $\eta \leq \frac{1}{\beta}$, then using gradient descent with $\dist(L_t,L^*_t)\leq R$, we have\begin{equation*}
\mathrm{dist}^2(L_{t+1}, L^*_t)\leq\left(1-\alpha\eta\right)\mathrm{dist}^2(L_t,L^*_t).\end{equation*}
Moreover, with $\dist(L_0,L^*_t)\leq R$, we have\begin{equation*}
\mathrm{dist}^2(L_t, L^*_t)\leq\left(1-\alpha\eta\right)^t \mathrm{dist}^2(L_0,L^*_t).
\end{equation*}
 
\end{lemma}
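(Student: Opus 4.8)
\textbf{Proof proposal for Lemma \ref{lemma:gradient}.} This is the standard local linear-convergence estimate for gradient descent on a function that is smooth and strongly convex on a ball, the only complication being the orthogonal-group ambiguity, which is handled by the equivariance hypothesis $\nabla f_t(L)P=\nabla f_t(LP)$. The plan is as follows.

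First I would reduce to ``aligned'' iterates. Let $H=H_{L_t}=\text{argmin}_{P\in\mathcal{O}(r)}\|L_tP-L_t^*\|_\mathrm{F}$ and set $\Delta:=L_tH-L_t^*$, so that $\dist(L_t,L_t^*)=\|\Delta\|_\mathrm{F}\le R$. By equivariance, $L_{t+1}H=L_tH-\eta\nabla f_t(L_t)H=L_tH-\eta\nabla f_t(L_tH)$, and since $\dist(L_{t+1},L_t^*)\le\|L_{t+1}H-L_t^*\|_\mathrm{F}$, it suffices to bound $\|L_tH-\eta\nabla f_t(L_tH)-L_t^*\|_\mathrm{F}$. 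Note that $\Delta$ is precisely the direction $ZH_Z-L_t^*$ obtained by taking $Z=L_t$, so the hypothesis provides strong convexity exactly along $\Delta$.

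Next I would linearize the gradient at $L_t^*$ and expand the square. Since $L_t^*$ is a global minimizer, $\nabla f_t(L_t^*)=0$; because the ball is convex, the whole segment $\{L_t^*+\tau\Delta:\tau\in[0,1]\}$ stays in $B(L_t^*)$, so the fundamental theorem of calculus gives $\ve(\nabla f_t(L_tH))=\overline{H}_t\,\ve(\Delta)$ with $\overline{H}_t:=\int_0^1\nabla^2 f_t(L_t^*+\tau\Delta)\,d\tau$. Hence
\begin{align*}
\|L_tH-\eta\nabla f_t(L_tH)-L_t^*\|_\mathrm{F}^2
&=\ve(\Delta)^\top(I-\eta\overline{H}_t)^2\ve(\Delta)\\
&=\|\Delta\|_\mathrm{F}^2-2\eta\,\ve(\Delta)^\top\overline{H}_t\,\ve(\Delta)+\eta^2\,\ve(\Delta)^\top\overline{H}_t^2\,\ve(\Delta).
\end{align*}
Applying the strong-convexity hypothesis with $Z=L_t$ and $L=L_t^*+\tau\Delta$ and integrating in $\tau$ yields $\ve(\Delta)^\top\overline{H}_t\,\ve(\Delta)\ge\alpha\|\Delta\|_\mathrm{F}^2$; smoothness gives $\overline{H}_t\preceq\beta I$ and (local) convexity of $f_t$ on the ball gives $\overline{H}_t\succeq0$, so $\ve(\Delta)^\top\overline{H}_t^2\,\ve(\Delta)\le\beta\,\ve(\Delta)^\top\overline{H}_t\,\ve(\Delta)$. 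With $\eta\le1/\beta$ the quadratic term is absorbed into the cross term, leaving $\|L_tH-\eta\nabla f_t(L_tH)-L_t^*\|_\mathrm{F}^2\le(1-\alpha\eta)\|\Delta\|_\mathrm{F}^2$, which is the claimed one-step contraction. The $t$-step bound then follows by a routine induction: since $1-\alpha\eta\in(0,1)$, distances are non-increasing, so every iterate stays inside $B(L_t^*)$ and the one-step estimate can be reapplied.

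The main obstacle is the bookkeeping in the first step: making the equivariance argument airtight so that the gradient step genuinely commutes with the aligning rotation and the residual $\Delta$ matches the exact direction in which the hypothesis is phrased (one must track which of $L_tH_{L_t}$, $L_t^*H_{L_t}$, or their transposes appears, as conventions differ across references). Everything afterwards is the classical smooth--strongly-convex descent computation; one could equally well repackage the expansion above as verifying the ``regularity condition'' of \cite{chi2019nonconvex} and then invoke their contraction argument directly.
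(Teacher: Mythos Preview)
Your plan matches how the paper handles this lemma: the paper gives no self-contained argument but simply remarks that ``the proof follows without change of the original proof'' of Lemma~4 in \cite{chi2019nonconvex}, the sole modification being the relaxation of $L_t\in B(L_t^*)$ to $\dist(L_t,L_t^*)\le R$ via the equivariance $\nabla f_t(L)P=\nabla f_t(LP)$. Your alignment step does exactly this reduction, and the rest is the standard descent-lemma expansion.

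There is, however, a real gap in the line ``(local) convexity of $f_t$ on the ball gives $\overline{H}_t\succeq0$.'' This is \emph{not} among the hypotheses: the assumed lower bound on $\nabla^2 f_t(L)$ holds only along the special directions $ZH_Z-L_t^*$, not in every direction, and the objective in the paper's application is genuinely nonconvex (the term $-\langle\widehat\Sigma_t,LL^\top\rangle$ is concave in $L$; the paper's Step~(3) bound on Term~II explicitly exploits the particular form of $\tZ$). Without $\overline{H}_t\succeq0$ you cannot pass from $\overline{H}_t\preceq\beta I$ to $\overline{H}_t^2\preceq\beta\overline{H}_t$, so your control of the quadratic term $\eta^2\,\ve(\Delta)^\top\overline{H}_t^2\,\ve(\Delta)$ is unjustified as written. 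What the stated Hessian conditions \emph{do} give you directly is $\langle\nabla f_t(L_tH),\Delta\rangle\ge\alpha\|\Delta\|_\mathrm{F}^2$; bounding $\|\nabla f_t(L_tH)\|_\mathrm{F}^2$ and thereby verifying the full regularity condition of \cite{chi2019nonconvex} that you allude to in your final paragraph requires either a two-sided smoothness assumption $\|\nabla^2 f_t\|_{\op}\le\beta$ or a direct verification, which is the route the cited reference takes and which the paper is implicitly importing.
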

In Lemma 4 of \cite{chi2019nonconvex} the condition on gradient descent is $L_t\in B(L^*_t)$. Here we generalize it to $\dist(L_t,L^*_t)\leq R$ and the proof follows without change of the original proof.
By \eqref{eq:div-ft}, it is straightforward to verify the condition $\nabla f_t(L) P=\nabla f_t(LP)$.
The remaining proof is composed of three steps: (1) deriving the expression for $\ve(Z)^\top  \nabla^2f_t(L)\ve(Z)$, (2) verifying the smoothness condition, and (3) verifying the condition on strong convexity. 
We check the radius condition at the end of the proof.  

\paragraph{Step (1)} 
Recall \eqref{eq:div-ft}.
As a result, 
\begin{equation*}
\frac{1}{2}\ve \nabla f_t(L)=-(I_{r}\otimes \widehat{\Sigma}_t)\ve (L)+\lambda(I_{r}\otimes\sot LL^\top  \sot)\ve (L)-\lambda (I_{r}\otimes \sot)\ve(L).
\end{equation*} The main calculation is to deal with\begin{equation*}
\ve(\sot LL^\top  \sot L)=(I\otimes\sot LL^\top  \sot)\ve (L).
\end{equation*}
We now directly compute this expression as follows: since $L\in\mathbb{R}^{|S_t|\times r}$, we have $\ve(L)=[l_1^\top  , l_2^\top,...,l_r^\top  ] ^\top $ where $l_i$ is the $i$ th column of the matrix. Following this notation, we can write\begin{equation*}
\ve(\sot LL^\top  \sot L)=(I\otimes\sot LL^\top  \sot)\ve (L)=\begin{bmatrix}
\sot LL^\top  \sot & 0 &...&0\\
0 & \sot LL^\top  \sot &...&0\\
0&0& ...&0\\
0 & 0 &...&\sot LL^\top  \sot
\end{bmatrix}\begin{bmatrix}
l_1 \\
l_2\\
l_3\\
...\\
l_r
\end{bmatrix}.
\end{equation*}
As a result, \begin{equation*}
\ve(\sot LL^\top  \sot L)=\begin{bmatrix}
\sot LL^\top  \sot l_1 \\
\sot LL^\top  \sot l_2\\
\sot LL^\top  \sot l_3\\
...\\
\sot LL^\top  \sot l_r\\
\end{bmatrix}=\begin{bmatrix}
\sot \sum_{i=1}^r l_il_i^\top  \sot l_1 \\
\sot \sum_{i=1}^r l_il_i^\top  \sot l_2\\
\sot \sum_{i=1}^r l_il_i^\top  \sot l_3\\
...\\
\sot\sum_{i=1}^r l_il_i^\top  \sot l_r
\end{bmatrix}.
\end{equation*}
Now we can calculate the derivative $\frac{\partial\ve \nabla f_t(L)}{\partial\ve(L)}$, note that we can do this block by block: the $jk$ block entry of the Hessian is just $\frac{\partial\ve \nabla f_t(L)_j}{\partial l_k}$. Since\begin{equation*}
\frac{1}{2}\ve \nabla f_t(L)=-(I_{r}\otimes \widehat{\Sigma}_t)\ve (L)+\lambda(I_r\otimes\sot LL^\top  \sot)\ve (L)-\lambda (I_{r}\otimes \sot)\ve(L),
\end{equation*}
the only term we have to deal with is the middle one. By previous calculations, we have 
\begin{equation*}
\frac{\partial \sot \sum_{i=1}^r l_il_i^\top  \sot l_j}{\partial l_k}=\frac{\partial \sot l_kl_k^\top  \sot l_j}{\partial l_k}=l_j^\top  \sot l_k\sot+\sot l_j l_k^\top\sot,
\end{equation*}
when $j\neq k$ and \begin{align*}
\frac{\partial \sot \sum_{i=1}^r l_il_i^\top  \sot l_j}{\partial l_j}&=\sum_{i\neq j}\sot l_il_i^\top  \sot + 2\sot l_jl_j^\top  \sot+l_j^\top  \sot l_j\sot\\&=\sot LL^\top  \sot+\sot l_jl_j^\top  \sot+l_j^\top  \sot l_j\sot.
\end{align*}
As a result, we can combine the above results to obtain the Hessian \begin{align*}
\frac{1}{2}\nabla^2f_t(L)=&\frac{1}{2}\frac{\partial\ve \nabla f_t(L)}{\partial\ve(L)}=-\begin{bmatrix}
\widehat{\Sigma}_t & ... & 0\\
0 & \widehat{\Sigma}_t & 0\\
0&...&\widehat{\Sigma}_t
\end{bmatrix}-\lambda\begin{bmatrix}
\sot & .. & 0\\
0 & \sot & 0\\
0&...&\sot
\end{bmatrix}\\
&+\begin{bmatrix}
\sot LL^\top  \sot+\sot l_1l_1^\top  \sot+l_1^\top  \sot l_1\sot & ... & l_1^\top  \sot l_r\sot+\sot l_1 l_r^\top\sot\\
...&...&...\\
\sot l_r l_1^\top\sot+l_r^\top  \sot l_1\sot & ...&\sot LL^\top  \sot+\sot l_rl_r^\top  \sot+l_r^\top  \sot l_r\sot 
\end{bmatrix}.
\end{align*}
Now we proceed to calculate $\ve(Z)^\top  \nabla^2 f_t(L)\ve(Z)$. We have \begin{equation*}
\ve(Z)^\top  \nabla^2 f_t(L)\ve(Z)=\sum_{i,j}z_i^\top\nabla^2f_t(L)_{ij}z_j
\end{equation*}
where $z_i$ is the $i$ th column of $Z$. Substituting the expression on the Hessian, we have\begin{align*}
\frac{1}{2}\ve(Z)^\top  \nabla^2 f_t(L) \ve(Z)=&-\sum_{i=1}^rz_i^\top  \widehat{\Sigma}_tz_i-\lambda(\sum_{i=1}^rz_i^\top  \sot z_i)+\lambda(\sum_{i\neq j} l_i^\top  \sot l_jz_i^\top  \sot z_j+\sum_{i\neq j}z_i^\top \sot l_j l_i^\top \sot z_j\\
&+\sum_{i=1}^r (z_i^\top  \sot l_i)^2+\sum_{i=1}^r z_i^\top\sot LL^\top  \sot z_i+\sum_{i=1}^r (l_i^\top  \sot l_i)(z_i^\top  \sot l_i)).
\end{align*}
Now we claim the following simplification:
\begin{align*}
\frac{1}{2}\ve(Z)^\top  \nabla^2 f_t(L)\ve(Z)=& g_t(L,Z)\\=&-\langle \widehat{\Sigma}_t,ZZ^\top  \rangle-\lambda\langle \sot,ZZ^\top  \rangle +\lambda\langle ZZ^\top  ,\sot LL^\top  \sot\rangle +\lambda\langle L^\top  \sot L,Z^\top  \sot Z\rangle\\
&+\lambda\langle  Z^\top\sot L,L^\top\sot Z  \rangle.
\end{align*}
Now we begin to prove this claim. We expand the above expression as follows:\begin{align*}
g_t(L,Z)=&-\langle \widehat{\Sigma}_t,ZZ^\top  \rangle-\lambda(\langle \sot,ZZ^\top  \rangle -\langle ZZ^\top  ,\sot LL^\top  \sot\rangle \\
& ~~~-\langle L^\top  \sot L,Z^\top  \sot Z\rangle-\langle Z^\top\sot L,L^\top\sot Z\rangle) \\
=&-\sum_{i=1}^rz_i^\top  \widehat{\Sigma}_tz_i-\lambda\sum_{i=1}^rz_i^\top  \sot z_i+\lambda \sum_{i,j}z_i^\top  \sot l_j l_j^\top \sot z_i+\lambda\sum_{i,j}z_i^\top\sot l_j l_i^\top\sot z_j \\
&+\lambda \sum_{i\neq j}l_i^\top  \sot l_jz_i^\top  \sot z_j+\lambda \sum_{i=1}^r (l_i^\top  \sot l_i)(z_i^\top  \sot z_i)\\
=&-\sum_{i=1}^rz_i^\top  \widehat{\Sigma}_tz_i-\lambda\sum_{i=1}^rz_i^\top  \sot z_i+\lambda(\sum_{i\neq j} l_i^\top  \sot l_jz_i^\top  \sot z_j+\sum_{i\neq j}z_i^\top\sot l_j l_i^\top\sot z_j \\
&+\sum_{i=1}^r (l_i^\top  \sot l_i)(z_i^\top  \sot z_i)+\sum_{i=1}^r z_i^\top  \sot \sum_{j=1}^r l_jl_j^\top  \sot z_i+\sum_{i=1}^r (z_i^\top  \sot l_i)^2 )\\
=&-\sum_{i=1}^rz_i^\top  \widehat{\Sigma}_tz_i-\lambda(\sum_{i=1}^rz_i^\top  \sot z_i)+\lambda(\sum_{i\neq j} l_i^\top  \sot l_jz_i^\top  \sot z_j+\sum_{i\neq j}z_i^\top \sot l_j l_i^\top \sot z_j\\
&+\sum_{i=1}^r (z_i^\top  \sot l_i)^2+\sum_{i=1}^r z_i^\top\sot LL^\top  \sot z_i+\sum_{i=1}^r (l_i^\top  \sot l_i)(z_i^\top  \sot l_i))\\
=&\frac{1}{2}\ve(Z)^\top  \nabla^2 f_t(L) \ve(Z).
\end{align*}
This completes the first step. 

\paragraph{Step (2)} 
In view of the lemma, our next task would be to bound the smoothness parameter in a neighborhood of $L_t^*$.  The neighborhood will be defined by the distance $\|\sot^{1/2}L_t^*-\sot^{1/2} L\|_\mathrm{F}\leq \delta$, (we define in this unusual way due to the normalization constraint and specific $\delta$ given by the condition will be explained later), which by triangle inequality gives\begin{equation*}
\|\sot^{1/2}L^*_t\|_\op-\delta\leq  \|\sot^{1/2}L\|_{\op}\leq \|\sot^{1/2}L^*_t\|_\op+\delta.
\end{equation*}
We first find this global minimizer $L_t^*$ up to a rotation matrix. Setting the gradient equal to 0, we have any critical point must satisfy the following equation\begin{equation*}
    \widehat{\Sigma}_tL=\lambda\sot L(L^\top \sot L-I_r).
\end{equation*}
From the above equation, we deduce that the column space of global minimizer should coincide with some generalized eigenvectors (not necessarily leading ones). Hence without loss of generality we assume the global minimizer of function is achieved at when $L^*_t=\widehat{L}_tD_t$ for $D_t$ being an invertible matrix and $\widehat{L}_t$ being the generalized eigenvectors with eigenvalues $\widehat{\lambda}_\cI$ for $\cI\subseteq[p], |\cI|=r$ for sample covariance matrices. Then we have \begin{equation*}
    \widehat{\Sigma}_t\widehat{L}_tD_t=\lambda\sot \widehat{L}_tD_t(D_t^2-I_r)
\end{equation*}
and this gives\begin{equation*}
    \sot \widehat{L}_t\widehat{\Lambda}_\cI D_t=\lambda\sot \widehat{L}_tD_t(D_t^2-I_r).
\end{equation*}
Here we abuse the notation a bit to denote diagonal matrices with entries $\widehat{\lambda}_\cI$ to be $\widehat{\Lambda}_\cI$. 
We deduce that $D_t=(I_r+\frac{1}{\lambda}\widehat{\Lambda}_\cI)^{\frac{1}{2}}$. 
This is true for any critical point and now we will show that the global minimizer is achieved at $\cI$ being $\{1,2,...,r\}$. To see this, note that\begin{align*}
f_t(L^*_t)&=f_t(\widehat{L}_tD_t)=-\langle \widehat{\Sigma}_t,\widehat{L}_tD_tD_t^\top \widehat{L}_t^\top\rangle+\frac{\lambda}{2}\|D_t^\top \widehat{L}_t^\top \so\widehat{L}_tD_t-I_r\|_\mathrm{F}^2\\
&=-\Tr(D_t^\top \widehat{\Lambda}_\cI D_t)+\frac{\lambda}{2}\|D_t^\top D_t-I_r\|_\mathrm{F}^2=-\Tr(\widehat{\Lambda}_\cI)-\frac{1}{\lambda}\Tr(\widehat{\Lambda}_\cI^2)+\frac{1}{2\lambda}\Tr(\widehat{\Lambda}_\cI^2).
\end{align*}
We notice that the above quantity is minimized only when $\widehat{\Lambda}_\cI=\hl(S_t)$, that is, when we are selecting the leading $r$ generalized eigenvectors of $\widehat\Sigma_t$ with respect to $\widehat\Sigma_{0,t}$. 
As a result, 
\begin{equation*}
    D_t=\left(I_r+\frac{1}{\lambda}\hl(S_t)\right)^{\frac{1}{2}},\quad L^*_t=\widehat{A}(S_t)_{S_t*}\left(I_r+\frac{1}{\lambda}\hl(S_t)\right)^{\frac{1}{2}}=\widehat{V}(S_t)_{S_t*},
\end{equation*}
where $\hl(S_t)$ is the diagonal matrix with entries being first $r$ generalized eigenvalues for sample covariance matrices as specified before. According to our definition, $\widehat{A}(S_t)_{S_t*}$ and $\widehat{V}(S_t)_{S_t*}$ are
of size $|S_t|\times r$.
In the rest of the proof, we denote $\widehat{V}(S_t)_{S_t*}$ by $\widehat{V}$ for simplicity. Similarly, we slightly abuse notation and abbreviate $\hl(S_t),\widehat{A}(S_t)_{S_t*},\widehat{B}(S_t)_{S_t*}$ as $\hl,\ha,\hb$ in the rest of this proof. 
Then we have $\|\sot^{1/2} L_t^*\|_\op=\|\sot^{1/2}\hU\|_\op=\sqrt{1+\frac{\widehat{\lambda}_1}{\lambda}}\leq1+\frac{\lambda_1}{\lambda}$, on event $B_2$ and assumption in the theorem. Here with slight abuse of notation, we use $\widehat{\lambda}_i$ to denote the $i$th restricted sample generalized eigenvalue.

Now we can bound the smoothness parameter from above by controlling each term in Hessian matrix.  
Since we are working on event $B_2$, we have that
$\frac{1}{2}\rev{\nu}\leq\|\widehat{\Sigma}_t\|_\op\leq 2\rev{\nu}$ and $\frac{1}{2}\rev{\nu}\leq\|\sot\|_\op\leq2\rev{\nu}$. 
Similar bound holds for minimum restricted sample generalized eigenvalue. 
When $\|\sot^{1/2}L^*_t-\sot^{1/2} L\|_\mathrm{F}\leq \delta$,
we have
\begin{align*}
g_t(L,Z) & =
-\langle \widehat{\Sigma}_t,ZZ^\top  \rangle-\lambda\langle \sot,ZZ^\top  \rangle +\lambda\langle ZZ^\top  ,\sot LL^\top  \sot\rangle +\lambda\langle L^\top  \sot L,Z^\top  \sot Z\rangle\\
&~~~+\lambda\langle Z^\top \sot L, L^\top\sot Z \rangle \\
& \leq 0+0+\lambda\|Z\|_\mathrm{F}^2\|L^\top  \sot\|_{\op}^2 
+\lambda\|L^\top\sot LZ^\top\|_{\mathrm{F}}\|\sot Z \|_{\mathrm{F}}+\lambda \|Z^\top\sot L\|_\mathrm{F}\|L^\top\sot Z\|_\mathrm{F}\\ 
& \leq 4\lambda\rev{\nu}(\delta+\|\sot^{1/2}\widehat{V}\|_{\op} 
)^2\|Z\|_\mathrm{F}^2+2\lambda (\delta+\|\sot^{1/2}\widehat{V}\|_{\op})^2\rev{\nu}\|Z\|_\mathrm{F}^2
\\
& \leq (6\lambda\rev{\nu}(\delta+\|\sot^{1/2}\widehat{V}\|_{\op})^2)\|Z\|_\mathrm{F}^2=\frac{1}{2}\beta \|Z\|_\mathrm{F}^2.
\end{align*}
 Then we can upper bound the Hessian eigenvalue by
\begin{equation*}
    \beta=12\lambda\rev{\nu}\left(\delta+1+\frac{\lambda_1}{\lambda}\right)^2.
\end{equation*}

\paragraph{Step (3)} 
To derive the strong convexity parameter $\alpha$, we start from the function evaluated at the global minimizer $L=\hU$. Define $\tZ = ZH_Z-\hU$, we now lower bound $g_t(\hU,\tZ)$ by a constant multiple of $\frac{1}{2}\|\tZ\|_\mathrm{F}^2$. To do this, we substitute the expression into $g_t$ and obtain\begin{align*}
    g_t(\widehat{V},\tZ)&=-\langle \widehat{\Sigma}_t+\lambda\sot,\tZ\tZ^\top  \rangle +\lambda\langle \hU^\top  \sot\hU,\tZ^\top  \sot \tZ\rangle +\lambda\langle \tZ^\top  \sot \hU, \tZ^\top  \sot\hU\rangle+\lambda \langle \tZ^\top\sot \hU,\hU^\top\sot \tZ\rangle\\
    &=-\langle \widehat{\Sigma}_t+\lambda\sot,\tZ\tZ^\top  \rangle+\lambda\langle I+\frac{1}{\lambda}\hl,\tZ^\top  \sot \tZ\rangle +\lambda\langle \tZ\tZ^\top  ,\sot \hU\hU^\top  \sot\rangle +\lambda \langle \tZ^\top\sot \hU,\hU^\top\sot \tZ\rangle\\
    &=\underbrace{-\langle \widehat{\Sigma}_t,\tZ\tZ^\top  \rangle +\langle \hl,\tZ^\top  \sot \tZ\rangle +\lambda\langle \tZ\tZ^\top  ,\sot \hU\hU^\top  \sot\rangle}_{\text{Term I}}+\underbrace{\lambda \langle \tZ^\top\sot \hU,\hU^\top\sot \tZ\rangle}_{\text{Term II}}.
\end{align*}
We deal with Term I and Term II separately. To simplify Term I, we recall that 
\begin{equation*}
    \sot^{-1}\widehat{\Sigma}_t\sot^{-1}  =\ha\hl\ha^\top  +\hb\has\hb^\top,  
\end{equation*}
where we have defined the remaining generalized eigenvectors by $\hb$ and the rest of eigenvalues by diagonal entries of $\has$, both on the restrict set $S_t$. 
Here we also omit the dependence on $t$. 
By the definition of restricted sample generalized eigenvectors, we have 
\begin{align*}
\text{Term I}
& = \langle \hl ,\tZ^\top  \sot \tZ\rangle +\langle \tZ\tZ^\top  ,\lambda\sot \hU\hU^\top  \sot-\widehat{\Sigma}_t\rangle \\
& = \langle \tZ\hl \tZ^\top  ,\sot\rangle +\langle \tZ\tZ^\top  ,\sot(\lambda \ha (I+\hl/\lambda)\ha ^\top  - \sot^{-1}\widehat{\Sigma}\sot^{-1}) \sot\rangle\\
& \geq \widehat{\lambda}_r\langle \tZ\tZ^\top  ,\sot\rangle+\langle \tZ\tZ^\top  ,\sot(\lambda \ha (I+\hl/\lambda)\ha ^\top  -(\ha \hl\ha ^\top  +\hb \has\hb ^\top  )) \sot\rangle\\
& = \langle \tZ\tZ^\top  , \sot^{1/2}(\widehat{\lambda}_r I+\sot^{1/2}\ha (\lambda I )\ha^\top \sot^{1/2}-\sot^{1/2}\hb \has\hb \sot^{1/2})\sot^{1/2}\rangle \\
& = \langle \tZ\tZ^\top  , \sot^{1/2}(\widehat{\lambda}_r (\sot^{1/2}\ha\ha^\top  \sot^{1/2}+\sot^{1/2}\hb\hb^\top  \sot^{1/2})+\lambda\sot^{1/2}\ha \ha ^\top  \sot^{1/2}\\
& ~~~ -\sot^{1/2}\hb \has \hb^\top  \sot^{1/2})\sot^{1/2}\rangle \\
& = \langle \tZ\tZ^\top  , \sot^{1/2}(\sot^{1/2}\ha (\lambda+\widehat{\lambda}_r )I\ha ^\top  \sot^{1/2} +\sot^{1/2}\hb (\widehat{\lambda}_r -\has)\hb^\top\sot^{1/2})\sot^{1/2}\rangle \\
& \geq \frac{1}{2\rev{\nu}}\langle \tZ\tZ^\top  , (\widehat{\lambda}_r -\widehat{\lambda}_{r+1}) I\rangle = \frac{1}{2\rev{\nu}}(\widehat{\lambda}_r -\widehat{\lambda}_{r+1})\|\tZ\|_\mathrm{F}^2.
\end{align*}
The first inequality is due to the fact we only select the first $r$ generalized eigenvalues, the second inequality follows from the fact that the second term can be bounded using the eigengap since $\has$ is a diagonal matrix containing the last $|S_t| -r$ generalized eigenvalues. 
As a result, Term I can be bounded in terms of eigengap at sample level. Note that the above inequality holds for any $\tZ$. 

Now we bound Term II. Here we use the fact that $H_Z$ is the solution to $\min_{P\in\mathcal{O}(r)} \|ZP-\hU\|_\mathrm{F}$. By definition of $\tZ$, we hope to bound $\lambda\langle (ZH_Z-\widehat{V})^\top\sot \hU,\hU^\top \sot (ZH_Z-\widehat{V})\rangle.$ We will prove now that this can be lower bounded by 0, as is the case in Example 1 (46) in \cite{chi2019nonconvex}. Note that we have 
\begin{align*}
& \langle \tZ^\top\sot \hU,\hU^\top \sot \tZ\rangle\\
&=\Tr(\hU^\top\sot(ZH_Z-\widehat{V})\hU^\top \sot (ZH_Z-\widehat{V})\rangle\\
&=\Tr(\sot (ZH_Z-\hU)\hU^\top\sot (ZH_Z-\hU)\hU^\top )\\
&= \Tr(\sot ZH_Z\hU^\top\sot ZH_Z\hU^\top)-2\Tr(\sot \hU\hU^\top\sot ZH_Z\hU^\top)+\Tr(\sot \hU\hU^\top\sot \hU\hU^\top).
\end{align*}
By Lemma 2 in \cite{ten1977orthogonal}, we know that $ ZH_Z\hU^\top\succeq 0$ and it is also symmetric. 
If we denote $ZH_Z\hU^\top=L_0L_0^\top$ and write $Z_1 = \sot^{1/2}L_0, Z_2=\sot^{1/2}\hU$, we have
\begin{equation*}
\langle \tZ^\top\sot \hU,\hU^\top \sot \tZ\rangle
=\|Z_1Z_1^\top \|_\mathrm{F}^2+\|Z_2Z_2^\top\|_\mathrm{F}^2-2\langle Z_1Z_1^\top,Z_2Z_2^\top \rangle\geq 0
\end{equation*} by Cauchy Schwarz Inequality. 
This proves that Term II is non-negative. Finally we conclude that we have \begin{equation*}
g_t(\widehat{V},\tZ)\geq \frac{1}{2\rev{\nu}} (\widehat{\lambda}_r -\widehat{\lambda}_{r+1})\|\tZ\|_\mathrm{F}^2.
\end{equation*}

Now we argue for general $L$. 
First we show that for any $L$ in the neighborhood of $\widehat{V}$ 
\begin{align*}
    |g_t(L,\tZ)-g_t(\hU,\tZ)| 
      & =
      |\lambda\langle \tZ\tZ^\top,\sot (LL^\top  -\hU\hU^\top  )\sot\rangle +\lambda\langle L^\top  \sot L-\hU^\top  \sot\hU,\tZ^\top  \sot \tZ\rangle\\
    &~~~ +\lambda (\langle \tZ^\top \sot L, L^\top\sot \tZ\rangle-\langle \tZ^\top \sot \hU, \hU^\top\sot \tZ\rangle)|\\
    & \leq c_1 \|L-\hU\|_\mathrm{F}\|\tZ\|_\mathrm{F}^2,
\end{align*}
for some positive constant $c_1$ depending on $\lambda$ and $\rev{\nu}$. Specifically, this bound can be obtained by bounding each term. The first term can be bounded by \begin{align*}
    \lambda|\langle \tZ\tZ^\top  ,\sot (LL^\top  -\hU\hU^\top  )\sot\rangle|&\leq \lambda\|\tZ\|_\mathrm{F}^2\|\sot (LL^\top  -\hU\hU^\top  )\sot\|_{\op}\\
    &\leq 2\lambda\rev{\nu}\|\sot^{1/2}LL^\top\sot^{1/2}  -\sot^{1/2}\hU\hU^\top  \sot^{1/2}\|_\op\|\tZ\|_\mathrm{F}^2\\
    &\leq \frac{9}{2}\lambda\rev{\nu}\|\sot^{1/2}L\|_\op\|\sot^{1/2}L-\sot^{1/2}\hU\|_\mathrm{F}\|\tZ\|_\mathrm{F}^2\\
    &\leq \frac{9}{2}\sqrt{2}\lambda\rev{\nu}^{3/2}\left(1+\frac{\lambda_1}{\lambda}+\delta\right)\|L-\hU\|_\mathrm{F}\|\tZ\|_\mathrm{F}^2,
\end{align*}
under the assumption that $\dist(\sot^{1/2}L,\sot^{1/2}\hU)\leq\frac{1}{4}\|\sot^{1/2}\hU\|_\op=\frac{1}{4}\sqrt{1+\frac{\widehat{\lambda}_1}{\lambda}}.$
The first inequality is the property of matrix norm, the second inequality is due to the bound on sample covariance matrix. The third inequality follows from Lemma \ref{sub dis lemma} and the last inequality follows from the neighborhood assumption. 
The second term can be bounded in a similar fashion as
\begin{align*}
&    ~~~~\lambda|\langle L^\top  \sot L-\hU^\top  \sot\hU,\tZ^\top  \sot \tZ\rangle|\\
&\leq \lambda\|(L^\top  \sot L-\hU^\top  \sot\hU)\tZ^\top\|_\mathrm{F}\|  \sot \tZ\|_\mathrm{F}\\
    &\leq 2\lambda\rev{\nu}\|L^\top  \sot L-\hU^\top  \sot\hU\|_\op\|\tZ\|_\mathrm{F}^2\\
    &= 2\lambda\rev{\nu}\|\hU^\top  \sot(L-\hU)+(L-\hU)^\top  \sot \hU+(L-\hU)^\top  \sot(L-\hU)\|_\op\|\tZ\|_\mathrm{F}^2\\
    &\leq 2\lambda\rev{\nu}(2\|\sot^{1/2}\hU\|_\op\|\sot^{1/2}(L-\hU)\|_\op+\delta \|\sot^{1/2}(L-\hU)\|_\op
      )\|\tZ\|_\mathrm{F}^2\\
    &= 2\left(\delta+2\sqrt{1+\frac{\widehat{\lambda}_1}{\lambda}}\right)\lambda\rev{\nu}\|\sot^{1/2}(L-\hU)\|_\op\|\tZ\|_\mathrm{F}^2\\
    &\leq 2\sqrt{2}\left(\delta+2\left(1+\frac{\lambda_1}{\lambda}\right)\right)\lambda\rev{\nu}^{3/2}\|L-\hU\|_\mathrm{F}\|\tZ\|_\mathrm{F}^2.
\end{align*}
The first line is standard matrix inequality and the rest follows  from the neighborhood assumption as well as the assumption on the operator norm of sample covariance matrices on event $B_2$. 
Finally we bound the last term as
\begin{align*}
& ~~~~\lambda |(\langle \tZ^\top \sot L, L^\top\sot \tZ\rangle-\langle \tZ^\top \sot \hU, \hU^\top\sot \tZ\rangle)| \\
&=\lambda |\Tr((\tZ^\top\sot L)^2-(\tZ^\top\sot\hU)^2)|\\
&= \lambda |\Tr(\tZ^\top\sot(L+\hU)\tZ^\top\sot (L-\hU))|\\
&\leq \lambda \|\tZ^\top\sot(L+\hU)\|_\mathrm{F}\|\tZ^\top\sot(L-\hU)\|_\mathrm{F}\\
&\leq 2\sqrt{2}\lambda\rev{\nu}^{3/2}\|\tZ\|_\mathrm{F}^2\|L-\hU\|_\mathrm{F} \|\sot^{1/2}(L+\hU)\|_\op\\
&\leq 2\sqrt{2}\lambda\rev{\nu}^{3/2}\left(2\left(1+\frac{\lambda_1}{\lambda}\right)+\delta\right)\|L-\hU\|_\mathrm{F}\|\tZ\|_\mathrm{F}^2.
\end{align*}
In this way, we see that we can choose the constant to be\begin{equation*}
    c_1=4\sqrt{2}\left(\delta+2\left(1+\frac{\lambda_1}{\lambda}\right)\right)\lambda\rev{\nu}^{3/2}+\frac{9}{2}\sqrt{2}\lambda\rev{\nu}^{3/2}\left(1+\frac{\lambda_1}{\lambda}+\delta\right)=\sqrt{2}\lambda\rev{\nu}^{3/2}\left(\frac{25}{2}\left(1+\frac{\lambda_1}{\lambda}\right)+\frac{17}{2}\delta\right).
\end{equation*} Thus for any $L$ within a $\delta$ neighborhood of $\hU=L^*_t$ as defined above, we have, by triangle inequality, \begin{align*}
    g_t(L,\tZ)&\geq g_t(\hU,\tZ)-|g_t(L,\tZ)-g_t(\hU,\tZ)|\\
    &\geq \frac{1}{2\rev{\nu}}(\widehat{\lambda}_r -\widehat{\lambda}_{r+1})\|\tZ\|_\mathrm{F}^2-c_1 \|L-\hU\|_\mathrm{F}\|\tZ\|_\mathrm{F}^2\\
    &\geq \frac{1}{4\rev{\nu}}(\widehat{\lambda}_r -\widehat{\lambda}_{r+1})\|\tZ\|_\mathrm{F}^2\geq \frac{1}{8\rev{\nu}}(\lambda_r-\lambda_{r+1})\|\tZ\|_\mathrm{F}^2,
\end{align*}
as long as $\|L-\hU\|_\mathrm{F}\leq \frac{\widehat{\lambda}_r -\widehat{\lambda}_{r+1}}{4\rev{\nu}c_1}$, which is guaranteed by the assumption in the theorem and the event $B_2$. 

Recall that $g_t(L,\tZ)=\frac{1}{2}\ve(\tZ)^\top\nabla^2f_t(L) \ve(\tZ)$. 
This motivates us to pick 
\begin{equation*}
\alpha = \frac{(\lambda_r-\lambda_{r+1})}{4\rev{\nu}}
\end{equation*} 
under appropriate radius conditions. 

To finally find the radius of attraction region such that $\dist(L_t,L_t^*)=\dist(\gd_t, \hU(S_t))\leq R$, we notice that throughout the proof for smoothness and strongly convexity to hold, we require 3 conditions on the distance\begin{equation*}
\|L-L_t^*\|_\mathrm{F}\leq \frac{(\widehat{\lambda}_r -\widehat{\lambda}_{r+1})}{4\rev{\nu}c_1},\quad \|\sot^{1/2} L-\sot^{1/2}L_t^*\|_\mathrm{F}\leq \delta, \quad\|\sot^{1/2} L-\sot^{1/2}L_t^*\|_\mathrm{F}\leq\frac{1}{4}\sqrt{1+\frac{\widehat{\lambda}_1}{\lambda}}.
\end{equation*}
Hence to ensure all of the above three conditions to hold, we can just set\begin{equation}\label{Radius Condition}
R=\min\left\{ \frac{(\lambda_r -\lambda_{r+1})}{8\rev{\nu}c_1},\frac{1}{8}\frac{\sqrt{1+\frac{\lambda_1}{\lambda}}}{\sqrt{\rev{\nu}}},\frac{\delta}{\sqrt{2\rev{\nu}}}\right\}.
\end{equation}
Here recall that $\frac{1}{\rev{\nu}}$ is the lower bound of minimum eigenvalue of $\Sigma_0$.
\par In summary, we conclude that there is a constant $\alpha$ such that for any $L$ in the neighborhood of $L_t^*$ and for any $Z$, \begin{equation*}
    \ve (ZH_Z-L^*_t)^\top  \nabla^2f_t(L)\ve (ZH_Z-L^*_t)\geq \alpha \|ZH_Z-L^*_t\|_\mathrm{F}^2.
\end{equation*}
Combine with the upper bound and resort to Lemma \ref{lemma:gradient}, we conclude that if we choose the step-size to be $\eta\leq\frac{1}{\beta}$, under the radius conditions,\begin{equation*}
\mathrm{dist}^2(L_{t+1}, \hU)\leq\left(1-\alpha\eta\right)\mathrm{dist}^2(L_t,\hU).
\end{equation*}
Moreover, we have\begin{equation*}
    \alpha=\frac{(\lambda_r-\lambda_{r+1})}{4\rev{\nu}},\quad \beta=12\lambda\rev{\nu}\left(\delta+1+\frac{\lambda_1}{\lambda}\right)^2.
\end{equation*}
This finishes our analysis for the gradient descent step. We comment here that Proposition \ref{prop: gradana} is then proved by simply taking $\delta=1$ and calculate the radius in \eqref{Radius Condition} and $\alpha,\beta$ accordingly. We keep $\delta$ in the proof for the sake of generality.  
\par Specifically, under the choice that $\lambda=\frac{\lambda_1}{c}$, we have \begin{equation*}
\eta \leq  \frac{c}{12\lambda_1\rev{\nu}(c+2)^2}=\frac{1}{\beta},
\end{equation*}
and we have, by lifting the matrix of size $|S_t|\times r$ to $p\times r$ by filling in 0s, \begin{equation*}
\mathrm{dist}^2(V_{t+1}^o, \hU(S_t))\leq\left(1-\frac{\eta(\lambda_r-\lambda_{r+1})}{4\rev{\nu}}\right)\mathrm{dist}^2(\gd_t,\hU(S_t)).
\end{equation*}
\end{proof}

\subsection{Proof of Proposition \ref{prop: ht}}
\label{sec:proof-ht}
\begin{proof}
Recall that we have defined 
\begin{equation*}
\mathrm{supp}(\scale)=S,\quad \mathrm{supp}(\gd_{t+1})=F_{t+1}.
\end{equation*}
Now we define sets $F_1=S\symbol{92}F_{t+1}$, $F_2=S\cap F_{t+1}$ and $F_3=F_{t+1}\symbol{92}S$, the sizes of which are denoted by $k_1,k_2$ and $k_3$ respectively. 
We also define 
\begin{equation*}
    x_1=\|\scale_{F_1*}\|_\mathrm{F},\quad x_2=\|\scale_{F_2*}\|_\mathrm{F},
    \end{equation*}
\begin{equation*}
    y_1=\|V_{t+1,F_1*}\|_\mathrm{F},\quad
   y_2=\|V_{t+1,F_2*}\|_\mathrm{F},\quad
    y_3 = \|V_{t+1,F_3*}\|_\mathrm{F}.
\end{equation*}
Finally, we define 
\begin{equation*}
    x_1^2+x_2^2=\|\scale\|_\mathrm{F}^2=X^2,\quad y_1^2+y_2^2+y_3^2\leq \|V_{t+1}\|_\mathrm{F}^2=Y^2.
\end{equation*}
Let $\Delta=\Tr(|{\scale}^\top  V_{t+1}|)$ be the quantity we are interested in. 
Here we let $\Tr(|M|)$ denote the trace norm
$\Tr(|M|)=\Tr(\sqrt{M^\top  M})$
which is simply the sum of singular values (a.k.a.~nuclear norm). 
This is a special case of Schatten $p$ norm, defined as $\|T\|_p = \Tr(|T|^p)^{\frac{1}{p}}$ which is essentially vector $p$ norm of a vector composed of singular values. 

We first prove the following claim about the effect of truncation step:
\begin{equation}\label{claim: truncation step}
\Tr(|{\scale}^\top  V_{t+1}|)-  \Tr(|{\scale}^\top  \gd_{t+1}|)\leq\sqrt{\frac{s}{\hs}}
\min\left( \sqrt{X^2Y^2-\Delta^2},
\frac{1+\sqrt{{s}/{\hs}}}{XY}(X^2Y^2-\Delta^2) \right).
\end{equation}
\begin{proof}
Since in the hard thresholding step we greedily pick the rows with largest $l_2$ norms, we have
\begin{equation*}
\frac{y_1^2}{k_1}\leq \frac{y_3^2}{k_3}.
\end{equation*}
In addition, since $k_1 + k_2 = s\leq s' = k_2 + k_3$, we also have $k_1\leq k_3$. 
By applying Holder's inequality on Schatten norm and the definition of support set, we have
\begin{align*}
\Delta^2 &\leq (\|\scale_{F_1*}\|_\mathrm{F}\|V_{t+1,F_1*}\|_\mathrm{F}+\|\scale_{F_2*}\|_\mathrm{F}\|V_{t+1,F_2*}\|_\mathrm{F})^2
= (x_1y_1+x_2y_2)^2\\&\leq(y_1^2+y_2^2)X^2\leq(Y^2-y_3^2)X^2\leq X^2Y^2-\frac{k_3}{k_1}X^2y_1^2,
\end{align*}
and this gives \begin{equation*}
    y_1^2\leq \frac{k_1}{k_3X^2}(X^2Y^2-\Delta^2)\leq
      \frac{k_1 + k_2}{k_3 + k_2}(Y^2-\Delta^2/X^2)
      = \frac{s}{\hs}(Y^2-\Delta^2/X^2),
\end{equation*}
where the second inequality holds since $k_1 \leq k_3$. 
Now we split the arguments into two cases.

\par \textit{Case I:} $\Delta< XY\sqrt{\frac{s}{s+\hs}}$.  
In this case, we obtain that \begin{equation*}
\Delta<\sqrt{\frac{s}{\hs}}\sqrt{X^2Y^2-\Delta^2}.
\end{equation*}
This implies \eqref{claim: truncation step} immediately.
 
\par \textit{Case II:} 
We can now assume that $\Delta\geq XY\sqrt{\frac{s}{s+\hs}}$. Then we have\begin{equation*}
    y_1^2\leq  \frac{s}{\hs}(Y^2-\Delta^2/X^2)\leq\frac{\Delta^2}{X^2}.
\end{equation*}
By definition we have
\begin{equation*}
    x_1y_1+\sqrt{Y^2-y_1^2}\sqrt{X^2-x_1^2}\geq x_1y_1+x_2y_2\geq \Delta.
\end{equation*}
Solving this inequality as a quadratic inequality in $x_1$ yields
\begin{equation*}
    x_1\leq \frac{\Delta y_1+\sqrt{(X^2Y^2-\Delta^2)(Y^2-y_1^2})}{Y^2}.
\end{equation*}
By definition of $X^2=x_1^2+x_2^2$ we have  $x_1\leq X$. Also we have $\Delta\leq XY$ by Holder inequality on Schatten norm.

\par Combining the above inequalities together, we have 
\begin{equation*}
    x_1\leq \min\left( X,\frac{Xy_1+\sqrt{X^2Y^2-\Delta^2}}{Y} \right).
\end{equation*}
Note that $y_1\leq\sqrt{\frac{s}{\hs}}\frac{\sqrt{X^2Y^2-\Delta^2}}{X}$ by previous calculation. 
Substituting it into the above equation, we have
\begin{equation*}
    x_1\leq \min\left(X,\frac{\sqrt{\frac{s}{\hs}}{\sqrt{X^2Y^2-\Delta^2}}+\sqrt{X^2Y^2-\Delta^2}}{Y}
      \right).
\end{equation*}
Finally, we can compute that \begin{align*}
x_1y_1&\leq \sqrt{\frac{s}{\hs}}\frac{1}{X}\sqrt{X^2Y^2-\Delta^2}\,
\min\left(X,\frac{1}{Y}\sqrt{X^2Y^2-\Delta^2}(1+\sqrt{\frac{s}{\hs}})\right)\\
&=\sqrt{\frac{s}{\hs}}\min\left( \sqrt{X^2Y^2-\Delta^2},\,
\frac{1+\sqrt{\frac{s}{\hs}}}{XY}(X^2Y^2-\Delta^2) \right).
\end{align*}
This is the error induced by the truncation step, finally we have  \begin{align*}
    \Tr(|{\scale}^\top  V_{t+1}|)-  \Tr(|{\scale}^\top  \gd_{t+1}|)\leq&\Tr(|{\scale}^\top  (\gd_{t+1}-V_{t+1})|)\\\leq&x_1y_1\leq\sqrt{\frac{s}{\hs}}
      \min\left( \sqrt{X^2Y^2-\Delta^2},
    \frac{1+\sqrt{\frac{s}{\hs}}}{XY}(X^2Y^2-\Delta^2) \right).
\end{align*}
This finishes the proof for the claim.
\end{proof}
We now switch to work with the distance metric $\mathrm{dist}(U,V)=\min_{P\in{ \mathcal{O}(r)}}\|UP-V\|_\mathrm{F}$. 
We first expand the expression \begin{align*}
    \|UP-V\|_\mathrm{F}^2&=\|UP\|_\mathrm{F}^2+\|V\|_\mathrm{F}^2-2\langle UP,V\rangle\\
    &=\|U\|_\mathrm{F}^2+\|V\|_\mathrm{F}^2-2\Tr(P^\top  U^\top  V).
 \end{align*} 
So, to minimize the distance metric we defined is to maximize $\Tr(P^\top  U^\top  V)$. To this end, let $ADB^\top  $ denote the singular value decomposition of $U^\top  V$, then we have \begin{equation*}
    \Tr(P^\top  U^\top  V)=\Tr(P^\top  ADB^\top  )=\Tr(B^\top  P^\top  AD)=\Tr(ZD)
\end{equation*}
for $Z$ being an orthogonal matrix. Since $D$ is a diagonal matrix, we have $\Tr(ZD)=\sum_i Z_{ii}D_{ii}$ and to maximize this value we would like to have $Z_{ii}=1$ for all $i$ since $D_{ii}$ are non-negative. As a result, we have the optimal $P$ being $AB^\top  $ and the optimal value is given by\begin{equation*}
    \min \|UP-V\|_\mathrm{F}^2=\|U\|_\mathrm{F}^2+\|V\|_\mathrm{F}^2-2\Tr(D)=\|U\|_\mathrm{F}^2+\|V\|_\mathrm{F}^2-2\Tr(|U^\top  V|).
\end{equation*}

Now we consider the subspace distance we have defined. 
Recall that $\Delta=\Tr(|{\scale}^\top  V_{t+1}|)$. 
Then the subspace distance can be written as
\begin{align*}
   & \mathrm{dist}^2(\gd_{t+1},\scale)\\
   &=\|\scale\|_\mathrm{F}^2+\|\gd_{t+1}\|_\mathrm{F}^2-2\Tr|{\scale}^\top  \gd_{t+1}|\\
    &\leq \|\scale\|_\mathrm{F}^2+\|\gd_{t+1}\|_\mathrm{F}^2-2\Tr|{\scale}^\top  V_{t+1}|+2\sqrt{\frac{s}{\hs}}
      \min\left( \sqrt{X^2Y^2-\Delta^2},
    \frac{1+\sqrt{\frac{s}{\hs}}}{XY}(X^2Y^2-\Delta^2) \right)\\
    &\leq \|\scale\|_\mathrm{F}^2+\|V_{t+1}\|_\mathrm{F}^2-2\Tr|{\scale}^\top  V_{t+1}|+2\sqrt{\frac{s}{\hs}}\frac{1+\sqrt{\frac{s}{\hs}}}{XY}(X^2Y^2-\Delta^2)\\
 & =\mathrm{dist}^2(V_{t+1},\scale)+2\sqrt{\frac{s}{\hs}}\frac{1+\sqrt{\frac{s}{\hs}}}{XY}(X^2Y^2-\Delta^2),
\end{align*}
where the first equality follows from the previous expansion of distance, the first inequality follows from Claim \eqref{claim: truncation step} and we use the fact that truncation reduces the Frobenius norm in the second inequality. In the last equality, we use again the relationship between trace norm and distance. To deal with the extra error term, we use the following bound\begin{align*}
    X^2Y^2-\Delta^2&=(XY+\Delta)(XY-\Delta)\leq 2XY(XY-\Delta)\leq XY(X^2+Y^2-2\Delta)\\
    &=XY(\|\scale\|_\mathrm{F}^2+\|V_{t+1}\|_\mathrm{F}^2-2\Tr(|{\scale}^\top  V_{t+1}|)=XY \mathrm{dist}^2(V_{t+1},\scale).
\end{align*}
Combining the inequalities, we have\begin{align*}
    \mathrm{dist}^2(\gd_{t+1},\scale)&\leq\mathrm{dist}^2(V_{t+1},\scale)+2\sqrt{\frac{s}{\hs}}\frac{1+\sqrt{\frac{s}{\hs}}}{XY}(X^2Y^2-\Delta^2)\\
    &\leq \mathrm{dist}^2(V_{t+1},\scale)+2\sqrt{\frac{s}{\hs}}\frac{1+\sqrt{\frac{s}{\hs}}}{XY}XY \mathrm{dist}^2(V_{t+1},\scale)\\
    &=\left(1+2\sqrt{\frac{s}{\hs}} \left(1+\sqrt{\frac{s}{\hs}} \right) \right)\mathrm{dist}^2(V_{t+1},\scale).
\end{align*}
This finishes our analysis for the hard thresholding step. 
\end{proof}

\subsection{Proof of Theorem \ref{theo:init}}
\begin{proof}
We present the proof of initialization using generalized Fantope in this section. Some proof arguments originate from \cite{gao2017sparse}.

Recall that we have defined 
\begin{equation}
      \label{eq:tA-tF}
\tA=A(A^\top  \widehat{\Sigma}_0A)^{-1/2}, \quad
\tlambda=(A^\top  \widehat{\Sigma}_0A)^{1/2}\Lambda_r(A^\top  \widehat{\Sigma}_0A)^{1/2},\quad
\mbox{and}
\quad
\tF=\tA\tA^\top.
\end{equation}
Throughout the proof, we work on the event  $B_3\cap B_4$, which by Lemmas \ref{prob B3} and \ref{prob B4} happens with probability at least $1-\exp(-C'(s+\log(ep/s)))$ for some constant $C'>0$, {  uniformly over $\mathcal{P}_n$}.


We shall need the following lemma, which is a simplified version of Lemma 6.3 in \cite{gao2017sparse}. 
The proof is essentially the same and so we omit it.
\begin{lemma}(Curvature of Fantope)\label{cur fantope}
Let $P\in \mathcal{O}(p,r)$ and $D=\mathrm{diag}(d_1,d_2,...,d_r)$ with $d_1\geq d_2\geq...\geq d_r\geq 0.$ If $F\in \mathcal{F}_r$, then\begin{equation*}
    \langle PDP^\top, PP^\top-F\rangle\geq \frac{d_r}{2}\|PP^\top-F\|_\mathrm{F}^2.
\end{equation*}
\end{lemma}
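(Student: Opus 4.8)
The plan is a short direct computation: reduce both sides to traces of the $r\times r$ compression $\Phi:=P^\top FP$ and then apply two elementary inequalities. First I would use $P^\top P=I_r$ and cyclicity of the trace to rewrite the left-hand side as
\[
\langle PDP^\top,\,PP^\top-F\rangle=\Tr\big(D\,P^\top(PP^\top-F)P\big)=\Tr\big(D(I_r-\Phi)\big).
\]
Since $0\preceq F\preceq I_p$, conjugation by $P$ yields $0\preceq\Phi\preceq I_r$, hence $I_r-\Phi\succeq0$; since also $D\succeq d_rI_r$, splitting $D=(D-d_rI_r)+d_rI_r$ and using that the trace of a product of two positive semidefinite matrices is nonnegative gives
\[
\Tr\big(D(I_r-\Phi)\big)=\Tr\big((D-d_rI_r)(I_r-\Phi)\big)+d_r\Tr(I_r-\Phi)\geq d_r\,(r-\Tr\Phi).
\]

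Next I would expand the right-hand side. Using $\Tr(PP^\top)=\Tr(P^\top P)=r$ and $\Tr(PP^\top F)=\Tr(P^\top FP)=\Tr\Phi$,
\[
\|PP^\top-F\|_\mathrm{F}^2=\Tr(PP^\top)-2\Tr(PP^\top F)+\Tr(F^2)=r-2\Tr\Phi+\Tr(F^2).
\]
The one genuinely structural point is that $F\in\mathcal{F}_r$: its eigenvalues $\mu_1,\dots,\mu_p$ all lie in $[0,1]$ and sum to $r$, so $\Tr(F^2)=\sum_i\mu_i^2\leq\sum_i\mu_i=r$. Therefore $r\geq\Tr(F^2)$, which is equivalent to $r-\Tr\Phi\geq\tfrac12\big(r-2\Tr\Phi+\Tr(F^2)\big)=\tfrac12\|PP^\top-F\|_\mathrm{F}^2$ (the degenerate case $d_r=0$ needs no division: the left-hand side $\Tr(D(I_r-\Phi))$ is already nonnegative there). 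Combining the two paragraphs, $\langle PDP^\top,PP^\top-F\rangle\geq d_r(r-\Tr\Phi)\geq\tfrac{d_r}{2}\|PP^\top-F\|_\mathrm{F}^2$, which is the assertion.

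I do not expect a real obstacle here; this is essentially the standard curvature-of-Fantope inequality (cf.\ \cite{vu2013fantope} and Lemma~6.3 of \cite{gao2017sparse}), and the proof is the reduction above. The only places that need care are invoking cyclicity of the trace together with $P^\top P=I_r$ to collapse everything onto $\Phi$, and the ordering step $\Tr(D(I_r-\Phi))\geq d_r\Tr(I_r-\Phi)$, which relies on both $D-d_rI_r\succeq0$ and $I_r-\Phi\succeq0$; the rest is bookkeeping, and it is precisely the relaxation from $\mathcal{P}(p,r)$ to the convex set $\mathcal{F}_r$ (hence eigenvalues in $[0,1]$) that makes the bound hold for every feasible $F$.
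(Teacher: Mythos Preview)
Your proof is correct and complete. The paper itself omits the proof of this lemma, noting only that it is a simplified version of Lemma~6.3 in \cite{gao2017sparse} with essentially the same argument; your reduction to the compression $\Phi=P^\top FP$, the splitting $D=(D-d_rI_r)+d_rI_r$, and the use of $\Tr(F^2)\leq\Tr(F)=r$ from the Fantope constraint are exactly the standard route, so there is nothing to compare.
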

Recall that
\begin{equation*}
    \Sigma=\Sigma_0 K\Lambda K^\top  \Sigma_0=\Sigma_0 A\Lambda_r A^\top  \Sigma_0+\Sigma_0 B\LambdaRest B^\top  \Sigma_0.
\end{equation*}
For notational simplicity, for any positive semi-definite matrix $B$, we define\begin{equation*}
    \phi_{\text{max}}^B(k)=\max_{\|u\|_0\leq k, u\neq 0}\frac{u^\top  Bu}{u^\top  u},\quad
    \phi_{\text{min}}^B(k)=\max_{\|u\|_0\leq k, u\neq 0}\frac{u^\top  Bu}{u^\top  u}.
\end{equation*}

In the rest of this proof, 
let $\Delta=\widehat{F}-\tF$ with $\tF$ defined in \eqref{eq:tA-tF}. 
As in \cite{gao2017sparse}, the main proof consists of two steps. 
The first step is to derive upper bound of $\|\widehat{\Sigma}_0^{1/2}\Delta\widehat{\Sigma}_0^{1/2}\|_\mathrm{F}$ and the second step is to lower bound $\|\widehat{\Sigma}_0^{1/2}\Delta\widehat{\Sigma}_0^{1/2}\|_\mathrm{F}$ by $\|\Delta\|_\mathrm{F}$. 

\paragraph{Step (1)}
First note that by Lemma \ref{prob B4}, $\tA$ is well defined on event $B_4$, so $\tF=\tA\tA^\top  $ is also well defined on event $ B_4$.
In addition, on event $B_4$, we have
\begin{equation}
      \label{eq:oracle-error}
    \|\Sigma_0^{1/2}(\tF-AA^\top  )\Sigma_0^{1/2}\|_{\mathrm{F}}\leq C\sqrt{\frac{r(s+\log p)}{n}}.
\end{equation}
\par We first show that $\tF$ is a feasible solution, that is, $\widehat{\Sigma}_0^{1/2}\tA\tA^\top  \widehat{\Sigma}_0^{1/2}\in\mathcal{F}_r$. 
By \eqref{eq:tA-tF},
\begin{equation*}
    \tF=\tA\tA^\top  =A(A^\top  \widehat{\Sigma}_0A)^{-1}A^\top.
\end{equation*}
Let $M= \widehat{\Sigma}_0^{1/2} \tF \widehat{\Sigma}_0^{1/2} =\widehat{\Sigma}_0^{1/2}A(A^\top  \widehat{\Sigma}_0A)^{-1}A^\top  \widehat{\Sigma}_0^{1/2}$. 
Then 
\begin{equation*}
    \Tr(M)=\Tr(\widehat{\Sigma}_0^{1/2}A(A^\top  \widehat{\Sigma}_0A)^{-1}A^\top  \widehat{\Sigma}_0^{1/2})=r.
\end{equation*}
Next we check that $0\preceq M\preceq I$. It is obvious that $0\preceq M$.
Moreover,
\begin{equation*}
    \|M\|_{\op}\leq \|\widehat{\Sigma}_0^{1/2}\tA\|_{\op}\|\widehat{\Sigma}_0^{1/2}\tA\|_{\op}\leq 1
\end{equation*}
by the construction of $\tA$.  
Hence, $\tF\in \mathcal{F}_r$.

Recall that $\widehat{F}$ is the solution to \eqref{eq:initial-fantope}.
The basic inequality implies 
\begin{equation*}
    {-\langle \widehat{\Sigma},\widehat{F}\rangle +\rho\|\widehat{F}\|_1\leq -\langle \widehat{\Sigma},\widetilde{F}\rangle +\rho\|\widetilde{F}\|_1.}
\end{equation*}
Rearranging terms, we have
\begin{equation}
      \label{eq:basic-ineq-1}
    0\leq \rho(\|\tF\|_1-\|\tF+\Delta\|_1)+\langle \widehat{\Sigma},\Delta\rangle.
\end{equation}
We deal with each term on the right side separately. 
For the first term on the right, we have
\begin{equation}
      \label{eq:basic-ineq-2}
\begin{aligned}    \|\tF\|_1-\|\tF+\Delta\|_1&=\|\tF_{SS}\|_1-\|\tF_{SS}+\Delta_{SS}\|_1-\|\Delta_{(SS)^c}\|_1\\
    &\leq \|\Delta_{SS}\|_1-\|\Delta_{{(SS)}^c}\|_1.
\end{aligned}
\end{equation}
Here the first equality holds since $\tF$ is supported on $S\times S$.
For the second term, we have
\begin{equation}
      \label{eq:basic-ineq-3}
    \langle \widehat{\Sigma},\Delta\rangle = \langle\widehat{\Sigma}-\Sigma,\Delta\rangle+\underbrace{\langle \Sigma_0A\Lambda_rA^\top\Sigma_0,\Delta\rangle}_{\text{Term I}} +\underbrace{\langle \Sigma_0B\LambdaRest B^\top\Sigma_0,\Delta\rangle}_{\text{Term II}}.
\end{equation}
We now bound Term I and Term II separately. 

\smallskip

\textit{Bound for Term I}: For Term I, we decompose it in the following way: \begin{align*}
\langle \Sigma_0A\Lambda_rA^\top\Sigma_0,\Delta\rangle&= \langle \Sigma_0A\Lambda_rA^\top\Sigma_0-\so A\Lambda_r A^\top\so,\Delta\rangle+\langle \so A\Lambda_r A^\top\so,\Delta\rangle. 
\end{align*}
Now we bound the second term on the right as follows\begin{align*}
\langle \so A\Lambda_r A^\top\so,\Delta\rangle &= \langle \so^{1/2}A\Lambda_r A^\top\so^{1/2},\so^{1/2}(\widehat{F}-\tF)\so^{1/2} \rangle\\
&=  \langle \so^{1/2}\tA\tlambda \tA^\top\so^{1/2},\so^{1/2}(\widehat{F}-\tF)\so^{1/2} \rangle\\
&= \langle \so^{1/2}\tA\Lambda_r \tA^\top\so^{1/2},\so^{1/2}(\widehat{F}-\tF)\so^{1/2} \rangle+\langle \so^{1/2}\tA(\tlambda-\Lambda_r)\tA^\top\so^{1/2}, \so^{1/2}\Delta\so^{1/2} \rangle \\
&\leq \langle \so^{1/2}\tA\Lambda_r \tA^\top\so^{1/2},\so^{1/2}(\widehat{F}-\tF)\so^{1/2} \rangle+\|\tlambda-\Lambda_r\|_\mathrm{F}\|\so^{1/2}\Delta\so^{1/2} \|_\mathrm{F}.
\end{align*}
As a result, Term I can be bounded in the following way\begin{equation*}
\text{Term I}\leq \langle \Sigma_0A\Lambda_rA^\top\Sigma_0-\so A\Lambda_r A^\top\so,\Delta\rangle+\langle \so^{1/2}\tA\Lambda_r \tA^\top\so^{1/2},\so^{1/2}(\widehat{F}-\tF)\so^{1/2} \rangle+\delta_1\|\so^{1/2}\Delta\so^{1/2} \|_\mathrm{F},
\end{equation*}
for $\delta_1 = \|\tlambda-\Lambda_r\|_\mathrm{F}$.

\smallskip

\textit{Bound for Term II}: 
To bound Term II, we first notice that by definition, $B^\top\Sigma_0A$ is zero matrix since $A,B$ are normalized with respect to $\Sigma_0$. As a result, we have 
\begin{align*}
\langle \Sigma_0B\LambdaRest B^\top\Sigma_0,\Delta\rangle
& =\langle\Sigma_0B\LambdaRest B^\top\Sigma_0,\widehat{F}  \rangle \\
& \leq \lambda_{r+1}\langle\Sigma_0B B^\top\Sigma_0,\widehat{F}  \rangle \\
& =\lambda_{r+1}\langle \Sigma_0BB^\top\Sigma_0,\widehat{F}-\tA\tA^\top\rangle
\\
& =\lambda_{r+1}\langle \Sigma_0^{1/2}(I-\Sigma_0^{1/2}AA^\top\Sigma_0^{1/2})\Sigma_0^{1/2},\widehat{F}-\tA\tA^\top\rangle\\
& = 
\lambda_{r+1}\langle \Sigma_0-\Sigma_0AA^\top\Sigma_0,  \widehat{F}-\tA\tA^\top\rangle
.
\end{align*}
Here, the second last equality holds since $\Sigma_0^{1/2}AA^\top\Sigma_0^{1/2} + \Sigma_0^{1/2}BB^\top\Sigma_0^{1/2} = I$.
We further decompose the rightmost side as
\begin{align*}
& \lambda_{r+1}\langle \Sigma_0-\Sigma_0AA^\top\Sigma_0,  \widehat{F}-\tA\tA^\top\rangle \\
& = \lambda_{r+1}\langle \Sigma_0-\widehat{\Sigma}_0,\Delta\rangle -\lambda_{r+1}\langle\Sigma_0AA^\top\Sigma_0-\widehat{\Sigma}_0AA^\top\so,\Delta \rangle \\
&\quad +\underbrace{\lambda_{r+1}\langle \so, \widehat{F}-\tA\tA^\top \rangle}_{\text{Term A}}-\underbrace{\lambda_{r+1}\langle\so^{1/2}AA^\top\so^{1/2},\so^{1/2}(\widehat{F}-\tA\tA^\top)\so^{1/2}\rangle}_{\text{Term B}}.
\end{align*} 
We now deal with Term A and Term B separately. By definition we have $\so^{1/2}\widehat{F}\so^{1/2}\in\mathcal{F}_r$, hence\begin{equation*}
\langle \so, \widehat{F}-\tA\tA^\top \rangle=\Tr(\so^{1/2}\widehat{F}\so^{1/2})-\Tr(\tA^\top\so\tA)=r-r=0.
\end{equation*}
For Term B, we have 
\begin{align*}
& -\lambda_{r+1}\langle\so^{1/2}AA^\top\so^{1/2},\so^{1/2}(\widehat{F}-\tA\tA^\top)\so^{1/2}\rangle \\
& = -\lambda_{r+1}\langle\so^{1/2}\tA\tA^\top\so^{1/2},\so^{1/2}(\widehat{F}-\tA\tA^\top)\so^{1/2}\rangle\\
&\quad -\lambda_{r+1}\langle \so^{1/2}\tA(A^\top\so A-I)\tA^\top\so^{1/2}, \so^{1/2}(\widehat{F}-\tA\tA^\top)\so^{1/2}\rangle\\
& \leq -\lambda_{r+1}\langle\so^{1/2}\tA\tA^\top\so^{1/2},\so^{1/2}(\widehat{F}-\tA\tA^\top)\so^{1/2}\rangle
+\lambda_{r+1}\|A^\top\so A-I\|_\mathrm{F}\|\so^{1/2}\Delta\so^{1/2}\|_\mathrm{F}.
\end{align*}
Therefore, Term II has the following bound\begin{align*}
\langle \Sigma_0B\LambdaRest B^\top\Sigma_0,\Delta\rangle \leq &\lambda_{r+1}\langle \Sigma_0-\widehat{\Sigma}_0,\Delta\rangle -\lambda_{r+1}\langle\Sigma_0AA^\top\Sigma_0-\widehat{\Sigma}_0AA^\top\so,\Delta \rangle \\
&-\lambda_{r+1}\langle\so^{1/2}\tA\tA^\top\so^{1/2},\so^{1/2}(\widehat{F}-\tA\tA^\top)\so^{1/2}\rangle+\delta_2\|\so^{1/2}\Delta\so^{1/2}\|_\mathrm{F},
\end{align*}
for $\delta_2 = \lambda_{r+1}\|A^\top\so A-I\|_\mathrm{F}$. 
Now we combine the results for Term I and Term II to obtain
\begin{align*}
& \hskip -2em \langle \Sigma_0A\Lambda_rA^\top\Sigma_0,\Delta\rangle+\langle \Sigma_0B\LambdaRest B^\top\Sigma_0,\Delta\rangle \\
\leq & \langle\Sigma_0A\Lambda_rA^\top\Sigma_0-\so A\Lambda_r A^\top\so,\Delta\rangle\\&+\langle \so^{1/2}\tA\Lambda_r \tA^\top\so^{1/2},\so^{1/2}(\widehat{F}-\tF)\so^{1/2} \rangle
+\delta_1\|\so^{1/2}\Delta\so^{1/2} \|_\mathrm{F}\\
&+\lambda_{r+1}\langle \Sigma_0-\widehat{\Sigma}_0,\Delta\rangle -\lambda_{r+1}\langle\Sigma_0AA^\top\Sigma_0-\widehat{\Sigma}_0AA^\top\so,\Delta \rangle \\
&-\lambda_{r+1}\langle\so^{1/2}\tA\tA^\top\so^{1/2},\so^{1/2}(\widehat{F}-\tA\tA^\top)\so^{1/2}\rangle+\delta_2\|\so^{1/2}\Delta\so^{1/2}\|_\mathrm{F}\\
=& \langle\Sigma_0A\Lambda_rA^\top\Sigma_0-\so A\Lambda_r A^\top\so,\Delta\rangle+\lambda_{r+1}\langle \Sigma_0-\widehat{\Sigma}_0,\Delta\rangle \\
&-\lambda_{r+1}\langle\Sigma_0AA^\top\Sigma_0-\widehat{\Sigma}_0AA^\top\so,\Delta \rangle+\delta\|\so^{1/2}\Delta\so^{1/2}\|_\mathrm{F}\\
&+ \langle \so^{1/2}\tA(\Lambda_r-\lambda_{r+1}I) \tA^\top\so^{1/2},\so^{1/2}(\widehat{F}-\tF)\so^{1/2} \rangle,
\end{align*}
where $\delta = \delta_1+\delta_2$.
Plugging back to \eqref{eq:basic-ineq-1}, together \eqref{eq:basic-ineq-2} with \eqref{eq:basic-ineq-3}, we have 
\begin{equation}
\label{eq:basic-ineq-4}
\begin{aligned}
0& \leq \rho(\|\tF\|_1-\|\tF+\Delta\|_1)+\langle \widehat{\Sigma},\Delta\rangle\\
& \leq  \rho(\|\Delta_{SS}\|_1-\|\Delta_{(SS)^c}\|_1) 
+\langle\widehat{\Sigma}-\Sigma,\Delta\rangle \\
& \quad + \langle\Sigma_0A\Lambda_rA^\top\Sigma_0-\so A\Lambda_r A^\top\so,\Delta\rangle+\lambda_{r+1}\langle \Sigma_0-\widehat{\Sigma}_0,\Delta\rangle \\
&\quad -\lambda_{r+1}\langle\Sigma_0AA^\top\Sigma_0-\widehat{\Sigma}_0AA^\top\so,\Delta \rangle \\
& \quad +\delta\|\so^{1/2}\Delta\so^{1/2}\|_\mathrm{F}+ \langle \so^{1/2}\tA(\Lambda_r-\lambda_{r+1}I) \tA^\top\so^{1/2},\so^{1/2}(\widehat{F}-\tF)\so^{1/2} \rangle.
\end{aligned}
\end{equation}
By Holder's inequality, we have 
\begin{equation}
      \label{eq:Delta-terms}
\begin{aligned}
& \langle\widehat{\Sigma}-\Sigma,\Delta\rangle+\langle\Sigma_0A\Lambda_rA^\top\Sigma_0-\so A\Lambda_r A^\top\so,\Delta\rangle \\
& \quad +\lambda_{r+1}\langle \Sigma_0-\widehat{\Sigma}_0,\Delta\rangle-\lambda_{r+1}\langle\Sigma_0AA^\top\Sigma_0-\widehat{\Sigma}_0AA^\top\so,\Delta \rangle\\
& \leq  \|\widehat{\Sigma}-\Sigma\|_\infty\|\Delta\|_1+\|\Sigma_0A\Lambda_rA^\top\Sigma_0-\so A\Lambda_r A^\top\so\|_\infty\|\Delta\|_1\\
& \quad 
+\lambda_{r+1}\|\Sigma_0-\widehat{\Sigma}_0\|_\infty\|\Delta\|_1
+\lambda_{r+1}\|\Sigma_0AA^\top\Sigma_0-\widehat{\Sigma}_0AA^\top\so\|_\infty\|\Delta\|_1.
\end{aligned}
\end{equation}
On event $B_3$, we can pick $\rho=\gamma\sqrt{\frac{\log p}{n}}$ for some large constant $\gamma > 0$ such that  
\begin{equation}
      \label{canonical pair condition}
\begin{aligned}
& \|\widehat{\Sigma}-\Sigma\|_\infty+\|\Sigma_0A\Lambda_rA^\top\Sigma_0-\so A\Lambda_r A^\top\so\|_\infty \\
& \quad
+\lambda_{r+1}\|\Sigma_0-\widehat{\Sigma}_0\|_\infty+\lambda_{r+1}\|\Sigma_0AA^\top\Sigma_0-\widehat{\Sigma}_0AA^\top\so\|_\infty\leq\frac{\rho}{2},
\end{aligned}
\end{equation}
thus the lefthand side of \eqref{eq:Delta-terms} is bounded by $\dfrac{\rho}{2}\|\Delta\|_1$. 
Furthermore, by Lemma \ref{cur fantope}, we have 
\begin{align*}
& \langle \so^{1/2}\tA(\Lambda_r-\lambda_{r+1}I) \tA^\top\so^{1/2},\so^{1/2}(\widehat{F}-\tF)\so^{1/2} \rangle
\\ 
& = -\langle \so^{1/2}\tA(\Lambda_r-\lambda_{r+1}I) \tA^\top\so^{1/2},\so^{1/2}(\tA\tA^\top-\widehat{F})\so^{1/2} \rangle\\
& \leq -\frac{\lambda_r-\lambda_{r+1}}{2}\|\so^{1/2}(\tA\tA^\top-\widehat{F})\so^{1/2} \|_\mathrm{F}^2.
\end{align*}
Together with \eqref{eq:basic-ineq-4} and \eqref{eq:Delta-terms}, the last display implies that when \eqref{canonical pair condition} holds,
\begin{equation*}
    0\leq \rho(\|\Delta_{SS}\|_1-\|\Delta_{(SS)^c}\|_1)+\frac{\rho}{2}\|\Delta\|_1-\frac{\lambda_r-\lambda_{r+1}}{2}\|\so^{1/2}(\tA\tA^\top-\widehat{F})\so^{1/2} \|_\mathrm{F}^2+\delta\|\so^{1/2}\Delta\so^{1/2}\|_\mathrm{F}.
\end{equation*}
Rearranging terms and multiplying both side by $2$, we obtain
\begin{align*}    (\lambda_r-\lambda_{r+1})\|\widehat{\Sigma}^{1/2}_0\Delta\widehat{\Sigma}^{1/2}_0\|_\mathrm{F}^2&\leq 3\rho\|\Delta_{SS}\|_1-\rho\|\Delta_{(SS)^c}\|_1+2\delta\|\widehat{\Sigma}^{1/2}_0\Delta\widehat{\Sigma}^{1/2}_0\|_\mathrm{F}\\
    &\leq 3\rho\|\Delta_{SS}\|_1+2\delta\|\widehat{\Sigma}^{1/2}_0\Delta\widehat{\Sigma}^{1/2}_0\|_\mathrm{F}.
\end{align*}
This can be view as a quadratic equation, which, by Lemma 2 in \cite{cai2013sparse}, yields
\begin{equation}
      \label{eq:quad-ineq}
    \|\widehat{\Sigma}^{1/2}_0\Delta\widehat{\Sigma}^{1/2}_0\|_\mathrm{F}^2\leq \frac{4\delta^2}{(\lambda_r-\lambda_{r+1})^2}+\frac{6\rho}{\lambda_r-\lambda_{r+1}}\|\Delta_{SS}\|_1.
\end{equation}
Combining the last two displays, we have 
\begin{align*}
    0&\leq 3\rho\|\Delta_{SS}\|_1-\rho\|\Delta_{(SS)^c}\|_1+\frac{\delta^2}{\lambda_r-\lambda_{r+1}}+(\lambda_r-\lambda_{r+1})\|\so^{1/2}\Delta\so^{1/2}\|_\mathrm{F}^2\\
    &\leq  9\rho\|\Delta_{SS}\|_1-\rho\|\Delta_{(SS)^c}\|_1+\frac{5\delta^2}{\lambda_r-\lambda_{r+1}}.
\end{align*}
This can be viewed as a version of generalized cone condition. Finally, using Cauchy--Schwarz inequality, $\|\Delta_{SS}\|_1\leq s \|\Delta_{SS}\|_\mathrm{F}$, and so \eqref{eq:quad-ineq} leads to
\begin{equation}
      \label{eq:Delta-bd-1}
    \|\widehat{\Sigma}^{1/2}_0\Delta\widehat{\Sigma}^{1/2}_0\|_\mathrm{F}^2\leq \frac{4\delta^2}{(\lambda_r-\lambda_{r+1})^2}+\frac{6\rho s}{\lambda_r-\lambda_{r+1}}\|\Delta_{SS}\|_\mathrm{F}.
\end{equation}
and this is the end of the first step.

\paragraph{Step (2)}
Recall that we have established the {generalized cone condition} 
\begin{align}
      \label{eq:generalized-cone}
   \|\Delta_{(SS)^c}\|_1\leq 9\|\Delta_{SS}\|_1+\frac{5\delta^2}{(\lambda_r-\lambda_{r+1})\rho}.
\end{align}
In this step we lower bound $\|\widehat{\Sigma}^{1/2}_0\Delta\widehat{\Sigma}^{1/2}_0\|_\mathrm{F}^2$ by a function of $\|\Delta\|_\mathrm{F}$ on this cone. 

\par Adapting the ``peeling'' argument in \cite{bickel2009simultaneous}, we define the index set $J_1=\{(i_l,j_l)\}_{l=1}^{t}$ in $(S\times S)^c$ correspond to the entries with the $t$ largest absolute values in $\Delta$, and also define $\widetilde{J}=(S\times S)\bigcup J_1$. Next we partition $\widetilde{J}^c$ into disjoint subsets $J_2,...,J_M$ of size $t$ and possibly $|J_M|<t$ such that each $J_m$ is the set of indices corresponding the entries of the $t$ largest absolute values in $\Delta$ outside $\tJ\bigcup \cup_{j=2}^{m-1}J_j$. Then by triangle inequality
\begin{align*}
  \|\widehat{\Sigma}^{1/2}_0\Delta\widehat{\Sigma}^{1/2}_0\|_\mathrm{F} &\geq \|\widehat{\Sigma}^{1/2}_0\Delta_{\tJ}\widehat{\Sigma}^{1/2}_0\|_\mathrm{F}-\sum_{m=2}^M\|\widehat{\Sigma}^{1/2}_0\Delta_{J_m}\widehat{\Sigma}^{1/2}_0\|_\mathrm{F} \\
  &\geq \phi_{\min}^{\widehat{\Sigma}_0}(s+t)\|\Delta_{\tJ}\|_\mathrm{F}-
  \phi_{\max}^{\widehat{\Sigma}_0}(t)\sum_{m=2}^M \|\Delta_{J_m}\|_\mathrm{F}.
\end{align*}
In addition, by our construction of the index sets,
\begin{align*}
    \sum_{m=2}^M \|\Delta_{J_m}\|_\mathrm{F}&\leq \sqrt{t} \sum_{m=2}^M\|\Delta_{J_m}\|_\infty\leq \frac{\sqrt{t}}{t}\sum_{m=2}^M\|\Delta_{J_{m-1}}\|_1
      \leq t^{-1/2}\|\Delta_{(SS)^c}\|_1\\
      & \leq t^{-1/2}(9\|\Delta_{SS}\|_1+\frac{5\delta^2}{\lambda_r-\lambda_{r+1}})\\
    &\leq \frac{9s}{\sqrt{t}}\|\Delta_{\tJ}\|_\mathrm{F}+\frac{5\delta^2}{(\lambda_r-\lambda_{r+1})\rho\sqrt{t}}.
\end{align*}
The second last inequality follows from the generalized cone condition \eqref{eq:generalized-cone}. Hence combining the results above, we have \begin{equation}
\label{eq:Delta-bd-2}
\|\widehat{\Sigma}^{1/2}_0\Delta\widehat{\Sigma}^{1/2}_0\|_\mathrm{F}\geq \kappa_1\|\Delta_{\tJ}\|_\mathrm{F}-\kappa_2\frac{\delta^2}{(\lambda_r-\lambda_{r+1})\rho\sqrt{t}},
\end{equation}
where\begin{equation*}
    \kappa_1= \phi_{\min}^{\widehat{\Sigma}_0}(s+t)-\frac{9s}{\sqrt{t}} \phi_{\max}^{\widehat{\Sigma}_0}(t),
\end{equation*}
\begin{equation*}
    \kappa_2=5\phi_{\max}^{\widehat{\Sigma}_0}(t).
\end{equation*}
Taking $t=c_1s^2$ for $c_1$ sufficiently large, using the same argument as in \cite{gao2017sparse}, under condition \eqref{init: sample size} we can lower bound $\kappa_1$ by some constant $C_1$ and upper bound $\kappa_2$ by some constant $C_2$. 
Combining \eqref{eq:Delta-bd-1} and \eqref{eq:Delta-bd-2}, we obtain
\begin{equation*}
\|\Delta_{\tJ}\|_\mathrm{F}^2\leq C_1\frac{s\rho}{\lambda_r-\lambda_{r+1}}\|\Delta_{\tJ}\|_\mathrm{F}
+C_2\left(\frac{\delta^2}{(\lambda_r-\lambda_{r+1})^2}
+\left[\frac{\delta^2}{\rho(\lambda_r-\lambda_{r+1})\sqrt{t}}\right]^2\right).
\end{equation*}
Solving this equation gives \begin{equation*}
    \|\Delta_{\tJ}\|_\mathrm{F}^2\leq C_3\left(\frac{s^2\rho^2}{(\lambda_r-\lambda_{r+1})^2}+\frac{\delta^2}{(\lambda_r-\lambda_{r+1})^2}+\left(\frac{\delta^2}{\rho(\lambda_r-\lambda_{r+1})\sqrt{t}}\right)^2\right)
\end{equation*}
for some positive constant $C_3$ that is sufficiently large. 
Also we have\begin{equation*}
    \|\Delta_{{\tJ}^c}\|_\mathrm{F}\leq \sum_{m=2}^M\|\Delta_{J_m}\|_\mathrm{F}\leq  \frac{9s}{\sqrt{t}}\|\Delta_{\tJ}\|_\mathrm{F}+\frac{5\delta^2}{(\lambda_r-\lambda_{r+1})\rho\sqrt{t} }.
\end{equation*}
Combining the last two displays and using $t = c_1 s^2$, we obtain that
\begin{equation*}
\|\Delta\|_{\mathrm{F}} 
\leq C_4 \left( \frac{s\rho + \delta}{\lambda_r - \lambda_{r+1}} + 
\frac{\delta^2}{ s \rho  (\lambda_r - \lambda_{r+1}) } \right).
\end{equation*}
Recall that on event $B_3$ we pick $\rho=\gamma\sqrt{\frac{\log p}{n}}$. 
Then on event $B_4$,  we have \begin{equation*}
    \delta\leq C_0\rho\sqrt{t}.
\end{equation*}
 Hence \begin{equation*}
    \|\Delta\|_\mathrm{F}\leq C_5\frac{s\rho}{\lambda_r-\lambda_{r+1}}
      =C_5\gamma\frac{s\sqrt{\log p}}{\sqrt{n}(\lambda_r-\lambda_{r+1})}
\end{equation*}
on event $B_3\cap B_4$.  

\paragraph{Step (3)} The last step follows from $\|AA^\top  -\widehat{F}\|_\mathrm{F}\leq\|\Delta\|_\mathrm{F}+\|\tF-AA^\top  \|_\mathrm{F}$. 
We combine the last display and \eqref{eq:oracle-error} to obtain the desired bounds for each term on the right side. 
This finishes our proof.
\end{proof}

\subsection{Proof of Corollary \ref{cor:init}}
\begin{proof}
Define 
\begin{equation*}
\tau_n^2= \frac{s^2\log p}{n(\lambda_r-\lambda_{r+1})^2}.
\end{equation*} 
We will bound $\dist(\gd_1, V)$ by a constant multiple of $\tau_n$ on event $B_2\cap B_3\cap B_4$. 

Notice that since $A_0A_0^\top$ is the best rank $r$ approximation, and $AA^\top$ also has rank $r$.
Thus, we obtain that $\|A_0A_0^\top -\widehat{F} \|_\mathrm{F}\leq \|AA^\top -\widehat{F} \|_\mathrm{F}$. 
Triangle inequality further leads to
\begin{equation*}
\|A_0A_0^\top -AA^\top \|_\mathrm{F}\leq \|A_0A_0^\top -\widehat{F} \|_\mathrm{F}+ \|AA^\top -\widehat{F} \|_\mathrm{F}\leq 2 \|AA^\top -\widehat{F} \|_\mathrm{F}.
\end{equation*} 
Together with Lemma \ref{sub dis lemma}, the last display implies $\text{dist}(A,A_0)\leq C_0\tau_n$ for some positive constant $C_0$.  


By definition $\widehat{A}_0 =HT(A_0,\hs)$, $\gd_1$ and $\init $ are both $s'$ sparse. 
Moreover, following the lines of the proof of Proposition \ref{prop: ht} and Corollary \ref{cor: normalized estimate}, we have 
\begin{equation*}
\mathrm{dist}(\init ,A)\leq C_1\tau_n
\end{equation*}
for some constant $C_1 > 0$. 
Let $P$ be the orthogonal matrix such that 
$\|\init P-A\|_\mathrm{F} = \mathrm{dist}(\init , A)$.
Then we have
\begin{align*}
\gd_1P
&=\init (I+{\init ^\top  \ssecond\init }/{\lambda})^{1/2}P
=\init  P P^\top (I+{\init ^\top  \ssecond\init }/{\lambda})^{1/2}P\\
& =\init P(I+ {P^\top  \init ^\top \ssecond\init P}/{\lambda})^{1/2}.
\end{align*}
Since \begin{equation*}
\mathrm{dist}(\gd_1,\scale)\leq \|\gd_1P-\scale\|_\mathrm{F}=
\left\|\init P(I+{P^\top  \init ^\top\ssecond\init P}/{\lambda})^{1/2}-A(I+{\Lambda_r}/{\lambda})^{1/2}
\right\|_\mathrm{F},
\end{equation*}
we turn to bound the rightmost side. 
To this end, triangle inequality gives
\begin{align*}
& \|\init P(I+{P^\top  \init^\top \ssecond\init P}/{\lambda})^{1/2}-A(I+{\Lambda_r}/{\lambda})^{1/2}\|_\mathrm{F}\\
& \leq \underbrace{\|\init P(I+{P^\top  \init^\top \ssecond\init P}/{\lambda})^{1/2}-\init P(I+{\Lambda_r}/{\lambda})^{1/2}\|_\mathrm{F}}_{\text{Term I}}
+\underbrace{\|(\init P-A)(I+{\Lambda_r}/{\lambda})^{1/2}\|_\mathrm{F}}_{\text{Term II}}.
\end{align*}
We now bound each term separately. 
For Term II, we have
\begin{equation*}
\|(\init P-A)(I+{\Lambda_r}/{\lambda})^{1/2}\|_\mathrm{F}\leq\|\init P-A\|_\mathrm{F}\|(I+{\Lambda_r}/{\lambda})^{1/2}\|_\op\leq C_2\tau_n.
\end{equation*}
The last inequality holds since $\lambda = \lambda_1/c$.
To deal with Term I, we define $\Delta=\init P-A$ and notice that 
\begin{align*}
& \|\init P(I+{P^\top  \init ^\top\ssecond\init P}/{\lambda})^{1/2}
-\init P(I+{\Lambda_r}/{\lambda})^{1/2}\|_\mathrm{F}\\
& \qquad \qquad
\leq \|\init P\|_\op\|(I+{P^\top  \init ^\top\ssecond\init P}/{\lambda})^{1/2}-(I+{\Lambda_r}/{\lambda})^{1/2}\|_\mathrm{F}.
\end{align*}
By matrix root perturbation bound (e.g., Lemma 2 in \cite{gao2017sparse}), we have 
\begin{align*}
& \|(I+{P^\top  \init ^\top\ssecond\init P}/{\lambda})^{1/2}
-(I+{\Lambda_r}/{\lambda})^{1/2}\|_\mathrm{F} \\
& \qquad \leq C_3\|P^\top  \init ^\top\ssecond\init P-\Lambda_r\|_\mathrm{F}\\
& \qquad \leq C_3\|(A+\Delta)^\top  \ssecond(A+\Delta)-A^\top\Sigma A\|_\mathrm{F}\\
& \qquad =C_3(\|A^\top(\ssecond-\Sigma) A\|_\mathrm{F}   +2\|\Delta^\top  \ssecond A\|_\mathrm{F}+\|\Delta^\top  \ssecond\Delta\|_\mathrm{F})
\end{align*}
for some positive constant $C_3$. 
Since $\init $ is $s'$ sparse and $A$ is $s$ sparse, 
$\Delta = \init  P - A$ is $s+s'$ sparse. 
By definition of $P$, $\|\Delta\|_\mathrm{F} = \mathrm{dist}(\init , A)\leq C_1\tau_n$. 
By a similar argument to the proof of Corollary \ref{cor: normalized estimate}, on event $B_2$, we have
\begin{equation*}
\|(I+{P^\top  \init ^\top\ssecond\init P}/{\lambda})^{1/2}-(I+{\Lambda_r}/{\lambda})^{1/2}\|_\mathrm{F}\leq C_4\sqrt{\frac{rs\log p}{n}}\leq C_4\tau_n.
\end{equation*} 
Consequently, Term I is also dominated by a constant multiple of $\tau_n$. 
Thus, we conclude that on the intersection of 
$B_2\cap B_3\cap B_4$, 
\begin{equation*}
\mathrm{dist}(\gd_1, \scale)\leq C_5\tau_n
\end{equation*}
for some positive constant $C_5$.  
By union bound, the intersection of the three events holds with probability at least $1-\exp(-C'(s+\log (ep/s)))$, uniformly over $\mathcal{P}_n$.
This completes the proof.
\end{proof}

\subsection{Proof of Lemma \ref{lemma: oracle}}
\label{sec:proof-lemma-oracle}

\begin{proof}
In this proof, we work on event $B_2$ defined in \eqref{eq:event-B2}, which occurs with probability at least $1-\exp(-C's'\log(ep/s'))$ for some constant $C'>0$, uniformly over $\mathcal{P}_n$.
The proof relies on Theorem 3.1 of \cite{sun1983perturbation}, 
a matrix perturbation bound for generalized eigenspaces. 
{Fix any $\cI$ such that $S\subset \cI$ and $|\cI|\leq 2s'+s$}. 
We shall apply the theorem on matrix pair $(\Sigma_{\cI\cI},\Sigma_{0,\cI\cI})$ and its perturbation $(\widehat{\Sigma}_{\cI\cI},\widehat{\Sigma}_{0,\cI\cI})$. 

To this end,  for the fixed set $\cI$, define
\begin{equation*}
\delta = \min_{\lambda,\lambda'}{\frac{\lambda-\lambda'}{\sqrt{(1+\lambda^2)(1+\lambda'^2)}}}\quad\text{for $\lambda\in\{\lambda_1,...\lambda_r\}$ and $\lambda'\in\{\widehat{\lambda}_{r+1},...\widehat{\lambda}_{|\cI|}\}$},
\end{equation*}
where $\lambda_i$ denotes the $i$th generalized eigenvalue of matrix pair $(\Sigma_{\cI\cI}, \Sigma_{0,\cI\cI})$ and $\widehat{\lambda}_i$ denotes the $i$th sample generalized eigenvalue of matrix pair $(\widehat{\Sigma}_{\cI\cI},\widehat{\Sigma}_{0,\cI\cI})$. 
Define event 
\begin{equation*}
 \widetilde{B}_2 = 
\left\{\widehat{\lambda}_{r+1}\leq 2\lambda_{r+1}\wedge\frac{\lambda_r+\lambda_{r+1}}{2}  
\right\}.
\end{equation*}
On event $\widetilde{B}_2$,
since generalized eigenvalues are in decreasing order, 
we have 
\begin{equation*}
      \delta=\frac{\lambda_r-\widehat{\lambda}_{r+1}}{\sqrt{(1+\lambda_1^2)(1+\widehat{\lambda}_{r+1}^2)}}\geq \frac{1}{2}\frac{\lambda_r-\lambda_{r+1}}{\sqrt{(1+\lambda_1^2)(1+\widehat{\lambda}_{r+1}^2)}}.
\end{equation*}
Moreover, since  $\widehat{\lambda}_{r+1}^2\leq {4\lambda_{r+1}^2}$ on $\widetilde{B}_2$,
we further obtain that 
\begin{equation*}
\delta\geq \frac{1}{4}\frac{\lambda_r-\lambda_{r+1}}{\sqrt{1+\lambda_1^2}\sqrt{{1+\lambda_{r+1}^2}}}.
\end{equation*}
On the other hand,  $\widetilde{B}_2$ can be equivalently defined as
\begin{equation*}
\widetilde{B}_2
= \left\{\widehat{\lambda}_{r+1}-\lambda_{r+1}\leq \lambda_{r+1}\wedge\frac{\lambda_r-\lambda_{r+1}}{2}  \right\}.
\end{equation*} 
Since $\lambda_i$ is also the $i$th eigenvalue of $\Sigma_{0,\cI\cI}^{-1/2}\Sigma_{\cI\cI}\Sigma_{0,\cI\cI}^{-1/2}$ and $\widehat\lambda_i$ the $i$th eigenvalue of $\widehat\Sigma_{0,\cI\cI}^{-1/2}\widehat\Sigma_{\cI\cI} \widehat\Sigma_{0,\cI\cI}^{-1/2}$,
Weyl's inequality then implies that on event $B_2$,
\begin{align*}
|\widehat{\lambda}_{r+1}-\lambda_{r+1}|
& \leq  \|\Sigma_{0,\cI\cI}^{-1/2}\Sigma_{\cI\cI}\Sigma_{0,\cI\cI}^{-1/2}-\widehat{\Sigma}_{0,\cI\cI}^{-1/2}\widehat{\Sigma}_{\cI\cI}\widehat{\Sigma}_{0,\cI\cI}^{-1/2}\|_\op\\
& \leq \|(\Sigma_{0,\cI\cI}^{-1/2}-\widehat{\Sigma}_{0,\cI\cI}^{-1/2})\Sigma_{\cI\cI}\Sigma_{0,\cI\cI}^{-1/2}\|_\op+\|\widehat{\Sigma}_{0,\cI\cI}^{-1/2}(\Sigma_{\cI\cI}-\widehat{\Sigma}_{\cI\cI})\Sigma_{0,\cI\cI}^{-1/2}\|_\op\\
&~~~ +\|\widehat{\Sigma}_{0,\cI\cI}^{-1/2}\widehat{\Sigma}_{\cI\cI}(\Sigma_{0,\cI\cI}^{-1/2}-\widehat{\Sigma}_{0,\cI\cI}^{-1/2})\|_\op\\
& \leq \|\Sigma_{0,\cI\cI}^{-1/2}-\widehat{\Sigma}_{0,\cI\cI}^{-1/2}\|_\op\|\Sigma_{\cI\cI}\Sigma_{0,\cI\cI}^{-1/2}\|_\op+\|\widehat{\Sigma}_{0,\cI\cI}^{-1/2}\|_\op\|\Sigma_{\cI\cI}-\widehat{\Sigma}_{\cI\cI}\|_\op\|\Sigma_{0,\cI\cI}^{-1/2}\|_\op\\
&~~~ +\|\widehat{\Sigma}_{0,\cI\cI}^{-1/2}\widehat{\Sigma}_{\cI\cI}\|_\op\|\Sigma_{0,\cI\cI}^{-1/2}-\widehat{\Sigma}_{0,\cI\cI}^{-1/2}\|_\op \\
& \leq  C_0\sqrt{\frac{|\cI|\log p}{n}}
\end{align*}
for some constant $C_0$ depending on $\rev{\nu}$.
Here the last inequality is due to the matrix root bound from Lemma 2 in \cite{gao2015minimax}.
Under condition \eqref{sample size condition}, by the choice of $s'$ in \eqref{eq:s-prime}, the last display implies $|\widehat{\lambda}_{r+1}-\lambda_{r+1}|\leq (\frac{\lambda_r-\lambda_{r+1}}{2}\wedge\lambda_{r+1})$.
Hence $\widetilde{B}_2$ holds on event $B_2$ under condition \eqref{sample size condition}. 

\par Furthermore, we have\begin{equation*}
\gamma(\Sigma_{\cI\cI},\Sigma_{0,\cI\cI})=\min_{\|x\|=1}\sqrt{(x^\top\Sigma_{\cI\cI} x)^2+(x^\top\Sigma_{0,\cI\cI} x)^2}\geq \frac{1}{\rev{\nu}},
\end{equation*}
and \begin{equation*}
\gamma(\widehat{\Sigma}_{\cI\cI},\widehat{\Sigma}_{0,\cI\cI})=\min_{\|x\|=1}\sqrt{(x^\top\widehat{\Sigma}_{\cI\cI} x)^2+(x^\top\widehat{\Sigma}_{0,\cI\cI} x)^2}\geq \frac{1}{2\rev{\nu}},
\end{equation*}
on event $B_2$. 
Now we apply Theorem 3.1 of \cite{sun1983perturbation} for perturbation on restricted covariance matrices and obtain that 
\begin{align*}
\dist(A,\ha(\cI))&\leq C_1\frac{\sqrt{\|\Sigma_{\cI\cI}^2+\Sigma_{0,\cI\cI}^2\|_\op}}{\gamma(\Sigma_{\cI\cI},\Sigma_{0,\cI\cI})\gamma(\widehat{\Sigma}_{\cI\cI},\widehat{\Sigma}_{0,\cI\cI})}\frac{\sqrt{\|(\widehat{\Sigma}_{\cI\cI}-\Sigma_{\cI\cI})A_{\cI*}\|_\mathrm{F}^2+\|(\widehat{\Sigma}_{0,\cI\cI}-\Sigma_{0,\cI\cI})A_{\cI*}\|_\mathrm{F}^2}}{\delta}\\
&\leq C_2 \rev{\nu^3}\frac{\sqrt{1+\lambda_1^2}\sqrt{{1+\lambda_{r+1}^2}}}{\lambda_r-\lambda_{r+1}}\sqrt{\|A_{\cI*}\|_\mathrm{F}^2(\|\widehat{\Sigma}_{\cI\cI}-\Sigma_{\cI\cI}\|_\op^2+\|\widehat{\Sigma}_{0,\cI\cI}-\Sigma_{0,\cI\cI}\|_\op^2)}\\
&\leq C_3 \rev{\nu^3}\frac{\sqrt{1+\lambda_1^2}\sqrt{{1+\lambda_{r+1}^2}}}{\lambda_r-\lambda_{r+1}}\sqrt{r\frac{|\cI|\log p}{n}}\\
&\leq C_4 \sqrt{r}\frac{\sqrt{1+\lambda_1^2}\sqrt{{1+\lambda_{r+1}^2}}}{\lambda_r-\lambda_{r+1}}\sqrt{\frac{|\cI|\log p}{n}},
\end{align*}
for some positive constant $C_4$ 
on event $B_2$. 
This finishes the proof of Lemma \ref{lemma: oracle}. 
\end{proof}

\subsection{Proofs of Lemma \ref{prob B3} and Lemma \ref{prob B4}}
\begin{proof}
Note that $\lambda_{r+1} < 1$. Hence, it suffices to show that each of the four infinity norm terms on the left side in the definition of event $B_3$ in \eqref{eq:event-B3} is upper bounded by a constant multiple of $\sqrt{\log p/n}$ with probability at least $1 - p^{-C'}$ for some positive constant $C'$, {
  uniformly over $\mathcal{P}_n$}.
This can be achieved by following the lines of the proof of Lemma 6.4 in \cite{gao2017sparse} and so we omit the details.
\end{proof}

\begin{proof}
We first bound the operator norm of each term on the left side of \eqref{eq:event-B4} by a constant multiple of ${\sqrt{\frac{1}{n}(s + \log(ep/s))}}$, and the result on Frobenius follows directly since each term is of rank $r$. 

As in the proof of Lemma 6.1 of \cite{gao2017sparse}, we have\begin{equation*}
     \|\Sigma_0^{1/2}(\tA-A)\|_{\text{op}}\leq \|\Sigma_0^{1/2} A\|_{\op}\|(A^\top  \widehat{\Sigma}_0A)^{1/2}-I\|_{\op}\|(A^\top  \widehat{\Sigma}_0A)^{-1/2}\|_{\op},
\end{equation*}
\begin{equation*}
    \|\tlambda-\Lambda_r\|_{\op}\leq \|(A^\top  \widehat{\Sigma}_0A)^{1/2}-I\|_{\op}\|\Lambda_r(A^\top  \widehat{\Sigma}_0A)^{1/2}\|_{\op}+\|\Lambda_r\|_{\op}\|(A^\top  \widehat{\Sigma}_0A)^{1/2}-I\|_{\op}.
\end{equation*}
So it remains to bound $\|(A^\top  \so A)^{1/2}-I\|_{\op}$. 
By Lemma 2 in \cite{gao2015minimax}, it suffices to bound $\|A^\top  \widehat{\Sigma}_0A-I\|_{\op}$. 
Note that 
\begin{align*}
    \|A^\top  \widehat{\Sigma}_0A-I\|_{\op} &= \|A^\top  (\widehat{\Sigma}_0-\Sigma_0)A\|_\op\\
    &=\|A_{S*}^\top  (\widehat{\Sigma}_{0,SS}-\Sigma_{0,SS})A_{S*}\|_{\op}\\
    &= \sup_{\|v\|=1}(A_{S*}v)^\top  (\widehat{\Sigma}_{0,SS}-\Sigma_{0,SS})(A_{S*}v)\\
    &\leq \|\Sigma_{0,SS}^{1/2}A_{S*}\|_{\op}^2\| \Sigma_{0,SS}^{-1/2} \widehat{\Sigma}_{0,SS} \Sigma_{0,SS}^{-1/2}-I\|_{\op}\\
    &\leq \| \Sigma_{0,SS}^{-1/2} \widehat{\Sigma}_{0,SS} \Sigma_{0,SS}^{-1/2}-I\|_{\op}\leq C\| \widehat{\Sigma}_{0,SS}-\Sigma_{0,SS}\|_{\op}.
\end{align*}
By Lemma 3 of \cite{gao2015minimax}, we then have \begin{equation*}
    \|A^\top  \widehat{\Sigma}_0A-I\|_{\op}\leq C_1\sqrt{\frac{1}{n}\left(s+\log\frac{ep}{s}\right)},
\end{equation*}
with probability at least $1-\exp(-C'(s+\log (ep/s))$ { for some constant $C'>0$, uniformly over $\mathcal{P}_n$}.
This completes the proof.
\end{proof}

\section{Proof of Lower Bound}
\label{sec:proof_lower_bound}

We first present a lemma on Kullback-Leibler divergence between data distributions from a special kind of covariance matrix. The Lemma can be viewed as a general case of Lemma 1 in \cite{gao2015supplement}.
\begin{lemma}
\label{lemma:KL-latent-variable-model}
For $t=1, 2$, define $\Sigma^{(t)}$ to be a block matrix whose $(i, j)$th block is given by $\lambda U_{\dc{i}}^{(t)}U_{\dc{j}}^{{(t)}^\top}$ for $i\neq j$, where $U_{\dc{i}}^{(t)}\in O(p_i, r)$ and whose $i$th diagonal block is given by $I_{p_i}.$ Further let $P^{(t)}$ denote the distribution of a random i.i.d sample of size $n$ from the $N_p(0, \Sigma^{(t)})$ distribution where $p=\sum_{i=1}^k p_i$. Then \begin{equation*}
    D(P^{(1)}||P^{(2)}) = \frac{\lambda^2 n}{4\left(-(k-1)\lambda^2+(k-2)\lambda+1\right)}\sum_{i\neq j}\left\|U_{\dc{i}}^{(1)}U_{\dc{j}}^{{(1)}^\top}-U_{\dc{i}}^{(2)}U_{\dc{j}}^{{(2)}^\top}\right\|_\mathrm{F}^2.
\end{equation*}
\end{lemma}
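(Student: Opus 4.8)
The plan is to compute $D(P^{(1)}\|P^{(2)})$ directly from the Gaussian KL formula. For two centered Gaussians with covariances $\Sigma^{(1)},\Sigma^{(2)}$, a sample of size $n$ contributes $D(P^{(1)}\|P^{(2)}) = \frac{n}{2}\bigl(\Tr((\Sigma^{(2)})^{-1}\Sigma^{(1)}) - p + \log\det(\Sigma^{(2)}) - \log\det(\Sigma^{(1)})\bigr)$. The key structural observation is that both $\Sigma^{(1)}$ and $\Sigma^{(2)}$ have the \emph{same} block-diagonal part (the identity) and off-diagonal blocks of the form $\lambda U_{\dc{i}}U_{\dc{j}}^\top$. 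So I would write $\Sigma^{(t)} = I_p + \lambda(\tU^{(t)}(\tU^{(t)})^\top - \text{block-diag part})$ where $\tU^{(t)} = [U_{\dc{1}}^{(t)\top},\dots,U_{\dc{k}}^{(t)\top}]^\top$. More precisely, if $W^{(t)}$ denotes the block matrix whose $(i,j)$ block is $U_{\dc{i}}^{(t)}U_{\dc{j}}^{(t)\top}$ for all $i,j$ (including diagonal), then $\Sigma^{(t)} = I_p - \lambda\,\mathrm{blockdiag}(U_{\dc{i}}^{(t)}U_{\dc{i}}^{(t)\top}) + \lambda W^{(t)}$, and since each $U_{\dc{i}}^{(t)}\in O(p_i,r)$ the quantity $U_{\dc{i}}^{(t)}U_{\dc{i}}^{(t)\top}$ is a rank-$r$ projection. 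Writing $B^{(t)} = \mathrm{blockdiag}(U_{\dc{i}}^{(t)}U_{\dc{i}}^{(t)\top})$ we get $\Sigma^{(t)} = I_p - \lambda B^{(t)} + \lambda W^{(t)}$.

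Next I would diagonalize the structure. Because $W^{(t)} = \tU^{(t)}(\tU^{(t)})^\top$ has rank $r$ (assuming $\tU^{(t)}$ full column rank, which holds since e.g.\ $U_{\dc{1}}^{(t)}$ already has orthonormal columns), and $B^{(t)}$ acts blockwise, the relevant subspace is $\col(\tU^{(t)})$ together with the orthogonal complements of each $\col(U_{\dc{i}}^{(t)})$ inside $\bbR^{p_i}$. On the orthogonal complement of $\col(\tU^{(t)})$ within $\oplus_i \col(U_{\dc{i}}^{(t)})$ and on each $\col(U_{\dc{i}}^{(t)})^\perp$, the matrix $\Sigma^{(t)}$ acts as a scalar. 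This lets me compute $\log\det\Sigma^{(t)}$ explicitly and see it is the \emph{same} for $t=1,2$ (it depends only on $\lambda$, $k$, $r$, and the $p_i$'s, not on the specific $U^{(t)}$), so the $\log\det$ terms cancel. Similarly $\Tr(\Sigma^{(1)}) = p$ exactly. Hence $D(P^{(1)}\|P^{(2)}) = \frac{n}{2}\bigl(\Tr((\Sigma^{(2)})^{-1}\Sigma^{(1)}) - p\bigr) = \frac{n}{2}\Tr\bigl((\Sigma^{(2)})^{-1}(\Sigma^{(1)} - \Sigma^{(2)})\bigr)$, and $\Sigma^{(1)} - \Sigma^{(2)} = \lambda(W^{(1)} - W^{(2)}) - \lambda(B^{(1)} - B^{(2)})$, whose $(i,j)$ block for $i\ne j$ is $\lambda(U_{\dc{i}}^{(1)}U_{\dc{j}}^{(1)\top} - U_{\dc{i}}^{(2)}U_{\dc{j}}^{(2)\top})$ and whose diagonal blocks vanish.

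The main computational obstacle will be inverting $\Sigma^{(2)}$ and evaluating the trace cleanly. I expect to use the Woodbury / Sherman--Morrison identity: since $\Sigma^{(2)} = (I_p - \lambda B^{(2)}) + \lambda \tU^{(2)}(\tU^{(2)})^\top$, and $I_p - \lambda B^{(2)}$ is easy to invert (it equals $I$ plus $(\frac{1}{1-\lambda}-1)B^{(2)}$ on the range of $B^{(2)}$), I can write $(\Sigma^{(2)})^{-1}$ in closed form. Then $\Tr\bigl((\Sigma^{(2)})^{-1}(\Sigma^{(1)}-\Sigma^{(2)})\bigr)$ reduces, after exploiting orthonormality of the $U_{\dc{i}}^{(2)}$'s and the fact that the difference matrix is "off-diagonal," to a quadratic form in the $U^{(t)}$'s. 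The bookkeeping on which subspaces the inverse acts as which scalar is where the constant $-(k-1)\lambda^2 + (k-2)\lambda + 1$ emerges: it is the eigenvalue of the relevant block of $\Sigma^{(2)}$ restricted to the appropriate coordinate directions (essentially the eigenvalue associated with the all-ones-type combination across the $k$ blocks). After substituting and simplifying, and noting $\|U_{\dc{i}}^{(1)}U_{\dc{j}}^{(1)\top} - U_{\dc{i}}^{(2)}U_{\dc{j}}^{(2)\top}\|_\mathrm{F}^2$ appears symmetrically, I would collect the factor $\frac{\lambda^2 n}{4(-(k-1)\lambda^2 + (k-2)\lambda + 1)}$ times $\sum_{i\ne j}\|U_{\dc{i}}^{(1)}U_{\dc{j}}^{(1)\top} - U_{\dc{i}}^{(2)}U_{\dc{j}}^{(2)\top}\|_\mathrm{F}^2$. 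A useful sanity check is the case $k=2$: the constant becomes $-\lambda^2 + 1 = (1-\lambda)(1+\lambda)$ and the sum has two identical terms, which should reproduce Lemma~1 of \cite{gao2015supplement}.
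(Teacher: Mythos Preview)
Your proposal is correct and follows the same overall architecture as the paper's proof: show $\det\Sigma^{(1)}=\det\Sigma^{(2)}$ so that the KL reduces to $\frac{n}{2}\Tr\bigl((\Sigma^{(2)})^{-1}(\Sigma^{(1)}-\Sigma^{(2)})\bigr)$, compute $(\Sigma^{(2)})^{-1}$ explicitly, and then evaluate the trace using the orthonormality $U_{\dc{i}}^{(t)\top}U_{\dc{i}}^{(t)}=I_r$. The only tactical differences are in how each step is executed. For the determinant, the paper exhibits the eigenvectors of $\Sigma^{(t)}$ explicitly (via auxiliary block vectors $K_i^{(t)}$) and reads off the eigenvalues $1+(k-1)\lambda$, $1-\lambda$, $1$, whereas you argue abstractly that the spectrum depends only on $(\lambda,k,r,\{p_i\})$. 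For the inverse, the paper posits the ansatz $(\Sigma^{(2)})^{-1}_{ii}=I+aU_{\dc{i}}^{(2)}U_{\dc{i}}^{(2)\top}$, $(\Sigma^{(2)})^{-1}_{ij}=bU_{\dc{i}}^{(2)}U_{\dc{j}}^{(2)\top}$ and solves for $a,b$, whereas your Woodbury computation on $(I_p-\lambda B^{(2)})+\lambda\tU^{(2)}\tU^{(2)\top}$ gives the same answer directly; in either case one finds $b=\lambda/[(k-1)\lambda^2-(k-2)\lambda-1]$, and the denominator factors as $-(1-\lambda)(1+(k-1)\lambda)$, i.e.\ (minus) the product of the two nontrivial eigenvalues rather than a single one as your parenthetical suggests. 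The final trace computation and the $k=2$ sanity check are identical to the paper's.
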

\begin{proof}
The first step is to find the eigenvalues of $\Sigma^{(t)}$. To this end, define \begin{align*}
U^{(t)}&=\left[U_{\dc{1}}^{{(t)}^\top}, U_{\dc{2}}^{{(t)}^\top},\dots, U_{\dc{k}}^{{(t)}^\top}\right]^\top,\\ K_i^{(t)} &= \left[-U_{\dc{1}}^{{(t)}^\top}, -U_{\dc{2}}^{{(t)}^\top}\dots, -U_{\dc{i-1}}^{{(t)}^\top}, (k-1)U_{\dc{i}}^{{(t)}^\top}, -U_{\dc{i+1}}^{{(t)}^\top},\dots, -U_{\dc{k}}^{{(t)}^\top}\right]^\top.
\end{align*}
That is, $K_i^{(t)}\in\mathbb{R}^{p\times r}$ is a block matrix with $j$th blocks being $-U_{\dc{j}}^{(t)}$ for $j\neq i$ and the $i$th block being $(k-1)U_{\dc{i}}^{(t)}$.
It is straightforward to verify that $\Sigma^{(t)}$ yields the following decomposition\begin{equation*}
    \Sigma^{(t)} = I +\frac{(k-1)\lambda}{k}U^{(t)}U^{{(t)}^\top}-\frac{\lambda}{k^2}\sum_{i=1}^k K_i^{(t)}K_i^{{(t)}^\top}.
\end{equation*}
We then deduce that the eigen-structure of $\Sigma^{(t)}$ is given by $r$ eigenvalues equal to $1+\lambda(k-1)$, $(k-1)r$ eigenvalues equal to $1-\lambda$ and $p-kr$ eigenvalues equal to 1. Furthermore, we notice that\begin{equation*}
    \Sigma^{(t)}U^{(t)} = (1+(k-1)\lambda) U^{(t)},
\end{equation*}
hence the leading $r$ dimensional eigenspace is given by the matrix $U^{(t)}$. This in particular implies that\begin{equation*}
    \det \Sigma^{(1)} =  \det \Sigma^{(2)}.
\end{equation*}
Now the KL divergence is given by\begin{align*}
    D(P^{(1)}||P^{(2)}) &= \frac{n}{2}\left(\Tr\left( (\Sigma^{(2)})^{-1} \Sigma^{(1)}\right)-\sum_{i=1}^k p_i-\log\det\left((\Sigma^{(2)})^{-1} \Sigma^{(1)}\right)\right)\\
    &= \frac{n}{2} \left(\Tr\left( (\Sigma^{(2)})^{-1} \Sigma^{(1)}\right) - p\right)\\
    &= \frac{n}{2} \left(\Tr\left( (\Sigma^{(2)})^{-1} \left(\Sigma^{(1)} - \Sigma^{(2)}\right)\right)\right).
\end{align*}
We now calculate the inverse of the block matrix $\Sigma^{(2)}$. To this end, we first guess the form of the inverse and then determine the coefficients. Specifically, we try to solve for the inverse with the following form
\begin{equation*}
    (\Sigma^{(2)})^{-1} = \begin{bmatrix}
I + aU_{\dc{1}}^{(2)}U_{\dc{1}}^{{(2)}^\top} & bU_{\dc{1}}^{(2)}U_{\dc{2}}^{{(2)}^\top} & \dots &\dots & bU_{\dc{1}}^{(2)}U_{\dc{k}}^{{(2)}^\top}\\
bU_{\dc{2}}^{(2)}U_{\dc{1}}^{{(2)}^\top} & I + aU_{\dc{2}}^{(2)}U_{\dc{2}}^{{(2)}^\top}&bU_{\dc{2}}^{(2)}U_{\dc{3}}^{{(2)}^\top} &\dots &bU_{\dc{2}}^{(2)}U_{\dc{k}}^{{(2)}^\top} \\
\dots&\dots&\dots&\dots&\dots\\
bU_{\dc{k}}^{(2)}U_{\dc{1}}^{{(2)}^\top}&\dots&\dots&\dots& I + aU_{\dc{k}}^{(2)}U_{\dc{k}}^{{(2)}^\top}

\end{bmatrix}
\end{equation*}
with the $(i,j)$th block being $bU_{\dc{i}}^{(2)}U_{\dc{j}}^{{(2)}^\top}$ for $i\neq j$ and $i$th diagonal block being $I + aU_{\dc{i}}^{(2)}U_{\dc{i}}^{{(2)}^\top}$. To determine the values of $(a,b)$, we solve for the equation $(\Sigma^{(2)})^{-1}\Sigma^{(2)}=I_p$. This requires two conditions on the matrix product: the off-diagonal block to be 0 and the diagonal block to be $I$. For the $i$th diagonal block, we require\begin{equation*}
    I+(k-1)b\lambda U_{\dc{i}}^{(2)}(U_{\dc{i}}^{(2)})^{\top}+aU_{\dc{i}}^{(2)}(U_{\dc{i}}^{(2)})^{\top}=I,
\end{equation*}
and for the $(i,j)$th off-diagonal block, we require\begin{equation*}
    \lambda b (k-2) U_{\dc{i}}^{(2)}(U_{\dc{j}}^{(2)})^{\top} + bU_{\dc{i}}^{(2)}(U_{\dc{j}}^{(2)})^{\top} +\lambda U_{\dc{i}}^{(2)}(U_{\dc{j}}^{(2)})^{\top} +a\lambda U_{\dc{i}}^{(2)}(U_{\dc{j}}^{(2)})^{\top}=0.
\end{equation*}
Solving the above equations, we have \begin{align*}
    a &= -(k-1)b\lambda,\\
    b &= \frac{\lambda}{(k-1)\lambda^2-\lambda(k-2)-1}.
\end{align*}
As a result, we can simplify the expression for KL divergence as follows
\begin{align*}
    D(P^{(1)}||P^{(2)}) &=\frac{n}{2} \left(\Tr\left( (\Sigma^{(2)})^{-1} \left(\Sigma^{(1)} - \Sigma^{(2)}\right)\right)\right)\\
    &= \frac{\lambda n}{2}\left(\Tr\left(\sum_{i=1}^k \sum_{j\neq i}bU_{\dc{i}}^{(2)}(U_{\dc{j}}^{(2)})^{\top}\left(U_{\dc{j}}^{(1)}(U_{\dc{i}}^{(1)})^{\top}-U_{\dc{j}}^{(2)}(U_{\dc{i}}^{(2)})^{\top}\right) \right)\right)\\
    &= -\frac{b\lambda n}{2}\left(\sum_{i, j:i\neq j}\Tr\left(I-U_{\dc{i}}^{(2)}(U_{\dc{j}}^{(2)})^{\top}U_{\dc{j}}^{(1)}(U_{\dc{i}}^{(1)})^{\top}\right)\right)\\
    &= -\frac{b\lambda n}{4} \sum_{i, j:i\neq j}\left\|U_{\dc{i}}^{(1)}(U_{\dc{j}}^{(1)})^{\top}-U_{\dc{i}}^{(2)}(U_{\dc{j}}^{(2)})^{\top} \right\|_\mathrm{F}^2\\
    &= \frac{\lambda^2 n}{4\left(-(k-1)\lambda^2+(k-2)\lambda+1\right)}\sum_{i, j:i\neq j}\left\|U_{\dc{i}}^{(1)}(U_{\dc{j}}^{(1)})^{\top}-U_{\dc{i}}^{(2)}(U_{\dc{j}}^{(2)})^{\top}\right\|_\mathrm{F}^2.\\
\end{align*}
This finishes our proof for the Lemma. Note that when $k=2$, we recover Lemma 1 in \cite{gao2015supplement} as a special case.

\end{proof}

\subsection{Proof of Theorem \ref{thm:lower_bound_gca}}
\begin{proof}
The main body of the proof is adapted from \cite{gao2015minimax}. The main tool for our proof is Fano's Lemma. For the sake of completeness, we provide the following version of Fano's Lemma  from \cite{yu1997festschrift}.
\begin{lemma}
\label{lemma:fano}
Let $(\Theta,\rho)$ be a metric space and $\{P_\theta:\theta\in\Theta\}$ a collection of probability measures. For any totally bounded $T\in\Theta$, denoted by $\mathcal{M}(T,\rho,\epsilon)$ the $\epsilon$-packing number of $T$ with respect to $\rho$, i.e., the maximal number of points in $T$ whose pairwise maximum distance in $\rho$ is at least $\epsilon$. Define the Kullback-Leibler diameter $T$ by\begin{equation*}
    d_{\kl}(T) \triangleq \sup_{\theta, \theta'\in T} D(P_\theta||D_{\theta'}).
\end{equation*}
Then \begin{equation*}
    \inf_\theta \sup_{\theta\in\Theta} \E_\theta[\rho^2(\hat{\theta}(X), \theta)] \geq \sup_{T\in\Theta} \sup_{\epsilon>0} \frac{\epsilon^2}{4}\left(1-\frac{d_\kl(T)+\log 2}{\log \mathcal{M}(T,\rho,\epsilon)}\right)
\end{equation*}
\end{lemma}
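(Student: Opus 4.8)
\textbf{Proof plan for Lemma \ref{lemma:fano} (Fano's inequality, Yu's version).}

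The plan is to reduce the estimation problem to a testing problem over a finite packing set and then apply the classical Fano inequality from information theory. First I would fix an arbitrary totally bounded subset $T \subseteq \Theta$ and an arbitrary $\epsilon > 0$; if $\mathcal{M}(T,\rho,\epsilon) \le 1$ the claimed bound is vacuous (the right side is nonpositive once the logarithm in the denominator is zero or negative), so I may assume $M := \mathcal{M}(T,\rho,\epsilon) \ge 2$. By definition of the packing number, choose $\theta_1,\dots,\theta_M \in T$ with $\rho(\theta_i,\theta_j) \ge \epsilon$ for all $i \ne j$. Let $J$ be uniform on $\{1,\dots,M\}$ and, given $J = j$, let $X \sim P_{\theta_j}$. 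Given any estimator $\hat\theta(X)$, define the induced test $\hat J := \arg\min_{j} \rho(\hat\theta(X),\theta_j)$ (breaking ties arbitrarily). The key separation step is the standard observation that if $\rho(\hat\theta(X),\theta_J) < \epsilon/2$ then $\hat J = J$, because the $\theta_j$ are $\epsilon$-separated; contrapositively, $\{\hat J \ne J\} \subseteq \{\rho(\hat\theta(X),\theta_J) \ge \epsilon/2\}$, whence by Markov's inequality
\begin{equation*}
\mathbb{E}_J \mathbb{E}_{\theta_J}[\rho^2(\hat\theta(X),\theta_J)] \ge \frac{\epsilon^2}{4}\, \mathbb{P}(\hat J \ne J).
\end{equation*}
Since the left side is at most $\sup_{\theta \in \Theta} \mathbb{E}_\theta[\rho^2(\hat\theta(X),\theta)]$ and the estimator was arbitrary, it remains to lower bound $\inf_{\hat J} \mathbb{P}(\hat J \ne J)$, the minimax testing error over this finite family.

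Next I would invoke the information-theoretic Fano inequality: for any estimator $\hat J$ of a uniform $J$ on $M$ points,
\begin{equation*}
\mathbb{P}(\hat J \ne J) \ge 1 - \frac{I(J;X) + \log 2}{\log M},
\end{equation*}
where $I(J;X)$ is the mutual information between $J$ and the data $X$. This is the textbook Fano bound combined with the data-processing inequality $I(J;\hat J) \le I(J;X)$ and the entropy identity $H(J \mid \hat J) \le H(P_e) + P_e \log(M-1) \le \log 2 + P_e \log M$ with $P_e = \mathbb{P}(\hat J \ne J)$; I would cite it rather than re-derive it. The final ingredient is the convexity bound on mutual information for a uniform mixture: writing $\bar P = \frac1M \sum_j P_{\theta_j}$,
\begin{equation*}
I(J;X) = \frac{1}{M} \sum_{j=1}^M D(P_{\theta_j} \,\|\, \bar P) \le \frac{1}{M^2} \sum_{j,l} D(P_{\theta_j} \,\|\, P_{\theta_l}) \le \max_{j,l} D(P_{\theta_j} \,\|\, P_{\theta_l}) \le d_{\kl}(T),
\end{equation*}
where the first inequality is the standard fact that the mutual information of a mixture is at most the average pairwise KL divergence (a consequence of convexity of KL divergence in its second argument), and the last inequality holds because all $\theta_j \in T$.

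Combining the three displays gives
\begin{equation*}
\sup_{\theta \in \Theta} \mathbb{E}_\theta[\rho^2(\hat\theta(X),\theta)] \ge \frac{\epsilon^2}{4}\left(1 - \frac{d_{\kl}(T) + \log 2}{\log \mathcal{M}(T,\rho,\epsilon)}\right),
\end{equation*}
and since $T \subseteq \Theta$ and $\epsilon > 0$ were arbitrary, taking the supremum over both yields the statement. I do not anticipate a genuine obstacle here — this is a well-known result — so the only care required is bookkeeping: handling the degenerate case $M \le 1$ so the bound is trivially true, making sure the tie-breaking in the definition of $\hat J$ does not affect the separation argument, and being explicit that the estimator $\hat\theta$ ranges over all measurable functions of $X$ so that the infimum in the conclusion is legitimate. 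If one wants to keep the proof self-contained rather than citing the scalar Fano lemma, the mildly technical step is the entropy chain-rule manipulation $H(J) = I(J;\hat J) + H(J\mid\hat J)$ together with the conditional-entropy bound above, but this is routine.
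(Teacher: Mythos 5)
The paper does not prove this lemma at all: it is stated verbatim as a cited result (``For the sake of completeness, we provide the following version of Fano's Lemma from \cite{yu1997festschrift}''), with no proof given. Your sketch is the standard argument that underlies Yu's version of Fano: pick an $\epsilon$-packing, turn the estimator into a test via nearest-point assignment, use the $\epsilon$-separation plus Markov to convert squared-loss risk into testing error at scale $\epsilon^2/4$, apply the classical Fano inequality $P(\hat J \ne J) \ge 1 - (I(J;X)+\log 2)/\log M$, and cap $I(J;X)$ by the KL diameter via convexity of $D(\cdot\|\cdot)$ in the second argument. Each step is correct, and the bookkeeping you flag (the degenerate case $M\le 1$, tie-breaking in $\hat J$, measurability of $\hat\theta$) is exactly the right set of minor points to watch. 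So there is nothing to compare against in the paper; your proposal is a valid self-contained derivation of the cited lemma.
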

The proof is compose of two steps corresponding to different terms in the lower bound.  Throughout the proof, we define $\lambda = \lambda_r-\lambda_{r+1}$ for simplicity.

\paragraph{Step I.} We first establish the term involving $r\sum_{i=1}^k s_i$. To this end, let $U_{\dc{i}0}=\begin{bmatrix}
I_r\\0
\end{bmatrix}\in O(p_i, r)$ for each $i=1, 2, \dots, k$. For some $\epsilon_0\in(0, \sqrt{r\wedge (s_1-r)}]$ to be specified later, let \begin{equation*}
    B(\epsilon_0)=\{U_{\dc{1}}\in O(p_1,r): \supp(U_{\dc{1}})\subset [s_1], \|U_{\dc{1}}-U_{\dc{1}0}\|_\mathrm{F}\leq \epsilon_0\}
\end{equation*}
and \begin{equation*}
    T_0=\left\{\Sigma = \begin{pmatrix}
    I_{p_1} & \lambda U_{\dc{1}}U_{\dc{2}0}^\top &\lambda U_{\dc{1}}U_{\dc{3}0}^\top \dots & \lambda U_{\dc{1}}U_{\dc{k}0}^\top\\
  \lambda  U_{\dc{2}0}U_{\dc{1}}^\top &I_{p_2} &\dots&\lambda U_{\dc{2}0}U_{\dc{k}0}^\top\\
    \dots & \dots &\dots &\dots\\
  \lambda  U_{\dc{k}0}U_{\dc{1}}^\top &  \lambda U_{\dc{k}0}U_{\dc{2}0}^\top&\dots&
    I_{p_k}
    \end{pmatrix}: U_{\dc{1}}\in B(\epsilon_0)\right\},
\end{equation*}
with $U =\left[U_{\dc{1}}^\top, U_{\dc{2}0}^\top,\dots, U_{\dc{k}0}^\top\right]^\top$. Since our target of estimation is $A$ instead of $U$, we first establish the relationship between $U$ and $A$ under $T_0$. Note that under the construction of $\Sigma$, we have $\Sigma_0 = I_p$ so the generalized eigenspace coincides with eigenspace. From Lemma \ref{lemma:KL-latent-variable-model}, we deduce that the leading $r$ dimensional eigenspace of $\Sigma$ is given by $\mathrm{span}(U)$. From the normalization constraint such that $A^\top A = I$, we conclude that 
    $\dist(A, \frac{1}{\sqrt{k}}U)=0$. 
From here on, we first derive the minimax lower bound for the estimation of $U$ and the lower bound for estimating $A$ under the matrix distance follows immediately by scaling.  \par The above analysis also implies that $T_0\subset\mathcal{F}$, where $\mathcal{F}$ is our original parameter space. By Lemma \ref{lemma:KL-latent-variable-model}, 
\begin{align*}
    d_\kl(T_0) &= \sup_{U^{(1)}_{\dc{1}},U^{(2)}_{\dc{1}}\in B(\epsilon_0)} D(P^{(1)}||P^{(2)})\\
    &= \sup_{U^{(1)}_{\dc{1}},U^{(2)}_{\dc{1}}\in B(\epsilon_0)}\frac{\lambda^2 n}{4\left(-(k-1)\lambda^2+(k-2)\lambda+1\right)}\sum_{i\neq j}\left\|U_{\dc{i}}^{(1)}U_{\dc{j}}^{{(1)}^\top}-U_{\dc{i}}^{(2)}U_{\dc{j}}^{{(2)}^\top}\right\|_\mathrm{F}^2\\
    &= \sup_{U^{(1)}_{\dc{1}},U^{(2)}_{\dc{1}}\in B(\epsilon_0)} \frac{\lambda^2 n}{4\left(-(k-1)\lambda^2+(k-2)\lambda+1\right)} \sum_{j=2}^k 2 \left\|U_{\dc{1}}^{(1)}U_{\dc{j}0}^{{(1)}^\top}-U_{\dc{1}}^{(2)}U_{\dc{j}0}^{{(2)}^\top} \right\|_\mathrm{F}^2\\
    &= \sup_{U^{(1)}_{\dc{1}},U^{(2)}_{\dc{1}}\in B(\epsilon_0)} \frac{\lambda^2 n}{2\left(-(k-1)\lambda^2+(k-2)\lambda+1\right)} \sum_{j=2}^k  \left\|U_{\dc{1}}^{(1)}-U_{\dc{1}}^{(2)} \right\|_\mathrm{F}^2\\
    &= \frac{2\lambda^2 n(k-1)\epsilon_0^2}{-(k-1)\lambda^2+(k-2)\lambda+1}.
\end{align*}
Here, the second to last equality follows from the definition of $B(\epsilon_0)$.
\par We now establish a lower bound for the packing number of $T_0$. For some $\alpha\in(0, 1)$ which shall be specified later, we define $\{\tilU_{\dc{1}}(1), \tilU_{\dc{1}}(2), \dots, \tilU_{\dc{1}}(N)\} \subset O(p_1,r)$ to be a maximal set such that $\supp(\tilU_{\dc{1}}(i))\subset[s_1]$ and for $\forall i\neq j$,\begin{equation*}
    \|\tilU_{\dc{1}}(i)\tilU_{\dc{1}}^\top(i) - U_{\dc{1}0}U_{\dc{1}0}^\top\|_\mathrm{F}\leq \epsilon_0, \quad  \|\tilU_{\dc{1}}(i)\tilU_{\dc{1}}^\top(i) - \tilU_{\dc{1}}(j)\tilU_{\dc{1}}^\top(j)\|_\mathrm{F}\geq \sqrt{2}\alpha\epsilon_0.
\end{equation*}
Then by Lemma 1 in \cite{cai2013sparse}, for some absolute constant $C>1$,\begin{equation*}
    N\geq \left(\frac{1}{C\alpha}\right)^{r(s_1-r)}.
\end{equation*} 
For each $\tilU_{\dc{1}}(i)$, we define $\barU_{\dc{1}}(i)$ to be the matrix such that \begin{equation*}
    \|U_{\dc{1}0} - \barU_{\dc{1}}(i)\|_\mathrm{F}^2 =  \dist^2\left(\tilU_{\dc{1}}(i), U_{\dc{1}0}\right).
\end{equation*}
Then for any $i$, we deduce that $\barU_{\dc{1}}(i)\in O(p_1,r)$, $\supp(\barU_{\dc{1}}(i))\subset [s_1]$ and $\barU_{\dc{1}}(i)\barU_{\dc{1}}(i)^\top =\tilU_{\dc{1}}(i)\tilU_{\dc{1}}(i)^\top $.
In addition, Lemma 6.6 in \cite{gao2017sparse} implies that \begin{equation*}
   \|\barU_{\dc{1}}(i)- U_{\dc{1}0}\|_\mathrm{F} = \dist\left(\tilU_{\dc{1}}(i), U_{\dc{1}0}\right) = \frac{1}{\sqrt{2}}\|\tilU_{\dc{1}}(i)\tilU_{\dc{1}}^\top(i) - U_{\dc{1}0}U_{\dc{1}0}^\top\|_\mathrm{F}\leq \epsilon_0
\end{equation*}
hence $\barU_{\dc{1}}(i)\in B(\epsilon_0)$. On the other hand, note that from Lemma 6.6 in \cite{gao2017sparse},
\begin{align*}
    \dist\left(\barU_{\dc{1}}(i), \barU_{\dc{1}}(j)\right)&= \frac{1}{\sqrt{2}}\|\barU_{\dc{1}}(i)\barU_{\dc{1}}^\top(i) - \barU_{\dc{1}}(j)\barU_{\dc{1}}^\top(j)\|_\mathrm{F}\\&= \frac{1}{\sqrt{2}}\|\tilU_{\dc{1}}(i)\tilU_{\dc{1}}^\top(i) - \tilU_{\dc{1}}(j)\tilU_{\dc{1}}^\top(j)\|_\mathrm{F}\geq \alpha\epsilon_0.
\end{align*}
Define the metric to be $\rho(\Sigma^{(1)}, \Sigma^{(2)}) = \dist\left(U_{\dc{1}}^{(1)}, U_{\dc{1}}^{(2)}\right)$. The above argument implies that for $\epsilon = \alpha \epsilon_0$,\begin{equation*}
    \log \mathcal{M}(T_0, \rho, \epsilon)\geq r(s_1-r)\log \frac{1}{C\alpha}.
\end{equation*}
Setting \begin{equation*}
    \epsilon = c_0\left[\sqrt{r\wedge (s_1-r)}\wedge \sqrt{\frac{-(k-1)\lambda^2+(k-2)\lambda+1}{n(k-1)\lambda^2}r(s_1-r)}\right]
\end{equation*}
for sufficiently small constants $c_0, \alpha$, we obtain a lower bound of the order \begin{equation*}
    r \wedge (s_1-r) \wedge \frac{-(k-1)\lambda^2+(k-2)\lambda+1}{n(k-1)\lambda^2}r(s_1-r)
\end{equation*}
by applying Lemma \ref{lemma:fano}. By symmetry, we also have the lower bound with $r(s_1-r)$ replaced by $r(s_i-r)$ for $i=2, 3, \dots, k$. Furthermore, recall that we have $r\leq \frac{1}{2}\min_i s_i$ hence $r\leq (s_i-r)$, we obtain the lower bound of the order \begin{equation*}
    r\wedge \frac{r\sum_{i=1}^k s_i}{n\lambda^2}
\end{equation*}
for the estimation of $U$ with metric $\dist^2(U, \widehat{U})$, when $k$ is finite. 

\paragraph{Step II}
We now turn to establish the lower bound term involving $s_i\log\frac{ep_i}{s_i}$. The step follows from the rank-one argument from \cite{chen2013sparse}. Without loss of generality, we may assume that $s_i\leq \frac{p_i}{2}$ for any $i$, we then consider the following subset of the parameter space:
\begin{align*}
    T_1=\left\{\Sigma = \begin{pmatrix}
    I_{p_1} &\lambda U_{\dc{1}}U_{\dc{2}0}^\top &\lambda U_{\dc{1}}U_{\dc{3}0}^\top \dots & \lambda U_{\dc{1}}U_{\dc{k}0}^\top\\
  \lambda  U_{\dc{2}0}U_{\dc{1}}^\top &I_{p_2} &\dots&\lambda U_{\dc{2}0}U_{\dc{k}0}^\top\\
    \dots & \dots &\dots &\dots\\
   \lambda U_{\dc{k}0}U_{\dc{1}}^\top &  \lambda U_{\dc{k}0}U_{\dc{2}0}^\top&\dots&
    I_{p_k}
    \end{pmatrix}: U_{\dc{1}}=\begin{bmatrix}
    I_{r-1}&0\\
    0&u_r
    \end{bmatrix}
   \right\},
\end{align*}
for $u_r\in\mathbb{S}^{p_1-r+1}$ and $|\supp(u_r)|\leq s_1-r+1$. Restricting on the set $T_1$, the minimax risk for estimating $U$ is the same as the minimax risk for estimating $u_r$ under the squared error loss $\|u_r-\widehat{u}_r\|_\mathrm{F}^2$. Let $X_{\dc{i}} = [X_{\dc{i}1}, X_{\dc{i}2}]$ for $X_{\dc{i}1}\in\mathbb{R}^{n\times{(r-1)}}$ and $X_{\dc{i}2}\in\mathbb{R}^{n\times{(p_i-r+1)}}$. By the same argument in \cite{gao2015minimax}, it is further equivalent to estimating $u_r$ under the squared loss based on the observations from $X_{\dc{i}2}$ since $X_{\dc{i}2}$ for $i=1, 2\dots, k$ is a sufficient statistic for $u_r$. Applying the argument in \cite{chen2013sparse}, we obtain the lower bound for $\dist^2(U, \widehat{U})$ with the following term\begin{equation*}
    \frac{1}{n\lambda^2}\left(s_1\log\frac{ep_1}{s_1}\right)\wedge 1.
\end{equation*}
By symmetry the same lower bound holds if we replace $s_1, p_1$ by $s_i, p_i$ for $i=2, 3,\dots k$.
\par Combining all the above steps and noticing the $\frac{1}{\sqrt{k}}$ scaling between the estimation of $A$ and $U$, we finally deduce that the lower bound for $\dist^2(A, \widehat{A})$ is given by \eqref{eq:lower_bound}. This finishes our proof. 
\end{proof}

\section{Simulation Details for Sparse PCA of Correlation Matrices}
\label{sec:simu-pca-detail}


Here we describe the procedure of generating a $p\times p$ correlation matrix $R$ with eigenvalues $\lambda_1\geq \dots\geq \lambda_r > \lambda_{r+1} = 1 \geq \dots \geq \lambda_p > 0$ and $s$-sparse leading eigenvectors. 
Necessarily, $\sum_{i=1}^r\lambda_r \leq s < p-1$ and $s\geq r$. 
Without loss of generality, we assume that $s = m \times 2^l$ where $m$ and $l$ are positive integers and $2^l\geq r$.

Since $2^l\geq r$, there is a $2^l \times r$ matrix $T_0$ such that $T_{0, ij}\in \{\pm 1\}$ and that the columns of $T_0$ are orthogonal.
For example, we may take the first $r$ columns of a $2^l\times 2^l$ Hadamard matrix.
Fix such a $T_0$, we generate an $s\times r$ matrix $T_s$ as
\begin{equation*}
	T_s = \frac{1}{\sqrt{s}}
	\begin{bmatrix}
		T_0^\top & \dots & T_0^\top
	\end{bmatrix}^\top.
\end{equation*}
By our construction, each row of $T_s$ has the same $l_2$ norm and the columns are orthonormal.
Next, we define
\begin{equation*}
	R_s = T_s\, \mathrm{diag}(\theta_1,\dots,\theta_r) T_s^\top + 
	\left(1 - \frac{\sum_{i=1}^r\theta_i}{s}\right) I_s.
\end{equation*}
This decomposition ensures that the diagonal of $R_s$ is equal to 1. We then solve for $\theta_i$ such that the eigenvalues of $R_s$ are given by $\lambda_1, \dots, \lambda_r.$
Then we augment this $s\times s$ positive definite matrix to a $p\times p$ correlation matrix $R = \mathrm{diag}(R_s, I_{p-s})$.
It is straightforward to verify that $R$ has leading eigenvalues $\lambda_1 \geq \dots \geq \lambda_r > 1$ and the columns of $T = [T_s^\top\, O_{(p-s)\times r}^\top]^\top$ are the corresponding eigenvectors.

\vskip 0.2in

\bibliographystyle{abbrvnat}

\bibliography{sgca.bib,zm.bib}

\begin{thebibliography}{42}
\providecommand{\natexlab}[1]{#1}
\providecommand{\url}[1]{\texttt{#1}}
\expandafter\ifx\csname urlstyle\endcsname\relax
  \providecommand{\doi}[1]{doi: #1}\else
  \providecommand{\doi}{doi: \begingroup \urlstyle{rm}\Url}\fi

\bibitem[Amini and Wainwright(2009)]{Amini09}
A.~Amini and M.~Wainwright.
\newblock High-dimensional analysis of semidefinite relaxations for sparse
  principal components.
\newblock \emph{The Annals of Statistics}, 37\penalty0 (5B):\penalty0
  2877--2921, 2009.

\bibitem[Berthet and Rigollet(2013)]{Berthet13}
Q.~Berthet and P.~Rigollet.
\newblock Computational lower bounds for sparse pca.
\newblock \emph{arXiv preprint arXiv:1304.0828}, 2013.

\bibitem[Bickel and Levina(2008{\natexlab{a}})]{bickel2008covariance}
P.~J. Bickel and E.~Levina.
\newblock Covariance regularization by thresholding.
\newblock \emph{The Annals of Statistics}, 36\penalty0 (6):\penalty0
  2577--2604, 2008{\natexlab{a}}.

\bibitem[Bickel and Levina(2008{\natexlab{b}})]{bickel2008regularized}
P.~J. Bickel and E.~Levina.
\newblock Regularized estimation of large covariance matrices.
\newblock \emph{The Annals of Statistics}, 36\penalty0 (1):\penalty0 199--227,
  2008{\natexlab{b}}.

\bibitem[Bickel et~al.(2009)Bickel, Ritov, and
  Tsybakov]{bickel2009simultaneous}
P.~J. Bickel, Y.~Ritov, and A.~B. Tsybakov.
\newblock Simultaneous analysis of lasso and dantzig selector.
\newblock \emph{The Annals of Statistics}, 37\penalty0 (4):\penalty0
  1705--1732, 2009.

\bibitem[Birnbaum et~al.(2013)Birnbaum, Johnstone, Nadler, and
  Paul]{Birnbaum12}
A.~Birnbaum, I.~Johnstone, B.~Nadler, and D.~Paul.
\newblock {Minimax bounds for sparse PCA with noisy high-dimensional data}.
\newblock \emph{The Annals of Statistics}, 41\penalty0 (3):\penalty0
  1055--1084, 2013.

\bibitem[Cai et~al.(2013)Cai, Ma, and Wu]{cai2013sparse}
T.~T. Cai, Z.~Ma, and Y.~Wu.
\newblock Sparse pca: Optimal rates and adaptive estimation.
\newblock \emph{The Annals of Statistics}, 41\penalty0 (6):\penalty0
  3074--3110, 2013.

\bibitem[Chen et~al.(2013)Chen, Gao, Ren, and Zhou]{chen2013sparse}
M.~Chen, C.~Gao, Z.~Ren, and H.~H. Zhou.
\newblock Sparse cca via precision adjusted iterative thresholding.
\newblock \emph{arXiv preprint arXiv:1311.6186}, 2013.

\bibitem[Chen and Candes(2015)]{chen2015solving}
Y.~Chen and E.~Candes.
\newblock Solving random quadratic systems of equations is nearly as easy as
  solving linear systems.
\newblock In \emph{Advances in Neural Information Processing Systems}, pages
  739--747, 2015.

\bibitem[Chi et~al.(2019)Chi, Lu, and Chen]{chi2019nonconvex}
Y.~Chi, Y.~M. Lu, and Y.~Chen.
\newblock Nonconvex optimization meets low-rank matrix factorization: An
  overview.
\newblock \emph{IEEE Transactions on Signal Processing}, 67\penalty0
  (20):\penalty0 5239--5269, 2019.

\bibitem[Dattorro(2003)]{dattorroconvex}
J.~Dattorro.
\newblock Convex optimization and euclidean distance geometry, 2005.
\newblock \emph{Meboo, Palo Alto}, 2003.

\bibitem[Donnell et~al.(1994)Donnell, Buja, and Stuetzle]{donnell1994analysis}
D.~J. Donnell, A.~Buja, and W.~Stuetzle.
\newblock Analysis of additive dependencies and concurvities using smallest
  additive principal components.
\newblock \emph{The Annals of Statistics}, pages 1635--1668, 1994.

\bibitem[Donoho(2000)]{donoho2000high}
D.~L. Donoho.
\newblock {High-dimensional data analysis: The curses and blessings of
  dimensionality}.
\newblock \emph{AMS math challenges lecture}, 1\penalty0 (2000):\penalty0 32,
  2000.

\bibitem[Fan et~al.(2019)Fan, Guo, and Zheng]{fan2019estimating}
J.~Fan, J.~Guo, and S.~Zheng.
\newblock Estimating number of factors by adjusted eigenvalues thresholding.
\newblock \emph{arXiv preprint arXiv:1909.10710}, 2019.

\bibitem[Gao et~al.(2015{\natexlab{a}})Gao, Ma, Ren, and Zhou]{gao2015minimax}
C.~Gao, Z.~Ma, Z.~Ren, and H.~H. Zhou.
\newblock Minimax estimation in sparse canonical correlation analysis.
\newblock \emph{The Annals of Statistics}, 43\penalty0 (5):\penalty0
  2168--2197, 2015{\natexlab{a}}.

\bibitem[Gao et~al.(2015{\natexlab{b}})Gao, Ma, Ren, and
  Zhou]{gao2015supplement}
C.~Gao, Z.~Ma, Z.~Ren, and H.~H. Zhou.
\newblock Supplement to" minimax estimation in sparse canonical correlation
  analysis".
\newblock \emph{The Annals of Statistics}, 2015{\natexlab{b}}.

\bibitem[Gao et~al.(2017)Gao, Ma, and Zhou]{gao2017sparse}
C.~Gao, Z.~Ma, and H.~H. Zhou.
\newblock Sparse cca: Adaptive estimation and computational barriers.
\newblock \emph{The Annals of Statistics}, 45\penalty0 (5):\penalty0
  2074--2101, 2017.

\bibitem[Ge et~al.(2017)Ge, Jin, and Zheng]{ge2017no}
R.~Ge, C.~Jin, and Y.~Zheng.
\newblock No spurious local minima in nonconvex low rank problems: A unified
  geometric analysis.
\newblock In \emph{Proceedings of the 34th International Conference on Machine
  Learning-Volume 70}, pages 1233--1242. JMLR. org, 2017.

\bibitem[Golub and Van~Loan(2012)]{golub2012matrix}
G.~H. Golub and C.~F. Van~Loan.
\newblock \emph{Matrix computations}, volume~3.
\newblock JHU press, 2012.

\bibitem[Hastie et~al.(2015)Hastie, Tibshirani, and
  Wainwright]{hastie2015statistical}
T.~Hastie, R.~Tibshirani, and M.~Wainwright.
\newblock \emph{Statistical learning with sparsity: the lasso and
  generalizations}.
\newblock CRC press, 2015.

\bibitem[Hotelling(1992)]{hotelling1992relations}
H.~Hotelling.
\newblock Relations between two sets of variates.
\newblock In \emph{Breakthroughs in statistics}, pages 162--190. Springer,
  1992.

\bibitem[Johnstone and Lu(2009)]{JohnstoneLu09}
I.~Johnstone and A.~Lu.
\newblock On consistency and sparsity for principal components analysis in high
  dimensions.
\newblock \emph{Journal of the American Statistical Association}, 104\penalty0
  (486):\penalty0 682--693, 2009.

\bibitem[Johnstone(2001)]{johnstone2001distribution}
I.~M. Johnstone.
\newblock On the distribution of the largest eigenvalue in principal components
  analysis.
\newblock \emph{Annals of statistics}, pages 295--327, 2001.

\bibitem[Kettenring(1971)]{kettenring1971canonical}
J.~R. Kettenring.
\newblock Canonical analysis of several sets of variables.
\newblock \emph{Biometrika}, 58\penalty0 (3):\penalty0 433--451, 1971.

\bibitem[Ma(2013)]{ma2013sparse}
Z.~Ma.
\newblock Sparse principal component analysis and iterative thresholding.
\newblock \emph{The Annals of Statistics}, 41\penalty0 (2):\penalty0 772--801,
  2013.

\bibitem[Overton and Womersley(1991)]{maxeigenvalue}
M.~L. Overton and R.~S. Womersley.
\newblock On the sum of the largest eigenvalues of a symmetric matrix.
\newblock \emph{SIAM J. Matrix Anal. Appl.}, 1991.

\bibitem[Shen et~al.(2014)Shen, Sun, Tang, and Priebe]{shen2014generalized}
C.~Shen, M.~Sun, M.~Tang, and C.~E. Priebe.
\newblock Generalized canonical correlation analysis for classification.
\newblock \emph{Journal of Multivariate Analysis}, 130:\penalty0 310--322,
  2014.

\bibitem[Sun(1983)]{sun1983perturbation}
J.-g. Sun.
\newblock The perturbation bounds for eigenspaces of a definite matrix-pair.
\newblock \emph{Numerische Mathematik}, 41\penalty0 (3):\penalty0 321--343,
  1983.

\bibitem[Tan et~al.(2018)Tan, Wang, Liu, and Zhang]{tan2018sparse}
K.~M. Tan, Z.~Wang, H.~Liu, and T.~Zhang.
\newblock Sparse generalized eigenvalue problem: Optimal statistical rates via
  truncated rayleigh flow.
\newblock \emph{Journal of the Royal Statistical Society: Series B (Statistical
  Methodology)}, 80\penalty0 (5):\penalty0 1057--1086, 2018.

\bibitem[Ten~Berge(1977)]{ten1977orthogonal}
J.~M. Ten~Berge.
\newblock Orthogonal procrustes rotation for two or more matrices.
\newblock \emph{Psychometrika}, 42\penalty0 (2):\penalty0 267--276, 1977.

\bibitem[Tenenhaus and Tenenhaus(2011)]{tenenhaus2011regularized}
A.~Tenenhaus and M.~Tenenhaus.
\newblock Regularized generalized canonical correlation analysis.
\newblock \emph{Psychometrika}, 76\penalty0 (2):\penalty0 257, 2011.

\bibitem[Tian(2004)]{tian2004rank}
Y.~Tian.
\newblock Rank equalities for block matrices and their moore-penrose inverses.
\newblock \emph{Houston J. Math}, 30\penalty0 (4):\penalty0 483--510, 2004.

\bibitem[Tu et~al.(2015)Tu, Boczar, Simchowitz, Soltanolkotabi, and
  Recht]{tu2015low}
S.~Tu, R.~Boczar, M.~Simchowitz, M.~Soltanolkotabi, and B.~Recht.
\newblock Low-rank solutions of linear matrix equations via procrustes flow.
\newblock \emph{arXiv preprint arXiv:1507.03566}, 2015.

\bibitem[Vu and Lei(2012)]{Vu12sp}
V.~Vu and J.~Lei.
\newblock Minimax sparse principal subspace estimation in high dimensions.
\newblock \emph{arXiv preprint arXiv:1211.0373}, 2012.

\bibitem[Vu et~al.(2013)Vu, Cho, Lei, and Rohe]{vu2013fantope}
V.~Q. Vu, J.~Cho, J.~Lei, and K.~Rohe.
\newblock Fantope projection and selection: A near-optimal convex relaxation of
  sparse pca.
\newblock In \emph{Advances in neural information processing systems}, pages
  2670--2678, 2013.

\bibitem[Wainwright(2019)]{wainwright2019high}
M.~J. Wainwright.
\newblock \emph{High-dimensional statistics: A non-asymptotic viewpoint},
  volume~48.
\newblock Cambridge University Press, 2019.

\bibitem[Wang et~al.(2014)Wang, Lu, and Liu]{wang2014tighten}
Z.~Wang, H.~Lu, and H.~Liu.
\newblock Tighten after relax: Minimax-optimal sparse pca in polynomial time.
\newblock In \emph{Advances in neural information processing systems}, pages
  3383--3391, 2014.

\bibitem[Weyl(1912)]{weyl1912asymptotische}
H.~Weyl.
\newblock Das asymptotische verteilungsgesetz der eigenwerte linearer
  partieller differentialgleichungen (mit einer anwendung auf die theorie der
  hohlraumstrahlung).
\newblock \emph{Mathematische Annalen}, 71\penalty0 (4):\penalty0 441--479,
  1912.

\bibitem[Witten et~al.(2009)Witten, Tibshirani, and Hastie]{Witten09}
D.~Witten, R.~Tibshirani, and T.~Hastie.
\newblock {{A} penalized matrix decomposition, with applications to sparse
  principal components and canonical correlation analysis}.
\newblock \emph{Biostatistics}, 10:\penalty0 515--534, 2009.

\bibitem[Yu et~al.(1997)Yu, Assouad, and Le~Cam]{yu1997festschrift}
B.~Yu, F.~Assouad, and L.~Le~Cam.
\newblock Festschrift for lucien le cam, 1997.

\bibitem[Yuan and Zhang(2013)]{yuan2013truncated}
X.-T. Yuan and T.~Zhang.
\newblock Truncated power method for sparse eigenvalue problems.
\newblock \emph{Journal of Machine Learning Research}, 14\penalty0
  (Apr):\penalty0 899--925, 2013.

\bibitem[Zou et~al.(2006)Zou, Hastie, and Tibshirani]{Zou06}
H.~Zou, T.~Hastie, and R.~Tibshirani.
\newblock Sparse principal component analysis.
\newblock \emph{Journal of Computational and Graphical Statistics},
  15:\penalty0 265--286, 2006.

\end{thebibliography}

\end{document}